\documentclass[final,12pt]{colt2023} 


\title[Active Coverage for PAC Reinforcement Learning]{Active Coverage for PAC Reinforcement Learning}

\usepackage{times}
\usepackage{macros}
\usepackage{pifont}

\usepackage{algorithm,algorithmic}
\usepackage{dsfont}

\newtheorem{assumption}{Assumption}




\coltauthor{%
 \Name{Aymen Al-Marjani} \Email{aymen.al$\_$marjani@ens-lyon.fr}\\
 \addr UMPA, ENS Lyon, Lyon, France
 \AND
 \Name{Andrea Tirinzoni} \Email{tirinzoni@meta.com}\\
 \addr Meta AI, Paris, France
 \AND
 \Name{Emilie Kaufmann} \Email{emilie.kaufmann@univ-lille.fr}\\
 \addr Univ. Lille, CNRS, Inria, Centrale Lille, UMR 9189 - CRIStAL, Lille, France
 }

\begin{document}

\maketitle

\begin{abstract}%
  Collecting and leveraging data with good coverage properties plays a crucial role in different aspects of reinforcement learning (RL), including reward-free exploration and offline learning. 
  However, the notion of ``good coverage'' really depends on the application at hand, as data suitable for one context may not be so for another. 
  In this paper, we formalize the problem of \emph{active coverage} in episodic Markov decision processes (MDPs), where the goal is to interact with the environment so as to fulfill given sampling requirements. 
  This framework is sufficiently flexible to specify any desired coverage property, making it applicable to any problem that involves online exploration. 
  Our main contribution is an \emph{instance-dependent} lower bound on the sample complexity of active coverage and a simple game-theoretic algorithm, \covalg{}, that nearly matches it. We then show that \covalg{} can be used as a building block to solve different PAC RL tasks. In particular, we obtain a simple algorithm for PAC reward-free exploration with an instance-dependent sample complexity that, in certain MDPs which are ``easy to explore'', is lower than the minimax one. 
  By further coupling this exploration algorithm with a new technique to do implicit eliminations in policy space, we obtain a computationally-efficient algorithm for best-policy identification whose instance-dependent sample complexity scales with gaps between policy values. 
\end{abstract}

\begin{keywords}%
  Reinforcement learning, Coverage, Reward-free exploration, Best-policy identification%
\end{keywords}

\section{Introduction}

The quality of the available data, whether it is actively gathered through \emph{online} interactions with the environment or provided as a fixed \emph{offline} dataset, plays a fundamental role in characterizing the performance of any reinforcement learning \citep[RL,][]{sutton2018reinforcement} agent. An important concept to quantify such quality is \emph{coverage}, a property measuring the extent to which data spreads across the state-action space. The notion of coverage, through the so-called \emph{concentrability coefficients}, is ubiquitous in the vast literature on offline RL \citep[e.g.,][]{munos2003error,munos2008finite,massoud2009regularized,farahmand2010error,chen2019information,xie2020q,xie2021batch,jin2021pessimism,foster2022offline}. Intuitively, the better data covers the state space, the better performance one can expect from an offline RL method. Recently, \cite{xie2022role} showed that a similar phenomenon also occurs in online RL: the sole existence of a good covering data distribution implies sample-efficient online RL with non-linear function approximation, even if such a distribution is unknown and inaccessible by the agent.

While these works treat coverage as a property of some \emph{given} data or environment, a large body of literature focuses on \emph{actively} collecting good covering data. This falls under the umbrella of \emph{reward-free exploration} \citep[RFE,][]{Jin20RewardFree}, a setting where the agent interacts with an unknown environment without any reward feedback. The objective is typically to collect sufficient data to enable the computation of a near-optimal policy for any reward function provided at downstream, e.g., by planning on top of an estimated model of the environment or by running any off-the-shelf offline RL method. Many provably-efficient algorithms exist for this problem that mostly differ in their exploration strategy. Some try to gather a minimum number of samples from each reachable state \citep{Jin20RewardFree,zhang2021near}, while others adaptively optimize a reward function proportional to their uncertainty over the environment \citep{Kaufmann21RFE,Menard21RFE} or more simply a zero reward \citep{chenstatistical}. All these approaches provably guarantee that the collected data is sufficient to learn any reward function provided at test time. Another popular technique is to seek data distributions that maximize the entropy over the state-space \citep{hazan2019provably,cheung2019exploration,Zahavy2021RewardIsEnough,mutti2022importance}. Finally, there is a long recent line of empirical works focusing on RFE, where the problem is often called \emph{unsupervised RL} \citep[e.g.,][]{laskin2urlb,eysenbachdiversity,burdaexploration,yarats2021reinforcement}.

The RFE literature mostly focuses on collecting data with the \emph{specific} properties needed for the task under consideration (e.g., achieving zero-shot RL at test time). Motivated by the crucial role of coverage in RL, in this paper we treat the problem at a higher level of generality. We formulate and study the problem of \emph{active coverage} in episodic MDPs, where the goal is to interact online with the environment so as to collect data that satisfies some given coverage constraints. Following \cite{tarbouriech2021provably} who considered a similar problem in reset-free MDPs, we formalize such constraints as a set of sampling requirements that the learner must fulfill during learning. This gives our framework a high flexibility, as one can require different notions of coverage simply by changing the sampling requirements. Moreover, the applications are numerous, as any active coverage algorithm yields an exploration strategy that can be readily plugged in to tackle different problems. In our specific case, we shall see how to apply it to design PAC algorithms for both RFE and best-policy identification \citep[BPI,][]{Fiechter94,dann15PAC,Dann2019certificates,wagenmaker21IDPAC,Wagenmaker22linearMDP,tirinzoni2022NearIP,tirinzoni23optimistic}.

\paragraph{Contributions} First, we derive an \emph{instance-dependent complexity measure} for the active coverage problem as a lower bound on the number of episodes that any algorithm must play in order to fulfill the sampling requirements on an MDP. We show interesting connections with existing coverage measures, especially the concentrability coefficients used in offline RL \citep[e.g.,][]{munos2003error}.

Then, we propose \covalg, a novel approach for active coverage. \covalg is based on a simple game-theoretic view of the problem, where an RL agent tries to optimize a sequence of rewards produced by an adversary that constantly challenges it to reach uncovered states. We show that the sample complexity of \covalg scales with our complexity measure plus some lower order learning cost, hence making our approach near-optimal.

Finally, we show how active coverage can be readily applied to get PAC algorithms with \emph{instance-dependent} sample complexity for both RFE and BPI. In particular, we show that an almost plug-and-play version of \covalg solves RFE using a number of samples scaling with our \emph{instance-dependent} coverage complexity, i.e., adapting to the complexity for navigating the underlying MDP. We show that this sample complexity can be smaller than the minimax one \citep{Menard21RFE,zhang2021near}, a perhaps surprising result given the worst-case nature of the problem (i.e., the agent aims at optimizing for \emph{all} possible rewards). For BPI, we show how \covalg can be sequentially applied to estimate the value function of all policies, while gradually focusing on policies with better performance. Notably, we obtain an instance-dependent sample complexity scaling with \emph{policy gaps} \citep{tirinzoni2021fully,dann21ReturnGap} which is in line with the recent results of \cite{Wagenmaker22linearMDP} and \cite{tirinzoni2022NearIP} (the latter for the special case of deterministic MDPs). A key advantage is that our algorithm, as opposed to the one of \cite{Wagenmaker22linearMDP}, is computationally-efficient and does not need to enumerate all policies to perform explicit eliminations. This is obtained thanks to a novel scheme which instead sequentially constrains the set of state-action distributions corresponding to high-return and well-covered policies, a technique that we believe to be of broader interest. An important technical tool for both RFE and BPI is a novel concentration inequality for value functions 
\revision{(see Appendix~\ref{sec:app_concentration}).}

\section{Active Coverage and its Complexity}\label{sec:prelim}

We suppose that the learner interacts with an environment modeled as a tabular finite-horizon \emph{Markov decision process} (MDP) $\cM := (\cS, \cA, \{p_h\}_{h\in[H]}, s_1, H)$, where $\cS$ is a finite set of $S$ states, $\cA$ is a finite set of $A$ actions, $p_h : \cS\times\cA \rightarrow \cP(\cS)$\footnote{We use $\cP(\cX)$ to denote the set of probability measures over a set $\cX$.} denotes the transition function at stage $h\in[H]$, $s_1\in\cS$ is the initial state, and $H$ is the horizon. The interaction with $\cM$ proceeds through episodes of length $H$. In each episode, starting from the initial state $s_1\in\cS$, at each stage $h\in[H]$, the learner takes an action $a_h\in\cA$ based on the current state $s_h\in\cS$ and it observes a stochastic transition to a new state $s_{h+1} \sim p_h(s_h,a_h)$. \revision{We denote by $p_h(s'|s,a)$ the probability that the new state is $s'$ when selecting action $a$ in state $s$ at step $h$ of the episode.}

The actions are chosen by a (possibly stochastic) policy $\pi = \{\pi_h\}_{h\in[H]}$, i.e., a sequence of mappings $\pi_h : \cS \rightarrow \cP(\cA)$, where $\pi_h(a|s)$ denotes the probability that the learner takes action $a$ in state $s$ at stage $h$. With some abuse of notation, we shall use $\pi_h : \cS \rightarrow \cA$ to denote a deterministic policy, where $\pi_h(s)$ directly returns the action taken in state $s$ at stage $h$. We denote by $\PiS$ (resp. $\PiD$) the set of all stochastic (resp. deterministic policies).

Denoting by $\bP^\pi$ (resp. $\bE^\pi$) the probability (resp. expectation) operator induced by the execution of a policy $\pi\in\PiS$ for an episode on $\cM$, we define, for each $(h,s,a)$, $p_h^\pi(s,a) := \bP^\pi(s_h=s,a_h=a)$ and $p_h^\pi(s) := \bP^\pi(s_h=s)$. We let $\Omega := \{p^\pi : \pi\in\PiS\}$ denote the set of all valid state-action distributions. It is well known \citep[e.g.,][]{puterman1994markov} that any distribution $\rho\in\Omega$ satisfies $\rho_h \in \cP(\cS\times\cA)$ for all $h$ and $\sum_a\rho_h(s,a) = \sum_{s',a'}\rho_{h-1}(s',a')p_{h-1}(s|s',a')$ for all $s,a$ and $h > 1$.
We make the following assumption to ensure that the whole state-space can be navigated.
\begin{assumption}[Reachability]\label{asm:reachability}
    Each state $s\in\cS$ is reachable at any stage $h \in \{2,\dots,H\}$ by some policy, i.e., $\max_{\pi\in\PiS} p_h^\pi(s) > 0$.
\end{assumption}
Reachability conditions like Assumption \ref{asm:reachability} are standard in prior work. In non-episodic reset-free MDPs \citep[e.g.,][]{jaksch2010near}, the MDP is often required to be communicating to ensure learnability, i.e., any two states are reachable from each other by some policy. Assumption \ref{asm:reachability} is the analogous for episodic MDPs, where we only need reachability from the initial state. In episodic MDPs, reachability conditions have been used in different settings, including model-free learning \citep{modi2021model} and reward-free exploration \citep{zanette2020provably}.

\paragraph{Notation}

Throughout the paper, we shall use $\ind_\cX$ to denote an indicator function over some set $\cX$, i.e., $\ind_{\cX}(h,s,a) := \ind\{(h,s,a)\in\cX\}$ for all $h,s,a$. We shall hide $\cX$ whenever $\cX = [H] \times \cS \times \cA$.

\subsection{Learning problem}

The learner interacts with an MDP $\cM$ with unknown transition probabilities in order to fulfill some given \emph{sampling requirements}. In particular, it is given a \emph{target function} $c : [H] \times \cS \times \cA \rightarrow \mathbb{R}$, where $c_h(s,a)$ denotes the minimum number of samples that must be gathered from $(s,a)$ at stage $h$. In each episode of interaction $t\in\bN^*$, the learner plays a policy $\pi^t$ and observes a corresponding trajectory $\{(s_h^t,a_h^t)\}_{h\in[H]}$. Let $n_h^{t}(s,a) := \sum_{j=1}^t \ind(s_h^j=s,a_h^j=a)$ denote the number of times $(s,a)$ has been visited at stage $h$ up to episode $t$. The goal is to \emph{minimize} the number of episodes required to collect at least $c_h(s,a)$ samples from each $h,s,a$ with high probability.
\begin{definition}[$\delta$-correct ${c}$-coverage algorithm]\label{def:cov-alg}
    Fix $\delta \in (0,1)$ and a target function ${c}$. An algorithm is called $\delta$-correct ${c}$-coverage if, with probability at least $1-\delta$, it stops after interacting with $\cM$ for $\tau$ episodes and returns a dataset of transitions with visitation counts guaranteeing
    \begin{align*}\label{eq:opt-cover-stopping-2}
        \forall (h,s,a), \ n_h^\tau(s,a) \geq {c}_h(s,a).
    \end{align*}
\end{definition}

\paragraph{Examples}

While the definition of the active coverage problem gives complete freedom in choosing the target function $c$, for our applications we shall mostly be interested in two specific instances. In \emph{uniform coverage}, we have $c_h(s,a) = N \indi{(h,s,a) \in \cX}$ for some given set $\cX$ and $N\in\bN$. Intuitively, this requires collecting at least $N$ samples from each state-action-stage triplet in $\cX$, and the name suggests that the learner should explore $\cX$ as uniformly as possible. Possible applications include estimating the transition model uniformly well across the state-action space \citep{tarbouriech2020active} and discovering sparse rewards. In our applications to PAC RL, we will further explore the benefits of performing \emph{proportional coverage}, which corresponds to setting $c_h(s,a) = N \max_\pi p_h^\pi(s,a) \indi{(h,s,a) \in \cX}$
\footnote{To cope with unknown transitions, we will use an upper bound of $p_h^\pi(s,a)$ in the definition of proportional coverage.}.
This requires collecting a number of samples from each $(h,s,a)\in\cX$ that scales proportionally to its reachability. 


\subsection{The complexity of active coverage}\label{sec:coverage}
Minimizing the sample complexity required to solve the active coverage problem requires the learner to properly plan how to distribute its exploration throughout the state-action space, hence accounting for the complex interplay between the MDP dynamics $p$ and the target function $c$. The following theorem gives a precise characterization of the complexity of this problem.
\begin{theorem}\label{th:lb-coverage}
    For any target function ${c}$ and $\delta \in (0,1)$, the stopping time $\tau$ of any $\delta$-correct ${c}$-coverage algorithm satisfies $\bE[\tau] \geq (1-\delta)\varphi^\star({c}),$ where 
    \[\varphi^\star({c}) = \inf_{\rho \in \Omega} \max_{(s,a,h) \in \cX} \frac{{c}_h(s,a)}{\rho_h(s,a)}\;,\]
    with $\cX := \{(h,s,a) : c_h(s,a) > 0\}$.
\end{theorem}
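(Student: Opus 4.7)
The idea is to build from any $\delta$-correct $c$-coverage algorithm an admissible state-action distribution $\bar{\rho}\in\Omega$ that witnesses the infimum in $\varphi^\star(c)$ up to a factor $1-\delta$, and then read off the lower bound from the maximum in its objective. Assuming $\bE[\tau]<\infty$ (otherwise the bound is vacuous), I will define
\[
    \bar{\rho}_h(s,a) \;:=\; \frac{\bE[n_h^\tau(s,a)]}{\bE[\tau]}
\]
and prove two things: (a) $\bar{\rho}\in\Omega$, and (b) $\bE[n_h^\tau(s,a)]\ge(1-\delta)\,c_h(s,a)$ for every $(h,s,a)\in\cX$. Granting both, the definition of $\varphi^\star(c)$ yields $\max_{(h,s,a)\in\cX} c_h(s,a)/\bar{\rho}_h(s,a)\ge \varphi^\star(c)$, and choosing any triple $(h^\star,s^\star,a^\star)$ attaining this maximum gives
\[
    \bE[\tau] \;=\; \frac{\bE[n_{h^\star}^\tau(s^\star,a^\star)]}{\bar{\rho}_{h^\star}(s^\star,a^\star)} \;\ge\; (1-\delta)\,\varphi^\star(c).
\]

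For (a), the key identity I will establish is $\bE[n_h^\tau(s,a)] = \bE\bigl[\sum_{t=1}^\tau p_h^{\pi^t}(s,a)\bigr]$. Writing $n_h^\tau(s,a)=\sum_{t\ge1}\ind\{s_h^t=s,a_h^t=a\}\,\ind\{t\le\tau\}$ and exchanging sum and expectation (Fubini, since all terms are nonnegative), I will use that $\{t\le\tau\}$ is measurable with respect to the history before episode $t$ (because $\tau$ is a stopping time) together with the fact that $p_h^{\pi^t}(s,a)$ is, by definition, the conditional probability of $\{s_h^t=s,a_h^t=a\}$ given that history. Since each $p^{\pi^t}$ lies in $\Omega$ almost surely, $\bar{\rho}$ can then be viewed as the expectation of the $\Omega$-valued random element $\tfrac{1}{\tau}\sum_{t=1}^\tau p^{\pi^t}$ under the change of measure with density $\tau/\bE[\tau]$; convexity (and closedness) of $\Omega$ then gives $\bar{\rho}\in\Omega$. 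Equivalently, one can verify directly that each $\bar{\rho}_h$ is a probability measure on $\cS\times\cA$ and that it satisfies the Bellman flow equation $\sum_a \bar{\rho}_h(s,a)=\sum_{s',a'}p_{h-1}(s\mid s',a')\bar{\rho}_{h-1}(s',a')$, using the Markov property within each episode.

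Part (b) is immediate from the definition of a $\delta$-correct $c$-coverage algorithm: the event $\{\forall(h,s,a),\ n_h^\tau(s,a)\ge c_h(s,a)\}$ has probability at least $1-\delta$, so for each $(h,s,a)\in\cX$ we get $\bE[n_h^\tau(s,a)]\ge(1-\delta)\,c_h(s,a)$. The main obstacle I anticipate is part (a): rigorously handling the random stopping time $\tau$ jointly with potentially non-stationary, history-dependent policies $\pi^t$ when arguing that $\bar{\rho}$ is a valid occupancy measure. Everything else is essentially a one-line application of the definition of $\varphi^\star(c)$ to the specific feasible point $\bar{\rho}$.
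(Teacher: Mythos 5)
Your proof is correct and follows essentially the same route as the paper's: lower-bound the expected visitation counts $\bE[n_h^\tau(s,a)]$ by $(1-\delta)c_h(s,a)$ via the correctness event, observe that these expected counts respect the MDP's navigation constraints, and conclude from the definition of $\varphi^\star$. The only cosmetic difference is that the paper works with the unnormalized flow $\eta_h(s,a)=\bE[n_h^\tau(s,a)]$ and invokes its equivalence with the min-max form (Lemma~\ref{lem:flow-minmax}), whereas you normalize by $\bE[\tau]$ to land directly in $\Omega$ — and your optional-stopping/convexity justification of the occupancy-measure property is in fact more detailed than the paper's one-line assertion.
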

The quantity $\varphi^\star({c})$ of Theorem \ref{th:lb-coverage} provides an \emph{instance-dependent} complexity measure for the active coverage problem. In particular, it depends on both the MDP $\cM$ through the set of valid state-action distributions $\Omega$ and on the target function $c$. It can be interpreted as follows. Imagine that a learner repeatedly plays a policy which induces a state-action distribution $\rho\in\Omega$. Then, for any $(h,s,a)$, the quantity $1/\rho_h(s,a)$ is roughly the expected number of episodes the learner takes to collect a single sample from $(h,s,a)$. This implies that $\max_{(s,a,h) \in \cX} \frac{{c}_h(s,a)}{\rho_h(s,a)}$ is roughly the expected number of episodes needed to satisfy the sampling requirements across all $(h,s,a)$ when playing distribution $\omega$. Then, the complexity measure is intuitively the minimum of this quantity across all possible state-action distributions. In other words, any distribution $\rho^\star$ attaining the minimum in $\varphi^\star(c)$ denotes an \emph{optimal} $c$-coverage distribution, i.e., generating data from $\rho^\star$ provably minimizes the time to satisfy all sampling requirements, in expectation. 

We remark that the lower bound of Theorem \ref{th:lb-coverage} holds for any $\delta$-correct algorithm, even for an oracle that knows the transition probabilities. In general, we do not believe it to be exactly matchable since (i) any algorithm must work with sample counts rather the expectations, (ii) the transition probabilities are unknown. However, $\varphi^\star(c)$ will appear as the leading order terms in our sample complexity, while these learning costs will be absorbed into lower order terms.




\subsection{Links to existing measures of coverage}

In Appendix \ref{app:coverage}, we show that $\varphi^\star({c})$ can be reformulated as a \emph{stochastic minimum flow}, a generalization of the minimum flow for directed acyclic graphs (DAGs), as used by \cite{tirinzoni2022NearIP} in deterministic MDPs, to stochastic environments. In this reformulation, $\varphi^\star(c)$ is written as a linear program seeking the minimal allocation of visits to each $(h,s,a)$ (i.e., a flow) that satisfies the sampling requirements while complying with the MDP dynamics. 

In Appendix \ref{app:flows}, we prove that the complexity $\varphi^\star(c)$ satisfies the following inequalities
\begin{align}\label{eq:cov-bounds}
    \underbrace{\max_{h}\sum_{s,a} {c}_h(s,a)}_{\text{\ding{182}}} \leq \varphi^\star({c}) \leq \underbrace{\sum_{h} \underset{\rho\in\Omega}{\inf }\max_{s,a} \frac{c_h(s,a)}{\rho_h(s,a)} }_{\text{\ding{183}}} \leq \underbrace{\sum_{h,s,a} \frac{{c}_h(s,a)}{\max_{\pi} p_h^\pi(s,a)}}_{\text{\ding{184}}}.
\end{align}
Interestingly, each of these terms relates to a complexity measure that appeared in previous works. Term \ding{182} is the complexity for covering a tree-based deterministic MDP \citep{tirinzoni2022NearIP}, perhaps the easiest MDP topology to navigate. As $\varphi^\star(c)$ reduces to the complexity of \cite{tirinzoni2022NearIP} in deterministic MDPs, we attain the equality $\varphi^\star(c) = \text{\ding{182}}$ in this specific tree structure. For a specific choice of $c$, \ding{183} can be shown to be exactly the ``gap visitation'' complexity measure introduced by \cite{wagenmaker21IDPAC} for BPI. As a component of their BPI algorithm MOCA, \cite{wagenmaker21IDPAC} introduced Learn2Explore, a strategy that learns policies to reach all states in the MDP. While it may be possible to adapt Learn2Explore for our active coverage problem, one limitation is that it learns how to reach each layer independently, and this is reflected on the fact that \ding{183} is only a loose upper bound (up to a factor $H$ larger) to the optimal complexity $\varphi^\star(c)$. Finally, \ding{184} can be related to the sample complexity for active coverage obtained by the GOSPRL algorithm of \cite{tarbouriech2021provably}\footnote{Since \cite{tarbouriech2021provably} consider reset-free MDPs, their complexity actually scales as $\sum_{s,a} D_{s,a}c(s,a)$, where $D_{s,a}$ is the minimum expected time to reach $s,a$ from any state. In episodic MDPs, the minimum expected number of episodes to reach some $(h,s,a)$ is exactly $1/\max_\pi p_h^\pi(s,a)$, hence yielding \ding{184}.}. It can be interpreted as the complexity for learning how to reach each $h,s,a$ independently, which makes it an even looser upper bound to $\varphi^\star(c)$. 


\paragraph{Concentrability and coverability} 

A definition of \emph{concentrability coefficient} for data distribution $\rho$ is $C_{\mathrm{conc}}(\rho) := \max_{s,a,h}\frac{\max_\pi p_h^\pi(s,a)}{\rho_h(s,a)}$. This plays a fundamental role in characterizing the efficiency of offline RL methods (see, e.g., \citep{chen2019information,xie2022role} and references therein). It is easy to see that $\varphi^\star(c) = \inf_{\rho\in\Omega} C_{\mathrm{conc}}(\rho)$ for the target function $c$ of proportional coverage. That is, our coverage complexity is equivalent to the minimum concentrability coefficient achievable by any distribution generated by some stochastic policy. Under a similar perspective, \cite{xie2022role} introduced the \emph{coverability coefficient} $C_{\mathrm{cov}} := \inf_{\rho_1,\dots,\rho_H \in \cP(\cX\times\cA)} \max_{s,a,h} \frac{\max_\pi p_h^\pi(s,a)}{\rho_h(s,a)}$ to characterize to what extent the best data distribution covers all policies. Noting that the infimum is taken across all probability distributions rather than valid state-action distributions, the optimal data distribution in $C_{\mathrm{cov}}$ may not be attained by the execution of any stochastic policy. This means that $C_{\mathrm{cov}}$ is not a valid complexity measure for active coverage in general, and it reduces exactly to \ding{182} for proportional coverage (see their Lemma 3), i.e., to a loose lower bound on $\varphi^\star(c)$.

\section{Active Coverage by Solving Games}

We propose \covalg (Algorithm \ref{alg:cover}), which adopts a game-based perspective inspired by the bandit literature \citep{degenne2019non}. We first observe that the complexity $\varphi^\star(c)$ can be interpreted as a zero-sum game between a learner trying to produce the best sampling distribution $\rho\in\Omega$ and an adversary trying to challenge it with the tuple $(h,s,a)$ whose sampling requirement is the hardest to meet under $\rho$. \covalg does not directly solve the game in the definition of $\varphi^\star(c)$ but rather an equivalent formulation which simplifies learning. Thanks to the minmax theorem, we can write
\begin{align*}
    \frac{1}{\varphi^\star(c)} = \sup_{\rho \in \Omega} \min_{(s,a,h)\in \cX} \frac{\rho_h(s,a)}{{c}_h(s,a)} &= \sup_{\rho \in \Omega} \inf_{\lambda \in \cP(\cX)} \sum_{(h,s,a)\in\cX}\lambda_h(s,a)\frac{\rho_h(s,a)}{{c}_h(s,a)}
    \\ &= \inf_{\lambda \in \cP(\cX)} \max_{\pi\in\PiD} \sum_{(h,s,a)\in\cX} p_h^\pi(s,a)\frac{\lambda_h(s,a)}{{c}_h(s,a)},
\end{align*}
where in the last equation we used that the inner maximization is a standard RL problem with reward function given by $\frac{\lambda_h(s,a)}{{c}_h(s,a)}\indi{(h,s,a)\in\cX}$ and its optimum is known to be attained by a deterministic policy \citep[e.g.,][]{puterman1994markov}.

\begin{algorithm}[t]
    \caption{\covalg}\label{alg:cover}
    \begin{algorithmic}[1]
    \STATE \textbf{Input:} Target function ${c}_h(s,a)$, RL algorithm $\cA^\Pi$, online learning algorithm $\cA^\lambda$, confidence parameter $\delta \in (0,1)$.
    \STATE \quad Let $\cX_0 := \cX$ and $\cX_k := \{(h,s,a) : c_h(s,a) > \cmin2^k\}$ for all $k\in\mathbb{N}^*$
    \STATE \quad Initialize counts ${n}^0_h(s,a) = 0$ for all $h,s,a$
    \STATE \quad Reset $\cA^\lambda$ on $\cP(\cX)$, set $\lambda^1_h(s,a) \leftarrow \mathds{1}((h,s,a)\in \cX)/|\cX|$ for all $h,s,a$
    \STATE \quad Initialize $k_1 \leftarrow 0$
    \STATE \quad \textbf{for} $t= 1,2,\ldots $  \textbf{do}
    \STATE \qquad Get $\pi^{t}$ from $\cA^\Pi$ given reward function $\lambda^t$ and confidence $1-\delta/2$
    \STATE \qquad Generate a trajectory $\{(s_h^t,a_h^t)\}_{h\in[H]}$ using policy $\pi^{t}$ and update counts ${n}^t$
    \STATE \qquad \textbf{if} $n_h^t(s,a) \geq c_h(s,a)$ for all $h,s,a$ \textbf{then} stop and return all sampled trajectories
    \STATE \qquad Update $k_{t+1} \leftarrow \max \{j\in\mathbb{N} : n_h^t(s,a) \geq c_h(s,a) \ \forall (h,s,a)\in{\cX} \setminus \cX_{j}\}$
    \STATE \qquad \textbf{if} $k_{t+1} \neq k_t$  \textbf{then}
    \STATE \qquad\quad Reset $\cA^\lambda$ on $\cP(\cX_{k_{t+1}})$, set $\lambda^{t+1}_h(s,a) \leftarrow \mathds{1}((h,s,a)\in \cX_{k_{t+1}})/|\cX_{k_{t+1}}|$ for all $h,s,a$
    \STATE \qquad \textbf{else}
    \STATE \qquad\quad Feed $\cA^\lambda$ with loss $\ell^t(\lambda) = \sum_{(h,s,a)\in\cX_{k_t}} \lambda_h(s,a)\mathds{1}(s_h^t = s, a_h^t=a)$, get weight $\lambda^{t+1}$
    \end{algorithmic}
\end{algorithm}

\covalg solves a variant of this minmax game that does not involve the target function $c$ directly. The idea is to cluster the state-action pairs in $\cX$ based on their sampling requirement. To this end, we define the sequence of sets $\{\cX_k\}_{k\in\bN} $ as $\cX_0 := \cX$ and $\cX_k := \{(h,s,a) : c_h(s,a) > \cmin 2^{k}\}$ for all $k\in\mathbb{N}^*$, where $\cmin = \min_{(h,s,a) \in \cX} c_{h}(s,a)\vee 1$. At each round $t\in\bN^*$, \covalg tries to solve the game $\inf_{\lambda \in \cP(\cX_{k_t})} \max_{\pi\in\PiD} \sum_{h,s,a} p_h^\pi(s,a)\lambda_h(s,a)$, where $k_t$ is the largest index such that all state-action pairs in $\cX\setminus\cX_{k_t} = \{(h,s,a)\in\cX : c_h(s,a) \leq \cmin 2^{k_t}\}$ have been already covered. Intuitively, \covalg progressively focuses on covering state-action pairs with larger sampling requirement, while ignoring those that have already been covered. The main advantage over solving the initial formulation of $\varphi^\star(c)$ is two-fold. First, the learner is allowed to play only deterministic policies, each being the solution to an RL problem. Second, in the sequence of games that we consider, the objective function is independent of the scale of $c$, which avoids undesired dependencies (e.g., on the inverse of the minimum value of $c$) when the target function is unbalanced. 

\covalg approximately solves the sequence of games above by leveraging two online learning algorithms, $\cA^\lambda$ and $\cA^\Pi$. The one for the adversary ($\cA^\lambda$) can be any method for online convex optimization on the simplex with linear losses. The one for the learner ($\cA^\Pi$) can be any regret minimizer for RL that handles reward functions changing at each round (but observed at the beginning of the round). A simple approach like UCBVI \citep{Azar17UCBVI} can be adapted to this purpose. 

The final intuition behind \covalg is quite simple: at each round $t$, the adversary produces a reward function $\lambda^t$ supported over $\cX_{k_t}$ (the current set to be covered) and the learner tries to find a good policy for maximizing it. This encourages the learner to visit uncovered state-action pairs, eventually meeting the sampling requirements.

In order to analyze the sample complexity of \covalg, we make the following assumption on the adopted online learning algorithms, which will be satisfied by our specific instance.
\revision{
\begin{assumption}[First-order regret]\label{asm:no-regret-improved}
    There exists a non-decreasing function $\cR^{\lambda}(T)$ such that, if $\cA^\lambda$ is instantiated on $\cP(\cX_k)$ for some $k$ on a sequence of linear losses $\{\ell^{t}\}_{t\geq 1}$ bounded in $[0,1]$,
    \begin{align}\label{eq:regret-lambda-asm}
        \forall T \in \N^*, \ \sum_{t=1}^{T}\ell^{t}(\lambda^{t}) - \min_{\lambda \in \Delta_{\cX_k}} \sum_{t=1}^{T} \ell^{t}(\lambda) \leq \sqrt{\cR^{\lambda}(T) \sum_{t=1}^{T} \ell^{t}(\lambda^t)} + \cR^{\lambda}(T).
    \end{align}
    There exists a non-decreasing function $\cR^{\Pi}_\delta(T)$ such that, if $\cA^\Pi$ is run with confidence $1-\delta$ on a sequence of rewards $\{\lambda^t\}_{t\geq 1}$ with $\lambda^t \in \cP(\cX)$ for all $t$, with probability $1-\delta$, for all $T \in \bN^*$,
    \begin{align}\label{eq:regret-pi-asm}
        \sum_{t=1}^{T} V_1^{\star}\left(s_1 ; \lambda^t\right) - \sum_{t=1}^{T}V_1^{\pi_t}\left(s_1 ; \lambda^{t}\right) \leq \sqrt{\cR^{\Pi}_\delta(T) \sum_{t=1}^{T} V_1^{\pi^t}\left(s_1 ; \lambda^t\right)} + \cR^{\Pi}_\delta(T),
    \end{align}
    where $V_1^\pi(s_1;\lambda) := \sum_{h,s,a}p_h^\pi(s,a)\lambda_h(s,a)$ and $V_1^\star(s_1;\lambda) := \max_\pi V_1^{\pi}\left(s_1 ; \lambda\right)$.
\end{assumption}
\begin{theorem}[Sample complexity of \covalg]\label{th:cover-sample-comp-improved}
    Under Assumption \ref{asm:reachability} and \ref{asm:no-regret-improved}, with probability at least $1-\delta$, \covalg satisfies $n_h^\tau(s,a) \geq c_h(s,a)$ for all $h,s,a$ and its stopping time $\tau$ satisfies $\tau \leq 64 m \varphi^\star(c) + T_1$, with $m:= \lceil \log_2(c_{\max}/\cmin)\rceil\vee 1$, $c_{\max} := \max_{h,s,a} c_h(s,a)$ and
    \[T_1 = \inf\left\{ T \in \N^* : \frac{T}{2} \geq m \varphi^\star(\ind_{{\cX}}) \left(  3\cR^{\Pi}_{\delta/2}(T) + 12\cR^{\lambda}(T) + 24 \log(4T/\delta)  \right) + 1 \right\}.\]
\end{theorem}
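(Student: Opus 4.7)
The plan is to analyze the algorithm epoch by epoch, where an \emph{epoch} is a maximal run of rounds on which $k_t$ stays constant. Within an epoch indexed by $k$ of length $T_k$, the two online learners effectively play the zero-sum game
\[
V_k^\star \;:=\; \inf_{\lambda \in \cP(\cX_k)} \max_{\pi \in \PiD} \sum_{h,s,a} p_h^\pi(s,a)\, \lambda_h(s,a) \;=\; \frac{1}{\varphi^\star(\mathds{1}_{\cX_k})},
\]
where the equality follows from the same LP-duality argument used to derive the game reformulation of $\varphi^\star(c)$ preceding Algorithm~\ref{alg:cover}. The proof reduces to (i) a per-epoch bound of the form $T_k \leq C\,\varphi^\star(\mathds{1}_{\cX_k})\bigl[\cmin 2^{k+1}+\text{correction}\bigr]$, and (ii) a summation over epochs exploiting the monotonicity $\cmin 2^{k+1}\,\varphi^\star(\mathds{1}_{\cX_k}) \leq 2\varphi^\star(c)$ (from the pointwise domination $\cmin 2^{k+1}\mathds{1}_{\cX_k}(h,s,a) \leq 2 c_h(s,a)$ on $\cX_k$ and the monotonicity of $\varphi^\star$) together with $\varphi^\star(\mathds{1}_{\cX_k}) \leq \varphi^\star(\mathds{1}_\cX)$.

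To obtain (i), I would chain three first-order inequalities. First, LP duality gives $\min_{\lambda \in \cP(\cX_k)} \sum_{t\,:\,k_t=k} \ell^t(\lambda) = \min_{(h,s,a) \in \cX_k} \Delta n_h^{T_k}(s,a)$ (the minimum of a linear function on a simplex is attained at a vertex), so the adversary's first-order regret in Assumption~\ref{asm:no-regret-improved} yields $\sum_{t\,:\,k_t=k} \ell^t(\lambda^t) - \min_{(h,s,a) \in \cX_k} \Delta n_h^{T_k}(s,a) \leq \sqrt{\cR^\lambda(T_k)\sum_t \ell^t(\lambda^t)} + \cR^\lambda(T_k)$. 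Second, a first-order Bernstein/Freedman inequality applied to the $[0,1]$-bounded martingale $\ell^t(\lambda^t)-V_1^{\pi^t}(s_1;\lambda^t)$ (whose conditional mean is $V_1^{\pi^t}(s_1;\lambda^t)$) gives $|\sum_t \ell^t(\lambda^t)-\sum_t V_1^{\pi^t}(s_1;\lambda^t)| = O\bigl(\sqrt{\sum_t V_1^{\pi^t}\log(T/\delta)} + \log(T/\delta)\bigr)$, uniformly in $T$ via a union bound. Third, since $V_1^\star(s_1;\lambda^t)\geq V_k^\star$ for every $\lambda^t\in\cP(\cX_k)$, applying the learner's first-order regret at the end of the epoch (and using that the cumulative regret is non-decreasing in $T$, so the epoch-$k$ contribution is upper-bounded by the global regret at the epoch's end) yields $\sum_{t\,:\,k_t=k} V_1^{\pi^t}(s_1;\lambda^t) \gtrsim T_k V_k^\star - O\bigl(\sqrt{\cR^\Pi_{\delta/2}(\tau)\,T_k V_k^\star} + \cR^\Pi_{\delta/2}(\tau)\bigr)$. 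Chaining these and absorbing every square-root term into the leading linear term via AM-GM (with suitably chosen constants) produces, for absolute constants $c_1,c_2$,
\[
\min_{(h,s,a) \in \cX_k} \Delta n_h^{T_k}(s,a) \;\geq\; c_1\, T_k V_k^\star \;-\; c_2\bigl(\cR^\Pi_{\delta/2}(\tau) + \cR^\lambda(T_k) + \log(T_k/\delta)\bigr).
\]

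Epoch $k$ terminates once $n_h^t(s,a)\geq c_h(s,a)$ for every $(h,s,a)\in\cX_k\setminus\cX_{k+1}$, which is implied by $\min_{(h,s,a)\in\cX_k}\Delta n_h^{T_k}\geq\cmin 2^{k+1}$ (the minimum over a smaller set is larger, and $c_h(s,a)\leq\cmin 2^{k+1}$ on $\cX_k\setminus\cX_{k+1}$ by definition of the clusters). Inverting the chained inequality gives the per-epoch bound $T_k \leq C\,\varphi^\star(\mathds{1}_{\cX_k})\bigl[\cmin 2^{k+1}+\cR^\Pi_{\delta/2}(\tau)+\cR^\lambda(T_k)+\log(T_k/\delta)\bigr]$. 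Summing over the at most $m+1$ non-trivial epochs and applying the two monotonicity facts from (ii) yields $\tau \leq O\bigl(m\varphi^\star(c)\bigr) + O\bigl(m\,\varphi^\star(\mathds{1}_\cX)\,(\cR^\Pi_{\delta/2}(\tau)+\cR^\lambda(\tau)+\log(\tau/\delta))\bigr)$; solving this self-consistent inequality in $\tau$ and tracking constants carefully produces exactly $\tau \leq 64m\,\varphi^\star(c) + T_1$.

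The chief technical obstacle is the first-order treatment of the learner's regret across epoch boundaries: since $\cA^\Pi$ is not reset, the per-epoch regret cannot be bounded by $\cR^\Pi(T_k)$ directly, and a naive use of the global bound introduces a $\sqrt{\cR^\Pi(\tau)\,\tau}$ term that is not negligible. The fix combines the non-decreasing-cumulative-regret observation (each summand $V_1^\star-V_1^{\pi^t}\geq 0$) with the AM-GM absorption of $\sqrt{\cR^\Pi T_k V_k^\star}$ into $T_k V_k^\star/2$, yielding the linear rather than square-root dependence on $\cR^\Pi$ that appears in $T_1$; this linear dependence is essential because the leading contribution $T_k V_k^\star \approx \cmin 2^{k+1}$ can be much smaller than $T_k$ itself. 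Secondary steps---the union bound for the uniform Bernstein concentration, the $\delta/2$ split between the learner's confidence event and the concentration event, and the explicit tracking of constants required to reach the factor $64$---are routine manipulations.
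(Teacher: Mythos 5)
Your proposal follows essentially the same route as the paper's proof: the same epoch decomposition with the termination condition forcing $\min_{(h,s,a)\in\cX_k}\Delta n \lesssim \cmin 2^{k+2}$, the same chain of three first-order inequalities (adversary regret plus the simplex-vertex identity, a Freedman-type bound for the loss martingale, and the learner's first-order regret applied globally with nonnegativity of per-round regret to handle the absence of resets), the same AM-GM absorption to keep the correction linear in $\cR^\Pi,\cR^\lambda$, and the same summation over epochs via the pointwise domination $\cmin 2^{k}\ind_{\cX_k}\leq c$ followed by solving the self-consistent inequality in $\tau$. This matches Lemmas~\ref{lem:cover-bound-m} and \ref{lem:cover-lb-counts-improved} and the concluding argument of the paper.
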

While we require both learners to have first-order regret bounds (i.e., depending on the sum of observed losses), standard $\widetilde{O}(\sqrt{T})$ bounds can also be used at the cost of a larger second-order term $T_1$ in Theorem \ref{th:cover-sample-comp-improved}, from $T_1 = \widetilde{O}(\varphi^\star(\ind_{{\cX}}))$ as in our instantiation to $T_1 = \widetilde{O}(\varphi^\star(\ind_{{\cX}})^2)$. The key step in our proof is to show that first-order regret implies convergence to the value $\varphi^\star(c)$ of the game at a rate $\widetilde{O}(1/T)$ instead of the slower $\widetilde{O}(1/\sqrt{T})$ achieved with $\widetilde{O}(\sqrt{T})$ regret. As $\varphi^\star(\ind_{{\cX}})$ depends on the inverse visitation probabilities (see Theorem \ref{th:lb-coverage}), this $\varphi^\star(\ind_{{\cX}})$ versus $\varphi^\star(\ind_{{\cX}})^2$ improvement will be crucial to avoid undesired scaling with these quantities in our applications to PAC RL.}

\subsection{Our instantiation}

For $\cA^{\lambda}$ we propose to use the weighted majority forecaster  \citep[WMF,][]{littlestone1994weighted} with variance-dependent learning rate for which, for any sequence of losses bounded in $[0,1]$, we have by Theorem 5 of \cite{CesaBianchi2005ImprovedSB} that Assumption \ref{asm:no-regret-improved} is satisfied with
\begin{align}\label{eq:regret-lambda}
    \cR^{\lambda}(T) &= 16\log(SAH).
\end{align}
For $\cA^{\Pi}$ we propose to use a variant of UCBVI \citep{Azar17UCBVI} that can cope with varying reward functions. The idea is that, since the reward function $\lambda^t$ is revealed to $\cA^{\Pi}$ at the beginning of round $t$, we can build an upper confidence bound $\overline{Q}_h^{t-1}( s , a; \lambda^t)$ to the optimal action-value function $Q_h^\star(s,a;\lambda^t)$ by estimating the transition probabilities with the data collected up to round $t-1$. Then, we play $\pi^{t}_h(s) = \argmax_{a}\overline{Q}_h^{t-1}( s , a; \lambda^t)$, the greedy policy w.r.t. $\overline{Q}_h^{t-1}$. \revision{We build the UCBs by leveraging the same ``monotonic value propagation'' trick from \cite{zhang2021reinforcement} and prove that Assumption \ref{asm:no-regret-improved} is satisfied with
\begin{align}\label{eq:regret-pi}
    \cR^{\Pi}_\delta(T) = 65536SAH^2 (\log(2SAH/\delta) + 6S)\log(T+1)^2.
\end{align}
See Appendix \ref{app:ucbvi} for details. Notably, we manage to prove a similar first-order regret bound as the one derived by \cite{Jin20RewardFree} for EULER \citep{zanette2019tighter} with a remarkably simple analysis, without using any correction factor in the bonuses, and with improved dependences on $H$ (from $H^4$ to $H^2$) and $\delta$ (from $\log(1/\delta)^3$ to $\log(1/\delta)$).}
As compared to the minimax regret rate \citep{Azar17UCBVI}, our resulting bound in \eqref{eq:regret-pi-asm} features a dependence on $S$ instead of $\sqrt{S}$ in its leading-order term. This is the cost of handling changing rewards, which prevents us from building tight UCBs as commonly done for a fixed reward function. 
Instead, we build UCBs that hold for all rewards simultaneously using techniques from reward-free exploration \citep{Menard21RFE}, a setting where an extra dependence on $S$ is unavoidable in the worst case \citep{Jin20RewardFree}. Time-varying rewards, albeit under a weaker notion of regret, have also been studied in an adversarial setting in which the reward $\lambda^{t}$ is not revealed prior to round $t$ \citep{Rosenberg2019OnlineCO}.
\revision{
\begin{corollary}[Sample complexity of \covalg with WMF and UCBVI]\label{cor:cover-instance-improved}
    With probability at least $1-\delta$, the stopping time of \covalg with \textsc{WMF} and \textsc{UCBVI} is bounded by
    \begin{align*}
        \tau \leq 64 m \varphi^\star(c) + \widetilde{O}(m\varphi^\star(\ind_{{\cX}}) SAH^2 (\log(1/\delta) + S)),
    \end{align*}
    where $m = \lceil \log_2(c_{\max}/\cmin)\rceil \vee 1$ and $\widetilde{O}$ hides poly-logarithmic factors in $S,A,H,\phi^\star(\ind_{\cX}), \log(1/\delta)$.
\end{corollary}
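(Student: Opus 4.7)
The proof is essentially a direct substitution into Theorem \ref{th:cover-sample-comp-improved}, followed by solving an implicit inequality in $T_1$, so the plan is short.

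First I would invoke Theorem \ref{th:cover-sample-comp-improved} with the specific instantiations $\cA^\lambda = \text{WMF}$ and $\cA^\Pi = \text{UCBVI}$. By the cited first-order regret bound of \cite{CesaBianchi2005ImprovedSB}, Assumption \ref{asm:no-regret-improved} holds for WMF on any simplex $\cP(\cX_k)$ with $\cR^\lambda(T) = 16\log(SAH)$, which is constant in $T$. By our modified UCBVI analysis in Appendix~\ref{app:ucbvi}, Assumption \ref{asm:no-regret-improved} holds for UCBVI with $\cR^\Pi_\delta(T) = 65536\, SAH^2(\log(2SAH/\delta) + 6S)\log(T+1)^2$. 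Theorem \ref{th:cover-sample-comp-improved} then yields $\tau \leq 64 m\varphi^\star(c) + T_1$ with probability $1-\delta$, so the entire proof reduces to upper bounding $T_1$.

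Next I would substitute these two regret bounds into the defining inequality of $T_1$. Setting $C := m\varphi^\star(\ind_{\cX})$ and grouping constants, the defining condition becomes
\[
\tfrac{T}{2} \;\geq\; C\Bigl(\alpha\, SAH^2(\log(1/\delta)+S)\log(T+1)^2 + \beta\log(SAH) + \gamma\log(T/\delta)\Bigr) + 1,
\]
for absolute constants $\alpha,\beta,\gamma$. The right-hand side is dominated, up to polylogarithmic factors in $T$, by a quantity of the form $K\log(T+1)^2$ with $K = \Theta\bigl(C\cdot SAH^2(\log(1/\delta)+S)\bigr)$.

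Finally I would apply a standard transcendental inequality lemma stating that any $T$ satisfying $T \geq K\log(T+1)^2 + L$ must in fact be at most $O\bigl((K+L)\log(K+L)^2\bigr)$. Invoking this with $L$ absorbing the lower-order $\log(SAH)$ and $\log(1/\delta)$ contributions gives
\[
T_1 \;=\; \widetilde{O}\bigl(m\,\varphi^\star(\ind_{\cX})\, SAH^2(\log(1/\delta)+S)\bigr),
\]
where $\widetilde{O}$ hides polylog factors in $S,A,H,\varphi^\star(\ind_{\cX}),\log(1/\delta)$. Combined with $\tau \leq 64m\varphi^\star(c) + T_1$, this yields the stated bound. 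The only non-routine step is verifying that all hidden logarithms collapse into $\widetilde{O}$ of the stated arguments rather than picking up a stray factor of $\varphi^\star(c)$ or $1/\cmin$; this is handled by observing that the implicit-inequality lemma introduces only $\log K$ terms, and $\log K$ is already polylogarithmic in $S,A,H,\varphi^\star(\ind_{\cX}),\log(1/\delta)$ since $m = O(\log(c_{\max}/\cmin))$ enters only linearly. No other obstacle arises.
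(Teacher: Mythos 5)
Your proposal is correct and follows essentially the same route as the paper: substitute the WMF and UCBVI regret bounds into the defining inequality for $T_1$, observe that the $\cR^\lambda$ and $\log(4T/\delta)$ terms are dominated by $\cR^\Pi_\delta(T)$, and solve the resulting inequality $T_1 \lesssim K\log(T_1+1)^2$ to get $T_1 = \widetilde{O}(m\varphi^\star(\ind_{\cX})SAH^2(\log(1/\delta)+S))$. The only difference is cosmetic (the paper absorbs the lower-order terms by a direct comparison $\cR^\lambda(T)\le\cR^\Pi_\delta(T)$ for $T\ge 3$, $SAH\ge 2$, rather than a generic grouping), so no further commentary is needed.
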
}
%
%
The second term in the bound above can be interpreted as the cost incurred for {learning} the optimal coverage complexity $\varphi^\star(c)$ under \emph{unknown} transition probabilities $p$. 
Still, this \emph{learning cost} \revision{depends at most logarithmically} on the total sampling requirement $\|c\|_1 = \sum_{h,s,a} c_h(s,a)$. This implies that, for large $\|c\|_1$, this cost becomes negligible as compared to the first term and $\tau \leq \widetilde{O}(\varphi^\star(c))$, which matches the lower bound of Theorem \ref{th:lb-coverage} up to constant and logarithmic terms. 
\revision{We observe that if $p$ is known, by replacing UCBVI with the computation of the optimal policy w.r.t. to $\lambda^{t}$, for which $\cR^{\Pi}_{\delta/2}(T)=0$, we get a smaller additive cost $\widetilde{O}(m\varphi^\star(\ind_{{\cX}}) \log(SAH)\log(1/\delta))$ which is only due to the randomness in the collection of trajectories.}



\subsection{Comparison with prior work}



While inspired by an original game perspective which is crucial in our analysis, the actual algorithmic approach of \covalg{} has a similar flavor as existing algorithms for different exploration tasks: it runs a regret minimizer on different reward functions enforcing the visitation of uncovered states. Using WMF as the $\lambda$-learner, the reward function in round $t$ is 
\[\lambda^{t+1}_h(s,a) = \frac{\exp\left(-\xi_{t-i_t} \left(n_h^{t}(s,a) - n_h^{i_t}(s,a)\right)\right) \ind\left((h,s,a) \in \cX_{k_t}\right)}{\sum_{(h',s',a') \in \cX_{k_t}}\exp\left(-\xi_{t-i_t} \left(n_{h'}^{t}(s',a') - n_{h'}^{i_t}(s',a')\right)\right)}\;,\]
where $i_t$ is the last restart of WMF that happened before $t$ and \revision{$\xi_t$ is the variance-dependent learning rate defined by \cite{CesaBianchi2005ImprovedSB}}. Our reward function is related to the number of prior visits and smoothly evolves over time, which is in contrasts with most prior approaches that rely on rewards of the form $r_h^{\cY}(s,a)= \ind((h,s,a) \in \cY)$ for some set $\cY$, For example, GOSPRL translated to our episodic setting would use $r^{t+1}_h(s,a) = \ind\left(n_h^{t}(s,a) < c_h^{t}(s,a)\right)$.
The Learn2Explore strategy \citep{wagenmaker21IDPAC} uses a subroutine to visit $N$ times some of the state-action pairs in $\cY$: it runs EULER \citep{Zanette19Euler} on $r^{\cY}$ and restarts the algorithm with a reward function with reduced support whenever some new state-action pair has reached $N$ visits. Several algorithms for RFE \citep{Jin20RewardFree,Du21OptimalRFE} also collect data using regret minimizers on top of indicator-based rewards. 
In Appendix~\ref{app:related}, we further discuss the connections between \covalg and Frank-Wolfe approaches used in the convex RL literature.

\section{Applications to PAC RL}

A strategy for RFE should collect a dataset of trajectories from which it is possible to compute a near-optimal policy for any reward function. To be robust to any possible reward in the test phase, we intuitively need to gather sufficient samples everywhere in the MDP, which we propose to do explicitly by relying on \covalg{} with proportional coverage (Section \ref{sec:RFE}). By adding some ingredients to this exploration strategy, we further obtain a new algorithm for BPI (Section \ref{sec:BPI}).

\begin{algorithm}[t]
    \caption{\rfealg{} (Proportional Coverage Exploration) }\label{alg:RF-algo-final}
    \begin{algorithmic}[1]
    \STATE \textbf{Input:}  Precision $\epsilon$, Confidence $\delta$.
    
    \STATE For each $(h,s)$, run \textsc{\visitalg}$((h,s); \frac{\epsilon}{4SH^2}, \frac{\delta}{3SH})$ to get confidence intervals $\left[\underline{W}_h(s),\overline{W}_h(s)\right]$ on $\max_\pi p_h^\pi(s)$ (see Appendix \ref{app:visitations})
    \STATE Define $\widehat{\cX}: = \{(h,s,a): \underline{W}_h(s)\geq \frac{\epsilon}{32SH^2} \}$
    \STATE Define target function $c_h^0(s,a) = \mathds{1}\big((h,s,a)\in \widehat{\cX}\big)$ for all $(h,s,a)$
    \STATE Execute $\textsc{CovGame}\big(c^0,\ \delta/6\big)$ to get a dataset ${\cD}_{0}$  of $d_0$ episodes \hfill\textsc{// Burn-in phase} 
    \STATE Initialize episode count $t_0 \leftarrow d_0$ and statistics $n_h^{0}(s,a), \widehat{p}^{0}_h(.|s,a)$ using ${\cD}_{0}$
    \FOR{$k=1,\dots$}
    {
    \STATE \textsc{// Proportional Coverage} 
    \STATE  Compute targets $c_h^{k}(s,a) := 2^{k}\overline{W}_h(s)\mathds{1}\big((h,s,a)\in \widehat{\cX}\big)$ for all $(h,s,a)$ 
    
    \vspace{-0.1cm}
    
    \STATE  Execute $\textsc{CovGame}\big(c^{k},\ \delta/6(k+1)^2 \big)$ to get dataset $\cD_{k}$ and number of episodes $d_{k}$}
    \STATE Update episode count $t_{k} \leftarrow t_{k-1}+ d_k$ and statistics $n_h^{k}(s,a),\widehat{p}^{k}_h(.|s,a)$ using $\cD_k$
    
    \STATE \textbf{if} $\sqrt{H\beta^{RF}(t_{k},\delta/3)2^{4-k}} \leq \epsilon$  \textbf{then} stop and return $\widehat{p}^{k}$
    \ENDFOR
    \end{algorithmic}
    \end{algorithm}

\subsection{Proportional Coverage Exploration (PCE)}\label{sec:RFE}

\revision{Algorithm \ref{alg:RF-algo-final} takes as input two parameters $\epsilon,\delta$ and returns an estimate of the transition probabilities $\widehat{p}$ that, with probability $1-\delta$, yields an $\epsilon$-optimal policy for any reward function bounded in $[0,1]$.}
The choice of proportional coverage is motivated by a novel \textit{ellipsoid-shaped confidence region} for the value functions of all policies under any reward. Let $\widehat{p}^{t}$ denote the maximum likelihood estimator of $p$ after observing $t$ episodes. For any reward function $r$, let $V_1^{{\pi}}(s_1 ;{r}) := \sum_{h,s,a} p_h^\pi(s,a) r_h(s,a)$ be the expected return of $\pi$, and $\widehat{V}_1^{{\pi,t}}(s_1 ;{r})$ be the same on the empirical MDP with transitions $\widehat{p}^{t}$. 
\revision{Theorem \ref{thm:new-concentration-RFE} in Appendix \ref{sec:app_concentration} gives that, with probability $1-\delta$, jointly over all episodes $t$,
\begin{equation}\label{eq:new-conf}
 \forall {r} \in [0,1]^{SAH},\  \ \forall \pi\in \Pi^D,\ \ \big| V_1^{{\pi}}(s_1 ;{r}) - \widehat{V}_1^{{\pi,t}}(s_1 ;{r}) \big| \leq  \sqrt{\betarf(t,\delta)\!\!\!\!\!\sum_{(h,s,a)\in \cX_{\varepsilon}}\frac{p_h^{\pi}(s,a)^2}{n_h^t(s,a)}} + \frac{\varepsilon}{4}, 
\end{equation}
where $\beta^{\mathrm{RF}}(t,\delta) \propto H^2 \log(1/\delta) + SH^3\log(A(1 + t))$ and $\cX_{\varepsilon}$ is a subset of triplets that are not too hard to reach: $\cX_{\varepsilon} \subseteq \{(h,s,a) : \max_{\pi} p_h^{\pi}(s,a) \geq \frac{\varepsilon}{4SH^2}\}$. 
If we gather $c_h(s,a) = \cO(H \beta^{\mathrm{RF}}(t,\delta)\sup_{\pi} p_h^\pi(s,a)/\epsilon^2)$ visits from every $(h,s,a) \in \cX_{\varepsilon}$, then the estimation error of $V^{\pi}_1(s_1;r)$ for any $\pi$ and $r$ is below $\epsilon/2$, which is sufficient to solve RFE \citep{Jin20RewardFree}. 

Yet as the visitation probabilities are unknown, neither $\cX_{\varepsilon}$ nor $c_h(s,a)$ can actually be computed. To solve this issue, we rely on an initialization phase based on the \visitalg{} subroutine (line 2 of Algorithm~\ref{alg:RF-algo-final}), described in Appendix~\ref{app:visitations}. This procedure, that is similar to the initialization phase in MOCA \citep{wagenmaker21IDPAC}, outputs for each $(h,s)$ an interval $[\underline{W}_h(s),\overline{W}_h(s)]$ to which $\max_{\pi}p_h^{\pi}(s)$ belongs with high probability using a low-order number of episodes of $\widetilde{O}(S^3AH^4/\varepsilon)$. The lower confidence bound is then used to build a set $\widehat{\cX}$ that satisfies the requirements for $\cX_{\varepsilon}$ and the upper bound is used to define the target function that is given as input to \covalg in phase $k$ of the algorithm:  $c_h^{k}(s,a) := 2^{k}\overline{W}_h(s)\mathds{1}\big((h,s,a)\in \widehat{\cX}\big)$. 


We remark that \rfealg{} is computationally-efficient as it inherits the complexity of \covalg{} and \visitalg, which both require to solve one dynamic program in every round to compute the optimistic policy used by UCBVI. We now present its theoretical properties.}

\begin{theorem}\label{thm:RFE-main-theorem}
Let $\widehat{p}$ be the estimate of the transition probabilities that \rfealg{} outputs. For any reward function ${r}$, let $\hat{\pi}_{r}$ be an optimal policy in the MDP $(\widehat{p}, {r})$. Then, \[\bP\left(\forall {r} \in [0,1]^{SAH}, |V_1^{\hat{\pi}_{ r}}(s_1 ;{r}) - V_1^\star(s_1;{r})| \leq \epsilon\right) \geq 1-\delta.\] 
\revision{Furthermore, with probability at least $1-\delta$, the total sample complexity of \rfealg{} satisfies{\small
\begin{align*}
    \tau &\leq \widetilde{\cO}\bigg(\!\big(H^3\log(1/\delta) + SH^4\big) \varphi^\star\!\bigg(\bigg[ \frac{\sup_{\pi} p_h^{\pi}(s)\ind(\sup_{\pi} p_h^{\pi}(s) \geq \frac{\varepsilon}{32SH^2})}{\epsilon^2} \bigg]_{h,s,a}\!\bigg) + \frac{S^3A^2H^5 (\log(1/\delta) + S)}{\epsilon}  \bigg),
\end{align*}}
where $\widetilde{\cO}$ hides poly-logarithmic factors in $S,A,H, \epsilon$ and $\log(1/\delta)$.}
\end{theorem}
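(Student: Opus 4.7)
The plan is to combine the ellipsoid concentration inequality \eqref{eq:new-conf} with the sample complexity guarantee of \covalg{} (Corollary~\ref{cor:cover-instance-improved}) applied phase-by-phase. I would condition on the intersection of three events, each of probability at least $1-\delta/3$: (a) \visitalg{} returns bounds $\underline{W}_h(s)\leq \max_\pi p_h^\pi(s)\leq \overline{W}_h(s)$, tight up to an absolute constant factor $C$; (b) inequality \eqref{eq:new-conf} holds simultaneously for every $t\in \mathbb{N}$, every $\pi\in\Pi^D$, and every reward in $[0,1]^{SAH}$; (c) every call to $\textsc{CovGame}$ returns counts matching its target. Event (c) fits in a budget $\delta/6+\sum_{k\geq 1}\delta/(6(k+1)^2)\leq \delta/3$ thanks to the confidences chosen in lines 5 and 10 of \rfealg.

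Correctness follows by showing $\widehat{\cX}\supseteq \cX_\varepsilon$ and applying \eqref{eq:new-conf} at the terminal phase $K$. If $(h,s,a)\in\cX_\varepsilon$ then $\max_\pi p_h^\pi(s,a)\geq \varepsilon/(4SH^2)$, so $\max_\pi p_h^\pi(s)\geq \varepsilon/(4SH^2)$, and by event (a), $\underline{W}_h(s)\geq \varepsilon/(4CSH^2)\geq \varepsilon/(32SH^2)$, placing $(h,s,a)$ in $\widehat{\cX}$. Crucially $\max_\pi p_h^\pi(s,a)=\max_\pi p_h^\pi(s)\leq \overline{W}_h(s)$ (one may always append action $a$ at step $h$), hence on $\widehat{\cX}$ the coverage guarantee gives $n_h^{t_K}(s,a)\geq 2^K\overline{W}_h(s)\geq 2^K\max_\pi p_h^\pi(s,a)$. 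For any $\pi$,
\[\sum_{(h,s,a)\in\cX_\varepsilon}\frac{p_h^\pi(s,a)^2}{n_h^{t_K}(s,a)}\leq \frac{1}{2^K}\sum_{h,s,a}p_h^\pi(s,a)=\frac{H}{2^K}.\]
Substituting into \eqref{eq:new-conf} and invoking the stopping rule $\sqrt{H\beta^{\mathrm{RF}}(t_K,\delta/3)\,2^{4-K}}\leq\varepsilon$ yields a uniform value-estimation error of at most $\varepsilon/2$, which implies $V_1^\star(s_1;r)-V_1^{\hat\pi_r}(s_1;r)\leq \varepsilon$ via the standard decomposition through $\widehat{V}_1^{\pi^\star,t_K}$ and $\widehat{V}_1^{\hat\pi_r,t_K}$.

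For the sample complexity I would sum the per-phase costs. Corollary~\ref{cor:cover-instance-improved} gives $d_k\leq 64\, m_k\varphi^\star(c^k)+\widetilde{O}(m_k\,\varphi^\star(\mathds{1}_{\widehat{\cX}})\,SAH^2(\log(1/\delta)+S))$, where $m_k=\widetilde{O}(1)$ because $\overline{W}_h(s)$ takes values in $[\varepsilon/(32SH^2),1]$ on $\widehat{\cX}$. Homogeneity of $\varphi^\star$ yields $\varphi^\star(c^k)=2^k\varphi^\star(\overline{W}\mathds{1}_{\widehat{\cX}})$; together with the stopping rule $2^K\leq \widetilde{O}((H^3\log(1/\delta)+SH^4)/\varepsilon^2)$ and $\sum_{k=0}^K 2^k\leq 2\cdot 2^K$, the leading order becomes $\widetilde{O}((H^3\log(1/\delta)+SH^4)\,\varphi^\star(\overline{W}\mathds{1}_{\widehat{\cX}})/\varepsilon^2)$. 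Monotonicity of $\varphi^\star$ in $c$, combined with $\overline{W}_h(s)\leq C\max_\pi p_h^\pi(s)$ and $\widehat{\cX}\subseteq\{\max_\pi p_h^\pi(s)\geq \varepsilon/(32SH^2)\}$, rewrites this in the form stated by the theorem. The additive contribution aggregates $K+1=\widetilde{O}(1)$ copies of $\widetilde{O}(\varphi^\star(\mathds{1}_{\widehat{\cX}})\,SAH^2(\log(1/\delta)+S))$; bounding $\varphi^\star(\mathds{1}_{\widehat{\cX}})\leq \widetilde{O}(S^2AH^3/\varepsilon)$ via inequality \text{\ding{184}} of \eqref{eq:cov-bounds} and the lower bound $\max_\pi p_h^\pi(s,a)\geq \varepsilon/(32SH^2)$ on $\widehat{\cX}$, and adding the $\widetilde{O}(S^3AH^4/\varepsilon)$ cost of \visitalg, produces the $\widetilde{O}(S^3A^2H^5(\log(1/\delta)+S)/\varepsilon)$ low-order term.

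The main technical obstacle I anticipate is the bridge between the observable surrogate $\overline{W}_h(s)$, which is only a \emph{per-state} quantity, and the per-triplet proportional target $\max_\pi p_h^\pi(s,a)$ appearing in the theorem. The proof hinges on the identity $\max_\pi p_h^\pi(s,a)=\max_\pi p_h^\pi(s)$, which makes $\overline{W}_h(s)$ both a valid upper bound and tight up to constants, and which also ensures $\widehat{\cX}\supseteq\cX_\varepsilon$ despite $\widehat{\cX}$ being built from state-level bounds. A secondary subtlety is that the $\log$-many phases of doubling targets are transformed into a $1/\varepsilon^2$ multiplicative factor via homogeneity of $\varphi^\star$, rather than a harmless $\log$-factor, so the stopping rule must be calibrated tightly against $\beta^{\mathrm{RF}}$ to avoid spurious polynomial losses.
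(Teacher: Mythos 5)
Your proposal is correct and follows essentially the same route as the paper's proof: the same three-way good event, the same proportional-coverage argument ($n_h^{t_k}(s,a)\geq 2^k\overline{W}_h(s)\geq 2^k\sup_\pi p_h^\pi(s,a)$ via $\sup_\pi p_h^\pi(s,a)=\sup_\pi p_h^\pi(s)\leq\overline{W}_h(s)$) feeding into the ellipsoid bound and the stopping rule for correctness, and the same per-phase accounting via Corollary~\ref{cor:cover-instance-improved}, homogeneity of $\varphi^\star$, the bound $2^{\kappa_f}\leq\cO(H\beta^{\mathrm{RF}}(\tau,\delta/3)/\epsilon^2)$, and $\varphi^\star(\ind_{\widehat{\cX}})\leq\cO(S^2AH^3/\epsilon)$ for the low-order term. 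The only (immaterial) differences are that you bound $m_k$ through the range of $\overline{W}_h(s)$ rather than via $m_k\leq k\leq\kappa_f$, and you sum the geometric series $\sum_k 2^k$ directly instead of multiplying the last term by $\kappa_f$; the final step of solving the resulting self-referential inequality in $\tau$ (since $\beta^{\mathrm{RF}}(\tau,\cdot)$ contains $\log(1+\tau)$) is left implicit but is routine.
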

\revision{Perhaps the most interesting feature of this bound is that in the regime of small $\epsilon$ and small $\delta$, the leading term is $H^3 \log(1/\delta) \varphi^\star\big([ \sup_{\pi} p_h^{\pi}(s)\ind(\sup_{\pi}p_h^{\pi}(s) \geq \varepsilon/(32SH^2)) ]_{h,s,a}\big)/\epsilon^2$, which can be much smaller than the $(SAH^3/\epsilon^2)\log(1/\delta)$ minimax rate \citep{Menard21RFE}. First, using the inequality~\eqref{eq:cov-bounds}, this term is always smaller than $|\{ (h,s) : \sup_{\pi} p_h^{\pi}(s) \geq \varepsilon/(32SH^2)\}|AH^3\log(1/\delta)$, which can be better than minimax in MDPs with many states that are hard to reach. Other examples of MDPs for which PCE is better than minimax in the small $\varepsilon,\delta$ regime are given in Appendix~\ref{sec:app_benign}. For any $\alpha \in [0,1)$, we notably propose a family of MDPs satisfying $\varphi^\star\big([ \sup_{\pi} p_h^{\pi}(s) ]_{h,s,a}\big) = \cO(S^\alpha AH)$, leading to an asymptotic sample complexity of order $\left(S^{\alpha}AH^{4}/\epsilon^2\right)\log(1/\delta)$. These examples suggest that, while RFE is by essence a worst-case problem, there is still hope to adapt to the ``explorability'' of the MDP. Beyond this asymptotic regime, a worst-case bound can be directly extracted from Theorem \ref{thm:RFE-main-theorem} for any $\epsilon,\delta$ by using that the $\varphi^\star$ term is at most $SAH/\epsilon^2$,
\[\tau = \widetilde{O}\left(\frac{SAH^4}{\varepsilon^2}\log(1/\delta) + \frac{S^2AH^5}{\varepsilon^2}+\frac{S^3A^2H^5}{\varepsilon}\left(\log(1/\delta)+S\right)\right)\;,\]
which is \emph{minimax optimal} up to an $H^2$ factor and low-order terms scaling in $1/\varepsilon$.}


\revision{
\begin{remark}[Reachability] Thanks to its initialization phase, PCE can be used even when Assumption~\ref{asm:reachability} is violated. All triplets that have zero probability to be reached are filtered out from the set $\widehat{\cX}$, and \covalg{} always targets reachable states. 
\end{remark}}



\subsection{PRINCIPLE: PRoportIoNal Coverage with Implicit PoLicy Elimination}\label{sec:BPI}

Our second use-case of \covalg yields PRINCIPLE, an algorithm for BPI. \revision{Given an unknown reward distribution $\{\nu_{h}(s,a)\}_{h,s,a}$ with support in $[0,1]$ and mean $\{r_h(s,a)\}_{h,s,a}$, an $(\epsilon,\delta)$-PAC algorithm for BPI outputs a policy $\widehat{\pi}$ such that $\bP\left(V_1^{\widehat\pi}(s_1 ; r) \geq V_1^\star(s_1 ; r) - \varepsilon\right) \geq 1 - \delta$. }

In the PCE algorithm, we sought to achieve good proportional coverage w.r.t. the set of all policies, i.e., by requiring that $n_h^k(s,a) \geq 2^k \sup_{\pi\in \Pi^D} p_h^{\pi}(s,a)$ for all $h,s,a,k$. This is due to the ``worst-case" nature of RFE, where any policy can be potentially optimal for some reward function at test time. On the contrary, the mean-reward $r$ is fixed in BPI, a property that we can leverage to perform more adaptive exploration. A natural idea, which led to tight theoretical guarantees in recent works \citep{tirinzoni2022NearIP,Wagenmaker22linearMDP}, is to eliminate policies as soon as we are confident enough that they are sub-optimal, so that the algorithm can adapt its exploration to focus on policies of higher value. Unfortunately, while \cite{tirinzoni2022NearIP} managed to achieve so in a computationally-efficient manner for deterministic MDPs, the approach of \cite{Wagenmaker22linearMDP} needs to enumerate all policies to do the same in stochastic environments, hence yielding an exponential time-memory algorithm. Our method, PRINCIPLE, achieves the same while remaining computationally efficient. Due to space constraints, we report its full pseudo-code in Appendix \ref{app:BPI}, while here we highlight its core technique.

\paragraph{Implicit policy elimination}

The key idea is to replace explicit policy eliminations by sequentially constraining the set of state-action distributions corresponding to high-reward policies. In particular, PRINCIPLE maintains, at each phase $k$, a high-probability lower bound $\underline{V}_1^{k}$ on the optimal expected return $V_1^\star(s_1;r)$ computed as
$$\underline{V}_1^{k} :=  \sup_{\substack{{\rho} \in \Omega(\widehat{p}^{k}),\\ \max\limits_{h,s,a} {\rho}_h(s,a)/n_h^{k}(s,a) \leq 2^{-k}}} \sum_{h,s,a}\rho_h(s,a) \widehat{r}^{k}_h(s,a) - \sqrt{2^{2-k}H\beta^{bpi}(t_{k},\delta/3)},$$
where \revision{$\beta^{bpi}(t,\delta) \propto H^2\log(1/\delta) + SAH^3\log\log(t)$} and \revision{$\Omega(\widehat p^{k})$ is the set of valid visitation probabilities in the empirical MDP with transition kernel $\widehat p^k$}. As common, $\underline{V}_1^{k}$ is computed by subtracting a confidence interval to the maximum expected return estimated on the empirical MDP defined by $(\widehat{p}^k,\widehat{r}^k)$. A notable exception is that we focus only on state-action distributions that are \emph{well-covered} by the current data. Then, PRINCIPLE defines a set of ``active'' state-action distributions as
$$\Omega^{k} := \bigg\{{\rho} \in \Omega(\widehat{p}^{k}):\  \sum_{h,s,a}\rho_h(s,a) \widehat{r}^{k}_h(s,a) \geq   \underline{V}_1^{k},\ \   \max\limits_{h,s,a} {\rho}_h(s,a)/n_h^{k}(s,a) \leq 2^{-k} \bigg\}.$$
Intuitively, $\rho$ is active at phase $k$ if (1) it is a valid state-action distribution in the empirical MDP with transition probabilities $\widehat{p}^k$, (2) it induces an estimated expected return $\sum_{h,s,a}\rho_h(s,a) \widehat{r}^{k}_h(s,a)$ larger than $\underline{V}_1^{k}$, and (3) it is well-covered by the current data. Then, as compared to PCE, PRINCIPLE simply replaces the quantity $\sup_{\pi\in \Pi^D} p_h^{\pi}(s,a)$ in the target function used for \covalg at phase $k$ with $\sup_{{\rho}\in \Omega^{k-1}} {\rho}_h(s,a)$, i.e., it restricts the exploration  to active state-action distributions. In our analysis, we show that, with high probability, state-action distributions corresponding to optimal policies are never eliminated from $\Omega^k$ and $\underline{V}_1^{k}$ gradually approaches $V_1^\star(s_1;r)$ from below. That is, $\Omega^k$ is dynamically pruned to contain only distributions corresponding to higher returns, hence achieving implicit eliminations of sub-optimal policies.

\paragraph{Computational complexity}

The computations of $\underline{V}_1^{k}$ and $\sup_{{\rho}\in \Omega^{k-1}} {\rho}_h(s,a)$ amount to solving standard constrained MDPs, which can be done by linear programming \citep[e.g.,][]{efroni2020exploration}. Moreover, PRINCIPLE does not store the set $\Omega^k$ but only its associated constraints, whose number is linear in $SAH$. This implies that PRINCIPLE requires polynomial (in $SAH$) time and memory. 

\paragraph{Theoretical guarantees} We prove that PRINCIPLE enjoys an instance-dependent complexity that scales with policy gaps and visitation probabilities.
\begin{theorem}\label{thm:PRINCIPLE-complexity}
PRINCIPLE is $(\varepsilon,\delta)$-PAC for BPI and, with probability $1-\delta$, \revision{it has sample complexity
\begin{align*}
    \tau &\leq \widetilde{\cO}\bigg( (H^3 \log(1/\delta) + SAH^4) \bigg[\varphi^\star\bigg(\bigg[\sup_{\pi\in \Pi} \frac{p^\pi_h(s,a)}{\max(\epsilon, \Delta(\pi))^2} \bigg]_{h,s,a} \bigg) + \frac{\varphi^\star(\mathds{1})}{\epsilon} +  \varphi^\star(\mathds{1}) \bigg] \bigg),
\end{align*}
where $\Delta(\pi) := V_1^\star(s_1 ; r) - V_1^{\pi}(s_1 ; r)$ denotes the policy gap of $\pi$, $\mathds{1}$ denotes a function equal to $1$ for all $h,s,a$, and $\widetilde{\cO}$ hides poly-logarithmic factors in $S,A,H, \epsilon, \log(1/\delta)$ and $\varphi^\star(\mathds{1})$.}
\end{theorem}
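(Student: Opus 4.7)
The proof naturally splits into correctness and sample-complexity parts, and the plan is to treat them in that order. For correctness, I would define a favorable event $\cE$, of probability at least $1-\delta$, on which (i) the concentration inequality of Theorem \ref{thm:new-concentration-RFE}, specialized to the fixed true reward $r$ with $\beta^{bpi}$, holds uniformly at every checkpoint $t_k$, (ii) the empirical rewards concentrate around $r$ on well-covered pairs, and (iii) every internal \covalg{} call meets its coverage and sample-complexity guarantee from Corollary \ref{cor:cover-instance-improved}. A union bound over the $O(\log(1/\epsilon))$ phases and \covalg{} calls yields $\bP(\cE)\geq 1-\delta$. On $\cE$, I would prove by induction on $k$ the central invariant that the visitation $p^{\pi^\star}$ of any optimal $\pi^\star$ belongs to $\Omega^k$: validity in $\Omega(\widehat p^k)$ uses accuracy of $\widehat p^k$ on well-covered triplets; the coverage constraint follows from the inductive hypothesis and the fact that \covalg{} in phase $k$ targets $2^k\sup_{\rho\in\Omega^{k-1}}\rho_h(s,a)\geq 2^k p^{\pi^\star}_h(s,a)$; and the return bound $\sum\rho\widehat r^k\geq \underline V_1^k$ holds for $\rho=p^{\pi^\star}$ because the definition of $\underline V_1^k$ combined with \eqref{eq:new-conf} gives $\underline V_1^k\leq V_1^\star(s_1;r)$ while the empirical return of $\pi^\star$ is within the confidence width of $V_1^\star$. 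A symmetric argument shows $\sup_{\rho\in\Omega^k}\sum\rho\widehat r^k\leq V_1^\star+O(\sqrt{2^{-k}H\beta^{bpi}})$, so the natural stopping rule, triggered once $\sqrt{2^{-k}H\beta^{bpi}}\leq \epsilon/2$ (mirroring the one in \rfealg), yields an $\epsilon$-optimal policy upon termination.

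For the sample complexity, I would apply Corollary \ref{cor:cover-instance-improved} to each \covalg{} call to bound phase $k$ by $\widetilde O(\varphi^\star(c^k))+\widetilde O(\varphi^\star(\mathds 1_{\widehat\cX})\cdot SAH^2(\log(1/\delta)+S))$ episodes, with $c^k_h(s,a)=2^k\sup_{\rho\in\Omega^{k-1}}\rho_h(s,a)$. The heart of the argument is the following key lemma: on $\cE$, every $\rho\in\Omega^{k-1}$ is dominated, up to a constant factor, by $p^{\pi_\rho}$ for some stochastic policy $\pi_\rho$ whose true gap satisfies $\Delta(\pi_\rho)\leq \Delta_{k-1}:=O(\sqrt{2^{-k}H\beta^{bpi}})$. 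Domination uses that the coverage constraint in $\Omega^{k-1}$ forces $\rho$ to be supported where $\widehat p^k$ is faithful, so $\rho$ is realized, up to a constant factor, as the true visitation of the policy $\pi_\rho$ induced by its state-action conditionals; the gap bound follows by combining the return constraint defining $\Omega^{k-1}$, the invariant $\underline V_1^{k-1}\geq V_1^\star-O(\sqrt{2^{-k+1}H\beta^{bpi}})$ already established, and one further application of \eqref{eq:new-conf} to pass from $\sum\rho\widehat r^{k-1}$ to $V_1^{\pi_\rho}(s_1;r)$.

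Combining the two ingredients, phase $k$ costs $\widetilde O\big(2^k\,\varphi^\star([\sup_{\pi:\Delta(\pi)\leq \Delta_{k-1}}p^\pi_h(s,a)]_{h,s,a})\big)$ episodes modulo additive terms. Substituting $2^k\asymp H\beta^{bpi}/\Delta_{k-1}^2$ rewrites this as $\widetilde O\big(H\beta^{bpi}\,\varphi^\star([\sup_\pi p^\pi_h(s,a)\mathds 1(\Delta(\pi)\leq \Delta_{k-1})/\Delta_{k-1}^2]_{h,s,a})\big)$. Summing over the $k^\star=O(\log(H\beta^{bpi}/\epsilon^2))$ phases, the stopping rule forces $\Delta_{k^\star}\asymp \epsilon$, and a standard shell decomposition over gap buckets, together with positive homogeneity and monotonicity of $\varphi^\star$ in its argument, bounds the total by $\widetilde O\big(H\beta^{bpi}\,\varphi^\star([\sup_\pi p^\pi_h(s,a)/\max(\epsilon,\Delta(\pi))^2]_{h,s,a})\big)$. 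The additive learning costs collapse to $\widetilde O(SAH^2(\log(1/\delta)+S)\,\varphi^\star(\mathds 1))$, and the residual $\varphi^\star(\mathds 1)/\epsilon$ term is contributed by the initialization/burn-in phase needed to make the empirical lower confidence bounds used inside $\Omega^k$ meaningful, analogous to the \visitalg-based initialization of \rfealg.

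The main technical obstacle will be the gap-characterization lemma. Naively relating empirical, convex-relaxed distributions in $\Omega^{k-1}$ back to a gap shell in the true MDP risks losing a factor of the concentrability coefficient; avoiding this requires combining the coverage constraint inside $\Omega^{k-1}$ with the first-order, variance-aware bound \eqref{eq:new-conf} in a coordinated way, in the same spirit as the pessimistic-estimator analysis used in offline RL. A secondary but delicate point is securing the uniform-in-phase concentration that simultaneously underpins the non-elimination of $p^{\pi^\star}$ from $\Omega^k$, the soundness of $\underline V_1^k$, and the translation of empirical returns into true policy gaps via $\pi_\rho$; every quantitative step above relies on these concentration facts being valid on a single common high-probability event.
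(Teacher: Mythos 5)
Your overall architecture matches the paper's: a common good event combining value-function concentration with the \covalg{} guarantees, an induction showing the optimal policy's distribution is never eliminated from $\Omega^k$, correctness from the stopping rule, a key lemma converting membership in $\Omega^{k-1}$ into a bound $2^{k}\lesssim H\beta^{bpi}/\max(\epsilon,\Delta(\pi))^2$ on the associated policy's gap, and a phase-by-phase application of Corollary~\ref{cor:cover-instance-improved} to $\varphi^\star(c^k)$. However, two steps as you state them would not go through. First, your central invariant is that the \emph{true} visitation $p^{\pi^\star}$ belongs to $\Omega^k$, justified by ``accuracy of $\widehat p^k$ on well-covered triplets.'' This fails: $\Omega^k\subseteq\Omega(\widehat p^k)$ is defined by \emph{exact} flow conservation under the empirical kernel, so $p^{\pi^\star}\notin\Omega(\widehat p^k)$ whenever $\widehat p^k\neq p$, no matter how accurate the estimate is. The object the induction must track is the \emph{empirical} visitation $\widehat p^{\pi^\star,k}$; one then needs the concentration event to convert between $\widehat p^{\pi^\star,k}_h(s,a)$ and $p^{\pi^\star}_h(s,a)$ with an \emph{additive} error of order $\sqrt{H\beta^{bpi}2^{-k}}$ — and it is precisely this additive (not multiplicative) slack, injected into the targets $c^k$ and multiplied by $2^k$, that produces the $\varphi^\star(\mathds{1})/\epsilon$ term. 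Your attribution of that term to a \visitalg-style initialization is therefore wrong: PRINCIPLE has no such initialization (its burn-in is just $\textsc{CovGame}(\mathds{1})$, which yields the plain $\varphi^\star(\mathds{1})$ term).

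Second, you invoke Theorem~\ref{thm:new-concentration-RFE}, which holds uniformly only over \emph{deterministic} policies. The distributions in $\Omega^k$ (in particular $\widehat\rho^\star$ and the maximizer defining $\underline V_1^k$) correspond to \emph{stochastic} policies, so the analysis needs concentration uniform over $\PiS$; this is exactly why the paper proves the separate Theorem~\ref{thm:new-concentration-BPI} with the larger threshold $\beta^{bpi}\propto H^2\log(1/\delta)+SAH^3\log(1+t)$ (an $A$ in place of $\log A$). Writing ``$\beta^{bpi}$'' does not repair the citation: the union bound over $A^{SH}$ deterministic policies does not cover stochastic ones, and a covering argument over $\PiS$ must be made explicitly. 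Finally, your ``shell decomposition over gap buckets'' is unnecessary — the pointwise bound $2^k\widehat p^{\pi,k-1}_h(s,a)\lesssim H\beta^{bpi}\,p^\pi_h(s,a)/\max(\epsilon,\Delta(\pi))^2 + \sqrt{H\beta^{bpi}2^{k}}$ already holds for every active $\pi$ simultaneously, so summing over the $\kappa_f=O(\log(H\beta^{bpi}/\epsilon^2))$ phases only costs a polylogarithmic factor — and you omit the \textsc{PruneDataset} step capping $d_k\leq SAH2^k$, which is what keeps $t_k$ (and hence $\beta^{bpi}(t_k,\cdot)$) under control in the recursion.
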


\revision{\paragraph{Comparison with prior work} Besides PRINCIPLE, there exist mostly two BPI algorithms with instance-dependent guarantees for MDPs with stochastic transitions: MOCA \citep{wagenmaker21IDPAC} and PEDEL \citep{Wagenmaker22linearMDP}.
 In the small $(\epsilon,\delta)$ regime, the leading term in the sample complexity of these three algorithms is of the form $\textrm{Alg}(\cM,\epsilon)\log(1/\delta)$. We carefully compare these terms in   Appendix~\ref{app:comparison_bpi}.
Notably, while $\textrm{PRINCIPLE}(\cM, \epsilon)$ and $\textrm{PEDEL}(\cM, \epsilon)$ are both expressed with policy gaps, $\textrm{MOCA}(\cM, \epsilon)$ depends on the value gaps  
$V^\star_h(s) - Q_h^\star(s,a)$. In general, value gaps are known to be worse than policy gaps \citep{dann21ReturnGap,tirinzoni2021fully} and, while there is no clear ordering between PRINCIPLE and MOCA (just like PEDEL and MOCA, see \cite{Wagenmaker22linearMDP}), we can exhibit instances in which the complexity of the former has a better scaling than that of the latter.
\begin{lemma}
    For any $\Delta \in (0,1]$, there exists an MDP $\cM$ where
    \begin{align*}
        \textrm{MOCA}(\cM, \epsilon) = \Omega\bigg(\frac{H^5SA}{\epsilon^2}\bigg) \ \ \textrm{while}\ \ \textrm{PRINCIPLE}(\cM, \epsilon) = \cO\bigg(\frac{H^4SA}{\epsilon \Delta} + \frac{H^4\log(S)\log(A)}{\epsilon^2}\bigg). 
    \end{align*}  
\end{lemma}
On the other hand, PEDEL directly minimizes the confidence interval \eqref{eq:new-conf} over all (active) policies, an objective that is always upper bounded by the complexity of proportional coverage:   
$$\min_{\rho\in\Omega} \max_{\pi\in\Pi^{D}} \sum_{s,a}\frac{ p_h^\pi(s,a)^2}{\rho_h(s,a)} \leq   \min_{\rho\in\Omega} \max_{h,s,a} \frac{\sup_{\pi}p_h^\pi(s,a)}{\rho_h(s,a)}.$$
We prove in Appendix~\ref{app:comparison_bpi} that the complexity of PEDEL is indeed smaller (up to $H$ factors) than that of PRINCIPLE. However, this objective may be intractable in general due to the maximization over all deterministic
policies. 
On the other hand, proportional coverage is sufficient (though less statistically-efficient) to estimate the value of all policies and can be done in polynomial time. 
Besides optimistic algorithms whose sample complexity features policy gaps but with an extra sub-optimal scaling in the \emph{minimal} visitation probability \citep{tirinzoni23optimistic}, this makes PRINCIPLE the first computationally efficient BPI algorithm whose sample complexity scales with policy gaps.

}
\section{Conclusion}

We proposed \covalg{}, a simple algorithm that adaptively collects episodes in an MDP to explicitly gather a required number of samples $c_h(s,a)$ from each triplet $(h,s,a)$. We proved that its sample complexity scales with a new notion of optimal coverage $\phi^\star(c)$, which is \revision{an instance-dependent} lower bound on the sample complexity of \emph{any} adaptive coverage algorithm. We then illustrated the use of \covalg{} as a building block for PAC reinforcement learning algorithms. By relying on (an optimistic variant of) proportional coverage, we proposed an algorithm for reward-free exploration with an instance-dependent sample complexity bound. Further combining proportional coverage with an implicit policy elimination scheme, we obtained the first computationally efficient algorithm for best policy identification whose sample complexity scales with policy gaps. To assess the quality of these approaches, in future work we will investigate instance-dependent lower bounds on the sample complexity of PAC RL algorithms, that are currently missing in the literature.


\acks{Aymen Al-Marjani ackowledges the support of the Chaire SeqALO (ANR-20-CHIA-0020). Emilie Kaufmann acknoweldges the support of the French National Research Agency under the BOLD project (ANR-19-CE23-0026-04).
}

\bibliography{biblio_bpi}

\newpage

\appendix

\tableofcontents

\newpage

\section{Optimal Coverage and Stochastic Minimum Flows}\label{app:flows}

In this appendix, we present an equivalent linear programming formulation of the optimal coverage problem of Section \ref{sec:coverage} that we call \emph{stochastic minimum flow}. It is a direct extension to stochastic MDPs of the minimum flows for directed acyclic graphs employed by \cite{tirinzoni2022NearIP} in deterministic MDPs.

\subsection{Stochastic minimum flows}

We define a \emph{flow} as a non-negative function $\eta : \cS \times \cA \times [H] \rightarrow [0,\infty)$ such that
\begin{align*}
\sum_{a\in\cA} \eta_{h}(s,a) &= \sum_{s'\in\cS}\sum_{a'\in\cA} p_{h-1}(s| s',a')\eta_{h-1}(s',a') \quad \forall s\in\cS, h>1,
\\ \eta_1(s,a) &= 0 \quad \forall s\in\cS \setminus \{s_1\}, a\in\cA.
\end{align*}
That is, a flow $\eta$ is an allocation of visits to each state-action-stage triplet which satisfies the \emph{navigation constraints} of the MDP. Note that the second constraint ensures that flow can only be created in the initial state $s_1$. The value of $\eta$ is the total amount of flow leaving the initial state, i.e.,
\begin{align*}
\varphi(\eta) := \sum_{a\in\cA} \eta_1(s_1,a).
\end{align*}
Let $c : \cS \times \cA \times [H] \rightarrow [0,\infty)$ be a non-negative target function. We say that a flow $\eta$ is \emph{feasible} for $c$ if
\begin{align*}
\eta_h(s,a) \geq {c}_h(s,a) \quad \forall h\in[H],s\in\cS,a\in\cA.
\end{align*}
The \emph{stochastic minimum flow} problem consists in finding a feasible flow of minimum value. It can be clearly solved as a linear program,
\begin{equation}
	\begin{aligned}
    &\underset{\eta\in\mathbb{R}^{SAH}}{\mathrm{minimize}} \sum_{a\in\cA} \eta_1(s_1,a) ,
    \\ & \text{subject to}
     \\
     & \quad \sum_{a\in\cA} \eta_{h}(s,a) = \sum_{s'\in\cS}\sum_{a'\in\cA} p_{h-1}(s| s',a')\eta_{h-1}(s',a') \quad \forall s\in\cS, h>1 ,
    \\
    & \quad \eta_1(s,a) = 0 \quad \forall s\in\cS \setminus \{s_1\}, a\in\cA,
    \\ 
    & \quad \eta_h(s,a) \geq {c}_h(s,a) \quad \forall h\in[H],s\in\cS,a\in\cA.
	\end{aligned}
	\label{eq:min_flow_problem}
	\end{equation}
We now prove that the optimal value of \eqref{eq:min_flow_problem} is equal to $\varphi^\star({c})$, the optimal coverage complexity introduced in Section \ref{sec:coverage}.

\begin{lemma}\label{lem:flow-minmax}
    If there exists a feasible flow for the target function $c$, the optimal value of \eqref{eq:min_flow_problem} is
    \begin{align*}
     \varphi^\star({c}) = \min_{\rho\in\Omega}\max_{h,s,a}\frac{{c}_h(s,a)}{\rho_h(s,a)}.
    \end{align*}
\end{lemma}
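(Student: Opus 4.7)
The plan is to establish a bijection between flows of positive value and pairs $(V,\rho) \in (0,\infty) \times \Omega$, and then translate the feasibility constraint and objective of \eqref{eq:min_flow_problem} across this bijection.

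First I would recall that for any stochastic policy $\pi \in \PiS$ the induced state-action distribution $\rho = p^\pi$ satisfies the two equalities $\sum_a \rho_h(s,a) = \sum_{s',a'} p_{h-1}(s|s',a')\rho_{h-1}(s',a')$ for $h>1$, together with $\rho_1(s,a) = 0$ for $s \neq s_1$ and $\sum_a \rho_1(s_1,a) = 1$. Conversely, any nonnegative function on $[H]\times\cS\times\cA$ satisfying these constraints lies in $\Omega$; this is the standard occupation-measure characterization (see e.g.\ Puterman). Therefore a flow $\eta$ is exactly a nonnegative function satisfying the $\Omega$-constraints except that the total mass at stage~$1$ is an arbitrary nonnegative scalar $V := \varphi(\eta) = \sum_a \eta_1(s_1,a)$ rather than $1$. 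Hence $\eta \mapsto (V, \eta/V)$ is a bijection between the set of flows with $\varphi(\eta) > 0$ and $(0,\infty)\times \Omega$, with inverse $(V,\rho) \mapsto V\rho$. Flows with $\varphi(\eta)=0$ are identically zero by the navigation constraints, and such a flow is feasible only if $c \equiv 0$, a trivial case.

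Next I would push the feasibility constraint through this bijection. Writing $\eta = V\rho$, the inequality $\eta_h(s,a) \geq c_h(s,a)$ for all $(h,s,a)$ becomes $V \rho_h(s,a) \geq c_h(s,a)$ for all $(h,s,a)$, which (using the convention that the ratio is $0$ when $c_h(s,a)=0$, and restricting the max to $\cX = \{c_h(s,a)>0\}$) is equivalent to
\[
V \;\geq\; \max_{(h,s,a)\in\cX} \frac{c_h(s,a)}{\rho_h(s,a)}.
\]
The objective of \eqref{eq:min_flow_problem} is exactly $V$, so minimizing over feasible flows with $\varphi(\eta)>0$ amounts to
\[
\min_{\rho\in\Omega}\ \min_{V \geq \max_{(h,s,a)\in\cX} c_h(s,a)/\rho_h(s,a)} V \;=\; \min_{\rho\in\Omega}\ \max_{(h,s,a)\in\cX} \frac{c_h(s,a)}{\rho_h(s,a)},
\]
which is $\varphi^\star(c)$ as defined in Theorem~\ref{th:lb-coverage}. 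Replacing $\min$ by $\inf$ is permissible since attainment follows from compactness of $\Omega$ and continuity after the usual convention for $0/0$; the hypothesis that a feasible flow exists guarantees that $\varphi^\star(c) < \infty$.

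The one point to be careful about is division by zero when some $\rho_h(s,a) = 0$ while $c_h(s,a) > 0$. I would handle this by observing that for such $(h,s,a)\in\cX$ the feasibility constraint $V\rho_h(s,a) \geq c_h(s,a) > 0$ is unsatisfiable for any finite $V$, so we can set $\max_{(h,s,a)\in\cX} c_h(s,a)/\rho_h(s,a) = +\infty$ in that case and the equivalence above still holds. I do not expect any serious obstacle here: the whole argument is essentially homogeneity of the flow constraints, and the only genuinely technical step is the explicit verification of the bijection between $\Omega$ and normalized flows, which is a standard LP fact.
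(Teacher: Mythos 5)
Your proposal is correct and follows essentially the same route as the paper's proof: both normalize the flow by its value ($Z$ in the paper, $V$ in yours), observe that the normalized variable ranges exactly over $\Omega$ via the standard occupation-measure characterization, and note that the optimal value for a fixed $\rho$ is $\max_{h,s,a} c_h(s,a)/\rho_h(s,a)$. Your treatment of the degenerate cases (zero-value flows and $\rho_h(s,a)=0$ with $c_h(s,a)>0$) is slightly more explicit than the paper's, but the argument is the same.
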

\begin{proof}
    Let us start from the linear programming formulation \eqref{eq:min_flow_problem} and perform the change of variables $\rho_h(s,a) \leftarrow \frac{\eta_h(s,a)}{Z}$ and $Z \leftarrow \sum_{s'\in\cS}\sum_{a'\in\cA}\eta_h(s',a')$ for all $h,s,a$. Note that $Z$ is the value of the original flow $\eta$ (and thus it does not depend on the stage), while $\rho_h(s,a)$ is a probability distribution over the state-action space for each $h\in[H]$. We obtain the following optimization problem (no longer a linear program due to the presence of a bilinear constraint):
    \begin{equation*}
        \begin{aligned}
        &\underset{Z\geq 0,\rho\in\mathbb{R}^{SAH}}{\mathrm{minimize}} Z,
        \\ & \text{subject to}
         \\
         & \quad \sum_{a\in\cA} \rho_{h}(s,a) = \sum_{s'\in\cS}\sum_{a'\in\cA} p_{h-1}(s| s',a')\rho_{h-1}(s',a') \quad \forall s\in\cS, h>1,
        \\
        & \quad \rho_1(s,a) = 0 \quad \forall s\in\cS \setminus \{s_1\}, a\in\cA,
        \\
        & \quad \sum_{s\in\cS}\sum_{a\in\cA} \rho_h(s,a) = 1 \quad \forall h\in[H],
        \\
        & \quad \rho_h(s,a) \geq 0 \quad  \forall h\in[H],s\in\cS,a\in\cA,
        \\ 
        & \quad Z \geq \frac{{c}_h(s,a)}{\rho_h(s,a)} \quad \forall h\in[H],s\in\cS,a\in\cA.
        \end{aligned}
    \end{equation*}
    The optimal solution for $Z$ is clearly $Z = \max_{h,s,a} \frac{c_h(s,a)}{\rho_h(s,a)}$, while the first four constraints define exactly the set of valid state-action distributions $\Omega$. This proves the statement.
\end{proof}

\begin{lemma}\label{lem:flow-linear}
    For any $\alpha,\beta \geq 0$ and target functions $c_1,c_2$, $\varphi^\star(\alpha c_1 + \beta c_2) \leq \alpha\varphi^\star( c_1) + \beta\varphi^\star(c_2)$.
\end{lemma}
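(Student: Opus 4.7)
The plan is to exploit the stochastic minimum flow formulation established in Lemma \ref{lem:flow-minmax}, which expresses $\varphi^\star(c)$ as the optimal value of the linear program \eqref{eq:min_flow_problem}. The core observation is that both the flow constraints (conservation and initialization) and the objective $\sum_{a} \eta_1(s_1,a)$ are linear in $\eta$, and the feasibility constraints $\eta_h(s,a) \geq c_h(s,a)$ are linear in both $\eta$ and $c$. Consequently, non-negative linear combinations of feasible flows should yield feasible flows for the corresponding linear combinations of targets.

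More concretely, I would first let $\eta^1$ and $\eta^2$ be flows (in the sense defined above) that are feasible for $c_1$ and $c_2$, respectively, and that attain the optima $\varphi^\star(c_1) = \sum_a \eta^1_1(s_1,a)$ and $\varphi^\star(c_2) = \sum_a \eta^2_1(s_1,a)$. These optima are attained because \eqref{eq:min_flow_problem} is a feasible and bounded linear program. Next, I would define $\eta := \alpha \eta^1 + \beta \eta^2$ and verify that $\eta$ satisfies each of the three constraint blocks: (i) it is non-negative since $\alpha, \beta \geq 0$; (ii) the conservation equation $\sum_a \eta_h(s,a) = \sum_{s',a'} p_{h-1}(s|s',a')\eta_{h-1}(s',a')$ holds by taking $\alpha$ times the analogous equation for $\eta^1$ plus $\beta$ times the one for $\eta^2$; (iii) similarly $\eta_1(s,a)=0$ for $s \neq s_1$; (iv) feasibility $\eta_h(s,a) \geq \alpha c_{1,h}(s,a) + \beta c_{2,h}(s,a)$ follows from $\eta^i_h(s,a) \geq c_{i,h}(s,a)$ for $i=1,2$.

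Therefore $\eta$ is a feasible flow for the target function $\alpha c_1 + \beta c_2$, so its value upper-bounds the optimum of the minimum flow problem associated with this target:
\[
\varphi^\star(\alpha c_1 + \beta c_2) \;\leq\; \sum_{a} \eta_1(s_1,a) \;=\; \alpha \sum_a \eta^1_1(s_1,a) + \beta \sum_a \eta^2_1(s_1,a) \;=\; \alpha \varphi^\star(c_1) + \beta \varphi^\star(c_2).
\]
I do not foresee any real technical obstacle; the only mild subtlety is ensuring the optima are attained so that one can exhibit concrete flows $\eta^1,\eta^2$, but this is immediate from the fact that the LP is feasible (by Assumption \ref{asm:reachability}, valid state-action distributions exist, and any $\rho \in \Omega$ with $Z = \max_{h,s,a} c_h(s,a)/\rho_h(s,a)$ scaled appropriately gives a feasible flow) and the objective is bounded below by $0$. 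Alternatively, one could bypass attainment by taking $\varepsilon$-optimal flows and letting $\varepsilon \to 0$, which yields the same inequality.
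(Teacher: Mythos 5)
Your proposal is correct and follows essentially the same route as the paper: the paper's proof also uses the LP (stochastic minimum flow) formulation and observes that a non-negative combination of optimal (feasible) flows for $c_1$ and $c_2$ is feasible for $\alpha c_1 + \beta c_2$, merely splitting the argument into homogeneity plus subadditivity instead of combining $\alpha\eta^1+\beta\eta^2$ in one step. Your extra care about attainment of the optima is a harmless refinement of the same argument.
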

\begin{proof}
    Clearly, $\varphi^\star(\alpha c_1) = \alpha\varphi^\star(c_1) $ by definition for any $\alpha \geq 0, c_1$. From the LP formulation, we note that if $\eta^\star_1$ (resp. $\eta^\star_2$) is an optimal flow for $c_1$ (resp. $c_2$), then $\eta^\star_1 + \eta^\star_2$ is a feasible flow for $c_1 + c_2$. This implies that $\varphi^\star(c_1 + c_2) \leq \varphi^\star( c_1) + \varphi^\star(c_2)$ for any $c_1,c_2$, which proves the statement.
\end{proof}

\subsection{Executing a minimum flow}

Suppose we computed a solution $\eta_h^\star(s,a)$ to the stochastic minimum flow problem \eqref{eq:min_flow_problem}, or equivalently a solution $\rho_h^\star(s,a)$ to the coverage complexity $\varphi^\star(c)$. What policy should we execute in the MDP to realize the flow? The answer comes easily from standard MDP theory \citep{puterman1994markov}: it is enough to execute a stochastic policy
\begin{align}\label{eq:min-cover-stoch-policy}
    \pi_h(a | s) = \frac{\eta_h^\star(s,a)}{\sum_{b\in\cA}\eta_h^\star(s,b)} = \frac{\rho_h^\star(s,a)}{\sum_{b\in\cA}\rho_h^\star(s,b)} \ \forall h,s,a.
\end{align}
It is then easy to prove that $\pi$ realizes the the optimal distribution $\rho_h^\star(s,a)$.
\begin{proposition}\label{prop:distr-min-cover-policy}
    Let $\pi$ be the policy defined in \eqref{eq:min-cover-stoch-policy}, then, for each $h,s,a$,
    \begin{align*}
        p_h^\pi(s,a) = \frac{\eta_h^\star(s,a)}{\sum_{s'\in\cS}\sum_{a'\in\cA} \eta_h^\star(s',a')} = \rho_h^\star(s,a).
    \end{align*}
\end{proposition}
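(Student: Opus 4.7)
The plan is to prove the claim by induction on the stage $h$, working primarily with the normalized version $\rho^\star_h(s,a) = \eta^\star_h(s,a)/Z$, where $Z := \sum_{s,a}\eta^\star_h(s,a)$. This $Z$ is well-defined (independent of $h$): summing the navigation constraints of \eqref{eq:min_flow_problem} over $s$ gives $\sum_{s,a}\eta^\star_h(s,a) = \sum_{s',a'}\eta^\star_{h-1}(s',a')$, since $\sum_s p_{h-1}(s\mid s',a')=1$. Once $p^\pi_h(s,a) = \rho^\star_h(s,a)$ is established, multiplying through by $Z$ gives the identity in terms of $\eta^\star$.

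For the base case $h=1$, the constraint $\eta^\star_1(s,a)=0$ for $s\neq s_1$ transfers to $\rho^\star_1$, and the navigation+simplex constraints force $\sum_a \rho^\star_1(s_1,a)=1$. Hence $\pi_1(a\mid s_1) = \rho^\star_1(s_1,a)$, and since episodes start deterministically in $s_1$, we get $p^\pi_1(s_1,a) = \pi_1(a\mid s_1) = \rho^\star_1(s_1,a)$ while $p^\pi_1(s,a) = 0 = \rho^\star_1(s,a)$ for $s\neq s_1$.

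For the inductive step, assume $p^\pi_{h-1}(s',a') = \rho^\star_{h-1}(s',a')$ for all $s',a'$. The Markov property gives
\[
p^\pi_h(s,a) \;=\; \pi_h(a\mid s)\sum_{s',a'} p_{h-1}(s\mid s',a')\, p^\pi_{h-1}(s',a').
\]
Applying the inductive hypothesis and then the navigation constraint of \eqref{eq:min_flow_problem} (which $\rho^\star$ inherits from $\eta^\star$), the inner sum equals $\sum_b \rho^\star_h(s,b)$. Substituting the definition $\pi_h(a\mid s) = \rho^\star_h(s,a)/\sum_b \rho^\star_h(s,b)$ cancels this denominator and yields $p^\pi_h(s,a) = \rho^\star_h(s,a)$, closing the induction.

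The only (very mild) subtlety is that the policy in \eqref{eq:min-cover-stoch-policy} is ill-defined at states $s$ with $\sum_b \rho^\star_h(s,b)=0$; there $\pi_h(\cdot\mid s)$ can be taken arbitrarily, since by Assumption~\ref{asm:reachability} applied together with the induction hypothesis one checks that $p^\pi_h(s,\cdot)=0$ at such states, matching $\rho^\star_h(s,\cdot)$. No other step is delicate: the proof is essentially a verification that a conditional policy built from navigation-consistent occupation measures realizes them back, a classical fact from MDP theory applied to the specific solution of \eqref{eq:min_flow_problem}.
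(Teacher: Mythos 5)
Your proof is correct and follows essentially the same route as the paper's: an induction on $h$ using the Markov decomposition $p_h^\pi(s,a)=\pi_h(a\mid s)\sum_{s',a'}p_{h-1}(s\mid s',a')p_{h-1}^\pi(s',a')$, the navigation constraint, and cancellation of the normalizing denominator. Your extra remarks (stage-independence of $Z$ and the handling of states with $\sum_b\rho^\star_h(s,b)=0$) are welcome added care that the paper glosses over, though note the degenerate-state case follows from the navigation constraint and the induction hypothesis alone --- Assumption~\ref{asm:reachability} is not actually needed there.
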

\begin{proof}
    This is a well-known result \citep[e.g.,][]{puterman1994markov}. For completeness, let us prove it by induction. Note that $\rho^\star$ is the normalization of $\eta^\star$ by definition. Clearly, the statement holds at $h=1$ since, for all actions $a\in\cA$,
    \begin{align*}
        p_1^\pi(s_1,a) = \pi_1(a | s_1) = \frac{\rho_1^\star(s_1,a)}{\sum_b \rho_1^\star(s_1,b)} = \rho_1^\star(s_1,a),
    \end{align*}
    and $p_1^\pi(s,a)=\rho_1^\star(s,a)=0$ for all other states. Suppose the statement holds at $h-1 \geq 1$. Then,
    \begin{align*}
        p_h^\pi(s,a) 
        &= \sum_{s'\in\cS}\sum_{a'\in\cA} \underbrace{p_{h-1}^\pi(s',a')}_{= \rho_{h-1}^\star(s',a')} p_{h-1}(s | s',a')\pi_h(a | s)
        \\ &= \underbrace{\sum_{s'\in\cS}\sum_{a'\in\cA} \rho_{h-1}^\star(s',a')p_{h-1}(s | s',a')}_{= \sum_{b\in\cA} \rho_h^\star(s,b)}\frac{\rho_h^\star(s,a)}{\sum_{b\in\cA} \rho_h^\star(s,b)} = \rho_h^\star(s,a).
    \end{align*}
\end{proof}
Note that the denominator in the expression of $p_h^\pi(s,a)$ is equal to $\varphi^\star({c})$ for any $h\in[H]$. Thus, we have $p_h^\pi(s,a) = \eta_h^\star(s,a) / \varphi^\star({c})$. If we execute $\pi$ for $t = \lceil \varphi^\star({c}) \rceil$ episodes, we have that
\begin{align*}
    \bE[n_h^t(s,a)] = \frac{\lceil \varphi^\star({c}) \rceil}{ \varphi^\star({c})}\eta_h^\star(s,a) \geq \eta_h^\star(s,a) \geq c_h(s,a) \ \forall h,s,a.
\end{align*}
Hence, we realize the flow in expectation.

\subsection{Bounding the minimum flow}

We are interested in upper and lower bounding the value of the stochastic minimum flow $\varphi^\star(c)$ as a function of $c$. We start by deriving some simple (probably loose) bounds.

\begin{lemma}\label{lem:bound-flow-simple}
Suppose there exists a feasible flow for the target function ${c}$. Then,
\begin{align*}
\max_{h\in[H]}\sum_{s\in\cS}\sum_{a\in\cA} {c}_h(s,a) \leq \varphi^\star({c}) \leq \sum_{h\in[H]}\sum_{s\in\cS}\sum_{a\in\cA} \frac{{c}_h(s,a)}{\max_{\pi} p_h^\pi(s,a)}.
\end{align*}
\end{lemma}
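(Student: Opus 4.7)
The plan is to prove both inequalities using the minmax reformulation of $\varphi^\star(c)$ given by Lemma \ref{lem:flow-minmax}, namely $\varphi^\star(c) = \min_{\rho \in \Omega} \max_{h,s,a} \frac{c_h(s,a)}{\rho_h(s,a)}$ (with the convention $0/0 = 0$ and noting that feasibility of the flow implies $c_h(s,a) > 0 \Rightarrow \rho_h(s,a) > 0$ for any $\rho$ that attains a finite max).

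For the lower bound, I would fix any feasible $\rho \in \Omega$. Since $\rho_h(\cdot, \cdot)$ is a probability distribution over $\cS \times \cA$ for every $h$, the maximum of any non-negative function dominates its $\rho_h$-weighted average. Applied to $x_{s,a} = c_h(s,a)/\rho_h(s,a)$ (setting $x_{s,a} = 0$ wherever $\rho_h(s,a) = 0$, which by feasibility forces $c_h(s,a) = 0$), this gives
\[\max_{s,a} \frac{c_h(s,a)}{\rho_h(s,a)} \geq \sum_{s,a : \rho_h(s,a) > 0} \rho_h(s,a) \frac{c_h(s,a)}{\rho_h(s,a)} = \sum_{s,a} c_h(s,a).\]
Hence $\max_{h,s,a} \frac{c_h(s,a)}{\rho_h(s,a)} \geq \max_h \sum_{s,a} c_h(s,a)$, and taking the infimum over $\rho \in \Omega$ yields the lower bound.

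For the upper bound, I would construct an explicit feasible flow by superposing single-policy flows. For each triplet $(h',s',a')$ with $c_{h'}(s',a') > 0$, let $\pi^{h',s',a'}$ be a policy attaining $\max_\pi p_{h'}^\pi(s',a')$ (which is positive by Assumption \ref{asm:reachability} for those $(h',s',a')$ that must be covered), and set $\alpha_{h',s',a'} := c_{h'}(s',a') / \max_\pi p_{h'}^\pi(s',a')$. Define $\eta_h(s,a) := \sum_{h',s',a'} \alpha_{h',s',a'}\, p_h^{\pi^{h',s',a'}}(s,a)$. Since each summand is a scaled valid state-action distribution, $\eta$ satisfies the flow navigation constraints, and its total value is $\varphi(\eta) = \sum_{h',s',a'} \alpha_{h',s',a'}$. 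Feasibility follows by keeping only the term indexed by $(h,s,a)$ itself: $\eta_h(s,a) \geq \alpha_{h,s,a}\, p_h^{\pi^{h,s,a}}(s,a) = c_h(s,a)$. Therefore $\varphi^\star(c) \leq \varphi(\eta)$, which is exactly the claimed upper bound. (Equivalently, one may decompose $c$ as a sum of singleton targets and invoke the subadditivity from Lemma \ref{lem:flow-linear} together with the trivial computation $\varphi^\star(c_{h',s',a'} \mathbf{1}_{\{(h',s',a')\}}) = c_{h',s',a'}/\max_\pi p_{h'}^\pi(s',a')$.)

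Neither direction poses a real obstacle — the only subtlety worth spelling out is the handling of zeros in the denominator, which is taken care of by the feasibility assumption together with Assumption \ref{asm:reachability}.
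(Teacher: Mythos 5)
Your proof is correct and follows essentially the same route as the paper: the lower bound is the statement that the flow value $\sum_{s,a}\eta_h(s,a)$ at any stage dominates $\sum_{s,a}c_h(s,a)$ (your max-versus-weighted-average argument is this fact rewritten after normalizing $\eta$ to $\rho$), and your superposed flow $\eta_h(s,a)=\sum_{h',s',a'}\alpha_{h',s',a'}p_h^{\pi^{h',s',a'}}(s,a)$ is exactly the unnormalized version of the mixture distribution $\tilde p$ the paper constructs for the upper bound. The only cosmetic difference is that the paper derives positivity of $\max_\pi p_h^\pi(s,a)$ on the support of $c$ from feasibility of the flow rather than from Assumption~\ref{asm:reachability}, which also covers the stage $h=1$ corner case; your handling of the zero denominators is otherwise sound.
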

\begin{proof}
The proof of the lower bound is trivial by noting that the value of any flow $\eta$ can be written as $\varphi(\eta)=\sum_{s\in\cS}\sum_{a\in\cA}\eta_h(s,a)$ for all $h\in[H]$ and that any optimal flow satisfies $\eta_h^\star(s,a) \geq c_h(s,a)$ for all $h,s,a$. Let us prove the upper bound.

Let us define $w_h(s,a) := \frac{{c}_h(s,a)}{\max_{\pi\in\Pi}p_h^\pi(s,a)}$, with the convention that $w_h(s,a) = 0$ if $c_h(s,a) = 0$ regardless of the value of the denominator. Note that, if $\max_{\pi\in\Pi}p_h^\pi(s,a) = 0$, then $(s,a,h)$ is unreachable and it must be that $c_h(s,a)=0$ since we assumed the minimum flow problem to be feasible. For any reachable $(s,a,h)$, let $\pi_{s,a,h}\in\argmax_{\pi\in\Pi}p_h^\pi(s,a)$. For any unreachable $(s,a,h)$, let $\pi_{s,a,h}$ be an arbitrary deterministic policy. Let us define the following mixed state-action distribution:
\begin{align*}
    \forall h,s,a : \tilde{p}_h(s,a) := \sum_{l\in[H]}\sum_{s'\in\cS}\sum_{a'\in\cA} \frac{w_l(s',a')}{Z}p_h^{\pi_{s',a',l}}(s,a),
\end{align*}
where $Z := \sum_{l\in[H]}\sum_{s'\in\cS}\sum_{a'\in\cA} w_l(s',a')$. Since this is a convex combination of state-action distributions of deterministic policies (i.e., of $\{\pi_{s,a,h}\}_{s,a}$), $\tilde p \in \Omega$ \citep{puterman1994markov}. Then,
\begin{align*}
    \varphi^\star({c}) = \min_{\rho\in\Omega}\max_{h,s,a}\frac{{c}_h(s,a)}{\rho_h(s,a)} \leq \max_{h,s,a}\frac{{c}_h(s,a)}{\tilde p_h(s,a)} 
    &\leq Z\max_{h,s,a}\frac{{c}_h(s,a)}{w_h(s,a) p_h^{\pi_{s,a,h}}(s,a)}
    \\ & = \sum_{h\in[H]}\sum_{s\in\cS}\sum_{a\in\cA} \frac{{c}_h(s,a)}{\max_{\pi} p_h^\pi(s,a)}.
\end{align*}
\end{proof}

\begin{lemma}\label{lem:bound-flow-refined}
Suppose there exists a feasible flow for the lower bound function $c$. Then,
\begin{align*}
 \varphi^\star(c) \leq \sum_{h\in[H]} \underset{\pi\in \PiS}{\inf }\max_{s\in\cS} \frac{1}{p_h^\pi(s)}\sum_{a\in\cA} c_h(s,a).
\end{align*}
\end{lemma}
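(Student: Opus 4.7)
The idea is the same as in Lemma~\ref{lem:bound-flow-simple}: construct an explicit state-action distribution $\tilde{\rho}\in\Omega$ realizing the upper bound, then invoke the minmax formulation $\varphi^\star(c) = \min_{\rho\in\Omega}\max_{h,s,a}c_h(s,a)/\rho_h(s,a)$ from Lemma~\ref{lem:flow-minmax}. The refinement over Lemma~\ref{lem:bound-flow-simple} comes from paying the cost of reaching \emph{each state in each layer} once (via an optimal layer-wise policy) rather than the cost of reaching each $(h,s,a)$ triple separately.

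\textbf{Main steps.} First, for each $h\in[H]$ such that $c_h\not\equiv 0$, pick a stochastic policy $\pi^h$ attaining (or approximating within $\varepsilon\to 0$) the infimum
\[
\alpha_h \;:=\; \inf_{\pi\in \PiS}\max_{s\in\cS}\frac{\sum_{a}c_h(s,a)}{p_h^\pi(s)}.
\]
Next, construct a modified policy $\bar\pi^h$ that follows $\pi^h$ up to stage $h-1$, then at stage $h$ plays action $a$ in state $s$ with probability $c_h(s,a)/\sum_{a'}c_h(s,a')$ (and arbitrarily thereafter, and when the denominator is zero). This gives, for every $(s,a)$ with $c_h(s,a)>0$,
\[
p_h^{\bar\pi^h}(s,a)\;=\;p_h^{\pi^h}(s)\,\frac{c_h(s,a)}{\sum_{a'}c_h(s,a')}, \qquad\text{hence}\qquad \frac{c_h(s,a)}{p_h^{\bar\pi^h}(s,a)}\;=\;\frac{\sum_{a'}c_h(s,a')}{p_h^{\pi^h}(s)}\;\leq\;\alpha_h.
\]
Setting $Z:=\sum_{h}\alpha_h$, define $\tilde\rho$ as the mixture of the state-action distributions $p^{\bar\pi^h}$ with weights $\alpha_h/Z$. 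Because $\Omega$ is convex (a convex combination of induced distributions is realized by a single stochastic Markov policy, cf.~\citep{puterman1994markov}), $\tilde\rho\in\Omega$.

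Finally, for any $(h,s,a)$ with $c_h(s,a)>0$, from $\tilde\rho_h(s,a)\geq (\alpha_h/Z)\,p_h^{\bar\pi^h}(s,a)$ we get
\[
\frac{c_h(s,a)}{\tilde\rho_h(s,a)}\;\leq\;\frac{Z}{\alpha_h}\cdot\frac{c_h(s,a)}{p_h^{\bar\pi^h}(s,a)}\;\leq\;\frac{Z}{\alpha_h}\cdot\alpha_h\;=\;Z.
\]
Plugging $\tilde\rho$ into the minmax characterization yields $\varphi^\star(c)\leq Z=\sum_h \alpha_h$, which is the claim. Layers with $\alpha_h=0$ impose no constraint on either side and can be ignored; if the infimum in $\alpha_h$ is not attained, the same bound follows by an $\varepsilon$-argument.

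\textbf{Expected obstacle.} None of the steps is technically difficult. The only subtleties are (i) handling the degenerate case $\sum_{a'}c_h(s,a')=0$ for some $s$ (simply define $\bar\pi^h$ arbitrarily there, since it does not contribute to the max), and (ii) justifying that the mixture $\tilde\rho$ lies in $\Omega$; this is immediate since $\Omega$ is the convex hull of the state-action distributions of deterministic policies, and in particular closed under convex combinations of stochastic-policy distributions.
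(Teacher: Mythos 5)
Your proposal is correct and follows essentially the same route as the paper's proof: per layer, reduce the action choice at each state to the distribution proportional to $c_h(s,\cdot)$ so that the layer-$h$ cost becomes $\alpha_h = \inf_{\pi}\max_s \sum_a c_h(s,a)/p_h^\pi(s)$, then mix the layer-wise optimal state-action distributions with weights $\alpha_h/\sum_l \alpha_l$ and plug the mixture into the minmax characterization of $\varphi^\star$. The only cosmetic difference is that you phrase the construction in terms of an explicit modified policy $\bar\pi^h$, whereas the paper works directly with state-action distributions; the resulting distribution and the final bound are identical.
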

\begin{proof}
    Fix any $h\in[H]$. Note that
    \begin{align*}
        \min_{\rho \in \Omega} \max_{s,a} \frac{c_h(s,a)}{\rho_h(s,a)} 
        = \min_{\rho \in \Omega} \max_{s} \frac{1}{\rho_h(s)} \min_{\pi \in \cP(A)} \frac{c_h(s,a)}{\pi(a)} = \min_{\rho \in \Omega} \max_{s} \frac{\sum_{a\in\cA}c_h(s,a)}{\rho_h(s)}.
    \end{align*}
    Now let $\rho^h$ denote any solution to this optimization problem and define the mixed distribution $\tilde{\rho} := \sum_{l=1}^H \frac{Z_l}{Z} \rho^l$, where $Z_l := \min_{\rho \in \Omega} \max_{s} \frac{\sum_{a\in\cA}c_l(s,a)}{\rho_l(s)}$ and $Z := \sum_{l=1}^H Z_l$. Then, $\tilde\rho \in \Omega$ and thus
    \begin{align*}
        \varphi^\star({c}) \leq \max_{h,s,a}\frac{{c}_h(s,a)}{\tilde \rho_h(s,a)} 
        \leq \max_h \frac{Z}{Z_h} \max_{s,a} \frac{c_h(s,a)}{\rho^h_h(s,a)}
        &= \max_h \frac{Z}{Z_h} \min_{\rho\in\Omega}\max_{s,a} \frac{c_h(s,a)}{\rho^h(s,a)}
        \\ &= \sum_{h\in[H]} \min_{\rho \in \Omega} \max_{s} \frac{\sum_{a\in\cA}c_l(s,a)}{\rho_l(s)}.
    \end{align*}
\end{proof}


\subsection{Proof of Theorem \ref{th:lb-coverage}}\label{proof:lb-coverage}

Define the coverage event $\cE_{\mathrm{cov}} = \bigg(\forall (h,s,a) \in \cX,\ n_h^\tau(s,a)\geq {c_h(s,a)} \bigg)$. We have that for any $\delta$-correct algorithm $\bP\big(\cE_{\mathrm{cov}}\big) \geq 1-\delta$. Therefore, for any triplet $(h,s,a) \in \cX$, we have that
    \begin{align}\label{ineq:coverage-alg-ineq}
     \bE[n_h^\tau(s,a)] &\geq \bE[n_h^\tau(s,a) \indi{\cE_{\mathrm{cov}}}]
     \geq {c_h(s,a)}\bP\big(\cE_{\mathrm{cov}}\big)
     \geq (1-\delta){c_h(s,a)}.
    \end{align}
    Now consider the function $\eta_h(s,a) := \bE[n_h^\tau(s,a)]$ for all $h,s,a$. We know that $\eta$ satisfies the navigation constraints, hence it is a valid flow (see Appendix \ref{app:flows}). Moreover it satisfies the constraint (\ref{ineq:coverage-alg-ineq}). By definition of stochastic minimum flow, this means that 
    \begin{align*}
        \bE[\tau] &= \sum_{a\in \cA} \bE[n_h^\tau(s_1,a)] = \varphi(\eta) \geq \varphi^\star\left(\left[(1-\delta)c_h(s,a)\right]_{h,s,a}\right) = (1-\delta)\varphi^\star(c),
    \end{align*}
    where in the last line we used that for any constant $\alpha, \varphi^\star(\alpha c) = \alpha \varphi^\star(c)$.

    \qedblack

\section{CovGame}\label{app:coverage}

    \subsection{Proof of Theorem \ref{th:cover-sample-comp-improved}}

    Note that, at the beginning of any round $t \geq 1$, the learner $\cA^\lambda$ works over the simplex $\cP(\cX_{k_t})$, hence $\lambda^t \in \cP(\cX_{k_t})$. Let $m$ denote the number of times $k_t$ changes value through the execution of the algorithm, that is
    $m  = \left|\{ t \leq \tau : k_t \neq k_{t+1}\}\right|$.    
    Moreover, let $\tau_0 := 1$ and, for $i\in[m]$, let $\tau_i$ be the round at the beginning of which $k_t$ has changed for the $i$-th time (i.e., $k_{\tau_i} \neq k_{\tau_i-1}$). Note that, for any $i\geq 0$ and $t\in \{\tau_i,\dots,\tau_{i+1}-1\}$, $k_t = k_{\tau_i}$. We start by bounding $m$.

\begin{lemma}\label{lem:cover-bound-m} It holds that $m \leq \lceil \log_2(c_{\max}/\cmin)\rceil \vee 1$. 
Moreover, for any $i\in\{0,\dots,m-1\}$, we have $\min_{(h,s,a) \in \cX_{k_{\tau_{i}}}} n_h^{\tau_{i+1}-1}(s,a) \leq \cmin2^{k_{\tau_i} + 2}$.
\end{lemma}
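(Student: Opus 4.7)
\textbf{Proof plan for Lemma~\ref{lem:cover-bound-m}.} The lemma is purely combinatorial bookkeeping on the sequence of indices $k_t$ and the nested family $\{\cX_j\}$. The two ingredients I will use are (i) the monotonicity of the counts $n_h^t(s,a)$ and hence of $k_t$, and (ii) the fact that each episode can increase any count $n_h^t(s,a)$ by at most $1$.

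\emph{Part 1 (bounding $m$).} Since visitation counts only grow with $t$, the set of pairs that satisfy $n_h^t(s,a)\geq c_h(s,a)$ is non-decreasing in $t$, and so is the largest $j$ for which \emph{all} pairs in $\cX\setminus\cX_j$ meet this condition. Hence $k_t$ is non-decreasing, so each of the $m$ transitions strictly increases $k$, giving $k_{\tau_m}\geq m$ because $k_1=0$. On the other hand, once $\cmin 2^j\geq c_{\max}$ the set $\cX_j$ is empty, which forces the stopping condition to hold and hence upper bounds any attained value of $k_t$ by $\lceil\log_2(c_{\max}/\cmin)\rceil$. Combining, $m\leq \lceil\log_2(c_{\max}/\cmin)\rceil$, and the $\vee 1$ in the lemma just covers the trivial case $c_{\max}=\cmin$.

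\emph{Part 2 (bounding the minimum count).} Set $K:=k_{\tau_i}$. Since $k_t$ is constant on $\{\tau_i,\dots,\tau_{i+1}-1\}$ and jumps at round $\tau_{i+1}$, I have $k_{\tau_{i+1}-1}=K$ while $k_{\tau_{i+1}}\geq K+1$. By maximality in the definition of $k_{\tau_{i+1}-1}=K$, not all pairs of $\cX\setminus \cX_{K+1}$ are covered by $n^{\tau_{i+1}-2}$ (with the convention $n^0\equiv 0$ if $\tau_{i+1}=2$), so there exists $(h^\star,s^\star,a^\star)\in\cX\setminus \cX_{K+1}$ with $n_{h^\star}^{\tau_{i+1}-2}(s^\star,a^\star)<c_{h^\star}(s^\star,a^\star)$. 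The same maximality also gives that every pair in $\cX\setminus \cX_{K}$ is already covered by $n^{\tau_{i+1}-2}$, so $(h^\star,s^\star,a^\star)$ cannot lie in that set; therefore $(h^\star,s^\star,a^\star)\in \cX_K\setminus \cX_{K+1}$, which in particular yields $c_{h^\star}(s^\star,a^\star)\leq \cmin 2^{K+1}$.

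Now, because $k_{\tau_{i+1}}\geq K+1$, the pair $(h^\star,s^\star,a^\star)\in \cX\setminus \cX_{K+1}$ must have been covered at round $\tau_{i+1}-1$. Since an episode contributes at most $1$ to any count,
\[
 n_{h^\star}^{\tau_{i+1}-1}(s^\star,a^\star)\ \leq\ n_{h^\star}^{\tau_{i+1}-2}(s^\star,a^\star)+1\ <\ c_{h^\star}(s^\star,a^\star)+1\ \leq\ \cmin 2^{K+1}+1\ \leq\ \cmin 2^{K+2},
\]
where the last step uses $\cmin\geq 1$ and $K\geq 0$. Since $(h^\star,s^\star,a^\star)\in\cX_K$, this witness bounds $\min_{(h,s,a)\in\cX_{K}} n_h^{\tau_{i+1}-1}(s,a)$ by $\cmin 2^{K+2}$, as required. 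There is no genuine obstacle here; the only thing to be careful about is the edge case $\tau_{i+1}=2$ where $\tau_{i+1}-2=0$, handled by adopting the natural convention $n^0\equiv 0$.
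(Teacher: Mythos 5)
Your proof is correct and takes essentially the same route as the paper: monotonicity of $k_t$ together with the emptiness of $\cX_j$ for $j\geq \lceil\log_2(c_{\max}/\cmin)\rceil$ gives the bound on $m$, and the second claim follows by exhibiting a triplet in $\cX_{k_{\tau_i}}\setminus\cX_{k_{\tau_i}+1}$ that is still uncovered at round $\tau_{i+1}-2$ and whose count can increase by at most one in the final round, using $\cmin\geq 1$ to absorb the $+1$. The only cosmetic difference is that you track an explicit witness $(h^\star,s^\star,a^\star)$ whereas the paper bounds $\min_{(h,s,a)\in\cX_{k_{\tau_i}}} n_h^{\tau_{i+1}-1}(s,a)$ directly; this is immaterial.
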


\begin{proof}
By definition of the update rule, we have that $k_{t+1} \geq k_t$ for all $t\geq 1$. Now take any time $t$ in which $k_t$ has changed value $m$ times. Since $k_1 \geq 0$, this means that $k_t \geq m$. By definition of $k_t$, we know that $n_h^{t-1}(s,a) \geq c_h(s,a)$ for all $(h,s,a)\in \cX\setminus \cX_j$ for some $j\geq m$. However, if $m \geq \lceil\log_2(c_{\max}/\cmin)\rceil \vee 1$, $\cX_j = \emptyset$ and thus the algorithm must have stopped. This prove that $m \leq \lceil\log_2(c_{\max}/\cmin)\rceil \vee 1$. 

To prove the second statement, we note that for any $i < m$, we have $k_{\tau_{i+1}-1} = k_{\tau_i}$ and $n_h^{\tau_{i+1}-2}(s,a) \geq c_h(s,a)$ for all $(h,s,a) \in \cX \setminus \cX_{k_{\tau_{i}}}$. Moreover, there must be some $(h,s,a) \in \cX \setminus \cX_{k_{\tau_{i}}+1}$ such that $n_h^{\tau_{i+1}-2}(s,a) < c_h(s,a)$. Indeed, if this was not the case, we would have an update of $k$ at the end of round $\tau_{i+1}-2$ instead of $\tau_{i+1}-1$. Since all the triplets in $\cX_{k_{\tau_{i}}}$ have been covered, the uncovered triplet must be in $\cX_{k_{\tau_{i}}} \cap \cX \setminus \cX_{k_{\tau_{i}}+1} = \cX_{k_{\tau_{i}}} \setminus \cX_{k_{\tau_{i}}+1}$. By definition, all $(h,s,a)\in \cX_{k_{\tau_{i}}} \setminus \cX_{k_{\tau_{i}}+1}$ satisfy $c_h(s,a) \leq \cmin2^{k_{\tau_{i}}+1}$. Hence, $$\min_{(h,s,a) \in \cX_{k_{\tau_{i}}}} n_h^{\tau_{i+1}-1}(s,a) \leq \min_{(h,s,a) \in \cX_{k_{\tau_{i}}}} n_h^{\tau_{i+1}-2}(s,a) + 1 < \cmin2^{k_{\tau_i}+1} + 1 \leq \cmin2^{k_{\tau_i} + 2}$$
where we use that $\cmin \geq 1$.
\end{proof}

\begin{lemma}\label{lem:cover-lb-counts-improved}
    Under Assumption \ref{asm:reachability} and \ref{asm:no-regret-improved}, with probability at least $1-\delta$, for any $i\in\{0,\dots,m-1\}$,
    \begin{align*}
        \min_{(h,s,a)\in \cX_{k_{\tau_i}}} n_h^{\tau_{i+1}-1}(s,a)  &\geq \frac{1}{8}  \sum_{j=0}^i\frac{\tau_{j+1}-\tau_j}{\varphi^\star(\ind_{\cX_{k_{\tau_j}}})} - \frac{3}{8}\cR^{\Pi}_\delta(\tau_{i+1}) - \frac{3}{2}\sum_{j=0}^i\cR^{\lambda}(\tau_{j+1}-\tau_j) - 3 \log(4\tau_{i+1}/\delta).
        \\ \min_{(h,s,a)\in \cX_{k_{\tau_i}}} n_h^{\tau_{i+1}-1}(s,a)  &\geq \frac{1}{8}  \frac{\tau_{i+1}-\tau_i}{\varphi^\star(\ind_{\cX_{k_{\tau_i}}})} - \frac{3}{8}\cR^{\Pi}_\delta(\tau_{i+1}) - \frac{3}{2}\cR^{\lambda}(\tau_{i+1}) - 3 \log(4\tau_{i+1}/\delta).
    \end{align*}
\end{lemma}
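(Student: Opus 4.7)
My plan is to lower bound $n_{\min}:=\min_{(h,s,a)\in \cX_{k_{\tau_i}}} n_h^{\tau_{i+1}-1}(s,a)$ by chaining the two first-order regret bounds of Assumption~\ref{asm:no-regret-improved} with a Freedman-type concentration, connected through the minmax identity $1/\varphi^\star(\ind_{\cX_k}) = \inf_{\lambda \in \cP(\cX_k)}\max_{\pi} V_1^\pi(s_1; \lambda)$ derived before the algorithm. The key algorithmic fact I will use is that $k_t$ is non-decreasing, so $\cX_{k_{\tau_j}} \supseteq \cX_{k_{\tau_i}}$ (and hence $\cP(\cX_{k_{\tau_i}}) \subseteq \cP(\cX_{k_{\tau_j}})$) for every $j \leq i$.

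First, I rewrite $n_{\min}$ as a linear program over the simplex, $n_{\min} = \min_{\lambda\in \cP(\cX_{k_{\tau_i}})} \sum_{h,s,a}\lambda_h(s,a) n_h^{\tau_{i+1}-1}(s,a)$, and decompose the cumulative count into per-phase contributions. For any $\lambda$ supported on $\cX_{k_{\tau_i}}\subseteq \cX_{k_{\tau_j}}$ and every $t\in[\tau_j,\tau_{j+1}-1]$, the indicator over $\cX_{k_t}$ in the definition of $\ell^t$ is vacuous on the support of $\lambda$, so $\sum_{h,s,a}\lambda_h(s,a)\ind(s_h^t=s,a_h^t=a) = \ell^t(\lambda)$. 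Pushing the outer $\min_\lambda$ inside the sum over phases ($\min\sum\geq\sum\min$) and relaxing its domain from $\cP(\cX_{k_{\tau_i}})$ to the larger $\cP(\cX_{k_{\tau_j}})$ at each phase (which only lowers each minimum) gives
\[ n_{\min} \geq \sum_{j=0}^{i} \min_{\lambda \in \cP(\cX_{k_{\tau_j}})}\sum_{t=\tau_j}^{\tau_{j+1}-1}\ell^t(\lambda). \]
Since $\cA^\lambda$ is reset on $\cP(\cX_{k_{\tau_j}})$ at the start of phase $j$ and the $\ell^t$ lie in $[0,1]$, the first-order regret bound \eqref{eq:regret-lambda-asm} applies and, after one AM-GM step of the form $\sqrt{\cR^\lambda S_j}\leq S_j/4 + \cR^\lambda$, yields a lower bound $\tfrac34 S_j - 2\cR^\lambda(\tau_{j+1}-\tau_j)$, where $S_j:=\sum_{t=\tau_j}^{\tau_{j+1}-1}\ell^t(\lambda^t)$.

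Next I apply a Freedman-type concentration to $L_T := \sum_{t=1}^T \ell^t(\lambda^t)$, uniform in $T$ with failure probability $\delta/4$; this bound is first-order because $\ell^t\in[0,1]$ implies the conditional variance of $\ell^t(\lambda^t)$ is controlled by its conditional mean $V_1^{\pi^t}(s_1;\lambda^t)$, and after a second AM-GM step yields $L_T \geq \tfrac12 \sum_{t\leq T} V_1^{\pi^t}(s_1;\lambda^t) - O(\log(T/\delta))$. Then the first-order regret bound \eqref{eq:regret-pi-asm} of $\cA^\Pi$ at $T=\tau_{i+1}-1$ (which holds uniformly in $T$ with probability $1-\delta/4$) combined with one more AM-GM manipulation lower bounds $\sum_{t\leq T} V_1^{\pi^t}(s_1;\lambda^t)$ by a constant fraction of $\sum_{t\leq T} V_1^\star(s_1;\lambda^t)$ minus $O(\cR^\Pi_\delta(\tau_{i+1}))$. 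Since each $\lambda^t\in\cP(\cX_{k_t})$, the minmax identity gives $V_1^\star(s_1;\lambda^t)\geq 1/\varphi^\star(\ind_{\cX_{k_t}})$, and summing phase by phase produces the $\sum_{j=0}^i (\tau_{j+1}-\tau_j)/\varphi^\star(\ind_{\cX_{k_{\tau_j}}})$ leading term of the first inequality once the AM-GM slacks are tuned to recover the constants $1/8$, $3/8$, $3/2$ and $3$.

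The second (phase-$i$-only) inequality follows from the same argument, except in the decomposition step I instead lower bound $n_h^{\tau_{i+1}-1}(s,a) \geq n_h^{\tau_{i+1}-1}(s,a) - n_h^{\tau_i-1}(s,a)$ and retain only the $j=i$ term. This collapses the adversary regret contribution to a single phase-$i$ term $\cR^\lambda(\tau_{i+1}-\tau_i)\leq \cR^\lambda(\tau_{i+1})$, by monotonicity of $\cR^\lambda$, which is the source of the looser $\cR^\lambda(\tau_{i+1})$ in the statement, whereas $\cR^\Pi_\delta(\tau_{i+1})$ is unchanged because $\cA^\Pi$ is never restarted and its regret bound is global in $T$. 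The main technical hurdle will be carefully tuning the AM-GM slacks to recover the exact stated coefficients, and stitching together the two high-probability events (Freedman uniform over $T$ and $\cA^\Pi$'s first-order regret uniform over $T$) via a union bound so that the overall failure probability stays below $\delta$.
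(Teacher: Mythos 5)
Your proposal follows essentially the same route as the paper's proof: the same decomposition of the minimum count into per-phase minima over the simplices $\cP(\cX_{k_{\tau_j}})$ using monotonicity of $k_t$, the same chaining of the $\lambda$-player regret bound, a Freedman-type concentration with variance bounded by the conditional mean, and the $\Pi$-player regret bound, concluded via the minmax identity $V_1^\star(s_1;\lambda^t)\geq 1/\varphi^\star(\ind_{\cX_{k_t}})$, with the second inequality obtained by retaining only the phase-$i$ term. The intermediate AM-GM constants you sketch differ slightly from the paper's, but as you note these are tunable and the argument is sound.
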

\begin{proof}
    Take any $i \in \{0,\dots,m-1\}$. Note that
\begin{align*}
    \hspace{-0.3cm}\min_{(h,s,a)\in \cX_{k_{\tau_i}}} n_h^{\tau_{i+1}-1}(s,a) 
    &= \min_{(h,s,a)\in \cX_{k_{\tau_i}}} \sum_{t=1}^{\tau_{i+1}-1} \indi{s_h^t=s,a_h^t=a}  \tag{definition of counts}
    \\ &= \min_{(h,s,a)\in \cX_{k_{\tau_i}}} \sum_{j=0}^i \sum_{t=\tau_j}^{\tau_{j+1}-1} \indi{s_h^t=s,a_h^t=a} \tag{definition of $\{\tau_j\}_{j \geq 0}$}
    \\ &\geq \sum_{j=0}^i \min_{(h,s,a)\in \cX_{k_{\tau_j}}} \sum_{t=\tau_j}^{\tau_{j+1}-1} \indi{s_h^t=s,a_h^t=a} \tag{$\cX_{k_{\tau_i}} \subseteq \cX_{k_{\tau_j}}$ for all $j\leq i$}
    \\ &= \sum_{j=0}^i \min_{\lambda\in\cP(\cX_{k_{\tau_j}})} \sum_{(h,s,a)\in\cX_{k_{\tau_j}}} \lambda_h(s,a) \sum_{t=\tau_j}^{\tau_{j+1}-1} \indi{s_h^t=s,a_h^t=a}
    \\ &= \sum_{j=0}^i \min_{\lambda\in\cP(\cX_{k_{\tau_j}})} \sum_{t=\tau_j}^{\tau_{j+1}-1} \ell^t(\lambda). \tag{definition of $\ell_t(\lambda)$}
\end{align*}
For each $j$, by the regret bound of the $\lambda$ player (Assumption \ref{asm:no-regret-improved}),
\begin{align*}
    \min_{\lambda\in\cP(\cX_{k_{\tau_j}})} \sum_{t=\tau_j}^{\tau_{j+1}-1} \ell^t(\lambda) &\geq 
    \sum_{t=\tau_j}^{\tau_{j+1}-1} \ell^t(\lambda^t) - \sqrt{\cR^{\lambda}(\tau_{j+1}-\tau_j) \sum_{t=\tau_j}^{\tau_{j+1}-1} \ell^t(\lambda^t)} - \cR^{\lambda}(\tau_{j+1}-\tau_j)
    \\ &\geq \frac{1}{2}\sum_{t=\tau_j}^{\tau_{j+1}-1} \ell^t(\lambda^t) - \frac{3}{2}\cR^{\lambda}(\tau_{j+1}-\tau_j),
\end{align*}
where in the last step we used the AM-GM inequality $\sqrt{xy} \leq \frac{x+y}{2}$ for $x,y\geq 0$.
Summing over $j$,
\begin{align}\label{eq:covgame-bound-counts-lambda-losses}
    \min_{(h,s,a)\in \cX_{k_{\tau_i}}} n_h^{\tau_{i+1}-1}(s,a)  \geq \frac{1}{2} \sum_{t=1}^{\tau_{i+1}-1} \ell^t(\lambda^t) - \frac{3}{2}\sum_{j=0}^i\cR^{\lambda}(\tau_{j+1}-\tau_j).
\end{align}
Let us now bound $\sum_{t=1}^{\tau_{i+1}-1} \ell^t(\lambda^t)$. Note that $\ell^t(\lambda^t) = \sum_{h,s,a} \lambda_h^t(s,a) \indi{s_h^t=s,a_h^t=a}$ for all for all $t\in \{\tau_j,\dots,\tau_{j+1}-1\}$ since $\lambda^t$ is equal to zero outside $\cX_{k_{\tau_j}}$. Then,
\begin{align*}
    \sum_{t=1}^{\tau_{i+1}-1} \ell^t(\lambda^t) 
    &= \sum_{t=1}^{\tau_{i+1}-1} \sum_{h,s,a} \lambda_h^t(s,a) \Big( \indi{s_h^t=s,a_h^t=a} \pm p_h^{\pi^t}(s,a)\Big)
    \\ &= \sum_{t=1}^{\tau_{i+1}-1} V_1^{\pi_t}\left(s_1 ; \lambda^t\right) +  \underbrace{\sum_{t=1}^{\tau_{i+1}-1} \sum_{h,s,a} \lambda_h^t(s,a) \Big( \indi{s_h^t=s,a_h^t=a} - p_h^{\pi^t}(s,a)\Big)}_{:= M_{\tau_{i+1}-1}}.
\end{align*}
Since both $\lambda^t$ and $\pi^t$ are $\cF_{t-1}$-measurable, $M_{\tau_{i+1}-1}$ is a martingale with differences bounded by 1 in absolute value. Therefore, by Freedman's inequality (e.g., Lemma 26 of \cite{papini2021leveraging}), with probability at least $1-\delta/2$,
\begin{align*}
    \forall T\geq 1,\quad |M_T| &\leq \sqrt{\sum_{t=1}^T V_t \times 4\log(4T/\delta)} + 4\log(4T/\delta)
    \\ &\leq \sqrt{\sum_{t=1}^T V_1^{\pi_t}\left(s_1 ; \lambda^t\right) \times 4\log(4T/\delta)} + 4\log(4T/\delta),
\end{align*}
where we defined $V_t := \mathrm{Var}[\sum_{h,s,a} \lambda_h^t(s,a)\indi{s_h^t=s,a_h^t=a} \mid \cF_{t-1}]$ and used the simple bound $V_t \leq \bE[\sum_{h,s,a} \lambda_h^t(s,a)\indi{s_h^t=s,a_h^t=a} \mid \cF_{t-1}] = V_1^{\pi_t}\left(s_1 ; \lambda^t\right)$, which holds since $\sum_{h,s,a} \lambda_h^t(s,a)\indi{s_h^t=s,a_h^t=a} \leq 1$ almost surely by definition of $\lambda^t$.
Plugging this into the initial decomposition of $\sum_{t=1}^{\tau_{i+1}-1} \ell^t(\lambda^t) $ and using the AM-GM inequality $\sqrt{xy} \leq \frac{x+y}{2}$ for $x,y\geq 0$,
\begin{align*}
    \sum_{t=1}^{\tau_{i+1}-1} \ell^t(\lambda^t) &\geq \sum_{t=1}^{\tau_{i+1}-1} V_1^{\pi_t}\left(s_1 ; \lambda^t\right) - \sqrt{\sum_{t=1}^{\tau_{i+1}-1} V_1^{\pi_t}\left(s_1 ; \lambda^t\right) \times 4\log(4\tau_{i+1}/\delta)} - 4\log(4\tau_{i+1}/\delta)
    \\ &\geq \frac{1}{2}  \sum_{t=1}^{\tau_{i+1}-1} V_1^{\pi_t}\left(s_1 ; \lambda^t\right) - 6 \log(4\tau_{i+1}/\delta).
\end{align*}
We finally bound $\sum_{t=1}^{T} V_1^{\pi_t}\left(s_1 ; \lambda^t\right)$ for any $T$. For all $T \geq 1$, with probability at least $1-\delta/2$ from Assumption \ref{asm:no-regret-improved},
\begin{align*}
    \sum_{t=1}^{T}V_1^{\pi_t}\left(s_1 ; \lambda^t\right) 
    &\geq \sum_{t=1}^{T} V_1^{\star}\left(s_1 ; \lambda^t\right) - \sqrt{\cR^{\Pi}_\delta(T)\sum_{t=1}^{T} V_1^{\star}\left(s_1 ; \lambda^t\right)} - \cR^{\Pi}_\delta(T).
\end{align*}
Applying once again the AM-GM inequality yields
\begin{align*}
    \sum_{t=1}^{T}V_1^{\pi_t}\left(s_1 ; \lambda^t\right) 
    &\geq \frac{1}{2}\sum_{t=1}^{T} V_1^{\star}\left(s_1 ; \lambda^t\right) - \frac{3}{2}\cR^{\Pi}_\delta(T)
    \\ &= \frac{1}{2}\sum_{t=1}^{T} \sup_{\rho\in\Omega} \sum_{h,s,a} \rho_h(s,a)\lambda_h^t(s,a) - \frac{3}{2}\cR^{\Pi}_\delta(T).
\end{align*}
Now note that, since $\lambda^t$ is supported on $\cX_{k_{\tau_j}}$ for any $t\in \{\tau_j,\dots,\tau_{j+1}-1\}$,
\begin{align*}
    \sum_{t=1}^{\tau_{i+1}-1} \sup_{\rho\in\Omega} \sum_{h,s,a} \rho_h(s,a)\lambda_h^t(s,a) 
    &= \sum_{j=0}^i \sum_{t=\tau_j}^{\tau_{j+1}-1} \sup_{\rho\in\Omega} \sum_{h,s,a} \rho_h(s,a)\lambda_h^t(s,a)
    \\ &\geq \sum_{j=0}^i \sum_{t=\tau_j}^{\tau_{j+1}-1}\sup_{\rho\in\Omega}  \min_{(h,s,a)\in\cX_{k_{\tau_j}}} \rho_h(s,a)
     = \sum_{j=0}^i\frac{\tau_{j+1}-\tau_j}{\varphi^\star(\ind_{\cX_{k_{\tau_j}}})}.
\end{align*}
Plugging everything together proves the first statement. 
The second result can be proved analogously by simply using $\sum_{j=0}^i \min_{\lambda\in\cP(\cX_{k_{\tau_j}})} \sum_{t=\tau_j}^{\tau_{j+1}-1} \ell^t(\lambda) \geq \min_{\lambda\in\cP(\cX_{k_{\tau_i}})} \sum_{t=\tau_i}^{\tau_{i+1}-1} \ell^t(\lambda)$ in the first series of inequalities and continuing with the same steps. This yields a single dependence on $\cR^{\lambda}(\tau_{i+1}-\tau_i)$, which can be upper bounded by the stated $\cR^{\lambda}(\tau_{i+1})$ by monotonicity of $\cR^{\lambda}$.
\end{proof}

We are now ready to prove Theorem \ref{th:cover-sample-comp-improved}
\begin{proof}[Proof of Theorem \ref{th:cover-sample-comp-improved}]
    Let $m$ be the number of times $k_t$ has changed throughout the execution of the algorithm. Note that, in the round $\tau$ in which the algorithm stops the last change must occur, thus $\tau_m = \tau+1$, and $k_{\tau+1}$ is set to any value such that $\cX_{k_{\tau+1}} = \emptyset$. Then,
\begin{align*}
    \tau = \tau_m - 1 = \sum_{i=0}^{m-1} \left( \tau_{i+1} - \tau_i \right).
\end{align*}
By combining Lemma \ref{lem:cover-bound-m} with Lemma~\ref{lem:cover-lb-counts-improved} and rearranging, with probability at least $1-\delta$, for any $i\in\{0,\dots,m-1\}$,
\begin{align*}
    \tau_{i+1} - \tau_i &\leq 8\varphi^\star(\ind_{\cX_{k_{\tau_i}}}) \cmin 2^{k_{\tau_i}+2} + 8\varphi^\star(\ind_{\cX_{k_{\tau_i}}}) \left( \frac{3}{8}\cR^{\Pi}_\delta(\tau_{i+1}) + \frac{3}{2}\cR^{\lambda}(\tau_{i+1}) + 3 \log(4\tau_{i+1}/\delta) \right)
    \\ &\leq 8\varphi^\star(\ind_{\cX_{k_{\tau_i}}}) \cmin 2^{k_{\tau_i}+2} + \varphi^\star(\ind_{\cX}) \left( 3\cR^{\Pi}_\delta(\tau_{m}) + 12\cR^{\lambda}(\tau_{m}) + 24 \log(4\tau_{m}/\delta) \right),
\end{align*}
where the second inequality is due to $\cX_k \subseteq \cX$ for all $k\in\mathbb{N}$ and $\tau_{i+1} \leq \tau_m$ for $i\leq m-1$. Then,
\begin{align*}
    \tau_m \leq 8\sum_{i=0}^{m-1} \cmin\varphi^\star(\ind_{\cX_{k_{\tau_i}}}) 2^{k_{\tau_i}+2} + m \varphi^\star(\ind_{{\cX}}) \left(  3\cR^{\Pi}_\delta(\tau_{m}) + 12\cR^{\lambda}(\tau_{m}) + 24 \log(4\tau_{m}/\delta)  \right) + 1.
\end{align*}
The first term can be bounded by
\begin{align*}
    8\sum_{i=0}^{m-1} \cmin\varphi^\star(\ind_{\cX_{k_{\tau_i}}}) 2^{k_{\tau_i}+2}
     &= 8\sum_{i=0}^{m-1} \cmin2^{k_{\tau_i}+2}\min_{\rho\in\Omega}\max_{s,a,h} \frac{\ind((h,s,a)\in\cX_{k_{\tau_i}})}{\rho_h(s,a)}
     \\ &\leq 32\sum_{i=0}^{m-1} \cmin2^{k_{\tau_i}}\min_{\rho\in\Omega}\max_{s,a,h} \frac{\ind(\cmin2^{k_{\tau_i}} < c_h(s,a))}{\rho_h(s,a)}
     \\ &\leq 32\sum_{i=0}^{m-1} \min_{\rho\in\Omega}\max_{s,a,h} \frac{c_h(s,a)}{\rho_h(s,a)} = 32m \varphi^\star(c).
\end{align*}
Plugging this into the bound on $\tau_m$, we obtain the inequality,
\begin{align*}
    \tau_m \leq  32m \varphi^\star(c) + m \varphi^\star(\ind_{{\cX}}) \left(  3\cR^{\Pi}_\delta(\tau_{m}) + 12\cR^{\lambda}(\tau_{m}) + 24 \log(4\tau_{m}/\delta)  \right) + 1.
\end{align*}
Thus, for $\tau_m \geq T_1$, we get that the sample complexity is bounded by $\tau \leq 64m \varphi^\star(c)$. Thus, we conclude that $\tau \leq \tau_m \leq \max\{ T_1, 64m \varphi^\star(c)\} \leq 64m \varphi^\star(c) + T_1$. The proof is concluded by using Lemma \ref{lem:cover-bound-m} to bound $m$.
\end{proof}

\subsection{Proof of Corollary \ref{cor:cover-instance-improved}}

We need to bound $T_1$ from Theorem \ref{th:cover-sample-comp-improved} when using WMF and UCBVI.    By definition of $T_1$ in Theorem \ref{th:cover-sample-comp-improved},
\begin{align*}
    \frac{T_1-1}{2} \leq m \varphi^\star(\ind_{{\cX}}) \left( 3\cR^{\Pi}_\delta(T_1) + 12\cR^{\lambda}(T_1) + 24 \log(4T_1/\delta) \right) + 1.
\end{align*}
Recall that, by \eqref{eq:regret-lambda} and \eqref{eq:regret-pi},
\begin{align*}
    \cR^{\lambda}(T) = 
    16\log(SAH) + 1 \quad \cR^{\Pi}_\delta(T) = 65536SAH^2 (\log(2SAH/\delta) + 6S)\log(T+1)^2.
\end{align*}
For $T \geq 3$ and assuming $SAH \geq 2$ (otherwise the result is trivial), it is easy to see that $\cR^{\lambda}(T) \leq \cR^{\Pi}_\delta(T)$ and $24 \log(4T/\delta) \leq \cR^{\Pi}_\delta(T)$. Theorefore, for some numerical constant $c_1$,
\begin{align*}
    T_1 \leq c_1 m \varphi^\star(\ind_{{\cX}}) SAH^2 (\log(2SAH/\delta) + 6S)\log(T_1+1)^2.
\end{align*}
Solving the inequality in $T_1$ yields the stated bound.


\qedblack

\subsection{Links with concave-utility reinforcement learning}\label{app:related}

The (inverse) complexity term $\varphi^\star(c)$ that we seek to approximate with \covalg{} can be expressed as the maximization of a concave function of the visitation probabilities: 
\[\frac{1}{\phi^\star(c)} = \max_{\rho \in \Omega} f_{c}(\rho) \ \ \text{ where } \ \ \ f_c(\rho) = \min_{(h,s,a) \in \cX} \frac{\rho_h(s,a)}{c_h(s,a)}.\]
Computing the maximizer without the knowledge of the MDP falls in the framework of concave utility reinforcement learning (or convex reinforcement learning when we instead minimize a convex function \citep{Zahavy2021RewardIsEnough}) which has attracted a lot of interest recently \citep{hazan2019provably,Zhang20GeneralUtilities,Geist22MFG}. Several authors proposed the use of a Frank-Wolfe approach, when the function $f$ to maximize is smooth (which is not the case for $f_c$). Indeed, it was observed that in the Frank-Wolfe update the computation of
\[\argmax_{\rho \in \Omega} \rho^\top \nabla f(\rho) = \argmax_{\rho \in \Omega} \sum_{h,s,a} \rho_h(s,a) (\nabla f(\rho))_{h,s,a}\]
can be interpreted as solving the MDP when the reward function is $r_h(s,a) = (\nabla f(\rho))_{h,s,a}$. Different authors proposed to combine Frank-Wolfe with regret minimizers to cope for the unknown MDP \citep{cheung2019exploration,Zahavy2021RewardIsEnough}. For example \cite{Wagenmaker22linearMDP} propose a generic algorithm for smooth experimental design in linear MDPs (which generalizes $\max_{\rho \in \Omega} f(\rho)$ to optimizing over possible covariance matrices) which runs a regret minimizer for a long time on a reward function given by the gradient of the objective. To tackle non-smooth objective, they further propose to use a log-sum-exp smoothening trick. 

Interestingly, each phase of \covalg{} may be interpreted as doing a Frank-Wolfe update on a \emph{sequence} of smoothening of an objective of the form  $g(\rho) = \min_{(h,s,a) \in \cX_{k}} \rho_h(s,a)$, where the regret minimizer is further never restarted. Indeed, introducing 
\[g_{\eta}(\rho) = \frac{1}{\eta}\log\left( \sum_{(h,s,a) \in \cX} e^{\eta \rho_h(s,a)}\right),\]
we have 
\[(\nabla g_{\eta}(\rho))_{h,s,a} = \frac{e^{\eta \rho_h(s,a)}}{\sum_{(h',s',a') \in \cX} e^{\eta \rho_{h'}(s',a')}}\]
and the reward $\lambda^{t}_h(s,a)$ used by \covalg{} when $\cX_k$ is the set to be covered and the last restart occured at time $t_k$ can be written  
\[\lambda^{t}_h(s,a) = \nabla g_{\eta_{t - t_k}} \left( (n_h^{t}(s,a) - n_h^{t_k}(s,a))_{h,s,a}\right) = \nabla g_{\widetilde{\eta}_t} \left( \left(\frac{(n_h^{t}(s,a) - n_h^{t_k}(s,a)}{t - t_k}\right)_{h,s,a}\right)\]
\revision{where $\widetilde{\eta}_t = \xi_{t-t_k}$ is the (time-varying) smoothening parameter, with $\xi_t$ the variance-dependent learning rate defined by \cite{CesaBianchi2005ImprovedSB}. }

\section{UCBVI with Changing Rewards}\label{app:ucbvi}


In this appendix, we study the following regret minimization setting with changing rewards. At the beginning of each episode $t\geq 1$, the learner receives a known reward function $r_h^t(s,a)$. The learner does not know the transition probabilities $p$ and its goal is to minimize the regret
\begin{align*}
    \sum_{t=1}^{T} \left(V_1^\star(s_1 ; {r}^{t}) - V_1^{\pi^t}(s_1 ; {r}^{t})\right),
\end{align*}
where $V_1^{\pi}(s_1 ; {r}) := \sum_{h,s,a}p_h^\pi(s,a)r_h(s,a)$ and $V_1^\star(s_1 ; {r}) := \max_\pi V_1^{\pi}(s_1 ; {r})$. We make the following assumption on the sequence of rewards.
\begin{assumption}\label{asm:rewards}
    For all $t\geq 1$, $r^t_h(s,a) \in [0,1]$ for all $h,s,a$, and $\sum_{h,s,a} r_h^t(s,a) \leq 1$.
\end{assumption}
Note that this implies that $\sum_{h=1}^H r_h(s_h,a_h) \in [0,1]$ for any trajectory $\{(s_h,a_h)\}_{h\in[H]}$ almost surely.

\subsection{Algorithm} 

We study a variant of the UCBVI algorithm \citep{Azar17UCBVI} adapted to this setting. For any $h<H$, we define recursively upper confidence bounds over optimal value functions for any reward $r$ as
$$\overline{Q}^{t}_h(s,a; r) = \left(r_h(s,a) + \widehat{P}_{h,s,a}^t \overline{V}_{h+1}^t(r) + B_h^t(s,a; r)\right) \wedge 1,$$
where $\overline{Q}^{t}_H(s,a; r) = r_H(s,a)$, $\overline{V}_{h+1}^t(s; r) := \max_a \overline{Q}^{t}_h(s,a; r)$, and 
\begin{align*}
    B_h^t(s,a; r) := \max\left\{\sqrt{\frac{8\mathbb{V}(\widehat{P}_{h,s,a}^t, \overline{V}_{h+1}^t(r))\beta(n_h^t(s,a),\delta)}{n_h^t(s,a)}}, \frac{8\beta(n_h^t(s,a),\delta)}{n_h^t(s,a)}\right\}.
\end{align*}
Note that this bonus is infinite for $n_h^{t}(s,a)=0$.
 As we will only evaluate these quantities in the rewards observed at the corresponding round, we shall abbreviate $\overline{Q}^{t}_h(s,a) := \overline{Q}^{t}_h(s,a; r^{t+1})$, $\overline{V}^{t}_h(s) := \overline{V}^{t}_h(s; r^{t+1})$, and $B_h^t(s,a) := B_h^t(s,a; r^{t+1})$ for all $t\in\bN$. UCBVI plays at each episode
\begin{align*}
    \pi^t_h(s) \in \argmax_{a} \overline{Q}^{t-1}_h(s,a),
\end{align*} 
which is thus greedy w.r.t. the optimistic value function for reward $r^t$.

\subsection{Analysis}

The analysis follows the one of EULER \citep{zanette2019tighter} and uses several technical results from \cite{Menard21RFE} and \cite{zhang2021reinforcement}. Let us define the event
\begin{align*}
    E := \left\{ \forall t\in\bN, h,s,a : \mathrm{KL}(\widehat{p}_h^t(s,a), p_h(s,a)) \leq \frac{\beta(n_h^t(s,a),\delta)}{n_h^t(s,a)}  \right\},
\end{align*}
where $\beta(n,\delta) := \log(2SAH/\delta) + S\log(8e(n+1))$. Moreover, let
\begin{align*}
    G := \left\{ \forall t\in\bN, h,s,a : n_h^t(s,a) \geq \frac{1}{2}\overline{n}_h^t(s,a) - \beta^{\mathrm{cnt}}(\delta)  \right\},
\end{align*}
where $\beta^{\mathrm{cnt}}(\delta) := \log(2SAH/\delta)$ and $\overline{n}_h^t(s,a) = \sum_{j=1}^t p_h^{\pi^j}(s,a)$.

\begin{lemma}[Bernstein-like bound]\label{lem:bernstein-value}
    Under event $E$, for all $h,s,a,t$ and value function $V$ s.t. $V_h(s) \in [0,1]$ for all $h,s$,
    \begin{align*}
        |(P_{h,s,a} - \widehat{P}_{h,s,a}^t)V_{h+1}| &\leq \sqrt{\frac{2\mathbb{V}(\widehat{P}_{h,s,a}^t, V_{h+1})\beta(n_h^t(s,a),\delta)}{n_h^t(s,a)}} + \frac{2\beta(n_h^t(s,a),\delta)}{3n_h^t(s,a)}
        \\ &\leq \max\left\{ \sqrt{\frac{8\mathbb{V}(\widehat{P}_{h,s,a}^t, V_{h+1})\beta(n_h^t(s,a),\delta)}{n_h^t(s,a)}}, \frac{4\beta(n_h^t(s,a),\delta)}{3n_h^t(s,a)}\right\}.
    \end{align*}
\end{lemma}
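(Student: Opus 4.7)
The plan is to obtain the first displayed inequality from a standard ``KL-to-Bernstein'' transport inequality, then derive the second inequality from the first by a trivial convexity/maximization argument.

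First, I would invoke the following fact (which appears e.g.\ in Lemma 11 of \cite{Menard21RFE} and goes back to \cite{talebi2018variance}): for any two distributions $p, q \in \cP(\cS)$ and any $V : \cS \rightarrow [0,1]$,
\[
\bigl|(p-q)V\bigr| \;\leq\; \sqrt{2\,\mathbb{V}(q,V)\,\mathrm{KL}(q,p)} \;+\; \tfrac{2}{3}\,\|V\|_{\infty}\,\mathrm{KL}(q,p).
\]
The derivation uses the data-processing inequality to reduce to a Bernoulli on $V/\|V\|_\infty$ together with the standard inversion $\mathrm{KL}(\mathrm{Ber}(a),\mathrm{Ber}(b))$-type bounds used in empirical Bernstein arguments. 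I will just cite this and not reprove it.

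Applying this with $p = p_h(\cdot|s,a)$, $q = \widehat{p}^{t}_h(\cdot|s,a)$, and $V = V_{h+1}$ (which takes values in $[0,1]$ by the hypothesis of the lemma), the event $E$ gives $\mathrm{KL}(\widehat{P}^t_{h,s,a}, P_{h,s,a}) \leq \beta(n_h^t(s,a),\delta)/n_h^t(s,a)$. Substituting and using $\|V_{h+1}\|_\infty \leq 1$ yields directly
\[
\bigl|(P_{h,s,a}-\widehat{P}^{t}_{h,s,a})V_{h+1}\bigr| \;\leq\; \sqrt{\tfrac{2\,\mathbb{V}(\widehat{P}^{t}_{h,s,a},V_{h+1})\,\beta(n_h^{t}(s,a),\delta)}{n_h^{t}(s,a)}} \;+\; \tfrac{2\,\beta(n_h^{t}(s,a),\delta)}{3\,n_h^{t}(s,a)},
\]
which is the first inequality.

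For the second inequality, I use the elementary bound $x + y \leq 2\max(x,y)$ for $x,y \geq 0$. Applying it with $x = \sqrt{2\mathbb{V}\beta/n}$ and $y = 2\beta/(3n)$, the RHS of the first inequality is at most
\[
2\max\!\left(\sqrt{\tfrac{2\,\mathbb{V}(\widehat{P}^t_{h,s,a},V_{h+1})\,\beta(n_h^t(s,a),\delta)}{n_h^t(s,a)}},\;\tfrac{2\,\beta(n_h^t(s,a),\delta)}{3\,n_h^t(s,a)}\right) \;=\; \max\!\left(\sqrt{\tfrac{8\,\mathbb{V}(\widehat{P}^t_{h,s,a},V_{h+1})\,\beta(n_h^t(s,a),\delta)}{n_h^t(s,a)}},\;\tfrac{4\,\beta(n_h^t(s,a),\delta)}{3\,n_h^t(s,a)}\right),
\]
since $2\sqrt{x} = \sqrt{4x}$ is absorbed into the square root and $2\cdot\tfrac{2\beta}{3n} = \tfrac{4\beta}{3n}$. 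This establishes the second inequality.

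There is no real obstacle here: the whole content of the lemma is the KL-to-Bernstein transport inequality, which is a standard tool imported from prior work; the rest is algebra. The only care needed is to verify that we apply the transport inequality in the correct direction (with $\mathrm{KL}(\widehat{p},p)$, matching the direction controlled by event $E$) and that the variance term $\mathbb{V}(\widehat{P}^t_{h,s,a},V_{h+1})$ is indeed the empirical one, as required later when comparing with the bonus $B_h^t(s,a)$.
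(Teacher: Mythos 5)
Your proof is correct and follows essentially the same route as the paper: the paper's proof is a one-liner that combines the definition of event $E$ with the KL-to-Bernstein transport inequality of \cite{Menard21RFE} (their Lemma~10, not~11, but that is a citation detail) and the bound $x+y\leq 2\max\{x,y\}$, which is exactly your argument.
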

\begin{proof}
    This is immediate by combining the definition of $E$ with Lemma 10 of \cite{Menard21RFE} and $x+y \leq 2\max\{x,y\}$.
\end{proof}

\begin{lemma}[Optimism]\label{lem:optimism}
    Under event $E$, $\overline{Q}^{t}_h(s,a; r) \geq Q^\star_h(s,a; r)$ for all $t,h,s,a$ and any reward $r$ satisfying Assumption \ref{asm:rewards}.
\end{lemma}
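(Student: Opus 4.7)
The plan is to prove the optimism inequality by backward induction on $h$, which is the standard UCBVI argument adapted to our reward setting. The base case and the clipping at $1$ require an extra step that exploits Assumption~\ref{asm:rewards}.

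For the base case $h=H$, observe that $\overline{Q}^t_H(s,a;r)=r_H(s,a)=Q^\star_H(s,a;r)$, since $r_H(s,a)\in[0,1]$ by Assumption~\ref{asm:rewards} (so clipping at $1$ is inactive). For the inductive step, assume $\overline{V}^t_{h+1}(s;r)\ge V^\star_{h+1}(s;r)$ for all $s$. First I would observe that Assumption~\ref{asm:rewards} yields $V_h^\star(s;r)\le \sum_{h',s',a'}r_{h'}(s',a')\le 1$ for every $h,s$, hence also $Q^\star_h(s,a;r)\le 1$. Therefore it suffices to show that the unclipped quantity
$$r_h(s,a)+\widehat{P}^t_{h,s,a}\overline{V}^t_{h+1}(r)+B^t_h(s,a;r)\;\ge\;Q^\star_h(s,a;r),$$
since then taking the $\wedge 1$ preserves the bound.

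The key ingredient is Lemma~\ref{lem:bernstein-value} applied to $V=\overline{V}^t_{h+1}(r)$, which lies in $[0,1]$ by construction (the clipping at $1$ in $\overline{Q}^t$ propagates to $\overline{V}^t$). This gives
$$\bigl|(P_{h,s,a}-\widehat{P}^t_{h,s,a})\overline{V}^t_{h+1}(r)\bigr|\;\le\;\max\!\Bigl\{\sqrt{\tfrac{8\mathbb{V}(\widehat{P}^t_{h,s,a},\overline{V}^t_{h+1}(r))\beta(n^t_h(s,a),\delta)}{n^t_h(s,a)}},\ \tfrac{4\beta(n^t_h(s,a),\delta)}{3n^t_h(s,a)}\Bigr\}.$$
Comparing termwise with the definition of $B^t_h(s,a;r)$ (same variance term; the $1/n$ term is $8\beta/n\ge 4\beta/(3n)$) shows $B^t_h(s,a;r)\ge(P_{h,s,a}-\widehat{P}^t_{h,s,a})\overline{V}^t_{h+1}(r)$. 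Combining this with the induction hypothesis yields
$$r_h(s,a)+\widehat{P}^t_{h,s,a}\overline{V}^t_{h+1}(r)+B^t_h(s,a;r)\ge r_h(s,a)+P_{h,s,a}\overline{V}^t_{h+1}(r)\ge r_h(s,a)+P_{h,s,a}V^\star_{h+1}(r)=Q^\star_h(s,a;r),$$
which, combined with $Q^\star_h(s,a;r)\le 1$, completes the induction. When $n^t_h(s,a)=0$ the bonus is infinite, so $\overline{Q}^t_h(s,a;r)=1\ge Q^\star_h(s,a;r)$ trivially.

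There is no real obstacle; the only subtlety is remembering that Assumption~\ref{asm:rewards} (trajectory reward bounded by $1$) is what lets us treat the clipping as harmless and lets us invoke Lemma~\ref{lem:bernstein-value} with $V\in[0,1]$. The proof is otherwise a straightforward instance of the standard optimism argument.
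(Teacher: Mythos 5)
Your proof is correct, and it follows the same overall skeleton as the paper's (backward induction, the Bernstein-type bound of Lemma~\ref{lem:bernstein-value}, and the observation that $Q_h^\star(s,a;r)\le 1$ under Assumption~\ref{asm:rewards} so the clipping at $1$ is harmless). The one substantive difference is in the inductive step. The paper first replaces $\overline{V}_{h+1}^t(r)$ by $V_{h+1}^\star(r)$ \emph{inside} the bonus using the ``monotonic value propagation'' monotonicity property (Lemma~14 of Zhang et al.), i.e.\ the map $V\mapsto \widehat{P}V+\max\{\sqrt{8\mathbb{V}(\widehat{P},V)\beta/n},\,8\beta/n\}$ is monotone, and only then applies Lemma~\ref{lem:bernstein-value} to the fixed function $V_{h+1}^\star$. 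You instead apply Lemma~\ref{lem:bernstein-value} directly to $\overline{V}_{h+1}^t(r)$, which lets you write $\widehat{P}_{h,s,a}^t\overline{V}_{h+1}^t(r)+B_h^t(s,a;r)\ge P_{h,s,a}\overline{V}_{h+1}^t(r)\ge P_{h,s,a}V_{h+1}^\star(r)$ and bypass the monotonicity lemma entirely. This is legitimate here precisely because Lemma~\ref{lem:bernstein-value} holds uniformly over \emph{all} $[0,1]$-bounded value functions (the $S\log(\cdot)$ term in $\beta$ pays for this), so it may be invoked on the data-dependent $\overline{V}_{h+1}^t$; the paper itself does exactly this elsewhere in the regret analysis. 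Your route is slightly more economical; the paper's route via $V^\star$ and monotonicity is the one that would be forced if the concentration only held for the fixed function $V^\star$, as in the standard $\sqrt{S}$-bonus UCBVI analysis.
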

\begin{proof}
    By definition, $\overline{Q}^{t}_H(s,a; r) = Q^\star_H(s,a; r)  = r_H(s,a)$. Thus, the statement holds at stage $H$. Now suppose it holds at stage $h+1$ for $h\in[H-1]$. This implies that $\overline{V}^{t}_{h+1}(s; r) \geq V^\star_{h+1}(s; r)$ for all $s$. Then, 
    \begin{align*}
        r_h & (s,a) + \widehat{P}_{h,s,a}^t \overline{V}_{h+1}^t(r) + B_h^t(s,a; r) 
        \\ &= r_h(s,a) + \widehat{P}_{h,s,a}^t \overline{V}_{h+1}^t(r) + \max\left\{\sqrt{\frac{8\mathbb{V}(\widehat{P}_{h,s,a}^t, \overline{V}_{h+1}^t(r))\beta(n_h^t(s,a),\delta)}{n_h^t(s,a)}}, \frac{8\beta(n_h^t(s,a),\delta)}{n_h^t(s,a)}\right\}
        \\ &\geq r_h(s,a) + \widehat{P}_{h,s,a}^t V^\star_{h+1}(s; r) + \max\left\{\sqrt{\frac{8\mathbb{V}(\widehat{P}_{h,s,a}^t, V_{h+1}^\star(r))\beta(n_h^t(s,a),\delta)}{n_h^t(s,a)}}, \frac{8\beta(n_h^t(s,a),\delta)}{n_h^t(s,a)}\right\}
        \\ &\geq r_h(s,a) + {P}_{h,s,a} V^\star_{h+1}(s; r) = Q_h^\star(s,a; r),
    \end{align*}
    where the first inequality uses the inductive hypothesis together with the monotonicity property in Lemma 14 of \cite{zhang2021reinforcement}, while the second inequality uses Lemma \ref{lem:bernstein-value}. The fact that $Q_h^\star(s,a; r)\in[0,1]$ for any $h,s,a$ and $r$ satisfying Assumption \ref{asm:rewards} concludes the proof.
\end{proof}

\begin{lemma}[Variance concentration]\label{lem:variance-concentration}
  Under event $E$, for any $t,h,s,a$, any reward $r$ satisfying Assumption \ref{asm:rewards}, and any value function $V$ s.t. $V_h(s) \in [0,1]$ for all $h,s$,
  \begin{align*}
    \mathbb{V}(\widehat{P}_{h,s,a}^t, \overline{V}_{h+1}^t(r)) \leq 4\mathbb{V}({P}_{h,s,a}, V_{h+1}) + 4{P}_{h,s,a}|\overline{V}_{h+1}^t(r) - V_{h+1}| + 4\frac{\beta(n_h^t(s,a),\delta)}{n_h^t(s,a)}.
  \end{align*}
\end{lemma}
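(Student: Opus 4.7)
The plan is to bound $\mathbb{V}(\widehat{P}_{h,s,a}^t, \overline{V}_{h+1}^t(r))$ by first reducing to quantities involving only $V_{h+1}$ via a variance sub-additivity argument, and then swapping from $\widehat{P}$ to $P$ via concentration under event $E$.

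First, I would write $\overline{V}_{h+1}^t(r) = V_{h+1} + (\overline{V}_{h+1}^t(r) - V_{h+1})$ and apply the standard sub-additivity inequality $\mathbb{V}(Q, X+Y) \leq 2\mathbb{V}(Q, X) + 2\mathbb{V}(Q, Y)$ (which follows from $(X+Y)^2 \leq 2X^2 + 2Y^2$ combined with the definition of variance) to the measure $Q = \widehat{P}_{h,s,a}^t$. This yields
\[
\mathbb{V}(\widehat{P}_{h,s,a}^t, \overline{V}_{h+1}^t(r)) \leq 2\,\mathbb{V}(\widehat{P}_{h,s,a}^t, V_{h+1}) + 2\,\mathbb{V}(\widehat{P}_{h,s,a}^t, \overline{V}_{h+1}^t(r) - V_{h+1}).
\]
For the second term, since both $\overline{V}_{h+1}^t(r)$ and $V_{h+1}$ take values in $[0,1]$, we have $|\overline{V}_{h+1}^t(r) - V_{h+1}| \leq 1$, so $(\overline{V}_{h+1}^t(r) - V_{h+1})^2 \leq |\overline{V}_{h+1}^t(r) - V_{h+1}|$, which bounds the variance by $\widehat{P}_{h,s,a}^t |\overline{V}_{h+1}^t(r) - V_{h+1}|$.

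Next, I need to swap $\widehat{P}$ for $P$ in both remaining quantities at the cost of an additive $\beta(n_h^t(s,a),\delta)/n_h^t(s,a)$ term. For the mean $\widehat{P}|\overline{V}_{h+1}^t(r) - V_{h+1}|$, I would apply a Bernstein-style inequality derived from the KL control in event $E$ (in the spirit of Lemma \ref{lem:bernstein-value} applied to $f = |\overline{V}_{h+1}^t(r) - V_{h+1}|$): using $\mathbb{V}(P, f) \leq Pf$ since $f\in[0,1]$ and then AM-GM to absorb the $\sqrt{Pf\cdot\beta/n}$ cross-term into $Pf$ and $\beta/n$, one obtains $\widehat{P}f \leq 2\,Pf + c_1\,\beta/n$ for a small constant $c_1$. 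For the variance $\mathbb{V}(\widehat{P}, V_{h+1})$, I would rely on the standard bound $\mathbb{V}(\widehat{P}, V) \leq 2\,\mathbb{V}(P,V) + c_2\,\beta/n$ that holds on $E$ for any $V\in[0,1]^S$ (this is essentially Lemma 12 in \cite{Menard21RFE} and can be re-derived by expanding $\widehat{P}V^2 - (\widehat{P}V)^2$ and applying the empirical Bernstein inequality to $V$ and to $V^2$, using $\|V\|_\infty\leq 1$).

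Plugging both bounds into the decomposition above gives
\[
\mathbb{V}(\widehat{P}_{h,s,a}^t, \overline{V}_{h+1}^t(r)) \leq 4\,\mathbb{V}(P_{h,s,a}, V_{h+1}) + 4\,P_{h,s,a}|\overline{V}_{h+1}^t(r) - V_{h+1}| + (2c_1 + 2c_2)\,\beta/n,
\]
and a careful bookkeeping of constants in the two KL-based Bernstein bounds should yield $2c_1 + 2c_2 \leq 4$, matching the stated inequality. The main obstacle is precisely this constant tuning: the factor $4$ in front of $\beta(n_h^t(s,a),\delta)/n_h^t(s,a)$ is tight, so both intermediate concentration steps must be carried out with the sharpest available form of the KL-Bernstein inequality (using $\mathbb{V}(P,f)\leq Pf$ rather than crude $\|f\|_\infty$-type bounds) and AM-GM must be applied with matching weights; any looser bookkeeping would inflate the last constant and break the claimed inequality.
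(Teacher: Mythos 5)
Your overall strategy is sound and all the individual ingredients are valid, but you apply them in the opposite order from the paper, and that order costs you the stated constant. The paper first swaps the measure on the \emph{whole} optimistic value function, $\mathbb{V}(\widehat{P}_{h,s,a}^t, \overline{V}_{h+1}^t(r)) \leq 2\mathbb{V}(P_{h,s,a}, \overline{V}_{h+1}^t(r)) + 4\beta(n_h^t(s,a),\delta)/n_h^t(s,a)$ (Lemma 11 of M\'enard et al.\ together with event $E$), and only then applies the deterministic decomposition $\mathbb{V}(P,\overline{V}) \leq 2\mathbb{V}(P,V) + 2P|\overline{V}-V|$ (their Lemma 12), which costs nothing additive. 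One swap, one additive term of $4\beta/n$. You instead decompose first under $\widehat{P}$ and then need \emph{two} separate measure swaps (one for $\mathbb{V}(\widehat{P},V_{h+1})$, one for $\widehat{P}|\overline{V}-V|$), each of which carries its own additive $c_i\,\beta/n$ that is then doubled by the factor $2$ from the decomposition.

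The concrete problem is your closing claim that $2c_1+2c_2 \leq 4$. Even with the sharpest available bookkeeping, each KL--Bernstein swap of the form $\widehat{P}f \leq 2Pf + c\,\beta/n$ (for $f\in[0,1]$, keeping the multiplicative factor at $2$ as your decomposition requires) incurs $c \geq \tfrac{1}{2} + \tfrac{2}{3} = \tfrac{7}{6}$ from the AM--GM split of $\sqrt{2\,Pf\cdot\beta/n}$ plus the $\tfrac{2}{3}\beta/n$ Bernstein remainder, and the same lower bound applies to the variance swap $\mathbb{V}(\widehat{P},V)\leq 2\mathbb{V}(P,V)+c\,\beta/n$. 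So the best you can reach along your route is $2(c_1+c_2) \geq \tfrac{14}{3} > 4$; if you instead use the Bernstein inequality in the form actually available under event $E$ (with the variance under $\widehat{P}$, as in Lemma~\ref{lem:bernstein-value}), the constants degrade further to $c_i = 10/3$ each. This is only a constants issue --- your argument does prove the lemma with $4\beta/n$ replaced by a larger numerical multiple, which would still suffice downstream after adjusting the bonus constants --- but since you explicitly identify the factor $4$ as the crux and assert it can be met, the assertion is wrong as stated. The fix is simply to reorder: swap measures once on $\overline{V}_{h+1}^t(r)$, then decompose under the true $P$.
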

\begin{proof}
  By combining Lemma 11 and 12 of \cite{Menard21RFE} together with the definition of $E$,
  \begin{align*}
    \mathbb{V}(\widehat{P}_{h,s,a}^t, \overline{V}_{h+1}^t(r)) &\leq 2\mathbb{V}(P_{h,s,a}, \overline{V}_{h+1}^t(r)) + 4\frac{\beta(n_h^t(s,a),\delta)}{n_h^t(s,a)} 
    \\ &\leq 4\mathbb{V}({P}_{h,s,a}, V_{h+1}) + 4{P}_{h,s,a}|\overline{V}_{h+1}^t(r) - V_{h+1}| + 4\frac{\beta(n_h^t(s,a),\delta)}{n_h^t(s,a)}.
  \end{align*}
\end{proof}

\begin{theorem}\label{th:ucbvi-cr}
  Under the assumptions above, with probability $1-\delta$, for any $T \in \mathbb{N}$, the regret of UCBVI for changing rewards is bounded by
  \begin{align*}
      \sum_{t=1}^{T} \left(V_1^\star(s_1 ; {r}^{t}) - V_1^{\pi^t}(s_1 ; {r}^{t})\right) \leq  5140SAH^2 L_{T,\delta} + 256\sqrt{SAH L_{T,\delta} \sum_{t=1}^T V_1^{\pi^t}(s_1; r^t)},
  \end{align*}
  whee $L_{T,\delta} := (\log(2SAH/\delta) + 6S)\log(T+1)^2$.
\end{theorem}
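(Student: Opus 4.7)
}

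The strategy is a variance-adaptive regret analysis in the spirit of EULER/UCBVI, adapted so that (i) the bonuses hold uniformly over all rewards $r$ satisfying Assumption \ref{asm:rewards}, and (ii) the final bound is first-order, i.e.\ scales with $\sum_t V_1^{\pi^t}(s_1;r^t)$ instead of $T$. I would begin by showing that the events $E$ and $G$ together hold with probability at least $1-\delta$ (after taking $\delta' = \delta/2$ in the definitions), using the standard Bernstein-type empirical KL concentration bound and a Chernoff bound on sums of conditionally Bernoulli counts (exactly as in \cite{Menard21RFE}). From now on, work on $E \cap G$.

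The second step is to turn regret into a sum of bonuses. By optimism (Lemma \ref{lem:optimism}) applied to the realized reward $r^t$, the per-episode regret is at most $\Delta^t_1(s_1) := \overline{V}^{t-1}_1(s_1;r^t) - V^{\pi^t}_1(s_1;r^t)$. Since $\pi^t$ is greedy with respect to $\overline Q^{t-1}(\cdot;r^t)$, the standard one-step decomposition gives, along the sampled trajectory and before clipping,
\[
\Delta^t_h(s_h^t) \le B_h^{t-1}(s_h^t,a_h^t;r^t) + (\widehat{P}^{t-1}_{h,s_h^t,a_h^t} - P_{h,s_h^t,a_h^t})\,V^{\pi^t}_{h+1} + P_{h,s_h^t,a_h^t}\Delta^t_{h+1} + \xi_h^t,
\]
where $\xi_h^t$ is a martingale increment bounded by $1$ (using $\overline V,V^\pi\in[0,1]$ thanks to clipping). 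Here the \emph{monotonic value propagation} trick of \cite{zhang2021reinforcement} is used: rather than splitting $(\widehat P-P)(\overline V - V^\pi)$ with a correction factor, one directly bounds $(\widehat P-P)V^{\pi^t}_{h+1}$ via Lemma \ref{lem:bernstein-value} (using $V^{\pi^t}\in[0,1]$), and absorbs the cross term into a second-order bonus. Unrolling over $h$ and summing over $t$ gives, after a Freedman-type control of the martingale $\sum_{t,h}\xi_h^t$,
\[
\sum_{t=1}^T \Delta^t_1(s_1) \;\le\; 2\sum_{t=1}^T\sum_{h=1}^H B^{t-1}_h(s_h^t,a_h^t;r^t) + \widetilde O(\sqrt{\sum_t V^{\pi^t}_1 \cdot \log(1/\delta)} + \log(1/\delta)),
\]
uniformly in $T$.

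The third step controls the bonus sum. Plugging the definition of $B$ and using Lemma \ref{lem:variance-concentration} to replace $\mathbb V(\widehat P^{t-1},\overline V^{t-1})$ by $\mathbb V(P,V^{\pi^t})$ plus a lower order correction gives
\[
\sum_{t,h} B^{t-1}_h(s_h^t,a_h^t) \;\lesssim\; \sum_{t,h}\sqrt{\frac{\mathbb V(P_{h,s_h^t,a_h^t},V^{\pi^t}_{h+1})\,\beta(n,\delta)}{n_h^{t-1}(s_h^t,a_h^t)}} + \sum_{t,h}\frac{\beta(n,\delta)}{n_h^{t-1}(s_h^t,a_h^t)}.
\]
By Cauchy--Schwarz the first piece is bounded by $\sqrt{\bigl(\sum_{t,h}\mathbb V(P,V^{\pi^t}_{h+1})\bigr)\cdot\bigl(\sum_{t,h}\beta/n_h^{t-1}\bigr)}$. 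Now comes the \emph{first-order} ingredient: because Assumption \ref{asm:rewards} forces $\sum_h r^t_h(s_h,a_h)\in[0,1]$ almost surely, the law of total variance yields $\sum_h \mathbb E^{\pi^t}[\mathbb V(P_{h,s_h,a_h},V^{\pi^t}_{h+1})] = \mathrm{Var}^{\pi^t}(\sum_h r^t_h) \le \mathbb E^{\pi^t}[\sum_h r^t_h] = V^{\pi^t}_1(s_1;r^t)$. A second Freedman step transfers $\sum_{t,h}\mathbb V$ along the sampled trajectory to $\sum_t V^{\pi^t}_1(s_1;r^t)$ up to $\widetilde O(\log(1/\delta))$, while on event $G$ the usual pigeonhole on the visit counts together with the Chernoff lower bound gives $\sum_{t,h}\beta(n,\delta)/n_h^{t-1}(s_h^t,a_h^t) \le \widetilde O(SAH\,L_{T,\delta})$.

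Combining everything yields $\sum_t\Delta^t_1(s_1) \le 256\sqrt{SAH\,L_{T,\delta}\sum_t V^{\pi^t}_1(s_1;r^t)} + 5140\,SAH^2 L_{T,\delta}$. The main obstacles, in my view, are (a) keeping the reward-uniform bonuses compatible with optimism without inflating the dependence on $S$ beyond the reward-free rate, (b) carefully executing the monotonic propagation so that the cross term $(\widehat P-P)(\overline V - V^{\pi^t})$ does \emph{not} require a multiplicative correction factor as in classical EULER/UCBVI (this is where the constants $H^2$ rather than $H^4$ come from), and (c) correctly chaining the two Freedman applications so that the final bound depends on $\sum_t V^{\pi^t}_1$ rather than on $\sqrt T$ or $H\sum_t V^{\pi^t}_1$.
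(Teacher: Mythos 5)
Your plan is correct in its essential ingredients and would deliver a bound of the stated form, but it is routed differently from the paper in two places. First, you decompose the regret along the sampled trajectories, which forces you to introduce martingale increments $\xi_h^t$ and invoke Freedman twice (once for the trajectory error, once to transfer $\sum_{t,h}\mathbb{V}$ back to $\sum_t V_1^{\pi^t}$). The paper never leaves expectation space: since the regret is already a sum of value functions, optimism and the greedy recursion give directly $\overline{V}_1^{t-1}(s_1)-V_1^{\pi^t}(s_1;r^t)\le 2\sum_{h,s,a}p_h^{\pi^t}(s,a)\,(B_h^{t-1}(s,a)\wedge 1)$, and the only stochastic control needed is the count event $G$ relating $n_h^{t-1}(s,a)$ to $\overline{n}_h^{t-1}(s,a)=\sum_{j<t}p_h^{\pi^j}(s,a)$ (this is also why event $G$ matters there; in your trajectory-based sum the pigeonhole on $n_h^{t-1}(s_h^t,a_h^t)$ would go through without $G$, so your appeal to it is misplaced). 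Second, and more substantively, your treatment of the cross term is internally inconsistent: you claim to avoid the correction-factor split of $(\widehat P-P)(\overline V-V^{\pi^t})$, yet ``absorbing the cross term into a second-order bonus'' is exactly that split. The paper genuinely avoids it by never isolating $(\widehat P-P)V^{\pi^t}$: it bounds $|(\widehat P-P)\overline V^{t-1}|$ directly by the bonus $B_h^{t-1}$ (picking up only a factor $2$), then uses Lemma~\ref{lem:variance-concentration} to replace $\mathbb{V}(\widehat P,\overline V^{t-1})$ by $\mathbb{V}(P,V^{\pi^t})+P|\overline V^{t-1}-V^{\pi^t}|+\beta/n$, and closes the loop with Lemma~\ref{lem:diff-V-recursion}, which shows $\sum_{h,s,a}p_h^{\pi^t}P_{h,s,a}(\overline V^{t-1}-V^{\pi^t})\le 2H\sum_{h,s,a}p_h^{\pi^t}(B_h^{t-1}\wedge 1)$; after Cauchy--Schwarz this yields a self-bounding inequality $W_T\le 64\sqrt{SAH^2L_{T,\delta}W_T}+\cdots$ whose resolution as a quadratic produces the $SAH^2L_{T,\delta}$ lower-order term. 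You correctly identify the first-order mechanism (law of total variance plus Assumption~\ref{asm:rewards} giving $\sum_{h,s,a}p_h^{\pi^t}\mathbb{V}(P_{h,s,a},V_{h+1}^{\pi^t})\le 4V_1^{\pi^t}$), which is the same as the paper's Lemma~\ref{lem:law-tv-bound}. If you flesh out the cross-term step, either your correction-factor version (at the cost of an extra constant from $(1+1/H)^H$) or the paper's self-bounding recursion will work; just be aware that the paper's constants and the clean ``no correction factor'' claim come from the latter.
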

\begin{proof}
  Note that $\bP(E,G) \geq 1-\delta$ by Lemma 3 of \cite{Menard21RFE} and a union bound. We shall thus carry out the proof conditioned on $E$ and $G$ holding. Fix any $T \in \bN$. We start from the same regret decomposition as in the proof of Theorem 2 of \cite{zanette2019tighter}. First, by Lemma \ref{lem:optimism},
  \begin{align}\label{eq:regret-bound-optimism}
      \sum_{t=1}^{T} \left(V_1^\star(s_1 ; {r}^{t}) - V_1^{\pi^t}(s_1 ; {r}^{t})\right) 
      &\leq \sum_{t=1}^{T} \left(\overline{V}_1^{t-1}(s_1) - V_1^{\pi^t}(s_1 ; {r}^{t})\right).
  \end{align}
  For any $t,h,s,a$,
    \begin{align*}
        \overline{Q}^{t-1}_h(s,a) &- Q_h^{\pi^t}(s,a;r^t)
        \leq 
        \widehat{P}_{h,s,a}^t \overline{V}_{h+1}^{t-1} + B_h^{t-1}(s,a)\wedge 1 - {P}_{h,s,a} {V}_{h+1}^{\pi^t}(r^t)
        \\ &\leq 
        {P}_{h,s,a} \overline{V}_{h+1}^{t-1} + |(\widehat{P}_{h,s,a}^t-P_{h,s,a}) \overline{V}_{h+1}^{t-1}| + B_h^{t-1}(s,a)\wedge 1 - {P}_{h,s,a} {V}_{h+1}^{\pi^t}(r^t)
        \\ &\leq {P}_{h,s,a} \overline{V}_{h+1}^{t-1} + 2B_h^{t-1}(s,a)\wedge 1 - {P}_{h,s,a} {V}_{h+1}^{\pi^t}(r^t),
    \end{align*}
    where the last step uses Lemma \ref{lem:bernstein-value} and the fact that values are all in $[0,1]$. Theorefore,
    \begin{align}
        \overline{V}^{t-1}_h(s) - V_h^{\pi^t}(s;r^t)
         &= \overline{Q}^{t-1}_h(s,\pi_h^t(s)) - Q_h^{\pi^t}(s,\pi_h^t(s);r^t) \notag
        \\ &\leq {P}_{h,s,\pi_h^t(s)} \left(\overline{V}_{h+1}^{t-1} - {V}_{h+1}^{\pi^t}(r^t)\right) + 2B_h^{t-1}(s,\pi_h^t(s)) \wedge 1.\label{eq:V-diff-recursion}
    \end{align}
   Enrolling this reasoning, we thus obtain    
    \begin{align}
        \overline{V}^{t-1}_1(s_1) &- V_1^{\pi^t}(s_1;r^t) \leq 2\sum_{h,s,a}p_h^{\pi^t}(s,a) \left( B_h^{t-1}(s,a) \wedge 1 \right) \label{eq:bound-optimistic-V-diff}
        \\ &= 2\sum_{(h,s,a) \in \cZ_t} p_h^{\pi^t}(s,a) \left( B_h^{t-1}(s,a) \wedge 1 \right) + 2\sum_{(h,s,a) \notin \cZ_t}p_h^{\pi^t}(s,a) \left( B_h^{t-1}(s,a) \wedge 1 \right),\notag
    \end{align}
    where $\cZ_t := \{(h,s,a) : \overline{n}_h^{t-1}(s,a) \geq 4\beta^{\mathrm{cnt}}(\delta)\}$. Recall that $\overline{n}_h^{t-1}(s,a) := \sum_{i=1}^{t-1} p_h^{\pi^i}(s,a)$. Let $W_T := 2\sum_{t=1}^{T} \sum_{h,s,a}p_h^{\pi^t}(s,a) \left( B_h^{t-1}(s,a) \wedge 1 \right)$. Summing the previous inequality over all time steps and using Lemma \ref{lem:truncated-sum},
    \begin{align}\label{eq:sum-bonus-bound}
       W_T \leq 2\underbrace{\sum_{t=1}^T\sum_{(h,s,a) \in \cZ_t} p_h^{\pi^t}(s,a) \left( B_h^{t-1}(s,a) \wedge 1 \right)}_{\text{\ding{172}}} + 10SAH\beta^{\mathrm{cnt}}(\delta).
    \end{align}
    Now recall that $B_h^{t-1}(s,a)$ depends on $\mathbb{V}(\widehat{P}_{h,s,a}^{t-1}, \overline{V}_{h+1}^{t-1})$. By Lemma \ref{lem:variance-concentration},
    for any $t,s,a,h$,
    \begin{align*}
      \mathbb{V}(\widehat{P}_{h,s,a}^{t-1}, \overline{V}_{h+1}^{t-1}) \leq 4\mathbb{V}({P}_{h,s,a}, V_{h+1}^{\pi^t}(r^t)) + 4{P}_{h,s,a}|\overline{V}_{h+1}^{t-1} - V_{h+1}^{\pi^t}(r^t)| + 4\frac{\beta(n_h^{t-1}(s,a),\delta)}{n_h^{t-1}(s,a)}.
    \end{align*}
    Plugging this into the definition of $B_h^{t-1}(s,a)$ and using $\sqrt{x+y} \leq \sqrt{x} + \sqrt{y}$,
    \begin{align*}
      B_h^{t-1}(s,a) \leq \sqrt{\frac{32\mathbb{V}({P}_{h,s,a}, V_{h+1}^{\pi^t}(r^t)) \beta(n_h^{t-1}(s,a),\delta)}{n_h^{t-1}(s,a)}} &+ \sqrt{\frac{32{P}_{h,s,a}|\overline{V}_{h+1}^{t-1} - V_{h+1}^{\pi^t}(r^t)| \beta(n_h^{t-1}(s,a),\delta)}{n_h^{t-1}(s,a)}}
      \\ &+ \frac{16\beta(n_h^{t-1}(s,a),\delta)}{n_h^{t-1}(s,a)}.
    \end{align*}
    Back into \ding{172} and using that $\beta(x,\delta) \geq 1$ for all $x \geq 0$, we get
    \begin{align*}
      \text{\ding{172}}  & \leq \underbrace{\sum_{t=1}^T\sum_{(h,s,a) \in \cZ_t}p_h^{\pi^t}(s,a) \sqrt{\frac{32\mathbb{V}({P}_{h,s,a}, V_{h+1}^{\pi^t}(r^t)) \beta(n_h^{t-1}(s,a),\delta)}{n_h^{t-1}(s,a) \vee 1}}}_{\text{\ding{173}}}
      \\ & \qquad + \underbrace{\sum_{t=1}^T\sum_{(h,s,a) \in \cZ_t}p_h^{\pi^t}(s,a) \sqrt{\frac{32{P}_{h,s,a}|\overline{V}_{h+1}^{t-1} - V_{h+1}^{\pi^t}(r^t)| \beta(n_h^{t-1}(s,a),\delta)}{n_h^{t-1}(s,a) \vee 1}}}_{\text{\ding{174}}}
      \\ & \qquad + 16\underbrace{\sum_{t=1}^T\sum_{(h,s,a) \in \cZ_t}p_h^{\pi^t}(s,a) \frac{\beta(n_h^{t-1}(s,a),\delta)}{n_h^{t-1}(s,a) \vee 1}}_{\text{\ding{175}}}
    \end{align*}
    We bound these terms separately. By Lemma \ref{lem:pigeon-hole-classic} and monotonicity of $\beta(\cdot,\delta)$,
    \begin{align*}
      \text{\ding{175}} \leq 16SAH \log(T+1) \beta(T,\delta).
    \end{align*}
    By Cauchy-Schwartz inequality and the bound on \ding{175},
    \begin{align*}
      \text{\ding{173}} &\leq \sqrt{32\sum_{t=1}^T\sum_{(h,s,a) \in \cZ_t}p_h^{\pi^t}(s,a)\mathbb{V}({P}_{h,s,a}, V_{h+1}^{\pi^t}(r^t)) \sum_{t=1}^T\sum_{(h,s,a) \in \cZ_t}p_h^{\pi^t}(s,a)\frac{\beta(n_h^{t-1}(s,a),\delta)}{n_h^{t-1}(s,a) \vee 1}}
      \\ &= \sqrt{32\sum_{t=1}^T\sum_{(h,s,a) \in \cZ_t}p_h^{\pi^t}(s,a)\mathbb{V}({P}_{h,s,a}, V_{h+1}^{\pi^t}(r^t)) \times \text{\ding{175}}} 
      \\ &\leq 32\sqrt{SAH \log(T+1) \beta(T,\delta) \sum_{t=1}^T\sum_{(h,s,a) \in \cZ_t}p_h^{\pi^t}(s,a)\mathbb{V}({P}_{h,s,a}, V_{h+1}^{\pi^t}(r^t))}
      \\ &\leq 64\sqrt{SAH \log(T+1) \beta(T,\delta) \sum_{t=1}^T V_1^{\pi^t}(s_1; r^t)},
    \end{align*}
    where the last inequality uses Lemma \ref{lem:law-tv-bound}. It only remains to bound \ding{174}. By Cauchy-Schwartz inequality and the bound on \ding{175},
    \begin{align*}
      \text{\ding{174}} &\leq \sqrt{32\sum_{t=1}^T\sum_{(h,s,a) \in \cZ_t}p_h^{\pi^t}(s,a){P}_{h,s,a}|\overline{V}_{h+1}^{t-1} - V_{h+1}^{\pi^t}(r^t)| \sum_{t=1}^T\sum_{(h,s,a) \in \cZ_t}p_h^{\pi^t}(s,a)\frac{\beta(n_h^{t-1}(s,a),\delta)}{n_h^{t-1}(s,a) \vee 1}}
      \\ &= \sqrt{32\sum_{t=1}^T\sum_{(h,s,a) \in \cZ_t}p_h^{\pi^t}(s,a){P}_{h,s,a}|\overline{V}_{h+1}^{t-1} - V_{h+1}^{\pi^t}(r^t)| \times \text{\ding{175}}} 
      \\ &\leq 32\sqrt{SAH \log(T+1) \beta(T,\delta) \sum_{t=1}^T\sum_{(h,s,a) \in \cZ_t}p_h^{\pi^t}(s,a){P}_{h,s,a}|\overline{V}_{h+1}^{t-1} - V_{h+1}^{\pi^t}(r^t)|}
      \\ &\leq 32\sqrt{SAH^2 \log(T+1) \beta(T,\delta) \underbrace{2\sum_{t=1}^{T} \sum_{h,s,a}p_h^{\pi^t}(s,a) \left( B_h^{t-1}(s,a) \wedge 1 \right)}_{= W_T}},
    \end{align*}
    where the last inequality uses Lemma \ref{lem:diff-V-recursion} together with $|\overline{V}_{h+1}^{t-1}(s) - V_{h+1}^{\pi^t}(s; r^t)| = \overline{V}_{h+1}^{t-1}(s) - V_{h+1}^{\pi^t}(s; r^t)$ for any $s$ (due to optimism). Plugging the bounds on \ding{173},\ding{174},\ding{175} into \ding{172} in \eqref{eq:sum-bonus-bound},
    \begin{align*}
      W_T &\leq 64\sqrt{SAH^2 \log(T+1) \beta(T,\delta) W_T} + 128\sqrt{SAH \log(T+1) \beta(T,\delta) \sum_{t=1}^T V_1^{\pi^t}(s_1; r^t)}
      \\ & + 512 SAH \log(T+1) \beta(T,\delta) + 10SAH\beta^{\mathrm{cnt}}(\delta).
    \end{align*}
    The sum of the last two terms can be bounded by $522 SAH \log(T+1) \beta(T,\delta)$ since $\beta^{\mathrm{cnt}}(\delta) \leq \beta(T,\delta)$. Solving the quadratic inequality in $\sqrt{W_T}$, we get
    \begin{align*}
      W_T \leq 4096SAH^2 \log(T+1) \beta(T,\delta) &+ 256\sqrt{SAH \log(T+1) \beta(T,\delta) \sum_{t=1}^T V_1^{\pi^t}(s_1; r^t)}
      \\ & + 1044 SAH \log(T+1) \beta(T,\delta).
    \end{align*}
    Finally, note that $W_T$ bounds the regret by \eqref{eq:regret-bound-optimism} and \eqref{eq:bound-optimistic-V-diff}. The proof is concluded by using that $\beta(T,\delta) \leq (\log(2SAH/\delta) + 6S)\log(T+1)$ to simplify the expression.
\end{proof}

\begin{lemma}\label{lem:truncated-sum}
  For any $T \geq 1$, $\sum_{t=1}^T\sum_{(h,s,a) \notin \cZ_t}p_h^{\pi^t}(s,a) \leq 5SAH\beta^{\mathrm{cnt}}(\delta)$.
\end{lemma}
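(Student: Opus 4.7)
\textbf{Proof plan for Lemma \ref{lem:truncated-sum}.} The idea is to swap the order of summation: rather than summing over $t$ and then over triplets outside $\cZ_t$, I would fix a triplet $(h,s,a)$ and bound the total contribution of times $t$ at which this triplet lies outside $\cZ_t$. Since $\cZ_t$ is defined through the \emph{cumulative expected} visitation $\overline{n}_h^{t-1}(s,a)=\sum_{i=1}^{t-1} p_h^{\pi^i}(s,a)$, and this quantity is non-decreasing in $t$, the set of times at which $(h,s,a)\notin\cZ_t$ is exactly an initial segment of $\{1,\dots,T\}$.

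More precisely, define $t^\star(h,s,a):=\min\{t\geq 1:\overline{n}_h^{t-1}(s,a)\geq 4\beta^{\mathrm{cnt}}(\delta)\}$, with the convention $t^\star=+\infty$ if no such $t$ exists. Then $(h,s,a)\notin\cZ_t$ if and only if $t<t^\star$. Setting $T^\star:=\min(t^\star-1,T)$, the contribution of $(h,s,a)$ to the double sum equals $\sum_{t=1}^{T^\star} p_h^{\pi^t}(s,a)=\overline{n}_h^{T^\star}(s,a)$. On the event $\{t^\star>T\}$ this is strictly less than $4\beta^{\mathrm{cnt}}(\delta)$ by definition of $t^\star$; on the event $\{t^\star\leq T\}$ one writes $\overline{n}_h^{t^\star-1}(s,a)=\overline{n}_h^{t^\star-2}(s,a)+p_h^{\pi^{t^\star-1}}(s,a)<4\beta^{\mathrm{cnt}}(\delta)+1$, since the second summand is bounded by one and $\overline{n}_h^{t^\star-2}(s,a)<4\beta^{\mathrm{cnt}}(\delta)$ by minimality of $t^\star$.

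In both cases the per-triplet contribution is at most $4\beta^{\mathrm{cnt}}(\delta)+1\leq 5\beta^{\mathrm{cnt}}(\delta)$, where the last inequality uses $\beta^{\mathrm{cnt}}(\delta)=\log(2SAH/\delta)\geq 1$ (a mild condition on $SAH/\delta$; otherwise one simply absorbs the additive $1$ into the constant). Summing over the $SAH$ triplets yields the claimed bound $5SAH\beta^{\mathrm{cnt}}(\delta)$.

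There is no real obstacle here; the only subtle point is to recognize that the membership in $\cZ_t$ is a deterministic function of past policies (not of realized counts), so the initial-segment structure is exact and one does not have to invoke any high-probability argument. This is precisely why the bound holds pointwise rather than only on the good event $G$.
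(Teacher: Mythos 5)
Your proof is correct and follows essentially the same route as the paper's: swap the order of summation, observe that for each fixed triplet the times with $\overline{n}_h^{t-1}(s,a) < 4\beta^{\mathrm{cnt}}(\delta)$ contribute at most $4\beta^{\mathrm{cnt}}(\delta)+1$ because each increment $p_h^{\pi^t}(s,a)$ is bounded by one, and then absorb the $+1$ using $\beta^{\mathrm{cnt}}(\delta)\geq 1$. You simply make explicit the initial-segment structure and the deterministic nature of the bound, which the paper leaves implicit.
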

\begin{proof}
  By definition of $\cZ_t$ and since $p_h^{\pi^t}(s,a) \leq 1$,
  \begin{align*}
    \sum_{t=1}^T\sum_{(h,s,a) \notin \cZ_t}p_h^{\pi^t}(s,a) = \sum_{h,s,a}\sum_{t=1}^T p_h^{\pi^t}(s,a)\indi{\overline{n}_h^{t-1}(s,a) < 4\beta^{\mathrm{cnt}}(\delta)} \leq SAH(4\beta^{\mathrm{cnt}}(\delta) + 1).
  \end{align*}
  The result is proved by noting that $1\leq \beta^{\mathrm{cnt}}(\delta)$.
\end{proof}

\begin{lemma}\label{lem:pigeon-hole-classic}
Under event $G$, for any $T \geq 1$,
\begin{align*}
  \sum_{t=1}^T\sum_{(h,s,a) \in \cZ_t}p_h^{\pi^t}(s,a) \frac{1}{n_h^{t-1}(s,a) \vee 1} \leq 16SAH \log(T+1).
\end{align*}
\end{lemma}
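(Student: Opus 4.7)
The plan is to exploit event $G$ to replace the empirical count $n_h^{t-1}(s,a)$ by the expected cumulative visitation $\overline{n}_h^{t-1}(s,a) = \sum_{i<t} p_h^{\pi^i}(s,a)$, at the cost of a constant factor, whenever $(h,s,a) \in \cZ_t$. Indeed, under $G$ we have $n_h^{t-1}(s,a) \geq \tfrac{1}{2}\overline{n}_h^{t-1}(s,a) - \beta^{\mathrm{cnt}}(\delta)$, and the definition of $\cZ_t$ gives $\overline{n}_h^{t-1}(s,a) \geq 4\beta^{\mathrm{cnt}}(\delta)$, so on $\cZ_t$ one obtains $n_h^{t-1}(s,a) \geq \tfrac{1}{4}\overline{n}_h^{t-1}(s,a) \geq \beta^{\mathrm{cnt}}(\delta) \geq 1$. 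Consequently $n_h^{t-1}(s,a) \vee 1 \geq \tfrac{1}{4}\overline{n}_h^{t-1}(s,a)$ on $\cZ_t$, and the target sum is upper bounded by $\sum_{t,(h,s,a)\in\cZ_t} \tfrac{4 p_h^{\pi^t}(s,a)}{\overline{n}_h^{t-1}(s,a)}$.

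Next, I would swap the order of summation and treat each triplet $(h,s,a)$ separately. Since $t \mapsto \overline{n}_h^{t-1}(s,a)$ is nondecreasing, the set $\{t \leq T : (h,s,a) \in \cZ_t\}$ is a (possibly empty) suffix $\{T^\star, T^\star+1, \dots, T\}$. The per-triplet contribution is thus $\sum_{t=T^\star}^{T} \tfrac{4 p_h^{\pi^t}(s,a)}{\overline{n}_h^{t-1}(s,a)}$.

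The key step is a continuous pigeon-hole / integral comparison. I will use the elementary inequality $\tfrac{y}{x} \leq 2\log\!\bigl(1+\tfrac{y}{x}\bigr)$, valid for $0 \leq y \leq x$, applied with $y = p_h^{\pi^t}(s,a) \leq 1$ and $x = \overline{n}_h^{t-1}(s,a) \geq 4\beta^{\mathrm{cnt}}(\delta) \geq 1$, to get
\[
\frac{p_h^{\pi^t}(s,a)}{\overline{n}_h^{t-1}(s,a)} \;\leq\; 2\log\!\left(\frac{\overline{n}_h^{t}(s,a)}{\overline{n}_h^{t-1}(s,a)}\right),
\]
where we used $\overline{n}_h^{t}(s,a) = \overline{n}_h^{t-1}(s,a) + p_h^{\pi^t}(s,a)$. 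A telescoping sum over $t \in \{T^\star,\dots,T\}$ then yields $8 \log\!\bigl(\overline{n}_h^{T}(s,a)/\overline{n}_h^{T^\star-1}(s,a)\bigr) \leq 8\log(T+1)$, where I have used that $\overline{n}_h^{T^\star-1}(s,a) \geq 1$ (since $\beta^{\mathrm{cnt}}(\delta) \geq \log 2 > 1/4$) and $\overline{n}_h^{T}(s,a) \leq T$.

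Summing this per-triplet bound of $8\log(T+1)$ over the $SAH$ triplets gives $8SAH\log(T+1) \leq 16SAH\log(T+1)$, establishing the lemma. No step looks particularly delicate; the only bit of bookkeeping is checking that $4\beta^{\mathrm{cnt}}(\delta) \geq 1$ so that the ratio estimate $y \leq x$ holds and that $T^\star$ is well-defined as a suffix — both handled by the monotonicity of $\overline{n}_h^t(s,a)$ and the elementary lower bound on $\beta^{\mathrm{cnt}}(\delta)$.
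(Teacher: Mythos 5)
Your proof is correct and follows essentially the same route as the paper: use event $G$ together with the definition of $\cZ_t$ to get $n_h^{t-1}(s,a)\geq \tfrac14\overline{n}_h^{t-1}(s,a)$, swap the sums, and bound the resulting ratio sum by a logarithm. The only difference is that you prove the final pigeonhole step from scratch via $u\leq 2\log(1+u)$ and telescoping, whereas the paper simply invokes Lemma~9 of \cite{Menard21RFE}, which encapsulates exactly this argument.
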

\begin{proof}
  By definition of $G$ and $Z_t$, if $(h,s,a) \in \cZ_t$ then $n_h^{t-1}(s,a) \geq \overline{n}_h^{t-1}(s,a)/4$. Then,
  \begin{align*}
    \sum_{t=1}^T\sum_{(h,s,a) \in \cZ_t}p_h^{\pi^t}(s,a) \frac{1}{n_h^{t-1}(s,a) \vee 1} &\leq 4 \sum_{t=1}^T\sum_{h,s,a} p_h^{\pi^t}(s,a) \frac{1}{\overline{n}_h^{t-1}(s,a) \vee 1}
    \\ &= 4\sum_{h,s,a}\sum_{t=1}^T \frac{\overline{n}_h^t(s,a) - \overline{n}_h^{t-1}(s,a)}{\overline{n}_h^{t-1}(s,a) \vee 1} \leq 16SAH \log(T+1),
  \end{align*}
  where the last inequality uses Lemma 9 of \cite{Menard21RFE}.
\end{proof}

\begin{lemma}\label{lem:law-tv-bound}
  For any $T\geq 1$,
  \begin{align*}
    \sum_{t=1}^T\sum_{h,s,a}p_h^{\pi^t}(s,a)\mathbb{V}({P}_{h,s,a}, V_{h+1}^{\pi^t}(r^t)) \leq 4\sum_{t=1}^T V_1^{\pi^t}(s_1; r^t).
  \end{align*}
\end{lemma}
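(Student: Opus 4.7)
The plan is to prove a stronger per-round statement, namely that for each fixed $t$,
\[
\sum_{h,s,a} p_h^{\pi^t}(s,a)\,\mathbb{V}\!\bigl(P_{h,s,a}, V_{h+1}^{\pi^t}(r^t)\bigr) \;\leq\; V_1^{\pi^t}(s_1; r^t),
\]
after which summing over $t=1,\dots,T$ yields the claim with plenty of slack (the factor $4$ is not needed, but is convenient elsewhere in the analysis).

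The key tool is the \emph{law of total variance} (Bellman variance decomposition) applied to the random return $Z_t := \sum_{h=1}^H r_h^t(s_h,a_h)$ generated by executing $\pi^t$ from $s_1$. Since the rewards $r_h^t(\cdot,\cdot)$ are deterministic functions of the state--action pair, all randomness in $Z_t$ comes from the transitions. A standard induction on $h$ (reverse in $h$, expanding $Z_t = \sum_{h'=1}^{h} r_{h'}^t + \sum_{h'=h+1}^H r_{h'}^t$ and using the tower property) gives
\[
\mathrm{Var}^{\pi^t}[Z_t] \;=\; \sum_{h=1}^{H}\sum_{s,a} p_h^{\pi^t}(s,a)\,\mathbb{V}\!\bigl(P_{h,s,a}, V_{h+1}^{\pi^t}(r^t)\bigr),
\]
with the convention $V_{H+1}^{\pi^t}\equiv 0$ so that the $h=H$ summand vanishes. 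This identity is the workhorse; if desired, one can simply cite it (e.g.\ Lemma C.5 of \citet{Azar17UCBVI} or the analogous result in \citet{Menard21RFE}) rather than re-deriving it.

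Next I would use Assumption \ref{asm:rewards}, together with the observation already made in the excerpt that $\sum_{h=1}^H r_h^t(s_h,a_h)\in[0,1]$ almost surely, to bound the variance by the mean. For any $[0,1]$-valued random variable $Z$ we have $Z^2\leq Z$, hence
\[
\mathrm{Var}(Z_t)\;\leq\;\mathbb{E}[Z_t^2]\;\leq\;\mathbb{E}[Z_t]\;=\;V_1^{\pi^t}(s_1;r^t).
\]
Combining this with the variance decomposition and summing over $t$ produces the stated inequality (with constant $1$, hence certainly with constant $4$).

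The only real subtlety — and the place I would be careful — is the clean statement of the Bellman variance decomposition for time-inhomogeneous, finite-horizon MDPs with deterministic rewards: one must track that the conditional variance in each layer is taken with respect to $P_{h,s,a}$ applied to the \emph{policy} value function $V_{h+1}^{\pi^t}(\cdot;r^t)$ (not the optimistic or optimal one), which is precisely what appears in the lemma statement. Everything else reduces to the elementary $[0,1]$-variance bound and a summation.
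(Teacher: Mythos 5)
Your proof is correct and follows essentially the same route as the paper: both start from the law of total variance to identify the left-hand side with $\mathrm{Var}^{\pi^t}\bigl[\sum_{h=1}^H r_h^t(s_h,a_h)\bigr]$ and then exploit Assumption~\ref{asm:rewards} to bound the variance by the mean. Your elementary step $\mathrm{Var}(Z)\leq \bE[Z^2]\leq \bE[Z]$ for $Z\in[0,1]$ is in fact slightly sharper than the paper's use of $(x+y)^2\leq 2x^2+2y^2$, yielding the constant $1$ in place of $4$.
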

\begin{proof}
  Starting from the well-known variance decomposition lemma (see, e.g., Lemma 7 of \cite{Menard21RFE}) and following with the same bounds as in the proof of Lemma 3.4 of \cite{Jin20RewardFree},
  \begin{align*}
    \sum_{h,s,a}p_h^{\pi^t}(s,a)\mathbb{V}({P}_{h,s,a}, V_{h+1}^{\pi^t}(r^t)) &= \bE^{\pi^t}\left[\left(\sum_{h=1}^H r_h^t(s_h,a_h) - V_1^{\pi^t}(s_1; r^t) \right)^2 \right]
    \\ &\leq 2\bE^{\pi^t}\left[\left(\sum_{h=1}^H r_h^t(s_h,a_h) \right)^2 \right] + 2V_1^{\pi^t}(s_1; r^t)^2
    \\ &\leq 2\bE^{\pi^t}\left[\sum_{h=1}^H r_h^t(s_h,a_h) \right] + 2V_1^{\pi^t}(s_1; r^t)
    \\ &= 4V_1^{\pi^t}(s_1; r^t) 
  \end{align*}
  where the first inequality uses $(x+y)^2 \leq 2x^2 + 2y^2$ and the second one uses Assumption \ref{asm:rewards}.
\end{proof}

\begin{lemma}\label{lem:diff-V-recursion}
  Under event $E$, for any $t$,
  \begin{align*}
    \sum_{h,s,a}p_h^{\pi^t}(s,a) & {P}_{h,s,a}(\overline{V}_{h+1}^{t-1} - V_{h+1}^{\pi^t}(r^t)) \leq 2H \sum_{h,s,a}p_h^{\pi^t}(s,a) \left( B_h^{t-1}(s,a) \wedge 1 \right).
  \end{align*}
\end{lemma}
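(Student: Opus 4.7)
The plan is to iterate the per-stage error decomposition (\ref{eq:V-diff-recursion}) and use a pigeonhole-style accounting to get the factor $H$. I would define $\Delta_h := \sum_s p_h^{\pi^t}(s)\bigl(\overline{V}_h^{t-1}(s) - V_h^{\pi^t}(s;r^t)\bigr)$ for $h \in [H]$, with the convention $\Delta_{H+1} = 0$. A tower-property computation in the transition kernel gives, for every $h$,
\[
\sum_{s,a} p_h^{\pi^t}(s,a)\, P_{h,s,a}\bigl(\overline{V}_{h+1}^{t-1} - V_{h+1}^{\pi^t}(r^t)\bigr) \;=\; \Delta_{h+1},
\]
so the left-hand side of the lemma is simply $\sum_{h=1}^{H} \Delta_{h+1} = \sum_{h=2}^{H} \Delta_h$. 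Reducing the problem to a bound on $\sum_h \Delta_h$ is the main structural step.

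Next, I would take the pointwise inequality (\ref{eq:V-diff-recursion}) and average it under $p_h^{\pi^t}$. Because $\pi^t$ is greedy for $\overline{Q}^{t-1}$, one has $p_h^{\pi^t}(s,a) = p_h^{\pi^t}(s)\,\mathds{1}\{a = \pi_h^t(s)\}$, so averaging yields
\[
\Delta_h \;\leq\; \Delta_{h+1} \;+\; \sum_{s,a} p_h^{\pi^t}(s,a)\bigl(2 B_h^{t-1}(s,a) \wedge 1\bigr) \;\leq\; \Delta_{h+1} + 2\sum_{s,a} p_h^{\pi^t}(s,a)\bigl(B_h^{t-1}(s,a) \wedge 1\bigr),
\]
where the last step uses the elementary inequality $\min(2x,1) \leq 2\min(x,1)$ valid for $x \geq 0$. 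Note that applying (\ref{eq:V-diff-recursion}) relies on event $E$: it is optimism (Lemma~\ref{lem:optimism}) that ensures $\overline{V}_{h+1}^{t-1} - V_{h+1}^{\pi^t}(r^t) \geq 0$ and lets us combine the bonus bound with the trivial truncation at $1$ coming from the fact that both value functions lie in $[0,1]$.

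Finally, I would unroll the recursion: for every $h$, $\Delta_h \leq 2\sum_{h' = h}^{H}\sum_{s,a} p_{h'}^{\pi^t}(s,a)\bigl(B_{h'}^{t-1}(s,a) \wedge 1\bigr)$. Summing this over $h \in \{2, \ldots, H\}$, each stage $h'$ is counted at most $H$ times, so
\[
\sum_{h=2}^{H} \Delta_h \;\leq\; 2H \sum_{h,s,a} p_h^{\pi^t}(s,a)\bigl(B_h^{t-1}(s,a) \wedge 1\bigr),
\]
which is exactly the claim. The computation is essentially routine once the right quantity $\Delta_h$ has been identified; the only subtle point is getting the truncations compatible in the recursion step, which is why the proof is carried out under event $E$.
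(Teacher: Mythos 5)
Your proof is correct and follows essentially the same route as the paper's: both rewrite the left-hand side via the tower property as $\sum_{h=2}^{H}\Delta_h$, apply the per-stage inequality \eqref{eq:V-diff-recursion} to get $\Delta_h \le \Delta_{h+1} + 2\sum_{s,a}p_h^{\pi^t}(s,a)(B_h^{t-1}(s,a)\wedge 1)$, and unroll the recursion to pay a factor $H$. Your explicit handling of $\min(2x,1)\le 2\min(x,1)$ and the counting of how many times each stage appears are the same steps the paper leaves implicit.
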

\begin{proof}
  Since $\sum_{s,a} p_h^{\pi^t}(s,a) p_h(s'|s,a) = p_{h+1}^{\pi^t}(s')$ for any $s'$,
  \begin{align*}
    \sum_{h,s,a}p_h^{\pi^t}(s,a) & {P}_{h,s,a}(\overline{V}_{h+1}^{t-1} - V_{h+1}^{\pi^t}(r^t))
    = \sum_{h=2}^H\sum_{s} p_h^{\pi^t}(s)(\overline{V}_{h}^{t-1}(s) - V_{h}^{\pi^t}(s; r^t)) 
    \\ &\leq \sum_{h=2}^H \sum_{s} p_h^{\pi^t}(s){P}_{h,s,\pi_h^t(s)} \left(\overline{V}_{h+1}^{t-1} - {V}_{h+1}^{\pi^t}(r^t)\right) 
    + \sum_{h=2}^H \sum_{s} p_h^{\pi^t}(s)2B_h^{t-1}(s,\pi_h^t(s)) \wedge 1
    \\ &\leq H \sum_{h,s}p_h^{\pi^t}(s) 2B_h^{t-1}(s,\pi_h^t(s)) \wedge 1,
  \end{align*}
  where the first inequality uses the decomposition in \eqref{eq:V-diff-recursion} while the second one applies this reasoning recursively.
\end{proof}

\section{Concentration of Value Functions}\label{sec:app_concentration}
In this appendix, we derive the concentration bounds on value functions needed for our PAC RL algorithms. We shall assume that rewards lie in $[0,1]$ almost surely.


\subsection{General results}

\begin{lemma}\label{lem:global-concentration-pV}[Concentration of $\widehat{p}^TV$]
    Let $\cZ \subseteq [H] \times \cS \times \cA$, $Z := |\cZ|$, and $\{V_h : \cS \rightarrow [0,H]\}_{h\in[H+1]}$ be a collection of bounded functions. With probability at least $1-\delta$, for any $t \geq t_0 := \inf \{t : n_h^{t}(s,a) \geq 1, \forall (h,s,a)\in\cZ\}$,
    \begin{align*}
        \sum_{(h,s,a)\in\cZ} n_h^t(s,a) \big|(\widehat{p}_h^t(s,a) - p_h(s,a))^T {V}_{h+1}\big|^2 \leq 4H^2\log(1/\delta) + 2ZH^2\log(1+ t).
    \end{align*}
    \end{lemma}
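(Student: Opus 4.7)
The plan is to apply a self-normalized concentration argument (method of mixtures) to a pooled exponential supermartingale indexed by the entire set $\cZ$. First I would introduce, for each episode $t$ and stage $h$, the martingale difference
\[
Z_{t,h} := V_{h+1}(s_{h+1}^t) - p_h(s_h^t,a_h^t)^{\top} V_{h+1},
\]
which satisfies $|Z_{t,h}|\le H$ and $\mathbb{E}[Z_{t,h}\mid \cG_{t,h}]=0$, where $\cG_{t,h}$ is the $\sigma$-algebra generated by the past episodes and the partial trajectory $s_1^t,a_1^t,\ldots,s_h^t,a_h^t$ of episode $t$. Rewriting
\[
S_t^{h,s,a} \;:=\; n_h^t(s,a)\,(\widehat p_h^t(s,a)-p_h(s,a))^{\top} V_{h+1} \;=\; \sum_{t'=1}^{t}\ind\!\{(s_h^{t'},a_h^{t'})=(s,a)\}\,Z_{t',h},
\]
the quantity we must control becomes $\sum_{(h,s,a)\in\cZ}(S_t^{h,s,a})^2/n_h^t(s,a)$.

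Next I would construct the pooled exponential process. For any $\lambda=(\lambda_{h,s,a})_{(h,s,a)\in\cZ}\in\R^{\cZ}$, define
\[
M_t(\lambda) \;:=\; \exp\!\left(\sum_{(h,s,a)\in\cZ}\lambda_{h,s,a}S_t^{h,s,a} - \tfrac{H^2}{2}\sum_{(h,s,a)\in\cZ}\lambda_{h,s,a}^2\,n_h^t(s,a)\right).
\]
The key step is to show that $M_t(\lambda)$ is a non-negative supermartingale with $M_0(\lambda)=1$. To see this, note that within a single episode $t$ the sum $\sum_{(h,s,a)\in\cZ}\lambda_{h,s,a}\,\ind\{(s_h^t,a_h^t)=(s,a)\}Z_{t,h}$ equals $\sum_{h=1}^{H}\tilde\lambda_{t,h}Z_{t,h}$ where $\tilde\lambda_{t,h}:=\lambda_{h,s_h^t,a_h^t}\ind((h,s_h^t,a_h^t)\in\cZ)$ is $\cG_{t,h}$-measurable. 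Iterating Hoeffding's lemma stage by stage (applied under each $\cG_{t,h}$) and then taking conditional expectation under $\cF_{t-1}$, one gets
\[
\mathbb{E}\!\left[\exp\!\Bigl(\sum_{h}\tilde\lambda_{t,h}Z_{t,h}-\tfrac{H^2}{2}\sum_{h}\tilde\lambda_{t,h}^2\Bigr)\,\Big|\,\cF_{t-1}\right]\le 1,
\]
and since $\sum_h \tilde\lambda_{t,h}^2$ equals the predictable increment $\sum_{(h,s,a)\in\cZ}\lambda_{h,s,a}^2\,(n_h^t(s,a)-n_h^{t-1}(s,a))$, multiplying across $t$ yields the claimed supermartingale property.

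Then I would apply the method of mixtures with a product Gaussian prior $\lambda_{h,s,a}\stackrel{\text{iid}}{\sim}\cN(0,1/H^2)$ and use Fubini to integrate $M_t(\lambda)$ coordinatewise. A direct Gaussian computation gives the closed form
\[
\bar M_t \;=\; \prod_{(h,s,a)\in\cZ}\frac{1}{\sqrt{1+n_h^t(s,a)}}\,\exp\!\left(\frac{(S_t^{h,s,a})^2}{2H^2(1+n_h^t(s,a))}\right),
\]
which is still a non-negative supermartingale starting at $1$. Ville's maximal inequality then gives $\sup_t \bar M_t \le 1/\delta$ with probability at least $1-\delta$. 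Taking logarithms, rearranging, and using the assumption $n_h^t(s,a)\ge 1$ for all $(h,s,a)\in\cZ$ (so that $1+n_h^t(s,a)\le 2n_h^t(s,a)$) yields
\[
\sum_{(h,s,a)\in\cZ}\frac{(S_t^{h,s,a})^2}{n_h^t(s,a)} \;\le\; 4H^2\log(1/\delta) + 2H^2\sum_{(h,s,a)\in\cZ}\log(1+n_h^t(s,a)),
\]
and the final sum is at most $2ZH^2\log(1+t)$ since $n_h^t(s,a)\le t$.

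The main obstacle is the supermartingale verification: it is not a product of independent supermartingales (the stages of a single episode share randomness), so one must argue via the inner martingale structure of $(Z_{t,h})_h$ relative to the intra-episode filtration $\cG_{t,h}$ rather than trying to apply Hoeffding directly to the vector $(Y_t^{h,s,a})_{(h,s,a)\in\cZ}$. Once that layered Hoeffding argument is in place, the rest is a standard Laplace/mixture calculation, and the particular choice $\alpha=H^2$ in the Gaussian prior is what produces both the clean constant $4$ in front of $\log(1/\delta)$ and the per-coordinate $\log(1+n_h^t(s,a))$ factor.
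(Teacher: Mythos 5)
Your proof is correct and follows essentially the same route as the paper: both set up the same per-stage martingale differences and obtain a self-normalized bound on the diagonally weighted vector martingale, finishing with the identical $1+n_h^t(s,a)\le 2 n_h^t(s,a)$ step. The only difference is presentational --- the paper invokes Theorem 1 of Abbasi-Yadkori et al.\ (2011) as a black box, whereas you re-derive that bound explicitly via the method of mixtures with a product Gaussian prior (correctly handling the intra-episode dependence across stages with the layered tower argument).
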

    \begin{proof}
        We start by building a suitable stochastic process to apply Theorem 1 of \cite{abbasi2011improved}. Let $\cF_{t,h}$ denote the filtration up to stage $h$ of round $t$. For any $h\in[H],t\geq 1$, the random variable $\eta_{h}^t := V_{h+1}(s_{h+1}^t) - p_h(s_h^t,a_h^t)^T {V}_{h+1}$ is zero-mean and $H^2$-subgaussian conditionally on $\cF_{t,h}$ due to the boundedness of the functions $\{V_h\}_{h\in[H]}$. Let $X_h^t$ be a $Z$-dimensional vector containing a value $1$ at position $(h,s_h^t,a_h^t)$ if $(h,s_h^t,a_h^t)\in\cZ$, and zero at all other positions. Note that $X_h^t$ is $\cF_{t,h}$-measurable, while $\eta_h^t$ is $\cF_{t,h+1}$-measurable. Let $Y_t := \sum_{j=1}^t \sum_{h=1}^{H} X_h^t \eta_h^t$. For all $(h,s,a)\in\cZ$, we have
        \begin{align*}
            [Y_t]_{h,s,a} &= \sum_{j=1}^t \indi{s_h^j=s,a_h^j=a} \Big(V_{h+1}(s_{h+1}^j) - p_h(s_h^j,a_h^j)^T {V}_{h+1} \Big)
            \\ &= n_h^t(s,a) (\widehat{p}_h^t(s,a) - p_h(s,a))^T {V}_{h+1}.
        \end{align*}
        Let $D_t := \sum_{j=1}^t \sum_{h=1}^{H} X_h^t (X_h^t)^T = \mathrm{diag}([n_h^t(s,a)]_{(h,s,a)\in\cZ})$. Theorem 1 of \cite{abbasi2011improved} combined with Equation 20.9 from \cite{BanditBook} yield that
        \begin{align*}
            \bP \bigg(\forall t \geq 1,\ \norm{Y^t}_{(I + D_t)^{-1}}^2 \leq 2H^2\log(1/\delta) + ZH^2\log(1+ t/Z) \bigg) \geq 1-\delta. 
        \end{align*}
        Since $n_h^{t}(s,a) \geq 1$ for any $t\geq t_0$ and $(h,s,a)\in\cZ$, following Corollary 3 in \cite{reda2021dealing},
\begin{align*}
 D_t = \mathrm{diag}\big([n_h^t(s,a)]_{(h,s,a)\in\cZ}\big) \succeq (I + D_t)/2,
\end{align*}
which implies $\norm{Y^t}^2_{D_t^{-1}} \leq 2 \norm{Y^t}^2_{(I + D_t)^{-1}}$ for any $t \geq t_0$. Plugging this into the probability above and using that $\norm{Y^t}^2_{D_t^{-1}}$ is exactly the left-hand side of the statement concludes the proof.
    \end{proof}

    \begin{lemma}\label{lem:global-concentration-pV-all}[Concentration of $\widehat{p}^TV$ for all $V$]
        Let $\cZ \subseteq [H] \times \cS \times \cA$, $Z := |\cZ|$, and $\mathcal{V} := \{V : \cS \rightarrow [0,H]\}$ be the set of all bounded functions mapping $\cS$ into $[0,H]$. With probability at least $1-\delta$, for any functions $\{V_h\in \mathcal{V}\}_{h=2}^{H+1}$ and $t \geq t_0 := \inf \{t : n_h^{t}(s,a) \geq 1, \forall (h,s,a)\in\cZ\}$,
        \begin{align*}
            \sum_{(h,s,a)\in\cZ} n_h^t(s,a) \big|(\widehat{p}_h^t(s,a) - p_h(s,a))^T {V}_{h+1}\big|^2 \leq 4H^2\log(1/\delta) + 12(SH + Z)H^2\log(1+t).
        \end{align*}
        \end{lemma}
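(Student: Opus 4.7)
The plan is to extend Lemma~\ref{lem:global-concentration-pV} uniformly to all value functions $(V_h)_{h=2}^{H+1}$ with $V_h:\cS\to[0,H]$ by a standard covering argument, combined with a dyadic peeling in $t$ to keep the net resolution fine at all times.

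First, I would fix a resolution $\epsilon>0$ and take a minimal $\epsilon$-net $\mathcal{N}_\epsilon \subset [0,H]^S$ in sup-norm, for which $|\mathcal{N}_\epsilon| \leq (3H/\epsilon)^S$ by the standard covering-number bound for a cube. Applying Lemma~\ref{lem:global-concentration-pV} with confidence $\delta/|\mathcal{N}_\epsilon|^H$ separately to each of the $|\mathcal{N}_\epsilon|^H$ tuples $(V'_h)_{h=2}^{H+1}\in\mathcal{N}_\epsilon^H$ and taking the union bound yields that, with probability at least $1-\delta$, for every net tuple and every $t\geq t_0$,
\[
\sum_{(h,s,a)\in\cZ} n_h^t(s,a)\,\bigl|(\widehat p_h^t(s,a) - p_h(s,a))^T V'_{h+1}\bigr|^2 \leq 4H^2\log(1/\delta) + 4SH^3\log(3H/\epsilon) + 2ZH^2\log(1+t).
\]

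Next, for an arbitrary tuple $(V_h)_h$, let $(V'_h)_h$ be its nearest neighbor in $\mathcal{N}_\epsilon^H$, so that $\|V_h - V'_h\|_\infty\leq\epsilon$. Setting $a := (\widehat p_h^t - p_h)^T V_{h+1}$ and $b := (\widehat p_h^t - p_h)^T V'_{h+1}$, the identity $a^2-b^2=(a+b)(a-b)$ together with $|a|,|b|\leq 2H$ (since $\|\widehat p_h^t - p_h\|_1\leq 2$ and $V_{h+1},V'_{h+1}\in[0,H]^S$) and $|a-b|\leq 2\epsilon$ gives $a^2\leq b^2 + 8H\epsilon$. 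Summing with weights $n_h^t(s,a)$ and using $\sum_{(h,s,a)\in\cZ} n_h^t(s,a)\leq Zt$ adds an approximation overhead of at most $8H\epsilon Zt$ to the net-based bound above.

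The final step is to obtain a single inequality valid for all $t\geq t_0$ while keeping $\epsilon$ small enough that the overhead does not blow up. I would run the argument above at dyadic times $T_k=2^k$, taking resolutions $\epsilon_k = 1/(HT_k)$ and confidence levels $\delta_k = 6\delta/(\pi^2 k^2 |\mathcal{N}_{\epsilon_k}|^H)$, and union-bound over $k\geq 1$ using $\sum_k 6/(\pi^2 k^2)=1$. For any $t\geq t_0$, let $k$ be the unique index with $T_{k-1} < t \leq T_k$: the approximation overhead becomes $8H\epsilon_k Zt \leq 16Z$, absorbed into $ZH^2\log(1+t)$; meanwhile $\log(3H/\epsilon_k) = \log(3H^2 T_k) = O(\log(H) + \log(1+t))$ and the peeling penalty $\log(k^2)=O(\log\log(1+t))$, so all logarithmic overheads collapse into a constant times $(SH+Z)H^2\log(1+t)$, matching the prefactor $12$ after a routine bookkeeping of constants.

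The main obstacle is the trade-off between the resolution $\epsilon$, which must shrink in $t$ so that the approximation error $8H\epsilon Zt$ remains controlled as visit counts grow, and the covering-number penalty $SH^3\log(3H/\epsilon)$, which grows as $\epsilon$ shrinks; the peeling in $t$ is what cleanly resolves this trade-off. Once that is set up, the rest of the argument is a careful but mechanical accounting of the logarithmic factors to extract the stated constants.
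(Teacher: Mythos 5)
Your proposal is correct and follows essentially the same route as the paper's proof: a covering of $[0,H]^S$ per stage, the difference-of-squares identity combined with the $\ell_1$--$\ell_\infty$ H\"older bound to control the discretization error, a resolution shrinking like $1/t$ so that the error stays $O(Z)$, and a union bound over the net and over time. The only (immaterial) difference is that you union-bound over dyadic blocks in $t$ while the paper uses a fresh net at every $t$ with $\delta_t \propto \delta/t^{SH+2}$; both yield the stated $4H^2\log(1/\delta)+12(SH+Z)H^2\log(1+t)$ after the same routine constant bookkeeping.
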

        \begin{proof}
            Let $Y_t(V_2,\dots,V_{H+1}) := \sum_{(h,s,a)\in\cZ} n_h^t(s,a) \big|(\widehat{p}_h^t(s,a) - p_h(s,a))^T {V}_{h+1}\big|^2$ denote the quantity to be bounded for fixed functions $V_h \in \mathcal{V}$ for all $2\leq h \leq H+1$. Let $\{\xi_t\}_{t\geq 1}$ be a sequence of positive values to be specified later. For all $t$, let $\Xi_t := \{\xi_t, 2\xi_t, \dots \lfloor H/\xi_t\rfloor\xi_t\}$. Note that $|\Xi_t| = \lfloor H/\xi_t\rfloor$ and, for all $x\in[0,H]$, there exists $y\in\Xi_t$ s.t. $|x-y| \leq \xi_t$. For all $t$, we build a discrete cover $\overline{\mathcal{V}}_t$ of $\mathcal{V}$ as $\overline{\mathcal{V}}_t := \{V : \cS \rightarrow [0,H] \mid \forall s : V(s) \in \Xi_t \} $. For any $t$, $\{V_h\in \mathcal{V}\}_{h=2}^{H+1}$, and $\{\overline{V}_h\in \overline{\mathcal{V}}_t\}_{h=2}^{H+1}$, using $x^2-y^2 = (x+y)(x-y)$ and abbreviating $p_h(s,a)$ and $\widehat{p}_h^t(s,a)$ respectively as $p_{h,s,a}$ and $\widehat{p}_{h,s,a}^t$,
            \begin{align*}
                \big| Y_t(V_2,&\dots,V_{H+1}) - Y_t(\overline{V}_2,\dots,\overline{V}_{H+1}) \big|\\ &= \Big|\sum_{(h,s,a)\in\cZ} n_h^t(s,a) (\widehat{p}_{h,s,a}^t - p_{h,s,a})^T ({V}_{h+1} + \overline{V}_{h+1})(\widehat{p}_{h,s,a}^t - p_{h,s,a})^T ({V}_{h+1} - \overline{V}_{h+1}) \Big|
                \\ &\leq 2H \sum_{(h,s,a)\in\cZ} n_h^t(s,a) \Big|(\widehat{p}_{h,s,a}^t - p_{h,s,a})^T ({V}_{h+1} - \overline{V}_{h+1}) \Big| 
                \\ &\leq 4Ht \|{V}_{h+1} - \overline{V}_{h+1} \|_{\infty}.
            \end{align*}
            Therefore,
            \begin{align}\label{eq:cover-min-dist}
                \min_{\{\overline{V}_h\in \overline{\mathcal{V}}_t\}_{h=2}^{H+1}} \big| Y_t(V_2,\dots,V_{H+1}) &- Y_t(\overline{V}_2,\dots,\overline{V}_{H+1}) \big| \leq 4H \xi_t t.
            \end{align}
            Now let $\alpha_{t} := 4H^2\log(1/\delta_t) + 2ZH^2\log(1+ t) + 4H\xi_t t$ for a sequence $\{\delta_t\}_t$ of values in $(0,1)$ to be defined. We have
            \begin{align*}
                &\bP \bigg(\exists t \geq t_0, \{V_h\in \mathcal{V}\}_{h=2}^{H+1} : Y_t(V_2,\dots,V_{H+1}) \geq \alpha_{t} \bigg)
                \\ & \qquad\leq \bP \bigg(\exists t \geq t_0, \{\overline{V}_h\in \overline{\mathcal{V}}_t\}_{h=2}^{H+1} : Y_t(\overline{V}_2,\dots,\overline{V}_{H+1}) \geq \alpha_{t} - 4H \xi_t t \bigg)
                \\ & \qquad\leq \sum_{t=t_0}^\infty \sum_{\{\overline{V}_h\in \overline{\mathcal{V}}_t\}_{h=2}^{H+1}} \bP \bigg( Y_t(\overline{V}_2,\dots,\overline{V}_{H+1}) \geq 4H^2\log(1/\delta_t) + 2ZH^2\log(1+ t) \bigg)
                \\ & \qquad\leq \sum_{t=t_0}^\infty \sum_{\{\overline{V}_h\in \overline{\mathcal{V}}_t\}_{h=2}^{H+1}} \delta_t = \sum_{t=t_0}^\infty \delta_t \lfloor H/\xi_t\rfloor^{SH},
            \end{align*}
            where the first inequality uses \eqref{eq:cover-min-dist}, the second one uses a union bound and the definition of $\alpha_t$, the third one uses Lemma \ref{lem:global-concentration-pV}, and the equality uses the sizes of the two sets in the sums. Setting $\xi_t = H/t$ and $\delta_t = \frac{\delta}{ 2 t^{SH+2}}$,
            \begin{align*}
                 \sum_{t=t_0}^\infty \delta_t \lfloor H/\xi_t\rfloor^{SH} \leq \frac{\delta}{2} \sum_{t=t_0}^\infty \frac{1}{t^2} \leq \delta.
            \end{align*}
            Finally, with these choices we have 
            \begin{align*}
                \alpha_t &= 4H^2\log(1/\delta) + 4H^2\log(2) + 4H^2\log(t^{SH+2}) + 2ZH^2\log(1+ t) + 4H^2
                \\ &\leq 4H^2\log(1/\delta) + 4H^2\log(2) + 12SH^3\log(t) + 2ZH^2\log(1+ t) + 4H^2
                \\ &\leq 4H^2\log(1/\delta) + 12SH^3\log(t) + 12ZH^2\log(1+ t).
            \end{align*}
            This implies the statement.
        \end{proof}

        \begin{lemma}\label{lem:global-concentration-r}[Concentration of $\widehat{r}$]
            Let $\cZ \subseteq [H] \times \cS \times \cA$ and $Z := |\cZ|$. With probability at least $1-\delta$, for any $t \geq t_0 := \inf \{t : n_h^{t}(s,a) \geq 1, \forall (h,s,a)\in\cZ\}$,
            \begin{align*}
                \sum_{(h,s,a)\in\cZ} n_h^t(s,a) \big(\widehat{r}_h^t(s,a) - r_h(s,a)\big)^2 \leq 4\log(1/\delta) + 2Z \log(1+t).
            \end{align*}
            \end{lemma}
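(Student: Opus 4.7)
The plan is to mirror almost line-by-line the proof of Lemma~\ref{lem:global-concentration-pV}, replacing the value-difference noise by a reward noise. Concretely, let $\cF_{t,h}$ denote the filtration up to stage $h$ of round $t$, and let $r_h^t$ be the reward observed at stage $h$ of round $t$. Because rewards lie in $[0,1]$ almost surely and have conditional mean $r_h(s_h^t,a_h^t)$ given $\cF_{t,h}$, the random variable $\eta_h^t := r_h^t - r_h(s_h^t,a_h^t)$ is $\cF_{t,h+1}$-measurable, zero-mean conditionally on $\cF_{t,h}$, and $1$-subgaussian by Hoeffding's lemma (only a constant is lost compared to the sharper $\tfrac{1}{4}$-subgaussian bound). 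Let $X_h^t$ be the $Z$-dimensional $\cF_{t,h}$-measurable vector that has a $1$ at the coordinate indexed by $(h,s_h^t,a_h^t)$ whenever this triplet belongs to $\cZ$, and zeros elsewhere.

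Define the cumulative process $Y_t := \sum_{j=1}^t\sum_{h=1}^H X_h^j\eta_h^j$ and the Gram matrix $D_t := \sum_{j=1}^t\sum_{h=1}^H X_h^j(X_h^j)^\top$. A direct computation, identical to the one used in Lemma~\ref{lem:global-concentration-pV}, yields
\[
 [Y_t]_{h,s,a} = n_h^t(s,a)\,\bigl(\widehat r_h^t(s,a)-r_h(s,a)\bigr),\qquad D_t=\mathrm{diag}\bigl([n_h^t(s,a)]_{(h,s,a)\in\cZ}\bigr).
\]
Applying the self-normalized martingale inequality (Theorem~1 of \citet{abbasi2011improved}) together with the standard determinant-trace inequality (Equation~20.9 in \citet{BanditBook}) with subgaussian parameter $\sigma^2=1$ gives, with probability at least $1-\delta$ and simultaneously for all $t\geq 1$,
\[
 \|Y_t\|_{(I+D_t)^{-1}}^2 \;\leq\; 2\log(1/\delta) + Z\log(1+t).
\]

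To obtain the stated inequality, which involves $\|Y_t\|_{D_t^{-1}}^2 = \sum_{(h,s,a)\in\cZ} n_h^t(s,a)(\widehat r_h^t(s,a)-r_h(s,a))^2$, I would use the exact same trick as in the proof of Lemma~\ref{lem:global-concentration-pV}: for $t\geq t_0$ we have $n_h^t(s,a)\geq 1$ for all $(h,s,a)\in\cZ$, so $D_t\succeq (I+D_t)/2$ (see Corollary~3 of \citet{reda2021dealing}), hence $\|Y_t\|_{D_t^{-1}}^2 \leq 2\,\|Y_t\|_{(I+D_t)^{-1}}^2$. Combining the two inequalities gives the bound $4\log(1/\delta)+2Z\log(1+t)$, exactly as claimed. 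There is no real obstacle: this is a direct transcription of the argument for Lemma~\ref{lem:global-concentration-pV}, and the absence of an $H^2$ factor compared to that lemma is precisely the change in subgaussian constant (from $H^2$ to $1$).
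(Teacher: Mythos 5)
Your proof is correct and follows essentially the same route as the paper's: redefine the noise as $\eta_h^t = r_h^t - r_h(s_h^t,a_h^t)$ (which is $1$-subgaussian since rewards lie in $[0,1]$), reuse the vectors $X_h^t$, the process $Y_t$, and the diagonal matrix $D_t$ from Lemma~\ref{lem:global-concentration-pV}, apply Theorem~1 of \cite{abbasi2011improved} with Equation~20.9 of \cite{BanditBook}, and pass from $(I+D_t)^{-1}$ to $D_t^{-1}$ via $D_t \succeq (I+D_t)/2$ for $t \geq t_0$. No gaps.
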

            \begin{proof}
                Following the proof of Lemma \ref{lem:global-concentration-pV}, we build a suitable stochastic process to apply Theorem 1 of \cite{abbasi2011improved}. We define $\cF_{t,h}, X_h^t, Y_t, D_t$ exactly as in the proof of Lemma \ref{lem:global-concentration-pV}, while we redefine $\eta_{h}^t := r_h^t - r_h(s_h^t,a_h^t)$, with $r_h^t$ the random reward sample observed at stage $h$ of episode $t$. Since rewards lie in $[0,1]$ almost surely, $\eta_{h}^t$ is zero-mean and $1$-subgaussian conditionally on $\cF_{t,h}$. Moreover, it is easy to see that, for all $(h,s,a)\in\cZ$,
                \begin{align*}
                    [Y_t]_{h,s,a} = n_h^t(s,a) (\widehat{r}_h^t(s,a) - r_h(s,a)).
                \end{align*}
                Theorem 1 of \cite{abbasi2011improved} combined with Equation 20.9 from \cite{BanditBook} yield that
                \begin{align*}
                    \bP \bigg(\forall t \geq 1,\ \norm{Y^t}_{(I + D_t)^{-1}}^2 \leq 2\log(1/\delta) + Z\log(1+ t/Z) \bigg) \geq 1-\delta. 
                \end{align*}
                We can then conclude exactly as in Lemma \ref{lem:global-concentration-pV} by showing that $\norm{Y^t}^2_{D_t^{-1}} \leq 2 \norm{Y^t}^2_{(I + D_t)^{-1}}$ for any $t \geq t_0$, which implies the statement.
            \end{proof}

\subsection{Concentration results for RFE}\label{sec:app-concentration-RFE}

For reward-free exploration, it is sufficient to concentrate the values of all \emph{deterministic} policies. Our concentration result stated below features the threshold function 
\begin{align*}
    \beta^{RF}(t,\delta) := 4H^2\log(1/\delta) + 24SH^3\log(A(1+t)).
\end{align*}

\begin{theorem}\label{thm:new-concentration-RFE}
    Let $\cZ \subseteq [H] \times \cS \times \cA$ and $Z := |\cZ|$. Suppose that, for some $\epsilon_0 > 0$, $\max_\pi p_h^\pi(s,a) \leq \epsilon_0$ for all $(h,s,a)\notin \cZ$. With probability at least $1-\delta$, for any $t \geq t_0 := \inf \{t : n_h^{t}(s,a) \geq 1, \forall (h,s,a)\in\cZ\}$, $\pi\in\PiD$, and reward function $r\in [0,1]^{SAH}$,
    \begin{align*}
        \Big|\sum_{h,s,a} \big(\widehat{p}_h^{\pi,t}(s,a)- p_h^{\pi}(s,a)\big){r}_h(s,a) \Big| &\leq \sqrt{\beta^{RF}(t,\delta)\sum_{(h,s,a)\in\cZ}\frac{p_h^{\pi}(s,a)^2}{n_h^t(s,a)}} + (SH - Z_\pi)H\epsilon_0,
    \end{align*}
    where $Z_\pi := |\cZ \cap \{(h,s,\pi_h(s)) : h\in[H], s\in\cS\}|$.
\end{theorem}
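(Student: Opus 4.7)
The plan is to start from the simulation lemma identity
\begin{align*}
\sum_{h,s,a} \big(p_h^\pi(s,a) - \widehat p_h^{\pi,t}(s,a)\big) r_h(s,a) = \sum_{h,s,a} p_h^\pi(s,a) \big(p_h(s,a) - \widehat p_h^t(s,a)\big)^{\!\top} \widehat V_{h+1}^{\pi,t}(\cdot\,;r),
\end{align*}
which holds because both sides equal $V_1^\pi(s_1;r) - \widehat V_1^{\pi,t}(s_1;r)$. Since $\pi \in \PiD$, only triplets of the form $(h, s, \pi_h(s))$ contribute on the right-hand side, so I would partition them into $\cZ_\pi := \cZ \cap \{(h,s,\pi_h(s)) : h\in[H], s\in\cS\}$ (of size $Z_\pi$) and its complement in $\{(h,s,\pi_h(s))\}$ (of size $SH - Z_\pi$), and handle the two pieces separately.

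For the complement piece, each $(h,s,\pi_h(s)) \notin \cZ$ satisfies $p_h^\pi(s,\pi_h(s)) \leq \max_{\pi'} p_h^{\pi'}(s,\pi_h(s)) \leq \epsilon_0$, and since $r \in [0,1]^{SAH}$ gives $\widehat V^{\pi,t}_{h+1}(\cdot;r)\in[0,H]^\cS$, the factor $|(p_h-\widehat p_h^t)^\top \widehat V^{\pi,t}_{h+1}|$ is at most $H$. Summing the $SH - Z_\pi$ terms gives the additive $(SH-Z_\pi)H\epsilon_0$ exactly as in the theorem.

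For the piece over $\cZ_\pi$, I would apply Cauchy--Schwarz in the counts:
\begin{align*}
\Bigg|\sum_{(h,s,a)\in\cZ_\pi} p_h^\pi(s,a)(p_h-\widehat p_h^t)^{\!\top}\widehat V^{\pi,t}_{h+1}\Bigg| \leq \sqrt{\sum_{(h,s,a)\in\cZ} \frac{p_h^\pi(s,a)^2}{n_h^t(s,a)}}\sqrt{\sum_{(h,s,a)\in\cZ_\pi} n_h^t(s,a)\,\big|(p_h-\widehat p_h^t)^{\!\top}\widehat V^{\pi,t}_{h+1}\big|^2},
\end{align*}
where the first factor is enlarged from $\cZ_\pi$ to $\cZ$ for free (the terms outside $\cZ_\pi$ are nonnegative, and in fact zero since $\pi$ is deterministic). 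The second factor is then bounded by invoking Lemma~\ref{lem:global-concentration-pV-all} applied to the set $\cZ_\pi$ (so that $|\cZ_\pi| \leq SH$), at confidence level $\delta_\pi := \delta/A^{SH}$. This gives a threshold $4H^2\log(A^{SH}/\delta) + 12(SH + Z_\pi)H^2\log(1+t) \leq 4H^2\log(1/\delta) + 4SH^3\log A + 24SH^3\log(1+t) \leq \beta^{RF}(t,\delta)$. A union bound over the $A^{SH}$ deterministic policies turns the per-$\pi$ statement into the uniform one, and since the lemma is uniform over all $\{V_h\}\in\mathcal{V}^H$, it automatically handles the supremum over rewards $r\in[0,1]^{SAH}$ via the induced value function $\widehat V^{\pi,t}_{h+1}(\cdot;r)\in[0,H]^\cS$.

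The main technical obstacle is that $\widehat V^{\pi,t}_{h+1}(\cdot;r)$ is a data-dependent object, so one cannot directly invoke a subgaussian concentration bound for a fixed vector against $\widehat p^t - p$. Lemma~\ref{lem:global-concentration-pV-all} circumvents this through a discretization of $\mathcal{V}$ at resolution $\propto 1/t$, absorbing the uniformity over $r$ into the $SH^3\log(1+t)$ term. The second subtlety is getting $\log A$ rather than a factor of $A$ inside the threshold: this forces the union bound to be taken over $\PiD$ (size $A^{SH}$) \emph{after} restricting the concentration to the much smaller set $\cZ_\pi$, rather than applying the lemma once to all of $\cZ$ which would introduce a spurious $A$.
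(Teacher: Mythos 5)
Your proposal is correct and follows essentially the same route as the paper's proof: simulation lemma, splitting the triplets $(h,s,\pi_h(s))$ into $\cZ_\pi$ and its complement (bounded via $\epsilon_0$), applying Lemma~\ref{lem:global-concentration-pV-all} to $\cZ_\pi$ at confidence $\delta/A^{SH}$ with a union bound over $\PiD$, and then passing to the weighted bound — the only cosmetic difference being that you invoke Cauchy--Schwarz directly where the paper uses Lemma~\ref{lem:qp-concentration}, which is the same computation. Your threshold accounting $4H^2\log(A^{SH}/\delta) + 12(SH+Z_\pi)H^2\log(1+t) \leq \beta^{RF}(t,\delta)$ and the observation that restricting to $\cZ_\pi$ before the union bound is what yields $\log A$ rather than $A$ both match the paper exactly.
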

\begin{proof}
    Fix any reward $r$ and deterministic policy $\pi$. Let $V_h^\pi$ and $\widehat{V}_h^{\pi,t}$ denote the value functions of $\pi$ under $(p,r)$ and $(\widehat{p}^t,r)$, respectively. By Lemma \ref{lem:simulation} and the assumption on the set $\cZ$,
\begin{align*}
    \Big|\sum_{h,s,a} \big(\widehat{p}_h^{\pi,t}(s,a) &- p_h^{\pi}(s,a)\big){r}_h(s,a) \Big|  
    \leq \sum_{h,s,a} p_h^\pi(s,a) \big|(\widehat{p}_h^t(s,a) - p_h(s,a))^T \widehat{V}_{h+1}^{\pi,t}\big|
    \\ &\leq \sum_{(h,s,a)\in\cZ} p_h^\pi(s,a) \big|(\widehat{p}_h^t(s,a) - p_h(s,a))^T \widehat{V}_{h+1}^{\pi,t}\big| + (SH - Z_\pi)H\epsilon_0.
\end{align*}
By applying Lemma \ref{lem:global-concentration-pV-all} on the set $\cZ_{\pi} = \cZ \cap \{(h,s,\pi_h(s)) : h\in[H], s\in\cS\}$, whose cardinality is at most $SH$, and union bounding over all $A^{SH}$ deterministic policies, with probability at least $1-\delta$, the following holds for all $t \geq t_0$, $\pi\in\PiD$, and value functions bounded in $[0,H]$:
\begin{align*}
    \sum_{(h,s,\pi_h(s))\in\cZ} n_h^t(s,\pi_h(s)) \big|(\widehat{p}_h^t(s,\pi_h(s)) - p_h(s,\pi_h(s)))^T {V}_{h+1}\big|^2 \leq \beta^{RF}(t,\delta).
\end{align*}
Thus, by Lemma \ref{lem:qp-concentration},
\begin{align*}
    \sum_{(h,s,a)\in\cZ} p_h^\pi(s,a) \big|(\widehat{p}_h^t(s,a) - p_h(s,a))^T \widehat{V}_{h+1}^{\pi,t}\big| &= \!\!\!\!\!\!\sum_{(s,\pi_h(s),h)\in\cZ} p_h^\pi(s) \big|(\widehat{p}_h^t(s,\pi_h(s)) - p_h(s,\pi_h(s)))^T \widehat{V}_{h+1}^{\pi,t}\big|
    \\ &\leq \!\!\!\!\!\!\!\!\!\sup_{\substack{u\in\bR^{SH},\\ \sum_{(s,\pi_h(s),h)\in\cZ} n_h^t(s,\pi_h(s)) u_{s,h}^2 \leq \beta^{RF}(t,\delta)}} \sum_{(s,\pi_h(s),h)\in\cZ} {p}_h^{\pi}(s) u_{s,h}
    \\ &= \sqrt{\beta^{RF}(t,\delta)\sum_{(h,s,a)\in\cZ}\frac{{p}_h^{\pi}(s,a)^2}{n_h^t(s,a)}}.
\end{align*}
\end{proof}

\subsection{Concentration results for BPI}

For BPI, we need concentration bounds on $\big| \widehat{V}_1^{\pi,t} - V_1^\pi \big|$ that hold uniformly across all time steps and \emph{stochastic} policies. Here $\widehat{V}_1^{\pi,t} := \sum_{h,s,a} \widehat{p}_h^{\pi,t}(s,a) \widehat{r}_h^t(s,a)$, where $\widehat{r}_h^t(s,a)$ is the MLE of $r_h(s,a)$ and $\widehat{p}_h^{\pi,t}(s,a)$ is an estimator of $p_h^\pi(s,a)$ computed from the MLEs $\{\widehat{p}_h(s'|s,a)\}_{h,s,a,s'}$ of the transition probabilities. To this end, we shall define the thresholds 
\begin{align*}
    \beta^r(t,\delta) &:= 4\log(2/\delta) + 2SAH \log(1+t),\\
    \beta^p(t,\delta) &:= 4H^2\log(2/\delta) + 24SAH^3\log(1+t),\\
    \beta^{bpi}(t,\delta)&:= 16H^2\log(2/\delta) + 96SAH^3\log(1+t).
\end{align*}
Compared to $\beta^{RF}(t,\delta)$, we note that $\beta^{bpi}(t,\delta)$ features larger multiplicative constants but also a dependency in $A$ instead of $\log(A)$ in its second term which comes from the need to concentrate the values of all {stochastic} policies.   


\begin{theorem}\label{thm:new-concentration-BPI}
    With probability at least $1-\delta$, for any $t \geq t_0 := \inf \{t : n_h^{t}(s,a) \geq 1, \forall (h,s,a)\}$ and $\pi\in\PiS$, the following holds:
    \begin{align*}
        \big| \widehat{V}_1^{\pi,t} - V_1^\pi \big| &\leq \sqrt{\beta^{bpi}(t,\delta)\min\Big(\sum_{h,s,a}\frac{p_h^{\pi}(s,a)^2}{n_h^t(s,a)}, \sum_{h,s,a}\frac{\widehat{p}_h^{\pi,t}(s,a)^2}{n_h^t(s,a)} \Big)}.
    \end{align*}
    Moreover, for any $\widetilde r \in [0,1]^{SAH}$, 
        \begin{align*}
        \Big|\sum_{h,s,a} \big(\widehat{p}_h^{\pi,t}(s,a)- p_h^{\pi}(s,a)\big){\widetilde r}_h(s,a) \Big| &\leq \sqrt{\beta^{p}(t,\delta)\sum_{h,s,a}\frac{p_h^{\pi}(s,a)^2}{n_h^t(s,a)}}\;. 
    \end{align*}
\end{theorem}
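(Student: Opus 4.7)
The plan is to combine concentration of the empirical rewards (Lemma~\ref{lem:global-concentration-r}) with concentration of one-step transition errors against bounded value functions (Lemma~\ref{lem:global-concentration-pV-all}), plugged into the simulation lemma in its two equivalent forms so as to expose either $p_h^\pi$ or $\widehat p_h^{\pi,t}$ as weights. Each form yields one of the two arguments of the $\min$ in the first inequality, and the second inequality follows from the same machinery applied to the transition part alone. The key structural point, and what distinguishes this proof from that of Theorem~\ref{thm:new-concentration-RFE}, is that Lemma~\ref{lem:global-concentration-pV-all} is uniform over all $[0,H]$-bounded value functions, so no union bound over the uncountable set of stochastic policies is required: this is precisely what allows $\beta^{bpi}$ to scale with $SAH$ rather than with the $SH\log A$ factor that an $A^{SH}$ covering of $\Pi^D$ would entail.

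First I would set up the concentration event. Applying Lemma~\ref{lem:global-concentration-r} with $\cZ = [H]\times\cS\times\cA$ at confidence $\delta/2$ gives, jointly for all $t\ge t_0$, $\sum_{h,s,a} n_h^t(s,a)(\widehat r_h^t(s,a)-r_h(s,a))^2 \le \beta^r(t,\delta)$. Applying Lemma~\ref{lem:global-concentration-pV-all} with the same $\cZ$ at confidence $\delta/2$ gives, jointly for all $t\ge t_0$ and all families $\{V_h:\cS\to[0,H]\}$, $\sum_{h,s,a} n_h^t(s,a)\big((\widehat p_h^t(s,a)-p_h(s,a))^\top V_{h+1}\big)^2 \le \beta^p(t,\delta)$. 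A union bound delivers both inequalities simultaneously on an event of probability at least $1-\delta$.

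On this event, for the first statement I would use the two decompositions produced by the simulation lemma applied to the MDPs $(p,r)$ and $(\widehat p^t,\widehat r^t)$,
\begin{align*}
\widehat V_1^{\pi,t}-V_1^\pi &= \sum_{h,s,a} p_h^\pi(s,a)\Big[(\widehat r_h^t-r_h)(s,a)+(\widehat p_h^t-p_h)(s,a)^\top \widehat V_{h+1}^{\pi,t}\Big],\\
\widehat V_1^{\pi,t}-V_1^\pi &= \sum_{h,s,a} \widehat p_h^{\pi,t}(s,a)\Big[(\widehat r_h^t-r_h)(s,a)+(\widehat p_h^t-p_h)(s,a)^\top V_{h+1}^\pi\Big].
\end{align*}
In the first form, a Cauchy--Schwarz pairing of the weights $p_h^\pi(s,a)$ against $n_h^t(s,a)^{-1/2}$, combined with the two concentration bounds (the transition bound instantiated at the bounded choice $V=\widehat V_{\cdot}^{\pi,t}\in[0,H]$), yields $|\widehat V_1^{\pi,t}-V_1^\pi|\le \sqrt{2(\beta^r(t,\delta)+\beta^p(t,\delta))\sum_{h,s,a} p_h^\pi(s,a)^2/n_h^t(s,a)}$ via the elementary inequality $\sqrt a+\sqrt b\le\sqrt{2(a+b)}$. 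The second form produces the analogous bound in terms of $\widehat p_h^{\pi,t}(s,a)^2/n_h^t(s,a)$, this time using the data-free test function $V_{\cdot}^\pi\in[0,H]$ in the transition concentration. Checking the arithmetic inequality $2(\beta^r(t,\delta)+\beta^p(t,\delta))\le \beta^{bpi}(t,\delta)$ and taking the minimum of the two resulting bounds gives the first claim.

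The second statement is obtained by the same machinery restricted to the transition part: expanding $\sum_{h,s,a}(\widehat p_h^{\pi,t}-p_h^\pi)\widetilde r_h$ with the first form of the simulation lemma above (with reward error set to zero) gives $\sum_{h,s,a} p_h^\pi(s,a)(\widehat p_h^t-p_h)(s,a)^\top \widetilde V_{h+1}^{\pi,t}$, where $\widetilde V^{\pi,t}$ is the value function of $\pi$ in the MDP $(\widehat p^t,\widetilde r)$ and lies in $[0,H]$; reusing the concentration event and Cauchy--Schwarz with weights $p_h^\pi(s,a)$ against $n_h^t(s,a)^{-1/2}$ directly yields the bound with $\beta^p(t,\delta)$. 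The main subtlety to monitor throughout is that $\widehat p_h^{\pi,t}$ and $\widehat V^{\pi,t}$ are both data-dependent, which would be fatal if the transition concentration event were stated only for a single fixed $V$; it is harmless here precisely because Lemma~\ref{lem:global-concentration-pV-all} is uniform in $V$ and in $t$, so after conditioning on the event the data-dependent choice of test function is legal, and Cauchy--Schwarz cleanly decouples the data-dependent weights from the concentration bound.
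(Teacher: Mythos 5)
Your proposal is correct and follows essentially the same route as the paper: the simulation lemma in its two forms (weights $p_h^\pi$ with test function $\widehat V^{\pi,t}$, and weights $\widehat p_h^{\pi,t}$ with test function $V^\pi$), the self-normalized concentration lemmas for rewards and for transitions uniformly over bounded value functions, and a Cauchy--Schwarz/quadratic-program step to decouple the visitation weights from the concentration budget. The only cosmetic difference is that you merge the two error terms via $\sqrt{a}+\sqrt{b}\le\sqrt{2(a+b)}$ while the paper uses $\sqrt{\beta^r}+\sqrt{\beta^p}\le 2\sqrt{\beta^p}=\sqrt{\beta^{bpi}}$; both land on the same threshold.
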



\begin{proof}
    Fix any stochastic policy $\pi$. By Lemma \ref{lem:simulation},
\begin{align*}
    \big| \widehat{V}_1^{\pi,t} - V_1^\pi \big|
    \leq \sum_{h,s,a} p_h^\pi(s,a) \big|\widehat{r}_h^t(s,a) - r_h(s,a) \big|  + \sum_{h,s,a} p_h^\pi(s,a) \big|(\widehat{p}_h^t(s,a) - p_h(s,a))^T \widehat{V}_{h+1}^{\pi,t}\big|.
\end{align*}
By applying Lemma \ref{lem:global-concentration-r} and Lemma \ref{lem:global-concentration-pV-all} for the set $\cZ=\{(h,s,a) : h\in[H], s\in\cS, a\in\cA\}$, which is of cardinality $SAH$, with probability at least $1-\delta$, the following hold for all $t \geq t_0$ and for all value functions $(V_{h})_{h \in [H]}$ supported in $[0,H]$: 
\begin{align}
    \sum_{h,s,a} n_h^t(s,a) \big|\widehat{r}_h^t(s,a) - r_h(s,a) \big|^2 &\leq \beta^{r}(t,\delta),\nonumber
    \\ \sum_{h,s,a} n_h^t(s,a) \big|(\widehat{p}_h^t(s,a) - p_h(s,a))^T {V}_{h+1}^{\pi,t}\big|^2 &\leq \beta^{p}(t,\delta).\label{eq:beta-p-ineq}
\end{align}
Thus, by Lemma \ref{lem:qp-concentration}, optimizing over the deviations as in the proof of Lemma \ref{thm:new-concentration-RFE},
\begin{align*}
    \big| \widehat{V}_1^{\pi,t} - V_1^\pi \big| \leq \sqrt{\beta^{r}(t,\delta)\sum_{h,s,a}\frac{{p}_h^{\pi}(s,a)^2}{n_h^t(s,a)}} + \sqrt{\beta^{p}(t,\delta)\sum_{h,s,a}\frac{{p}_h^{\pi}(s,a)^2}{n_h^t(s,a)}}.
\end{align*}
Using that $\beta^{r}(t,\delta) \leq \beta^{p}(t,\delta)$ and noting that $\beta^{bpi}(t,\delta)=4\beta^{p}(t,\delta)$ proves the first statement with the first term in the minimum only. To prove it with the second term as well, it is enough to use Lemma \ref{lem:simulation} with the roles of the two value functions swapped and repeat the same steps as above.

To prove the second statement, we proceed as in the proof of Theorem~\ref{thm:new-concentration-RFE} and write 
\begin{eqnarray*}
    \Big|\sum_{h,s,a} \big(\widehat{p}_h^{\pi,t}(s,a) - p_h^{\pi}(s,a)\big){\widetilde r}_h(s,a) \Big| &\leq& \sum_{h,s,a} p_h^\pi(s,a) \big|(\widehat{p}_h^t(s,a) - p_h(s,a))^T \widehat{V}_{h+1}^{\pi,t}\big| \\
    & \leq &   \!\!\!\!\!\!\!\!\!\sup_{\substack{u\in\bR^{SH},\\ \sum_{h,s,a} n_h^t(s,a) u_{h,s,a}^2 \leq \beta^{p}(t,\delta)}} \sum_{h,s,a} {p}_h^{\pi}(s,a) u_{h,s,a}
    \\ 
    &= & \sqrt{\beta^{p}(t,\delta)\sum_{h,s,a}\frac{{p}_h^{\pi}(s,a)^2}{n_h^t(s,a)}},
\end{eqnarray*}
where we used Lemma~\ref{lem:qp-concentration} and together with inequality \eqref{eq:beta-p-ineq}.
\end{proof}

\subsection{Auxiliary results}

\begin{lemma}[Lemma E.15 of \cite{dann2017unifying}]\label{lem:simulation} Consider two MDPs with transitions $p,\widehat{p}$ and rewards $r,\widehat{r}$, respectively. Let $V_{h}^\pi,\widehat{V}_{h}^\pi$ denote the value function of a (possibly stochastic) policy $\pi$ in these two MDPs. Then, for any $s,h$,
    \begin{align*}
        V_{h}^\pi(s) - \widehat{V}_{h}^\pi(s) = \widehat{\bE}^{\pi}\left[\left. \sum_{\ell=h}^H \left( r_{\ell}(s_{\ell},a_{\ell}) - \widehat{r}_{\ell}(s_{\ell},a_{\ell}) + \big(p_{\ell}(s_{\ell},a_{\ell}) - \widehat{p}_{\ell}(s_{\ell},a_{\ell})\big)^T V_{{\ell}+1}^\pi \right) \right| s_h = s\right].
    \end{align*}
\end{lemma}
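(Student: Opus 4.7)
\textbf{Proof plan for Lemma~\ref{lem:simulation} (the simulation lemma).}

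The statement is a standard telescoping identity, so the natural route is backward induction on $h$, starting from the terminal stage. The base case is $h=H+1$, where by convention $V^{\pi}_{H+1}\equiv\widehat V^{\pi}_{H+1}\equiv 0$ and the right-hand side is an empty sum, so the identity holds trivially. Equivalently one can start at $h=H$: the Bellman equation gives $V^{\pi}_{H}(s)-\widehat V^{\pi}_{H}(s)=\widehat \bE^{\pi}[r_H(s_H,a_H)-\widehat r_H(s_H,a_H)\mid s_H=s]$, matching the claimed formula (the transition-error term vanishes because $V^{\pi}_{H+1}\equiv 0$).

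For the inductive step, I would fix $h<H$, assume the identity holds at stage $h+1$ for every state, and expand the Bellman equations for both MDPs under the same policy $\pi$:
\begin{align*}
V^{\pi}_{h}(s) &= \bE_{a\sim\pi_h(\cdot|s)}\bigl[r_h(s,a)+p_h(s,a)^\top V^{\pi}_{h+1}\bigr],\\
\widehat V^{\pi}_{h}(s) &= \bE_{a\sim\pi_h(\cdot|s)}\bigl[\widehat r_h(s,a)+\widehat p_h(s,a)^\top \widehat V^{\pi}_{h+1}\bigr].
\end{align*}
Taking the difference and adding/subtracting the cross term $\widehat p_h(s,a)^\top V^{\pi}_{h+1}$ decouples the ``model-error at stage $h$'' from the ``value-error propagated beyond stage~$h$'':
\begin{align*}
V^{\pi}_{h}(s)-\widehat V^{\pi}_{h}(s)
&=\bE_{a\sim\pi_h(\cdot|s)}\!\Bigl[r_h(s,a)-\widehat r_h(s,a)+\bigl(p_h(s,a)-\widehat p_h(s,a)\bigr)^\top V^{\pi}_{h+1}\Bigr]\\
&\quad+\bE_{a\sim\pi_h(\cdot|s)}\!\bigl[\widehat p_h(s,a)^\top\bigl(V^{\pi}_{h+1}-\widehat V^{\pi}_{h+1}\bigr)\bigr].
\end{align*}
The first expectation is precisely the $\ell=h$ term of the target sum, conditioned on $s_h=s$. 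In the second expectation, $\widehat p_h(s,a)$ is the law of $s_{h+1}$ given $(s_h,a_h)=(s,a)$ under $\widehat\bE^{\pi}$; hence, by the tower property,
\[
\bE_{a\sim\pi_h(\cdot|s)}\bigl[\widehat p_h(s,a)^\top(V^{\pi}_{h+1}-\widehat V^{\pi}_{h+1})\bigr]
=\widehat\bE^{\pi}\!\left[V^{\pi}_{h+1}(s_{h+1})-\widehat V^{\pi}_{h+1}(s_{h+1})\,\middle|\,s_h=s\right].
\]
Applying the inductive hypothesis inside this conditional expectation and again invoking the tower property (the expectations at successive stages compose into a single $\widehat\bE^{\pi}[\,\cdot\mid s_h=s]$) yields exactly the stated identity at stage $h$.

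The only bookkeeping worth highlighting is the asymmetry between the two occurrences of value functions: the transition-error term carries the \emph{true-model} value $V^{\pi}_{\ell+1}$, while the propagation of residual errors to deeper stages happens under the \emph{empirical} kernel $\widehat p$ (hence the appearance of $\widehat\bE^{\pi}$, not $\bE^{\pi}$). Getting this split right is the whole content of the add-and-subtract step; once it is in place the induction is mechanical. There is no serious obstacle here—the argument is a one-line Bellman decomposition followed by a textbook unrolling.
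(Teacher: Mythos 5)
Your proof is correct, and the induction is set up with the right asymmetry: the stage-$\ell$ error term carries the true-model value $V^{\pi}_{\ell+1}$ while residual errors propagate under the empirical kernel, which is exactly why the outer expectation is $\widehat{\bE}^{\pi}$ rather than $\bE^{\pi}$. Note that the paper does not actually prove this lemma—it imports it by citation as Lemma E.15 of \cite{dann2017unifying}—so there is no in-paper argument to compare against; your backward-induction/add-and-subtract derivation is the standard proof of that cited result and is complete as written.
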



\begin{lemma}\label{lem:qp-concentration}
Let $n\in\mathbb{N}$, $p,b\in\bR^n$ with $b$ having strictly positive entries, and $c \in \bR_{\geq 0}$. Then,
\begin{align*}
    \sup_{\substack{x \in \bR^{n} :\\
    \sum_{i=1}^n b_i x_i^2 \leq c}} \sum_{i=1}^n p_i x_i = \sqrt{c\sum_{i=1}^n \frac{p_i^2}{b_i}} .
\end{align*}
\end{lemma}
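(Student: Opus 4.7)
The plan is to prove this standard ellipsoid-versus-hyperplane optimization via a Cauchy--Schwarz argument, which gives both the upper bound and the matching lower bound through an explicit maximizer.

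First, I would establish the upper bound. Writing $\sum_{i=1}^n p_i x_i = \sum_{i=1}^n \tfrac{p_i}{\sqrt{b_i}} \cdot \sqrt{b_i}\, x_i$, the Cauchy--Schwarz inequality gives
\[
\sum_{i=1}^n p_i x_i \;\leq\; \sqrt{\sum_{i=1}^n \frac{p_i^2}{b_i}} \cdot \sqrt{\sum_{i=1}^n b_i x_i^2} \;\leq\; \sqrt{c \sum_{i=1}^n \frac{p_i^2}{b_i}},
\]
where the last step uses the feasibility constraint $\sum_i b_i x_i^2 \leq c$. This is legitimate since $b_i > 0$ for every $i$.

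For the matching lower bound, I would exhibit an explicit feasible point attaining the bound. If $p = 0$ the statement is trivial, so assume $\sum_i p_i^2 / b_i > 0$ and set
\[
x_i^\star \;=\; \sqrt{\frac{c}{\sum_{j=1}^n p_j^2/b_j}} \cdot \frac{p_i}{b_i}.
\]
A direct computation gives $\sum_i b_i (x_i^\star)^2 = c$, so $x^\star$ is feasible, and $\sum_i p_i x_i^\star = \sqrt{c \sum_i p_i^2/b_i}$, matching the upper bound. Thus the supremum equals $\sqrt{c \sum_i p_i^2/b_i}$ (and is in fact attained).

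There is no real obstacle here: the only subtlety is the edge case $p=0$ (handled trivially) and the assumption $b_i > 0$ (given in the statement, which makes the change of variables $y_i = \sqrt{b_i}\, x_i$ well-defined). Alternatively, one could derive the same result by Lagrangian duality for a convex quadratic program, but the Cauchy--Schwarz proof is both shorter and self-contained.
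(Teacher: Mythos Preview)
Your proof is correct and in fact more elementary than the paper's. The paper proceeds via Lagrangian duality: it rewrites the supremum as a minimization, writes the Lagrangian $\mathcal{L}(x,\lambda) = -\sum_i p_i x_i + \lambda(\sum_i b_i x_i^2 - c)$, and solves the KKT system to find the optimal $x_i = -p_i/(2\lambda b_i)$ together with the binding constraint $\sum_i b_i x_i^2 = c$, which yields the optimal multiplier $\lambda = \sqrt{c / \sum_i p_i^2/b_i}$ and hence the stated value. Your Cauchy--Schwarz argument reaches the same conclusion with less machinery: the upper bound is immediate from the weighted inner-product factorization, and your explicit maximizer is exactly the KKT point the paper computes. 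The advantage of your route is that it sidesteps any need to discuss constraint qualification or strong duality; the Lagrangian approach is perhaps more mechanical and generalizes more readily to other quadratic constraints, but for this specific ellipsoid--hyperplane problem Cauchy--Schwarz is the natural tool.
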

\begin{proof}
Let $v$ be the value of the optimization program. Then we know that 
\begin{align}\label{eq:QP-concentration}
   -v = \inf_{\substack{x \in \bR^{n} :\\
    \sum_{i=1}^n b_i x_i^2 \leq c}} -\sum_{i=1}^n p_i x_i.
\end{align}
The Lagrangian of the quadratic program above writes as
\begin{align*}
    \cL(x, \lambda ) = -\sum_{i=1}^n p_i x_i + \lambda \bigg(\sum_{i=1}^n b_i x_i^2 - c \bigg),
\end{align*}
where $\lambda \geq 0$. The KKT conditions then yield that the optimal solution satisfies that
\begin{align*}
 &\forall i\in [|1,n|],\quad    x_i = -\frac{p_i}{2\lambda b_i}\\
 &\sum_{i=1}^n b_i x_i^2 = c
\end{align*}
Solving this system yields that the optimal Lagrange multiplier $\lambda = \sqrt{\frac{c}{\sum_{i=1}^n \frac{p_i^2}{b_i}}}$ which implies that the value of (\ref{eq:QP-concentration}) is $- \sqrt{c\sum_{i=1}^n \frac{p_i^2}{b_i}}$. 
\end{proof}



\section{Analysis of \rfealg{}}

To simplify the presentation of the algorithm and the analysis, we index the counts as well as the empirical estimates of transitions and rewards by their phase number. Hence, for each triplet $(h,s,a)$, $n_h^k(s,a)$ and $\widehat{p}_h^k(.|s,a)$ will refer to the number of visits and the empirical transition kernel respectively after $t_k$ episodes, i.e. at the end of the $k$-th phase. Finally, for a dataset of episodes $\cD$, $n_h(s,a; \cD)$ denotes the number of visits of $(h,s,a)$ in the episodes stored in $\cD$.

\subsection{Good event}
We introduce the following events
\begin{align*}
\cE_{vis} &:= \bigg(\textrm{The set built using \visitalg$\Big((h,s); \frac{\epsilon}{4SH^2}, \frac{\delta}{3SH}\Big)$ for all $(h,s)$}\\
    &\quad\quad \textrm{satisfies }\Big\{(h,s) : \sup_{\pi} p_h^{\pi}(s) \geq \frac{\epsilon}{4SH^2} \Big\} \subseteq \widehat{\cX} \subseteq \Big\{(h,s) : \sup_{\pi} p_h^{\pi}(s) \geq \frac{\epsilon}{32SH^2}\Big\}\\
    &\quad\quad\textrm{and } \forall(h,s)\in \widehat{\cX},\ \sup_{\pi} p_h^{\pi}(s) \leq \overline{W}_h(s)\leq 36\sup_{\pi} p_h^{\pi}(s)\bigg), \\
    \cE_{p}^{RF} &:= \bigg(\forall k \in \mathbb{N}^\star, \forall \pi \in \Pi^D, \forall r \in [0,1]^{SAH},
    \\ & \qquad\qquad \Big|\sum_{s,a,h} \big(\widehat{p}_h^{\pi,k}(s,a)- p_h^{\pi}(s,a)\big){r}_h(s,a) \Big| \leq \sqrt{\beta^{RF}(t_k,\delta/3)\sum_{(s,a,h)\in\widehat{\cX}}\frac{p_h^{\pi}(s,a)^2}{n_h^k(s,a)}} + \frac{\epsilon}{4} \bigg),\\
    \cE_{cov} &:= \bigg(\forall k\in \mathbb{N},\ \textrm{CovGame run with inputs $(c^{k}, \delta/6(k+1)^2)$ terminates after at most}\\
    & 64 m_k \varphi^\star(c^{k}) + \widetilde{\cO}\big(m_k \varphi^\star(\mathds{1}_{\widehat{\cX}}) SAH^2 (\log(6(k+1)^2/\delta) + S) \big)\ \textrm{episodes and returns a dataset $\cD_k$ }\\
    &\textrm{such that for all } (h,s,a)\in \widehat{\cX}, n_h(s,a;\cD_k)\geq c^{k}_h(s,a) \bigg),
\end{align*}
where $m_k = \log_2\big(\frac{\max_{s,a,h} c_h^{k}(s,a)}{\min_{s,a,h} c_h^{k}(s,a) \vee 1}\big) \vee 1$ and $\beta^{RF}$ is defined in appendix \ref{sec:app-concentration-RFE}. Then our good event is defined as the intersection 
\begin{align*}
    \cE_{good}^{RF} :=  \cE_{vis} \cap \cE_{p}^{RF} \cap \cE_{cov}.
\end{align*}


\begin{lemma}
We have that $\bP_{\cM}(\cE_{good}^{RF}) \geq 1-\delta$.
\end{lemma}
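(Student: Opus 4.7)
The plan is to perform a three-way union bound: $\bP(\cE_{good}^{RF}) \geq 1 - \bP(\cE_{vis}^c) - \bP(\cE_{p}^{RF\,c}) - \bP(\cE_{cov}^c)$, and to control each failure probability by $\delta/3$. Two of these pieces are straightforward applications of the guarantees stated earlier; the third requires a small detour to reconcile the random set $\widehat{\cX}$ appearing in $\cE_p^{RF}$ with the \emph{deterministic} set required by Theorem~\ref{thm:new-concentration-RFE}.

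For $\cE_{vis}$, I would union bound over the $SH$ invocations of $\visitalg((h,s); \epsilon/(4SH^2), \delta/(3SH))$ and invoke its correctness guarantee proven in Appendix~\ref{app:visitations}. The union bound yields $\bP(\cE_{vis}^c) \leq SH \cdot \delta/(3SH) = \delta/3$. For $\cE_{cov}$, I would union bound over the phase index $k \in \bN$: phase $k$ runs $\covalg$ with confidence parameter $\delta/(6(k+1)^2)$, and Corollary~\ref{cor:cover-instance-improved} delivers simultaneously the sampling-requirement guarantee and the claimed sample-complexity bound with that probability. Summing, $\sum_{k\geq 0} \delta/(6(k+1)^2) = \delta\pi^2/36 < \delta/3$.

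The delicate step is $\cE_p^{RF}$, since the announced bound is stated in terms of $\widehat{\cX}$, which is random. I would introduce the deterministic set $\cZ := \{(h,s,a) : \sup_\pi p_h^\pi(s) \geq \epsilon/(4SH^2)\}$ and apply Theorem~\ref{thm:new-concentration-RFE} to $\cZ$ with $\epsilon_0 := \epsilon/(4SH^2)$ and confidence $\delta/3$. This choice of $\epsilon_0$ is legitimate because for $(h,s,a) \notin \cZ$ we have $\max_\pi p_h^\pi(s,a) \leq \sup_\pi p_h^\pi(s) < \epsilon_0$. The theorem then produces an event $\cE_{\mathrm{conc}}$ of probability $\geq 1-\delta/3$ on which the concentration inequality holds jointly over all $t \geq t_0 := \inf\{t : n_h^t(s,a) \geq 1,\ \forall (h,s,a)\in\cZ\}$, all deterministic policies, and all rewards in $[0,1]^{SAH}$, with remainder $(SH - Z_\pi) H \epsilon_0 \leq SH \cdot H \cdot \epsilon/(4SH^2) = \epsilon/4$. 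To transfer this to the statement of $\cE_p^{RF}$: on $\cE_{vis}$, the inclusion $\cZ \subseteq \widehat{\cX}$ implies that the sum over $\cZ$ in the concentration bound is upper bounded by the sum over $\widehat{\cX}$; on $\cE_{cov}$, the burn-in phase guarantees $n_h^k(s,a) \geq c_h^0(s,a) = 1$ for every $(h,s,a) \in \widehat{\cX} \supseteq \cZ$, so $t_k \geq t_0$ for all $k\geq 1$.

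Putting everything together, $\cE_{vis} \cap \cE_{\mathrm{conc}} \cap \cE_{cov}$ implies $\cE_{vis} \cap \cE_p^{RF} \cap \cE_{cov} = \cE_{good}^{RF}$, and a final union bound yields $\bP(\cE_{good}^{RF}) \geq 1-\delta$. The main obstacle is the just-described random-versus-deterministic mismatch in $\cE_p^{RF}$; handling it requires picking the \emph{smallest} deterministic set that $\widehat{\cX}$ is guaranteed to contain under $\cE_{vis}$, so that replacing $\cZ$ by $\widehat{\cX}$ only enlarges the right-hand side.
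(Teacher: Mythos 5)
Your proof is correct and follows essentially the same route as the paper: the identical three-way union bound, with $\bP(\overline{\cE_{vis}})$, $\bP(\overline{\cE_{cov}})$ and the concentration failure each controlled by $\delta/3$ via Theorem~\ref{thm:estimate-visitations}, Corollary~\ref{cor:cover-instance-improved}, and Theorem~\ref{thm:new-concentration-RFE} respectively. The only divergence is in the third term: the paper invokes Theorem~\ref{thm:new-concentration-RFE} with $\cZ = \{(h,s,a) : (h,s)\in\widehat{\cX}\}$ (a random set), whereas you apply it to the deterministic set $\{(h,s,a) : \sup_\pi p_h^\pi(s) \geq \epsilon/(4SH^2)\}$ and then transfer to $\widehat{\cX}$ via the inclusion granted by $\cE_{vis}$ and monotonicity of the sum --- a slightly more careful handling of the same step, since the concentration theorem is stated for a fixed set.
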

\begin{proof}
Let $\overline{\cE}$ denote the complementary event of $\cE$. We start by the following decomposition
\begin{align*}
 \bP_{\cM}(\overline{\cE_{good}^{RF}}) &\leq \bP_{\cM}(\overline{\cE_{vis}})+ \bP_{\cM}(\overline{\cE_{cov}}) + \bP_{\cM}(\overline{\cE_{p}^{RF}}\cap \cE_{vis} \cap \cE_{cov}). 
\end{align*}
Now we bound each term separately. First observe that applying Theorem \ref{thm:estimate-visitations} with parameter $\epsilon_0 = \epsilon/4SH^2$ yields $\bP_{\cM}(\overline{\cE_{vis}}) \leq \delta/3$. Second, using
Corollary \ref{cor:cover-instance-improved} we have
\begin{align*}
\bP_{\cM}(\overline{\cE_{cov}}) &\leq \sum_{k=0}^{\infty} \bP_{\cM}(\textrm{CovGame with inputs $(c^{k}, \delta/6(k+1)^2)$ fails}) \\
&\leq \sum_{k=0}^{\infty} \frac{\delta}{6(k+1)^2} = \frac{\delta \pi^2}{36} \leq \delta/3.
\end{align*}
Next, note that by design of \rfealg{} $n_h^0(s,a) = n_h(s,a; \widetilde{\cD}_0)$ and $c^0 = \mathds{1}_{\widehat{\cX}}$ so that $\cE_{cov} \subset \big(\forall (h,s,a)\in \widehat{\cX},\ n_h^0(s,a) \geq 1 \big)$. Therefore we have
\begin{align*}
 \bP_{\cM} & (\overline{\cE_{p}^{RF}} \cap \cE_{vis} \cap \cE_{cov}) \leq \bP_{\cM}\big(\overline{\cE_{p}^{RF}},\ \big\{(h,s) : \sup_{\pi} p_h^{\pi}(s) \geq \frac{\epsilon}{4SH^2} \big\} \subseteq \widehat{\cX} ,\ \forall (h,s,a)\in \widehat{\cX}\  n_h^0(s,a) \geq 1 \big) \\
 &= \bP_{\cM}\bigg(\big\{(h,s) : \sup_{\pi} p_h^{\pi}(s) \geq \frac{\epsilon}{4SH^2} \big\} \subseteq \widehat{\cX},\ \exists k\geq 0\ \exists \pi \in \Pi^D\ \exists r\in[0,1]^{SAH}:\\
 &\quad \quad \quad \quad \Big|\sum_{s,a,h} \big(\widehat{p}_h^{\pi,k}(s,a)- p_h^{\pi}(s,a)\big){r}_h(s,a) \Big| > \sqrt{\beta^{RF}(t_k,\delta/3)\sum_{(s,a,h)\in\widehat{\cX}}\frac{p_h^{\pi}(s,a)^2}{n_h^k(s,a)}} + \frac{\epsilon}{4} \bigg)\\
 &\stackrel{(a)}{\leq} \bP_{\cM}\bigg(\big\{(h,s) : \sup_{\pi} p_h^{\pi}(s) \geq \frac{\epsilon}{4SH^2} \big\} \subseteq \widehat{\cX},\ \exists t\geq t_0\ \exists \pi \in \Pi^D\ \exists r\in[0,1]^{SAH}:\\
 & \quad \quad \quad \quad \Big|\sum_{s,a,h} \big(\widehat{p}_h^{\pi,t}(s,a)- p_h^{\pi}(s,a)\big){r}_h(s,a) \Big| > \sqrt{\beta^{RF}(t,\delta/3)\sum_{(s,a,h)\in\widehat{\cX}}\frac{p_h^{\pi}(s,a)^2}{n_h^t(s,a)}} + \frac{\epsilon}{4} \bigg)\\
 &\stackrel{(b)}{\leq} \delta/3,
\end{align*}
where in (a) we introduced $t_0 = \inf\{t\geq 1: n_h^{t}(s,a) \geq 1, \forall (h,s,a) \in \widehat{\cX} \}$ and switched back to indexing counts and estimates by the episode number (instead of the phase) in order to apply Theorem \ref{thm:new-concentration-RFE} in (b) with $\cZ = \{ (h,s,a) : (h,s) \in \widehat{\cX}\}$ and $\epsilon_0 = \epsilon/4SH^2$. Combining the four inequalities above yields the desired result.
\end{proof}

\subsection{Low concentrability / Good coverage of all policies}

The next lemma shows that \rfealg{} achieves proportional coverage.
\begin{lemma}\label{lem:new-RFE-coverage}
Under the good event, for all phases $k \geq 0$, we have that 
\begin{align*} 
n_h^{k}(s,a) \geq 2^{k} \sup_{\pi} p^{\pi}_h(s,a)\quad \forall (h,s,a) \in \widehat{\cX}.
\end{align*}
\end{lemma}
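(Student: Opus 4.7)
The proof will be a direct consequence of the construction of the target functions $c^{k}$ together with two components of the good event: $\cE_{vis}$ (which controls the quality of $\overline{W}_h(s)$ as an upper bound on $\sup_\pi p_h^\pi(s)$) and $\cE_{cov}$ (which ensures that each call to \covalg{} returns a dataset meeting the sampling requirement specified by $c^k$). The plan is to handle the burn-in phase ($k=0$) separately from the generic phase ($k \geq 1$), and then use the fact that $n_h^{k}(s,a)$ is the cumulative count over all phases up to $k$ (so lower bounding it via $n_h(s,a;\cD_k)$ alone suffices).

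First, I would fix $(h,s,a) \in \widehat{\cX}$. For the burn-in phase, $c^0_h(s,a) = \ind((h,s,a)\in\widehat{\cX}) = 1$. On $\cE_{cov}$, \covalg{} returns a dataset $\cD_0$ with $n_h(s,a;\cD_0) \geq 1$. Since $n_h^0(s,a) = n_h(s,a;\cD_0) \geq 1 \geq \sup_\pi p_h^\pi(s,a) = 2^0 \sup_\pi p_h^\pi(s,a)$ (using that visitation probabilities are in $[0,1]$), the desired inequality holds at $k=0$.

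For the inductive step $k \geq 1$: on $\cE_{cov}$, \covalg{} with target $c^k$ returns a dataset $\cD_k$ satisfying $n_h(s,a;\cD_k) \geq c^k_h(s,a) = 2^k \overline{W}_h(s)$. On $\cE_{vis}$, we have $\overline{W}_h(s) \geq \sup_\pi p_h^\pi(s)$ for all $(h,s) \in \widehat{\cX}$, and trivially $\sup_\pi p_h^\pi(s) \geq \sup_\pi p_h^\pi(s,a)$. Combining these and using the cumulative nature of the counts ($n_h^k(s,a) \geq n_h(s,a;\cD_k)$), we get
\[
n_h^k(s,a) \;\geq\; n_h(s,a;\cD_k) \;\geq\; 2^k \overline{W}_h(s) \;\geq\; 2^k \sup_\pi p_h^\pi(s,a).
\]

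There is no real obstacle here: the definition of $c^k$ was precisely engineered so that meeting the coverage requirement on the empirical side (guaranteed by $\cE_{cov}$) translates immediately, via the upper confidence bound $\overline{W}_h$ from \visitalg{} (controlled by $\cE_{vis}$), into proportional coverage relative to the unknown true visitation probabilities. The only subtlety worth flagging explicitly is that $\widehat{\cX}$ is defined through $\underline{W}_h$ while the target uses $\overline{W}_h$, but both are part of the same confidence interval on $\cE_{vis}$ and the inequality $\overline{W}_h(s) \geq \sup_\pi p_h^\pi(s)$ is exactly what we need.
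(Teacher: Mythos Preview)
Your proof is correct and follows essentially the same approach as the paper's: use $\cE_{cov}$ to lower bound $n_h^k(s,a)$ by $c_h^k(s,a)$, then use the definition of $c^k$ together with $\cE_{vis}$ to translate $\overline{W}_h(s)$ into $\sup_\pi p_h^\pi(s) \geq \sup_\pi p_h^\pi(s,a)$. The paper phrases the argument as an induction on $k$, but the induction hypothesis is never actually invoked, so your direct case split ($k=0$ versus $k\geq 1$) is if anything slightly cleaner.
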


\begin{proof}
First of all, note that for any triplet $(h,s,a) \in \widehat{\cX}, \sup_{\pi} p^{\pi}_h(s,a)$ is always attained by some deterministic policy. Therefore, it is sufficient to prove that, given a fixed deterministic policy $\pi \in \Pi^D$, 
\begin{align*}
 \forall k\geq 0,  \forall (h,s,a) \in \widehat{\cX},\quad n_h^{k}(s,a) \geq 2^{k} p^{\pi}_h(s,a)\;.  
\end{align*}
We do this by induction over $k$. For $k = 0$ the result is trivial since, under the good event, we have that for all $(h,s,a) \in \widehat{\cX}$, $n_h^{0}(s,a) \geq c^0_h(s,a) = 1 \geq 2^{0} p^{\pi}_h(s,a)$. Now suppose that the property holds for phase $k$. Then under the good event we know that for all $(h,s,a), n_h^{k+1}(s,a) - n_h^{k}(s,a) = n_h(s,a,\cD_{k+1}) \geq c^{k+1}_h(s,a)$. Plugging the definition of $c^{k+1}$ (Line 9 of Algorithm \ref{alg:RF-algo-final}) we get that for any $(h,s,a) \in \widehat{\cX}$, 

\begin{align}\label{ineq:new-PCE-correctness-1}
  n_h^{k+1}(s,a) &\geq  c^{k+1}_h(s,a) \nonumber\\
  &= 2^{k+1} \overline{W}_h(s) \nonumber\\
  &\geq 2^{k+1} \sup_{\pi} p_h^{\pi}(s) \nonumber\\
  & = 2^{k+1} \sup_{\pi} p_h^{\pi}(s,a),
\end{align}
where the second inequality uses the event $\cE_{vis}$.
\end{proof}
\subsection{Correctness}
\begin{lemma}
Let $\widehat{p}$ be the estimate of the transition probabilities that \rfealg{} outputs. For any reward function ${r}$, let $\hat{\pi}_{r}$ be an optimal policy in the MDP $(\widehat{p}, {r})$. Then 
$$\bP\left(\forall {r} \in [0,1]^{SAH}, V_1^{\hat{\pi}_{ r}}(s_1 ;{r}) \geq V_1^\star(s_1;{r}) - \epsilon\right) \geq 1-\delta.$$
In other words, \rfealg{} is $(\epsilon, \delta)$-PAC for reward-free exploration.
\end{lemma}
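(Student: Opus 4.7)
The plan is to work on the good event $\cE_{good}^{RF}$, which occurs with probability at least $1-\delta$, and combine three ingredients: the stopping rule, the proportional coverage guarantee of Lemma~\ref{lem:new-RFE-coverage}, and the uniform value concentration encoded in $\cE_p^{RF}$. Since optimal policies in both the true and empirical MDP can be taken deterministic, it suffices to control $|V_1^{\pi}(s_1;r) - \widehat V_1^{\pi,k}(s_1;r)|$ uniformly over $\pi \in \Pi^D$ and over rewards $r\in[0,1]^{SAH}$ at the final phase $k$.

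The key step is to bound the ellipsoidal term appearing in the concentration inequality by using the coverage bound. For each $(h,s,a) \in \widehat\cX$, Lemma~\ref{lem:new-RFE-coverage} gives $n_h^k(s,a) \geq 2^k \sup_{\pi'} p_h^{\pi'}(s,a) \geq 2^k p_h^\pi(s,a)$, so
\[\sum_{(h,s,a)\in\widehat\cX} \frac{p_h^\pi(s,a)^2}{n_h^k(s,a)} \;\leq\; \frac{1}{2^k}\sum_{(h,s,a)\in\widehat\cX} p_h^\pi(s,a) \;\leq\; \frac{H}{2^k}.\]
Substituting into $\cE_p^{RF}$ yields $|V_1^{\pi}(s_1;r) - \widehat V_1^{\pi,k}(s_1;r)| \leq \sqrt{H\beta^{RF}(t_k,\delta/3)/2^k} + \epsilon/4$. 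When \rfealg{} stops at phase $k$, the stopping test guarantees $\sqrt{H\beta^{RF}(t_k,\delta/3)\cdot 2^{4-k}} \leq \epsilon$, equivalently $\sqrt{H\beta^{RF}(t_k,\delta/3)/2^k} \leq \epsilon/4$. Hence
\[\forall r\in[0,1]^{SAH},\ \forall \pi\in\Pi^D,\quad |V_1^{\pi}(s_1;r) - \widehat V_1^{\pi,k}(s_1;r)| \leq \epsilon/2.\]

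To conclude, fix any reward $r$ and let $\pi^\star$ be a deterministic optimal policy for $(p,r)$ and $\widehat\pi_r$ a deterministic optimal policy for $(\widehat p^k, r)$. Then the standard decomposition
\[V_1^\star(s_1;r) - V_1^{\widehat\pi_r}(s_1;r) = \bigl(V_1^{\pi^\star}(s_1;r) - \widehat V_1^{\pi^\star,k}(s_1;r)\bigr) + \bigl(\widehat V_1^{\pi^\star,k}(s_1;r) - \widehat V_1^{\widehat\pi_r,k}(s_1;r)\bigr) + \bigl(\widehat V_1^{\widehat\pi_r,k}(s_1;r) - V_1^{\widehat\pi_r}(s_1;r)\bigr)\]
has a non-positive middle term (by optimality of $\widehat\pi_r$ in the empirical MDP) and its two outer terms are each bounded by $\epsilon/2$ in absolute value. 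This yields $V_1^{\widehat\pi_r}(s_1;r) \geq V_1^\star(s_1;r) - \epsilon$, uniformly over $r$.

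The main subtleties are (i) making sure $\cE_p^{RF}$ is quantified over all deterministic policies and all rewards simultaneously (which is precisely how it is defined in the good event), and (ii) handling the triplets outside $\widehat\cX$: these are absorbed into the additive $\epsilon/4$ slack in $\cE_p^{RF}$, which in turn relies on $\cE_{vis}$ ensuring that unreachable-enough triplets do not belong to $\widehat\cX$. No further hard step is needed; everything else is algebra.
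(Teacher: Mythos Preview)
Your proof is correct and follows the same approach as the paper: both work on the good event, use Lemma~\ref{lem:new-RFE-coverage} to bound the ellipsoidal term by $H2^{-k}$, invoke the stopping rule $\sqrt{H\beta^{RF}(t_k,\delta/3)2^{4-k}}\leq\epsilon$ to get the $\epsilon/4$ bound on the square-root term, and conclude via the standard three-term decomposition (the paper does the decomposition inline as a chain of inequalities, but the logic is identical).
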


\begin{proof}
Assume that \rfealg{} stops as phase $k$ and let $\widehat{p}^k$ denote the empirical transition estimates that it returns. Fix any reward function $r = [r_h(s,a)]_{h,s,a} \in [0,1]^{SAH}$ and let $\widehat{\pi} \in \argmax_{\pi\in\Pi^D} (\widehat{p}^{\pi,k})^\top r $ be the policy obtained when planning for reward function $r$ under the transition model $\widehat{p}^k$. Further define $\pi^\star \in \argmax_{\pi\in\Pi^D} (p^{\pi})^\top r, V_1^\star := (p^{\pi^\star})^\top r$, and $V_1^{\widehat{\pi}} := (p^{\widehat{\pi}})^\top r$. Note that both $\widehat{\pi}$ and $\pi^\star$ are deterministic. Therefore under the good event $\cE_{good}^{RF}$ we have
\begin{align*}
   V_1^{\widehat{\pi}} &= (p^{\widehat{\pi}})^\top r\\
   &\stackrel{(a)}{\geq} (\widehat{p}^{\widehat{\pi},k})^\top r - \sqrt{\beta^{RF}(t_{k},\delta/3) \sum_{(s,a,h)\in \widehat{\cX}}\frac{p^{\widehat{\pi}}_h(s,a)^2}{n_h^{k}(s,a)}} - \frac{\epsilon
   }{4}\\
   & \stackrel{(b)}{\geq} (\widehat{p}^{\pi^\star,k})^\top r - \sqrt{\beta^{RF}(t_{k},\delta/3) \sum_{(s,a,h)\in \widehat{\cX}}\frac{p^{\widehat{\pi}}_h(s,a)^2}{n_h^{k}(s,a)}} - \frac{\epsilon
   }{4}\\
   &\stackrel{(c)}{\geq} ({p}^{\pi^\star})^\top r - \sqrt{\beta^{RF}(t_{k},\delta/3) \sum_{(s,a,h)\in \widehat{\cX}}\frac{p^{\pi^\star}_h(s,a)^2}{n_h^{k}(s,a)}} - \sqrt{\beta^{RF}(t_{k},\delta/3) \sum_{(s,a,h)\in \widehat{\cX}}\frac{p^{\widehat{\pi}}_h(s,a)^2}{n_h^{k}(s,a)}} - \frac{\epsilon
   }{2}\\
   &\stackrel{(d)}{\geq} V_1^\star - 2\sqrt{H\beta^{RF}(t_{k},\delta/3)2^{-k}}- \frac{\epsilon
   }{2}\\
   &\stackrel{(e)}{\geq} V_1^\star - \epsilon,
\end{align*}
where (a) and (c) use the good event $\cE_{p}^{RF}$ for policies $\widehat{\pi}$ and $\pi^\star$ respectively, (b) uses the definition of $\widehat{\pi}$, (d) uses Lemma \ref{lem:new-RFE-coverage} and (e) uses the stopping condition of \rfealg{} (Line 10 in Algorithm \ref{alg:RF-algo}). Note that the inequality above holds, under the good event $\cE_{good}$, jointly for all reward functions $r$. Since $\bP_{\cM}(\cE_{good}) \geq 1-\delta$, we have just proved that \rfealg{} is $(\epsilon, \delta)$-PAC for reward-free exploration.
\end{proof}

\subsection{Upper bound on the number of phases}
\begin{lemma}\label{lem:new-PCE-final-phase}
Define the index of the final phase of PCE, $\kappa_f := \inf \big\{k\in \bN_{+}: \sqrt{H\beta^{RF}(t_{k},\delta/3)2^{4-k}} \leq \epsilon \big\}$. Further let $\tau$ denote the number of episodes played by the algorithm. Then under the good event, it holds that $\kappa_f < \infty$ and
\begin{align*}
     2^{\kappa_f} \leq \frac{32H\beta^{RF}(\tau,\delta/3)}{\epsilon^2}.
\end{align*}
\end{lemma}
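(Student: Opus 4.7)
\medskip

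The plan is to exploit the minimality of $\kappa_f$ combined with monotonicity of $\beta^{RF}$. First, I would rewrite the stopping criterion in the equivalent form
\[
    \sqrt{H\beta^{RF}(t_{k},\delta/3)2^{4-k}} \leq \epsilon
    \quad\Longleftrightarrow\quad
    2^{k} \geq \frac{16 H\beta^{RF}(t_k,\delta/3)}{\epsilon^2}.
\]
By definition of $\kappa_f$, if $\kappa_f \geq 2$ then the condition must fail at phase $\kappa_f - 1$, so
\[
    2^{\kappa_f - 1} < \frac{16 H \beta^{RF}(t_{\kappa_f -1},\delta/3)}{\epsilon^2}
    \leq \frac{16 H \beta^{RF}(\tau,\delta/3)}{\epsilon^2},
\]
where the last inequality uses $t_{\kappa_f-1} \leq \tau$ and the monotonicity of $t \mapsto \beta^{RF}(t,\cdot)$. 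Multiplying by $2$ yields the desired bound. The corner case $\kappa_f = 1$ is handled directly: since $\beta^{RF}(\tau,\delta/3) \geq 4 H^2 \log(3/\delta)$ and $H\geq 1, \epsilon \leq 1$ (otherwise the problem is trivial), the RHS $32 H \beta^{RF}(\tau,\delta/3)/\epsilon^2$ is easily seen to be at least $2 = 2^{\kappa_f}$.

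For finiteness of $\kappa_f$, the idea is to show that $t_k$ grows at most geometrically in $k$, so that $\beta^{RF}(t_k, \delta/3)$ grows only linearly in $k$ while $2^k$ grows exponentially. Under $\cE_{cov}$, the $j$-th call to \textsc{CovGame} uses at most $64 m_j \varphi^\star(c^{j}) + \widetilde{O}(m_j \varphi^\star(\mathds{1}_{\widehat{\cX}}) SAH^2 (\log(6(j+1)^2/\delta)+S))$ episodes. Under $\cE_{vis}$ we have $\overline{W}_h(s) \leq 36 \max_\pi p_h^\pi(s)$ for all $(h,s)\in\widehat{\cX}$, and since $\max_\pi p_h^\pi(s,a) = \max_\pi p_h^\pi(s)$ (modifying the maximizing policy to play $a$ at $(h,s)$ does not change $p_h^\pi(s)$), inequality~\eqref{eq:cov-bounds} yields
\[
    \varphi^\star(c^j) \leq \sum_{(h,s,a)\in\widehat{\cX}} \frac{2^j \overline{W}_h(s)}{\max_\pi p_h^\pi(s,a)} \leq 36\cdot 2^j \cdot SAH.
\]
Similarly $\varphi^\star(\mathds{1}_{\widehat{\cX}}) = O(S^2 A H^3/\epsilon)$ using that $(h,s)\in\widehat{\cX}$ implies $\max_\pi p_h^\pi(s) \geq \epsilon/(32 SH^2)$. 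Summing the per-phase bounds over $j \leq k$ gives $t_k \leq C_1(\cM,\epsilon,\delta) \cdot 2^k + C_2(\cM,\epsilon,\delta,k)$, where $C_2$ only contains polynomial-in-$k$ factors.

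Plugging this into $\beta^{RF}(t,\delta/3) = 4 H^2 \log(3/\delta) + 24 S H^3 \log(A(1+t))$ gives $\beta^{RF}(t_k,\delta/3) = O(k) + O(1)$, a linear function of $k$. The stopping condition $2^k \geq 16 H \beta^{RF}(t_k, \delta/3)/\epsilon^2$ is therefore satisfied for all $k$ larger than some finite threshold, implying $\kappa_f < \infty$. The only slightly delicate step is ensuring that the bound on $\varphi^\star(c^j)$ indeed scales linearly in $2^j$ (and not worse) under the good event, but this follows directly from the explicit form of $c^j$ and the upper bound $\overline{W}_h(s) \leq 36 \max_\pi p_h^\pi(s)$; the rest is routine bookkeeping.
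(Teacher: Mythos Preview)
Your proposal is correct and follows essentially the same approach as the paper: bound $t_k$ via the good-event guarantee on \textsc{CovGame} to show $\beta^{RF}(t_k,\delta/3)=o(2^k)$, hence $\kappa_f<\infty$; then use that $\kappa_f-1$ fails the stopping condition together with $t_{\kappa_f-1}\leq\tau$ and monotonicity of $\beta^{RF}$. The only cosmetic differences are that the paper bounds $\varphi^\star(c^j)\leq 2^j\varphi^\star(\mathds{1}_{\widehat{\cX}})$ via monotonicity (using $c^j_h(s,a)\leq 2^j\mathds{1}_{\widehat{\cX}}(h,s,a)$) rather than your explicit sum bound from~\eqref{eq:cov-bounds}, and the paper simply asserts the $2^{\kappa_f}$ bound as ``straightforward'' without your explicit treatment of the corner case $\kappa_f=1$; also note that your decomposition $t_k\leq C_1\cdot 2^k + C_2(k)$ should really read $t_k=O(k^2 2^k)$ once the $m_j\leq j$ factor is accounted for, but this does not affect the conclusion.
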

\begin{proof}
First we prove that $\kappa_f$ is finite. Under the good event we have
\begin{align*}
    t_k &= \sum_{j=0}^k d_j\nonumber\\    
    &\leq \sum_{j=0}^k \big[64 m_j \varphi^\star(c^{j}) + \widetilde{\cO}\big(m_j \varphi^\star(\mathds{1}_{\widehat{\cX}}) SAH^2 (\log(6(j+1)^2/\delta) + S) \big) \big],
\end{align*}
where we recall that $m_j = \log_2\big(\frac{\max_{s,a,h} c_h^{j}(s,a)}{\min_{s,a,h} c_h^{j}(s,a) \vee 1}\big) \vee 1$. Now using the fact that $c_h^j(s,a) \leq 2^j \mathds{1}((h,s,a)\in \widehat{\cX})$ for $j\geq 0$ we deduce that $m_0 = 1$ and $m_j \leq j\  \forall j \geq 1$ so that
\begin{align}\label{ineq:new-RF-final-phase-finite-1}
  t_k &\leq \sum_{j=0}^k \big[8 (j+1)2^j  \varphi^\star(\mathds{1}_{\widehat{\cX}}) + \widetilde{\cO}\big((j+1) \varphi^\star(\mathds{1}_{\widehat{\cX}}) SAH^2 (\log(4(j+1)^2/\delta) + S) \big) \big] \nonumber\\
  &= \cO_{k\to\infty}\big( k^2 2^k \big).
\end{align}
Now recall that the threshold $\beta^{RF}$ was defined in Appendix \ref{sec:app_concentration} as
\begin{align}\label{ineq:new-RF-final-phase-finite-2}
    \beta^{RF}(t,\delta) = 4H^2\log(1/\delta) + 24SH^3\log(A(1+ t))
\end{align}
Combining (\ref{ineq:new-RF-final-phase-finite-1}) and (\ref{ineq:new-RF-final-phase-finite-2}) gives that 
\begin{align*}
   \beta^{RF}(t_{k},\delta/3) = o_{k\to\infty}\big(2^k\big).
\end{align*}
Therefore $\kappa_f = \inf \big\{k\in \bN_{+}: \sqrt{H\beta^{RF}(t_{k},\delta/3)2^{4-k}} \leq \epsilon \big\}$ is indeed finite. The proof of the second statement is straightforward by noting that $\kappa_f-1$ does not satisfy the stopping condition (Line 12 in Algorithm \ref{alg:RF-algo-final}) and using the (crude) upper bound $t_{\kappa_f-1} \leq \tau$.
\end{proof}

\subsection{Upper bound on the phase length}

\begin{lemma}\label{lem:new-RFPCRL-phase-length}
Let $k\geq 1$ be such that \rfealg{} did not stop before phase $k$. Under the good event, the number of episodes played by \rfealg{} during phase $k$ satisfies
\begin{align*}
d_k &\leq  c_1 k H\beta^{RF}(\tau,\delta/3) \varphi^\star\bigg(\bigg[ \frac{\sup_{\pi} p_h^{\pi}(s) \ind\left(\sup_{\pi} p_h^{\pi}(s) \geq \frac{\varepsilon}{32SH^2}\right)}{\epsilon^2} \bigg]_{h,s,a}\bigg) \\ & \quad + \quad \widetilde{\cO}\bigg(k \frac{S^3A^2H^5 (\log(6(k+1)^2/\delta) + S)}{\epsilon} \bigg),
\end{align*}
where $c_1 = 73728$. Furthermore, the duration of the initial phase is upper bounded as 
\begin{align*}
    d_0 \leq \widetilde{\cO}\bigg(\frac{S^3A^2H^5 (\log(6/\delta) + S)}{\epsilon} \bigg).
\end{align*}


\end{lemma}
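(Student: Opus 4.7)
The plan is to instantiate the sample complexity bound of \covalg{} (Corollary~\ref{cor:cover-instance-improved}) for the target function $c^k$ used in phase $k$, and then separately bound the two quantities $\varphi^\star(c^k)$ and $\varphi^\star(\mathds{1}_{\widehat{\cX}})$ appearing therein. Under $\cE_{cov}$,
\[
d_k \leq 64\, m_k\, \varphi^\star(c^k) + \widetilde{\cO}\bigl(m_k\, \varphi^\star(\mathds{1}_{\widehat{\cX}})\, SAH^2 (\log(6(k+1)^2/\delta) + S)\bigr),
\]
where $m_k = \lceil\log_2(c^k_{\max}/c^k_{\min})\rceil\vee 1$. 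A direct computation from $c^k_h(s,a) = 2^k \overline{W}_h(s)\mathds{1}_{\widehat{\cX}}(h,s,a)$, combined with $\overline{W}_h(s) \leq 1$ (which we may assume by clipping the upper confidence bound at one), gives $c^k_{\max}\leq 2^k$, hence $m_k \leq k$ for $k \geq 1$, while $m_0 = 1$ since $c^0$ is an indicator.

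For the leading term, the positive homogeneity and monotonicity of $\varphi^\star$ (Lemma~\ref{lem:flow-minmax}) yield
\[
\varphi^\star(c^k) = 2^k\, \varphi^\star\bigl([\overline{W}_h(s)\mathds{1}_{\widehat{\cX}}(h,s,a)]_{h,s,a}\bigr) \leq 36\cdot 2^k\, \varphi^\star\Bigl(\bigl[\sup_\pi p_h^\pi(s)\,\mathds{1}(\sup_\pi p_h^\pi(s)\geq \tfrac{\epsilon}{32SH^2})\bigr]_{h,s,a}\Bigr),
\]
where I use both the upper bound $\overline{W}_h(s)\leq 36\sup_\pi p_h^\pi(s)$ on $\widehat{\cX}$ and the inclusion $\widehat{\cX}\subseteq\{(h,s,a):\sup_\pi p_h^\pi(s)\geq \epsilon/(32SH^2)\}$, both provided by $\cE_{vis}$. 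Since the algorithm did not stop at the end of phase $k-1$, the index $k$ satisfies $k\leq \kappa_f$, and Lemma~\ref{lem:new-PCE-final-phase} gives $2^k \leq 32H\beta^{RF}(\tau,\delta/3)/\epsilon^2$. Combining these two estimates with $m_k\leq k$ yields the first claimed term with constant $c_1 = 64\cdot 36\cdot 32 = 73728$.

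For the additive term and for $d_0$, I will bound $\varphi^\star(\mathds{1}_{\widehat{\cX}})$ via the upper bound \ding{184} of inequality~(\ref{eq:cov-bounds}):
\[
\varphi^\star(\mathds{1}_{\widehat{\cX}}) \leq \sum_{(h,s,a)\in\widehat{\cX}}\frac{1}{\max_\pi p_h^\pi(s,a)}.
\]
For any $(h,s,a)\in\widehat{\cX}$, the stochastic policy that plays action $a$ deterministically at $(h,s)$ witnesses $\max_\pi p_h^\pi(s,a) = \max_\pi p_h^\pi(s)$, which under $\cE_{vis}$ is at least $\epsilon/(32SH^2)$. Since $|\widehat{\cX}|\leq SAH$, this gives $\varphi^\star(\mathds{1}_{\widehat{\cX}}) \leq 32 S^2 A H^3/\epsilon$. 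Multiplying by $m_k\leq k$ and $SAH^2(\log(6(k+1)^2/\delta)+S)$ matches the claimed $\widetilde{\cO}$ additive term, and the $d_0$ bound follows directly from $c^0=\mathds{1}_{\widehat{\cX}}$ together with $m_0=1$. The only non-routine step is the constant bookkeeping to recover $c_1=73728$ exactly; all other ingredients are either purely mechanical or were already established in $\cE_{vis}$, $\cE_{cov}$, Lemma~\ref{lem:flow-minmax}, Corollary~\ref{cor:cover-instance-improved}, and Lemma~\ref{lem:new-PCE-final-phase}.
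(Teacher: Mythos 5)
Your proposal is correct and follows essentially the same route as the paper's proof: instantiate the \covalg{} guarantee from $\cE_{cov}$, bound $m_k\leq k$, use $\overline{W}_h(s)\leq 36\sup_\pi p_h^\pi(s)$ together with $2^k\leq 32H\beta^{RF}(\tau,\delta/3)/\epsilon^2$ from Lemma~\ref{lem:new-PCE-final-phase} and homogeneity of $\varphi^\star$ to get the leading term with $c_1 = 64\cdot 36\cdot 32$, and control $\varphi^\star(\mathds{1}_{\widehat{\cX}})\leq 32S^2AH^3/\epsilon$ via the crude upper bound of Lemma~\ref{lem:bound-flow-simple} and the inclusion $\widehat{\cX}\subseteq\{(h,s):\sup_\pi p_h^\pi(s)\geq \epsilon/(32SH^2)\}$. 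The only cosmetic differences are which auxiliary lemma you cite for monotonicity/homogeneity of $\varphi^\star$ and your explicit justification that $\sup_\pi p_h^\pi(s,a)=\sup_\pi p_h^\pi(s)$, both of which are fine.
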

\begin{proof}
Using the good event and the definition of $c^k$ we write
\begin{align}\label{ineq:new-RFPCRL-phase-length-1}
&d_k \leq 64 m_k \varphi^\star\bigg(\bigg[2^{k} \overline{W}_h(s) \indi{(h,s,a)\in \widehat{\cX}}\bigg]_{h,s,a}\bigg) + \widetilde{\cO}\big(m_k \varphi^\star(\mathds{1}_{\widehat{\cX}}) SAH^2 (\log(6(k+1)^2/\delta) + S) \big)\nonumber\\
&\stackrel{(a)}{\leq} 64k\varphi^\star\!\bigg(\bigg[2^{k} \overline{W}_h(s) \indi{\!(h,s,a)\in \widehat{\cX}} \!\!\bigg]_{h,s,a}\bigg)\! + \widetilde{\cO}\big(k \varphi^\star(\mathds{1}_{\widehat{\cX}}) SAH^2 (\log(6(k+1)^2/\delta) + S) \big),
\end{align}
where (a) uses that $m_k = \log_2\big(\frac{\max_{s,a,h} c_h^{k}(s,a)}{\min_{s,a,h} c_h^{k}(s,a) \vee 1}\big) \vee 1 \leq k$. Now by definition of the good event we have that for 
any triplet $(h,s,a)\in \widehat{\cX},\ \overline{W}_h(s) \leq  36 \sup_{\pi} p_h^{\pi}(s)$. Therefore
\begin{align}\label{ineq:new-RFPCRL-phase-length-2}
    &\varphi^\star\bigg(\bigg[2^{k} \overline{W}_h(s) \indi{(h,s,a)\in \widehat{\cX}} \bigg]_{h,s,a}\bigg) \stackrel{(a)}{\leq} \varphi^\star\bigg(\bigg[36 \times 2^{k} \sup_{\pi} p_h^{\pi}(s) \indi{(h,s,a)\in \widehat{\cX}} \bigg]_{h,s,a}\bigg) \nonumber\\
    &\quad\quad\stackrel{(b)}{\leq} \varphi^\star\bigg(\bigg[ \frac{1152 H\beta^{RF}(\tau,\delta/3)\sup_{\pi} p_h^{\pi}(s) \indi{(h,s,a)\in \widehat{\cX}}}{\epsilon^2} \bigg]_{h,s,a}\bigg) \nonumber\\
    &\quad\quad\stackrel{(c)}{\leq} 1152 H\beta^{RF}(\tau,\delta/3)\varphi^\star\bigg(\bigg[ \frac{\sup_{\pi} p_h^{\pi}(s)\indi{\sup_{\pi} p_h^{\pi}(s) \geq \frac{\varepsilon}{32SH^2}}}{\epsilon^2} \bigg]_{h,s,a}\bigg),
\end{align}
where (a) uses that $\varphi^\star(c) \leq \varphi^\star(c')$ if $ \forall (h,s,a)\ c_h(s,a) \leq  c_h'(s,a)$, (b) uses Lemma \ref{lem:new-PCE-final-phase} and the fact that $k\leq \kappa_f$ since \rfealg{} did not stop before phase $k$ and (c) uses Lemma \ref{lem:flow-linear} and the fact that $\widehat{\cX} \subseteq \big\{(h,s,a) : \sup_{\pi} p_h^{\pi}(s) \geq \frac{\epsilon}{32SH^2}\big\}$ on the good event. Using again this last property yields
\begin{align}\label{ineq:new-RFPCRL-phase-length-3}
    \varphi^\star(\mathds{1}_{\widehat{\cX}}) &{\leq} \sum_{h,s,a} \frac{\indi{(h,s,a)\in \hat
    \cX}}{\sup_{\pi} p_h^{\pi}(s,a)}\nonumber\\
    &= \sum_{(h,s,a)\in \widehat{\cX}} \frac{1}{\sup_{\pi} p_h^{\pi}(s)}
    \leq  \frac{32H^3S^2A}{\epsilon},
\end{align}
where the first inequality uses Lemma \ref{lem:bound-flow-simple}.
Combining (\ref{ineq:new-RFPCRL-phase-length-1}), (\ref{ineq:new-RFPCRL-phase-length-2}) and (\ref{ineq:new-RFPCRL-phase-length-3})  
proves the statement for $k\geq 1$. Now it remains to upper bound the duration of the burn-in phase. To that end, we write that by definition of the good event
\begin{align*}
    d_0 &\leq 64m_0 \varphi^\star( \mathds{1}_{\widehat{\cX}}) + \widetilde{\cO}\big(\varphi^\star(\mathds{1}_{\widehat{\cX}}) SAH^2 (\log(6/\delta) + S) \big),
\end{align*}
where $m_0 = \log_2\big(\frac{\max_{s,a,h} c_h^{0}(s,a)}{\min_{s,a,h} c_h^{0}(s,a) \vee 1}\big) \vee 1 = 1 $. Therefore 
\begin{align*}
    d_0 &\leq \widetilde{\cO}\big(\varphi^\star(\mathds{1}_{\widehat{\cX}}) SAH^2 (\log(6/\delta) + S) \big)\\
        &\leq \widetilde{\cO}\bigg(\frac{S^3A^2H^5 (\log(6/\delta) + S)}{\epsilon} \bigg),
\end{align*}
where the last inequality uses (\ref{ineq:new-RFPCRL-phase-length-3}).
\end{proof}

\subsection{Total sample complexity}

\begin{theorem}
With probability at least $1-\delta$, the total sample complexity of \rfealg{} satisfies {\small
\begin{align*}
    \tau &\leq \widetilde{\cO}\bigg(\!\big(H^3\log(1/\delta) + SH^4\big) \varphi^\star\!\bigg(\bigg[ \frac{\sup_{\pi} p_h^{\pi}(s) \ind\left(\sup_{\pi} p_h^{\pi}(s) \geq \frac{\varepsilon}{32SH^2}\right)}{\epsilon^2} \bigg]_{h,s,a}\bigg) + \frac{S^3A^2H^5 (\log(1/\delta) + S)}{\epsilon}  \!\bigg),
\end{align*}}
where $\widetilde{\cO}$ hides poly-logarithmic factors in $S,A,H, \epsilon$ and $\log(1/\delta)$.
\end{theorem}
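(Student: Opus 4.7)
On the good event $\cE_{good}^{RF}$ (which holds with probability at least $1-\delta$), the algorithm terminates at the end of some phase $\kappa_f$ as defined in Lemma~\ref{lem:new-PCE-final-phase}, and the total sample complexity decomposes as $\tau = \sum_{k=0}^{\kappa_f} d_k$. The plan is to (i) plug in the per-phase bounds from Lemma~\ref{lem:new-RFPCRL-phase-length}, (ii) sum over the $\kappa_f+1$ phases, and (iii) resolve the resulting implicit inequality in $\tau$, since the phase length bounds depend on $\beta^{RF}(\tau,\delta/3)$ and on $\kappa_f$, which in turn depends on $\tau$.

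First, I would apply Lemma~\ref{lem:new-RFPCRL-phase-length} to every phase $k\in\{1,\dots,\kappa_f\}$ and Lemma~\ref{lem:new-RFPCRL-phase-length} (burn-in part) to $k=0$, obtaining
\[
\tau \;\leq\; c_1 H\beta^{RF}(\tau,\delta/3)\, \varphi^\star(c_\star) \sum_{k=1}^{\kappa_f} k \;+\; \widetilde{\cO}\!\left(\frac{S^3A^2H^5(\log(1/\delta)+S)}{\epsilon} \sum_{k=0}^{\kappa_f}(k+1)\right),
\]
where I used $c_\star := [\sup_\pi p_h^\pi(s)\ind(\sup_\pi p_h^\pi(s) \geq \epsilon/(32SH^2))/\epsilon^2]_{h,s,a}$. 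Both sums over $k$ are $O(\kappa_f^2)$, and Lemma~\ref{lem:new-PCE-final-phase} gives $2^{\kappa_f} \leq 32 H\beta^{RF}(\tau,\delta/3)/\epsilon^2$, so $\kappa_f = O(\log(H\beta^{RF}(\tau,\delta/3)/\epsilon^2))$ and $\kappa_f^2$ is absorbed into the $\widetilde{\cO}$ (which already hides polylog factors in $\tau,\delta,S,A,H,1/\epsilon$).

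Next, I would expand $\beta^{RF}(\tau,\delta/3) \leq 4H^2\log(3/\delta) + 24SH^3\log(A(1+\tau))$, so that $H\beta^{RF}(\tau,\delta/3) = \widetilde{O}(H^3\log(1/\delta) + SH^4)$ up to a $\log(\tau)$ factor that is swallowed by the $\widetilde{\cO}$ notation. This yields
\[
\tau \;\leq\; \widetilde{\cO}\!\Big(\big(H^3\log(1/\delta) + SH^4\big)\varphi^\star(c_\star)\Big) \;+\; \widetilde{\cO}\!\left(\frac{S^3A^2H^5(\log(1/\delta)+S)}{\epsilon}\right),
\]
but with a $\log(\tau)$ factor hidden on the right-hand side.

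The main obstacle, and the last step, is resolving this implicit dependence: the right-hand side depends on $\tau$ through $\log(\tau)$ (from $\beta^{RF}$) and through $\kappa_f$. I would handle this by the standard fixed-point argument: letting $R(\tau)$ denote the right-hand side above, one has $R(\tau) = A + B\,\mathrm{polylog}(\tau)$ for $\tau$-independent quantities $A,B$. Applying $\tau \leq R(\tau)$ once gives a crude bound $\tau \leq 2A + 2B\,\mathrm{polylog}(\tau)$; reinjecting this bound into $\mathrm{polylog}(\tau)$ yields $\mathrm{polylog}(\tau) = O(\mathrm{polylog}(A+B))$, which is itself polylog in $S,A,H,1/\epsilon,\log(1/\delta)$ (and in $\varphi^\star(\mathds{1})$, which is at most $SAH/\epsilon$). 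Absorbing these factors into $\widetilde{\cO}$ gives exactly the stated bound. The correctness claim from the previous subsection together with $\bP(\cE_{good}^{RF})\geq 1-\delta$ concludes the proof.
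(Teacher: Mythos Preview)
Your approach is essentially the same as the paper's, but there is one missing piece in your decomposition: you write $\tau = \sum_{k=0}^{\kappa_f} d_k$, whereas the algorithm first runs \visitalg{} for every pair $(h,s)$ (line~2 of Algorithm~\ref{alg:RF-algo-final}) \emph{before} the burn-in phase. Hence the correct decomposition is
\[
\tau \;=\; T_{vis} \;+\; \sum_{k=0}^{\kappa_f} d_k,
\]
where $T_{vis}$ is the deterministic number of episodes consumed by \visitalg. The paper handles this extra term via Theorem~\ref{thm:estimate-visitations} (applied with $\epsilon_0 = \epsilon/(4SH^2)$), which gives $T_{vis} = \widetilde{\cO}\big(S^3AH^4(\log(SAH/\delta)+S)/\epsilon\big)$, and this is absorbed into the second $\widetilde{\cO}(S^3A^2H^5(\log(1/\delta)+S)/\epsilon)$ term of the final bound. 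With this correction, your argument matches the paper's proof: sum the per-phase bounds of Lemma~\ref{lem:new-RFPCRL-phase-length}, control $\sum_{k\leq\kappa_f} k$ by $\kappa_f^2$, bound $\kappa_f$ via Lemma~\ref{lem:new-PCE-final-phase}, expand $\beta^{RF}$, and solve the resulting implicit inequality in $\tau$.

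A minor side remark: your parenthetical about $\varphi^\star(\mathds{1})$ being ``at most $SAH/\epsilon$'' is not needed here and is not quite correct in general; what actually appears (already inside the $\widetilde{\cO}$ of Lemma~\ref{lem:new-RFPCRL-phase-length}) is $\varphi^\star(\mathds{1}_{\widehat{\cX}})$, which is bounded by $32S^2AH^3/\epsilon$ thanks to the reachability threshold defining $\widehat{\cX}$.
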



\begin{proof}
 Denoting by $T_{vis}$ the number of episodes used by the \visitalg sub-routine in line 2 of the algorithm, we write
\begin{align}\label{ineq:new-RF-final-complexity-1}
    \tau &= T_{vis} + \sum_{k=0}^{\kappa_f} d_k \nonumber\\
    &\leq T_{vis} + \widetilde{\cO}\bigg(\frac{S^3A^2H^5 (\log(6/\delta) + S)}{\epsilon} \bigg) \nonumber \\
    & \quad + \sum_{k=1}^{\kappa_f} \bigg[  c_1 k H\beta^{RF}(\tau,\delta/3) \varphi^\star\bigg(\bigg[ \frac{\sup_{\pi} p_h^{\pi}(s,a)\ind\left(\sup_{\pi} p_h^{\pi}(s) \geq \frac{\varepsilon}{32SH^2}\right)}{\epsilon^2} \bigg]_{h,s,a}\bigg) \nonumber \\
    &\quad + \widetilde{\cO}\bigg(k \frac{S^3A^2H^5 (\log(6(k+1)^2/\delta) + S)}{\epsilon} \bigg) \bigg]\nonumber\\
    &\leq T_{vis} + c_1 \kappa_f^2 H\beta^{RF}(\tau,\delta/3) \varphi^\star\bigg(\bigg[ \frac{\sup_{\pi} p_h^{\pi}(s) \ind\left(\sup_{\pi} p_h^{\pi}(s) \geq \frac{\varepsilon}{32SH^2}\right)}{\epsilon^2} \bigg]_{h,s,a}\bigg) \nonumber\\ & \quad \quad+ \widetilde{\cO}\bigg(\kappa_f^2 \frac{S^3A^2H^5 (\log(6(\kappa_f+1)^2/\delta) + S)}{\epsilon} \bigg),
\end{align}
where we used Lemma \ref{lem:new-RFPCRL-phase-length} to upper bound $(d_k)_{k\geq 0}$. From Theorem~\ref{thm:estimate-visitations}, we know that $T_{vis}$ is deterministic and satisfies 
\begin{equation}\label{ineq:new-RF-final-complexity-2}
    T_{vis} = \widetilde{\cO}\bigg(\frac{S^3AH^4\left(\log\left(\frac{SAH}{\delta}\right) + S\right)}{\epsilon}\bigg)
    =\widetilde{\cO}\bigg(\kappa_f^2 \frac{S^3A^2H^5 (\log(6(\kappa_f+1)^2/\delta) + S)}{\epsilon} \bigg).
\end{equation}
Combining inequalities (\ref{ineq:new-RF-final-complexity-1}) and (\ref{ineq:new-RF-final-complexity-2})  with the definition of the threshold $\beta^{RF}(t,\delta) = 4H^2\log(1/\delta) + 24SH^3\log(A(1+t))$ we get
\begin{align}\label{ineq:new-RF-final-complexity-3}
 \tau &\leq c_1 \kappa_f^2 H\beta^{RF}(\tau,\delta/3) \varphi^\star\bigg(\bigg[ \frac{\sup_{\pi} p_h^{\pi}(s) \ind\left(\sup_{\pi} p_h^{\pi}(s) \geq \frac{\varepsilon}{32SH^2}\right)}{\epsilon^2} \bigg]_{h,s,a}\bigg) \nonumber\\ 
 &\quad  + \widetilde{\cO}\bigg(\kappa_f^2 \frac{S^3A^2H^5 (\log(6(\kappa_f+1)^2/\delta) + S)}{\epsilon} \bigg)\nonumber\\
 &\leq c_2 \kappa_f^2 \bigg(H^3\log(1/\delta) + SH^4 \log(A(1+\tau))\bigg) \varphi^\star\bigg(\bigg[ \frac{\sup_{\pi} p_h^{\pi}(s) \ind\left(\sup_{\pi} p_h^{\pi}(s) \geq \frac{\varepsilon}{32SH^2}\right)}{\epsilon^2} \bigg]_{h,s,a}\bigg) \nonumber\\
 &\quad + \widetilde{\cO}\bigg(\kappa_f^2 \frac{S^3A^2H^5 (\log(6(\kappa_f+1)^2/\delta) + S)}{\epsilon} \bigg),
\end{align} 
where $c_2 = 24c_1$ On the other hand, thanks to Lemma \ref{lem:new-PCE-final-phase} and the definition of the threshold $\beta^{RF}$ we have that 
\begin{align}\label{ineq:new-RF-final-complexity-4}
\kappa_f &\leq \log_2\bigg(\frac{128H^3\log(1/\delta) + 768SH^4 \log(A(1+\tau))}{\epsilon^2}\bigg)
\end{align}
Combining (\ref{ineq:new-RF-final-complexity-3}) with (\ref{ineq:new-RF-final-complexity-4}) and solving for $\tau$ we get that {\small
\begin{align*}
    \tau &\leq \widetilde{\cO}\bigg(\!\big(H^3\log(1/\delta) + SH^4\big) \varphi^\star\!\bigg(\bigg[ \frac{\sup_{\pi} p_h^{\pi}(s) \ind\left(\sup_{\pi} p_h^{\pi}(s) \geq \frac{\varepsilon}{32SH^2}\right)}{\epsilon^2} \bigg]_{h,s,a}\bigg) + \frac{S^3A^2H^5 (\log(1/\delta) + S)}{\epsilon}  \!\bigg),
\end{align*}}
where $\widetilde{\cO}$ hides poly-logarithmic factors in $S,A,H, \epsilon$ and $\log(1/\delta)$.
\end{proof}

\subsection{Benign instances for PCE}\label{sec:app_benign}

In this section we propose some MDP instances in which the quantity
\begin{align}
    \cC(\textrm{PCE}, \epsilon) := \varphi^\star([\sup_{\pi} p_h^\pi(s,a)]_{h,s,a})H^3/\epsilon^2,
\end{align}
which is (an upper bound on) the leading term in the small $(\delta,\varepsilon)$ regime in our sample complexity bound for PCE, can be smaller than the minimax rate $SAH^3/\epsilon^2$.
\subsubsection{Disguised contextual bandits}
\begin{lemma}
Suppose that $\cM$ is a "disguised" contextual bandit, i.e.,
\begin{align*}
\forall (h,s,a,s'),\ p_h(s'|s,a) = p_h(s'|s).    
\end{align*}
Then $\cC(\textrm{PCE},\epsilon) = AH^3/\epsilon^2$.
\end{lemma}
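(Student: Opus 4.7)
The plan is to reduce the computation of $\varphi^\star([\sup_\pi p_h^\pi(s,a)]_{h,s,a})$ to a single policy-choice problem by exploiting the fact that the state distribution is policy-independent.

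First, I would note that in a disguised contextual bandit, the marginal $q_h(s) := \bP^\pi(s_h = s)$ does not depend on $\pi$: one proves this by induction on $h$, using that $\sum_a p_{h-1}(s \mid s',a) \pi_{h-1}(a \mid s') = p_{h-1}(s \mid s')$ for any stochastic policy $\pi$. Consequently, for every $\pi \in \Pi^S$ and every $(h,s,a)$,
\[
 p_h^\pi(s,a) = q_h(s)\,\pi_h(a \mid s),
\]
which immediately gives $\sup_{\pi} p_h^\pi(s,a) = q_h(s)$ (choose $\pi$ with $\pi_h(a\mid s) = 1$), and identifies $\Omega$ with the set of distributions of the form $q_h(s)\pi_h(a\mid s)$ for $\pi \in \Pi^S$.

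Next, I would plug these identities into the definition of $\varphi^\star$. Writing $\cX = \{(h,s,a) : q_h(s) > 0\}$, we obtain
\[
 \varphi^\star([\sup_\pi p_h^\pi(s,a)]_{h,s,a})
 = \inf_{\pi \in \Pi^S} \max_{(h,s,a)\in\cX} \frac{q_h(s)}{q_h(s)\,\pi_h(a\mid s)}
 = \inf_{\pi \in \Pi^S} \max_{(h,s,a)\in\cX} \frac{1}{\pi_h(a\mid s)}.
\]
The uniform policy $\pi_h(a\mid s) = 1/A$ attains value $A$, giving the upper bound. For the matching lower bound, for any $\pi$ and any $(h,s)$ with $q_h(s)>0$, by a pigeonhole argument on $\sum_a \pi_h(a\mid s) = 1$ there exists $a$ with $\pi_h(a\mid s) \le 1/A$, so the inner max is at least $A$. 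Hence $\varphi^\star = A$ and therefore $\cC(\textrm{PCE},\epsilon) = AH^3/\epsilon^2$, as claimed.

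The only delicate point is the reachability edge case: if $q_h(s) = 0$ for some $(h,s)$, those triples are simply excluded from $\cX$ and contribute nothing, so the argument above is unaffected. The proof is otherwise routine once the policy-independence of $q_h$ is observed.
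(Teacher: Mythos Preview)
Your proof is correct and follows essentially the same approach as the paper: both arguments observe that the state marginal $p_h^\pi(s)$ is policy-independent, factor $p_h^\pi(s,a)=p_h(s)\pi_h(a\mid s)$, cancel the state marginal inside the min-max defining $\varphi^\star$, and conclude by showing $\inf_{\pi}\max_{h,s,a}1/\pi_h(a\mid s)=A$ via the uniform policy for the upper bound and a pigeonhole/min argument for the lower bound. Your explicit mention of the induction for policy-independence and of the $q_h(s)=0$ edge case is a bit more detailed than the paper's write-up, but the substance is identical.
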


\begin{proof}
In this case for any $(h,s)$ and any policy $\pi$, $p_h^\pi(s) = p_h(s)$ is independent of the policy. Thanks to Lemma \ref{lem:flow-minmax} we have
\begin{align*}
  \varphi^\star([\sup_{\pi} p_h^\pi(s,a)]_{h,s,a}) &= \inf_{\pi^{exp}\in \Pi^S} \max_{s,a,h} \frac{\sup_{\pi} p_h^\pi(s,a)}{p^{\pi^{exp}}_h(s,a)}\\
  &= \inf_{\pi^{exp}\in \Pi^S} \max_{s,a,h} \frac{ p_h(s) \sup_{\pi} \pi_h(a|s)}{p_h(s)\pi_h^{exp}(a|s)}\\ 
  &= \inf_{\pi^{exp}\in \Pi^S} \max_{s,h} \frac{1}{\min_{a} \pi_h^{exp}(a|s)}\\
  &= A,
\end{align*}
where the last equality is because $(\min_{a} \pi_h^{exp}(a|s))^{-1} \geq A$ and the infimum over $\Pi^S$ is achieved by the uniform policy.
\end{proof}

\subsubsection{Ergodic MDPs}
Let $\alpha, \beta \in (0,1)$ such that $\alpha > \beta$. Further define the set of probability vectors such that 
\begin{align*}
    \cP_{\alpha,\beta} = \bigg\{q\in \bR_{+}^{S}: \sum_{i=1}^S q_i = 1,\ \max_i q_i \leq S^{\alpha-1}, \min_i q_i \geq \frac{1- S^{\beta-1}}{S-1} \bigg\}.
\end{align*}
Note that such set is never empty since the vector $(S^{\beta-1},\frac{1- S^{\beta-1}}{S-1},\ldots,\frac{1- S^{\beta-1}}{S-1})$ always satisfies the inequalities in its definition. We define the class of MDPs $\mathfrak{M}_{erg}$ such that their transition kernel satisfies
\begin{align*}
    \forall (h,s,a),\ p_h(.|s,a) \in \cP_{\alpha,\beta}.
\end{align*}

\begin{lemma}
Assume that $\cM \in \mathfrak{M}_{erg}$, then $\cC(\textrm{PCE},\epsilon) \leq S^\alpha AH^4/\epsilon^2$. 
\end{lemma}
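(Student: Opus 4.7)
The plan is to upper bound $\varphi^\star([\sup_\pi p_h^\pi(s,a)]_{h,s,a})$ by plugging a single convenient exploration policy into the inf-max representation of Lemma~\ref{lem:flow-minmax}. Specifically, let $\pi^{exp}$ be the uniform policy $\pi^{exp}_h(a\mid s) = 1/A$ for all $(h,s,a)$, and let $\rho^{exp} = p^{\pi^{exp}} \in \Omega$ be its state-action distribution. Then Lemma~\ref{lem:flow-minmax} yields
\[
\varphi^\star\!\left([\sup_\pi p_h^\pi(s,a)]_{h,s,a}\right)\ \leq\ \max_{h,s,a}\,\frac{\sup_\pi p_h^\pi(s,a)}{\rho^{exp}_h(s,a)},
\]
so it suffices to bound numerator and denominator under the ergodicity assumption $p_h(\cdot\mid s,a)\in\cP_{\alpha,\beta}$.

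For the numerator, for every $h\geq 2$ and every policy $\pi$,
\[
p_h^\pi(s,a)\ \leq\ p_h^\pi(s)\ =\ \sum_{s',a'} p_{h-1}^\pi(s',a')\, p_{h-1}(s\mid s',a')\ \leq\ \max_{s',a'} p_{h-1}(s\mid s',a')\ \leq\ S^{\alpha-1},
\]
using that each row of every $p_{h-1}(\cdot\mid s',a')$ lies in $\cP_{\alpha,\beta}$ and so has entries bounded by $S^{\alpha-1}$. For $h=1$, $\sup_\pi p_1^\pi(s_1,a) \leq 1$, and the target vanishes for $s\neq s_1$ so those triplets do not contribute to the max.

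For the denominator, for $h=1$ we have $\rho^{exp}_1(s_1,a)=1/A$. For $h\geq 2$, the same propagation combined with the lower bound $\min_i q_i \geq (1-S^{\beta-1})/(S-1)$ on elements of $\cP_{\alpha,\beta}$ gives
\[
\rho^{exp}_h(s)\ =\ \sum_{s',a'} \rho^{exp}_{h-1}(s',a')\, p_{h-1}(s\mid s',a')\ \geq\ \frac{1-S^{\beta-1}}{S-1},
\]
and therefore $\rho^{exp}_h(s,a) = \rho^{exp}_h(s)/A \geq (1-S^{\beta-1})/\bigl(A(S-1)\bigr)$.

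Combining these two bounds gives a ratio $\leq A$ at $h=1$ and $\leq S^{\alpha-1}A(S-1)/(1-S^{\beta-1})$ at $h\geq 2$. Since $\beta<1$, the factor $1/(1-S^{\beta-1})$ is bounded by an absolute constant (for $S\geq 2$, by $1/(1-2^{\beta-1})$), so both expressions are $O(S^\alpha A)$. Hence $\varphi^\star([\sup_\pi p_h^\pi(s,a)]_{h,s,a}) = O(S^\alpha A)$, and multiplying by $H^3/\epsilon^2$ and trivially inflating by $H$ yields $\cC(\textrm{PCE},\epsilon) \leq S^\alpha AH^4/\epsilon^2$. The only ``delicate'' step is the constant coming from $1-S^{\beta-1}$, which is genuinely controlled by the strict inequality $\beta<1$ in the definition of $\cP_{\alpha,\beta}$; there is no real technical obstacle once the uniform policy is chosen.
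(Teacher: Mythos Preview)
Your argument is correct up to the point where you control the ratio at $h\geq 2$: you obtain
\[
\frac{\sup_\pi p_h^\pi(s,a)}{\rho^{exp}_h(s,a)}\ \leq\ \frac{S^{\alpha-1}A(S-1)}{1-S^{\beta-1}}\ \leq\ \frac{S^{\alpha}A}{1-S^{\beta-1}}.
\]
The gap is in the next sentence: the factor $1/(1-S^{\beta-1})$ is \emph{not} an absolute constant. You bound it by $1/(1-2^{\beta-1})$, but this still depends on $\beta$ and blows up as $\beta\to 1^-$. Since the lemma asserts the explicit inequality $\cC(\textrm{PCE},\epsilon)\leq S^\alpha AH^4/\epsilon^2$ (no hidden constants), your bound $\varphi^\star\leq S^\alpha A/(1-S^{\beta-1})$ does not suffice, and ``trivially inflating by $H$'' does not help because there is no relation between $H$ and $1/(1-S^{\beta-1})$. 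In fact the regime $\beta\to 1$ is precisely the one highlighted in the paper's remark as the interesting limit in which ergodicity disappears.

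The paper avoids this $\beta$-dependence by a different choice of exploration distribution. Instead of the uniform policy, it uses Lemma~\ref{lem:bound-flow-refined} to decouple the layers and, for each fixed $h$, plugs in the mixture $\widetilde{\pi}=\tfrac{1}{S}\sum_{s'}\pi^{s'}$ where $\pi^{s'}\in\argmax_\pi p_h^\pi(s')$. This gives
\[
p_h^{\widetilde{\pi}}(s)\ \geq\ \frac{\sup_\pi p_h^\pi(s) + (1-S^{\beta-1})}{S},
\]
so the numerator $\sup_\pi p_h^\pi(s)$ also appears in the denominator and partially cancels. After using $\sup_\pi p_h^\pi(s)\leq S^{\alpha-1}$ one gets $\cC_h\leq S/(1+S^{1-\alpha}-S^{\beta-\alpha})\leq S^\alpha$, where the last step only uses $\beta<\alpha$ (hence $S^{\beta-\alpha}<1$) and is therefore uniform in $\beta$. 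Summing over $h$ yields $\varphi^\star\leq S^\alpha AH$. The key missing idea in your attempt is this state-targeting mixture: the uniform policy can only be bounded from below by the ergodicity floor $(1-S^{\beta-1})/(S-1)$, which degenerates as $\beta\to 1$, whereas the mixture guarantees at least $\sup_\pi p_h^\pi(s)/S$ regardless of $\beta$.
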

\begin{remark}
Note that the "ergodicity" of MDPs in $\mathfrak{M}_{erg}$ can be as small as one wishes: by taking the limit $\beta \to 1$, the constraint $\min_{s'} p_h(s'|s,a) \geq \frac{1- S^{\beta-1}}{S-1}$ becomes vacuous so the MDP can be non-ergodic. In that regime, $\alpha = 1$ and we recover the minimax sample complexity (up to an $H$ factor) $SAH^3/\epsilon^2$.
\end{remark}

\begin{proof}
First of all we note that
\begin{align}\label{ineq:benign-1}
 \forall \pi\in \Pi\ \forall s\in \cS,\   p^\pi_h(s) &= \sum_{s'\in \cS} p^\pi_{h-1}(s) p_{h}(s|s', \pi_{h-1}(s'))\nonumber \\
    &\leq  \sum_{s'\in \cS} p^\pi_{h-1}(s) S^{\alpha-1} = S^{\alpha-1}.
\end{align}
Similarly 
\begin{align}\label{ineq:benign-2}
   \forall \pi\in \Pi\ \forall s\in \cS,\   p^\pi_h(s) \geq  \frac{1- S^{\beta-1}}{S-1}.
\end{align}
Now using Lemma \ref{lem:bound-flow-refined} we have that
\begin{align}\label{ineq:benign-3}
    \varphi^\star([\sup_{\pi} p_h^\pi(s,a)]_{h,s,a}) &\leq \sum_{h=1}^H \inf_{\pi^{exp}\in \Pi^S} \max_{s} \frac{1}{p^{\pi^{exp}}_h(s)} \sum_{a}\sup_{\pi} p_h^\pi(s,a)\nonumber\\
    &= \sum_{h=1}^H \inf_{\pi^{exp}\in \Pi^S} \max_{s} \frac{A\sup_{\pi} p_h^\pi(s)}{p^{\pi^{exp}}_h(s)} =  A \sum_{h=1}^H \underbrace{\inf_{\pi^{exp}\in \Pi^S} \max_{s} \frac{\sup_{\pi} p_h^\pi(s)}{p^{\pi^{exp}}_h(s)}}_{:= \cC_h},
\end{align}
Now fix $h\in [H]$ and denote by $\pi^{s}$ any policy in $\argmax_{\pi\in \Pi} p_h^\pi(s)$. Further define the stochastic policy $\widetilde{\pi}$ such that
\begin{align*}
    p^{\widetilde{\pi}} = \frac{\sum_{s'\in \cS} p^{\pi^s} }{S}. 
\end{align*}
Using (\ref{ineq:benign-2}) we have that
\begin{align}\label{ineq:benign-4}
  \forall s\in \cS,\   p_h^{\widetilde{\pi}}(s) &= \frac{\sum_{s'\in \cS} p_h^{\pi^{s'}}(s)}{S}\nonumber\\
  &\geq \frac{\sup_{\pi\in \Pi} p_h^\pi(s) + (S-1)\frac{1- S^{\beta-1}}{S-1} }{S}\nonumber\\
  &= \frac{\sup_{\pi\in \Pi} p_h^\pi(s) + 1- S^{\beta-1}}{S}.
\end{align}
Therefore
\begin{align*}
   \cC_h &= \inf_{\pi^{exp}\in \Pi^S} \max_{s} \frac{\sup_{\pi} p_h^\pi(s)}{p^{\pi^{exp}}_h(s)} \\ 
   &\leq  \max_{s} \frac{\sup_{\pi} p_h^\pi(s)}{p_h^{\widetilde{\pi}}(s)}\\
   &\stackrel{(a)}{\leq} \max_{s} \frac{S\sup_{\pi} p_h^\pi(s)}{\sup_{\pi\in \Pi} p_h^\pi(s) + 1- S^{\beta-1}}\\
   &= \max_{s} \frac{S}{1 + \frac{1- S^{\beta-1}}{\sup_{\pi} p_h^\pi(s)}}\\
   &\stackrel{(b)}{\leq} \max_{s} \frac{S}{1 + S^{1-\alpha}(1- S^{\beta-1})}\\
   &= \frac{S}{1 + S^{1-\alpha}- S^{\beta-\alpha}} \leq S^\alpha,
\end{align*}
where (a) uses (\ref{ineq:benign-4}) and (b) uses (\ref{ineq:benign-1}). Combining (\ref{ineq:benign-3}) with the previous inequality yields that $\varphi^\star([\sup_{\pi} p_h^\pi(s,a)]_{h,s,a}) \leq S^\alpha A H$. 
\end{proof}

\section{PRINCIPLE and its Analysis}
\subsection{Pseudo-code of PRINCIPLE}
The pseudo code of PRINCIPLE is detailed in Algorithm \ref{alg:PRINCIPLE}.
\begin{algorithm}[!ht]
\caption{ PRINCIPLE (PRoportIoNal Coverage with Implicit PoLicy Elimination) }\label{alg:PRINCIPLE}
\begin{algorithmic}[1]
\STATE \textbf{Input:}  Precision $\epsilon$, Confidence $\delta$, set of reachable states $\cS$
\STATE \textbf{Output:}  A policy $\widehat{\pi}$ that is $\epsilon$-optimal w.p larger than $1-\delta$
\STATE Define target function $c_h^0(s,a) = 1$ for all $(h,s,a)$
\STATE Execute $\textsc{CovGame}\big(c^0,\ \delta/4\big)$ to get dataset $\cD_{0}$ and number of episodes $d_{0}$ \hfill \textsc{// Burn-in phase}
\STATE Initialize episode count $t_0 \leftarrow d_0$ and statistics $n_h^{0}(s,a), \widehat{r}_h^{0}(s,a), \widehat{p}^{0}_h(.|s,a)$ using $\widetilde{\cD}_{0}$
\STATE Initialize the set of active distributions $\Omega^{0} \leftarrow \Omega(\widehat{p}^0)$

\FOR{$k=1,\dots$}

\blue{
\STATE \textsc{// Proportional Coverage}
\STATE  Compute $c_h^{k}(s,a) := 2^{k} \min\big(\sup_{\widehat{\rho}\in \Omega^{k-1}} \widehat{\rho}_h(s,a) + 2\sqrt{H\beta^{bpi}(t_{k-1}+ SAH2^k,\delta/2) 2^{1-k}},\ 1 \big)$ for all $(h,s,a)$ 

\STATE  Execute $\textsc{CovGame}\big(c^{k},\ \delta/4(k+1)^2 \big)$ to get dataset $\widetilde{\cD}_{k}$ and number of episodes $T_{k}$}

\STATE \textbf{if} $T_{k} > SAH 2^k $ \textbf{then} 
\STATE \quad Run $\textsc{PruneDataset}(\widetilde{\cD}_{k}, c^k)$ to get \textit{effective} dataset $\cD_k$ and \textit{effective phase length} $d_k$
\STATE \textbf{else}
\STATE \quad Set  $d_k \leftarrow T_{k}$ and $\cD_k \leftarrow \widetilde{\cD}_{k}$
\STATE \textbf{end if}

\STATE Update \textit{effective episode count} $t_{k} \leftarrow t_{k-1}+ d_k$ and statistics $n_h^{k}(s,a), \hat{r}_h^{k}(s,a), \widehat{p}^{k}_h(.|s,a)$ using $\cD_k$\\
\red{\normalsize{\textsc{// state-action-Distribution Elimination}}}
 
\STATE Compute the lower confidence bound 
$$\underline{V}_1^{k} :=  \sup_{\substack{\widehat{\rho} \in \Omega(\widehat{p}^{k}),\\ \max\limits_{h,s,a} \widehat{\rho}_h(s,a)/n_h^{k}(s,a) \leq 2^{-k}}} \widehat{\rho}^\top \widehat{r}^{k} - \sqrt{2^{2-k}H\beta^{bpi}(t_{k},\delta/2)}$$ 
\STATE Update the set of active state-action distributions 
$$\Omega^{k} \leftarrow \bigg\{\widehat{\rho} \in \Omega(\widehat{p}^{k}):\  \widehat{\rho}^\top \widehat{r}^{k} \geq   \underline{V}_1^{k}\ \textrm{and}\  \max\limits_{h,s,a} \widehat{\rho}_h(s,a)/n_h^{k}(s,a) \leq 2^{-k} \bigg\}$$

\STATE \textbf{if} $\sqrt{2^{2-k}H\beta^{bpi}(t_{k},\delta/2)} \leq \epsilon$  \textbf{then} 
\STATE \quad Compute any $\widehat{\rho}^\star \in \argmax_{\widehat{\rho} \in \Omega^{k}} \widehat{\rho}^\top \widehat{r}^{k}$ and extract the corresponding policy $\widehat{\pi}$

\STATE \quad \textbf{return} $\widehat{\pi}$
\STATE \textbf{end if}
\ENDFOR
\end{algorithmic}
\end{algorithm}

\begin{algorithm}[ht]
\caption{PruneDataset}\label{alg:prune-episodes}
\begin{algorithmic}[1]
\STATE \textbf{Input:}  Target counts $c$, Dataset $\widetilde{\cD}$ such that $n_h(s,a; \widetilde{\cD}) \geq c_h(s,a)$ for all $(h,s,a)$ 
\STATE \textbf{Output:} A dataset $\cD$ of $d \leq SAH 2^k$ episodes satisfying $n_h(s,a; \cD) \geq c_h(s,a)$ for all $(h,s,a)$
\STATE Initialize dataset $\cD \leftarrow \emptyset$, episode number $d\leftarrow 0$ and dataset-counts $n_h(s,a;\cD) \leftarrow 0$ for all $(h,s,a)$

\FOR{episode $e = (s_\ell^e, a_\ell^e, R_\ell^e)_{1\leq \ell\leq H}$ in $\widetilde{\cD}$}
\STATE \textbf{if} $\exists \ell \in [H]$ such that $n_\ell(s_\ell^e, a_\ell^e; \cD) < c_\ell(s_\ell^e, a_\ell^e)$ \textbf{then} 
\STATE \quad Update dataset-counts $n_h(s_h^e,a_h^e;\cD) \leftarrow n_h(s_h^e,a_h^e;\cD) +1 $ for all $h\in [H]$
\STATE \quad Update dataset $\cD \leftarrow \cD \cup \{e\}$ and episode number $d \leftarrow d+1$

\STATE \quad \textbf{if} $n_h(s,a; \cD) \geq c_h(s,a)$ for all $(h,s,a)$ \textbf{then}
\STATE \quad \quad \textbf{return} $(\cD, d)$
\STATE  \quad \textbf{end if}
\STATE \textbf{end if}
\ENDFOR
\end{algorithmic}
\end{algorithm}

\subsection{Analysis of PRINCIPLE}\label{app:BPI}
To simplify the presentation of the algorithm and the analysis, we index the counts as well as the empirical estimates of transitions and rewards by their phase number. Hence, for each triplet $(h,s,a)$, $n_h^k(s,a), \widehat{p}_h^k(.|s,a)$ and $\widehat{r}_h^k(s,a)$ will refer to the number of visits, the empirical transition kernel and the empirical mean reward respectively after $t_k$ episodes, i.e. at the end of the $k$-th phase. For a transition kernel $\widetilde{p}$, we define the corresponding set of state-action distributions as $\Omega(\widetilde{p}) = \big\{ \widetilde{p}^\pi : \pi\in\Pi^S \big\}$. Finally, for a dataset of episodes $\cD$, $n_h(s,a; \cD)$ denotes the number of visits of $(h,s,a)$ in the episodes stored in $\cD$.

\subsubsection{Good event}
We introduce the following events
\begin{align*}
    \cE_{bpi} &:= \bigg(\forall k \in \mathbb{N}^\star,\forall \pi \in \Pi^S,\ \big|\widehat{V}_1^{\pi,k} - V_1^\pi \big| \leq \sqrt{\beta^{bpi}(t_k,\delta/2)\min\Big(\sum_{s,a,h}\frac{p_h^{\pi}(s,a)^2}{n_h^k(s,a)}, \sum_{s,a,h}\frac{\widehat{p}_h^{\pi,k}(s,a)^2}{n_h^k(s,a)} \Big)}\\ 
    &\textrm{and}\ \Big|\sum_{s,a,h} \big(\widehat{p}_h^{\pi,k}(s,a)- p_h^{\pi}(s,a)\big){\widetilde r}_h(s,a) \Big| \leq \sqrt{\beta^{bpi}(t_k,\delta/2)\sum_{s,a,h}\frac{p_h^{\pi}(s,a)^2}{n_h^k(s,a)}}\ \textrm{for all } \widetilde r \in [0,1]^{SAH}\bigg),\\
    \cE_{cov} &:= \bigg(\forall k\in \mathbb{N},\ \textrm{CovGame run with inputs $(c^{k}, \delta/4(k+1)^2)$ terminates after at most}\\
    & 64 m_k \varphi^\star(c^{k}) + \widetilde{\cO}\big(m_k \varphi^\star(\mathds{1}) SAH^2 (\log(4(k+1)^2/\delta) + S) \big) \textrm{ episodes and returns a dataset $\widetilde{\cD}_k$ }\\
    &\textrm{such that for all } (h,s,a), n_h(s,a;\widetilde{\cD}_k)\geq c^{k}_h(s,a) \bigg),
\end{align*}
where $m_k = \log_2\big(\frac{\max_{s,a,h} c_h^{k}(s,a)}{\min_{s,a,h} c_h^{k}(s,a) \vee 1}\big) \vee 1$ and $\beta^{bpi}(t,\delta) = 16H^2\log(2/\delta) + 96SAH^3\log(1+t)$ is defined in Appendix \ref{sec:app_concentration}. Then our good event is defined as the intersection 
\begin{align*}
    \cE_{good} :=  \cE_{bpi}\cap \cE_{cov}.
\end{align*}

\begin{lemma}
We have that $\bP_{\cM}(\cE_{good}) \geq 1-\delta$.
\end{lemma}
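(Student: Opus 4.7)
The plan is to decompose the bad event via a union bound as
\begin{align*}
\mathbb{P}_{\mathcal{M}}(\overline{\mathcal{E}_{good}}) \leq \mathbb{P}_{\mathcal{M}}(\overline{\mathcal{E}_{cov}}) + \mathbb{P}_{\mathcal{M}}\big(\overline{\mathcal{E}_{bpi}} \cap \mathcal{E}_{cov}\big),
\end{align*}
and then bound each term by $\delta/2$. This is the same template used in the RFE proof of the previous section, so I would mirror it closely.

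To bound $\mathbb{P}_{\mathcal{M}}(\overline{\mathcal{E}_{cov}})$, I would apply Corollary \ref{cor:cover-instance-improved} to each call of \textsc{CovGame} in phase $k$, which is invoked with failure probability $\delta/4(k+1)^2$. A union bound across all phases then gives
\begin{align*}
\mathbb{P}_{\mathcal{M}}(\overline{\mathcal{E}_{cov}}) \leq \sum_{k=0}^{\infty} \frac{\delta}{4(k+1)^2} = \frac{\delta \pi^2}{24} \leq \frac{\delta}{2}.
\end{align*}

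For the second term, the key observation is that the burn-in phase ($k=0$) uses the target $c_h^0(s,a) = 1$ for all $(h,s,a)$, so on the event $\mathcal{E}_{cov}$ we have $n_h^0(s,a) \geq 1$ for every triplet, and a fortiori $n_h^k(s,a) \geq 1$ in every subsequent phase $k \geq 1$. Consequently, the time $t_0 := \inf\{t : n_h^t(s,a) \geq 1\ \forall (h,s,a)\}$ required by Theorem \ref{thm:new-concentration-BPI} has elapsed by the start of phase $k=1$. I would then apply Theorem \ref{thm:new-concentration-BPI} at confidence $\delta/2$ and restrict the concentration statement (which holds for all $t \geq t_0$) to the specific episodes $t = t_k$ that mark the end of each phase, which yields both inequalities defining $\mathcal{E}_{bpi}$. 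This gives $\mathbb{P}_{\mathcal{M}}(\overline{\mathcal{E}_{bpi}} \cap \mathcal{E}_{cov}) \leq \delta/2$, and combining with the bound on $\mathbb{P}_{\mathcal{M}}(\overline{\mathcal{E}_{cov}})$ completes the argument.

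There is no real obstacle here: this is essentially a bookkeeping lemma. The only subtlety worth highlighting in the write-up is the coupling between the two events, namely that the validity of the concentration statements on $\mathcal{E}_{bpi}$ from phase $k=1$ onwards relies on the burn-in phase having succeeded, which is part of $\mathcal{E}_{cov}$. This is why the intersection $\overline{\mathcal{E}_{bpi}} \cap \mathcal{E}_{cov}$ (rather than just $\overline{\mathcal{E}_{bpi}}$) appears in the decomposition.
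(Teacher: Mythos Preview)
Your proposal is correct and follows essentially the same approach as the paper's proof: the same union-bound decomposition, the same application of Corollary~\ref{cor:cover-instance-improved} summing $\delta/4(k+1)^2$ to $\delta\pi^2/24 \leq \delta/2$, and the same use of the burn-in phase to guarantee $n_h^0(s,a)\geq 1$ before invoking Theorem~\ref{thm:new-concentration-BPI}. The only minor detail you do not spell out is that the second inequality in $\mathcal{E}_{bpi}$ uses $\beta^{bpi}$ while Theorem~\ref{thm:new-concentration-BPI} states it with $\beta^{p}$; the paper closes this by noting $\beta^{p}(t,\delta)\leq\beta^{bpi}(t,\delta)$, which is immediate from the definitions.
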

\begin{proof}
Let $\overline{\cE}$ denote the complementary event of $\cE$. We start by the following decomposition
\begin{align*}
 \bP_{\cM}(\overline{\cE_{good}}) &\leq \bP_{\cM}(\overline{\cE_{cov}}) + \bP_{\cM}(\overline{\cE_{bpi}} \cap \cE_{cov}).
\end{align*}
Now we bound each term separately. First observe that using Corollary \ref{cor:cover-instance-improved} we have
\begin{align*}
\bP_{\cM}(\overline{\cE_{cov}}) &\leq \sum_{k=0}^{\infty} \bP_{\cM}(\textrm{CovGame with inputs $(c^{k}, \delta/4(k+1)^2)$ fails}) \\
&\leq \sum_{k=0}^{\infty} \frac{\delta}{4(k+1)^2} = \frac{\delta \pi^2}{24} \leq \delta/2.
\end{align*}
Next, note that by design of PRINCIPLE $n_h^0(s,a) = n_h(s,a; \widetilde{\cD}_0)$ and $c^0 = \mathds{1}$ so that $\cE_{cov} \subset \big(\forall (h,s,a),\ n_h^0(s,a) \geq 1 \big)$. Therefore we have
\begin{align*}
 \bP_{\cM}(\overline{\cE_{bpi}} \cap \cE_{cov}) &\leq \bP_{\cM}\big(\overline{\cE_{bpi}}\ \textrm{and}\ \forall (h,s,a)\  n_h^0(s,a) \geq 1 \big) \\
 &{\leq} \delta/2,
\end{align*}
where we applied Theorem \ref{thm:new-concentration-BPI} and used the fact that $\beta^{p}(t,\delta) \leq \beta^{bpi}(t,\delta)$. Combining the two inequalities above yields the desired result.
\end{proof}

\subsubsection{Low Concentrability / Good coverage of optimal policies}

\begin{lemma}\label{fact1} Under the good event, for all $k\geq 1$ such that PRINCIPLE did not stop before phase $k$, it holds that $n_h(s,a,\cD_k) \geq c_h^k(s,a)$ for all $(h,s,a)$ and $d_k \leq SAH2^k$.  
\end{lemma}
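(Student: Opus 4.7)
The plan is to split along the algorithmic branch determined by the test $T_k > SAH 2^k$ on line~11 of Algorithm~\ref{alg:PRINCIPLE}. The easy branch is $T_k \leq SAH 2^k$: then by construction $\cD_k = \widetilde{\cD}_k$ and $d_k = T_k$, so $d_k \leq SAH 2^k$ is immediate and $n_h(s,a;\cD_k) \geq c_h^k(s,a)$ follows directly from $\cE_{cov}$, which guarantees exactly this property for the output of \textsc{CovGame}.

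The substantive case is $T_k > SAH 2^k$, where $\cD_k$ is the output of $\textsc{PruneDataset}(\widetilde{\cD}_k, c^k)$. I would start by recording the key observation that the $\min$ with $1$ in the definition of $c^k$ (line~9) enforces $c_h^k(s,a) \leq 2^k$ for every triplet $(h,s,a)$; this is what converts a sum-of-targets estimate into the desired $SAH \cdot 2^k$ bound. The first claim is then immediate from inspecting Algorithm~\ref{alg:prune-episodes}: the routine exits either through the internal check on line~8 (all targets met) or by exhausting its input. Under $\cE_{cov}$ the full dataset $\widetilde{\cD}_k$ already satisfies the targets, so even the worst-case scenario of copying every episode would meet them; hence PruneDataset must exit via line~8 with $n_h(s,a;\cD_k) \geq c_h^k(s,a)$ for all $(h,s,a)$.

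The only real work is bounding $d_k$, for which I would use a witnessing argument. Whenever an episode $e$ is added on line~7, the test on line~5 supplies at least one stage $h_e$ with $n_{h_e}(s_{h_e}^e, a_{h_e}^e;\cD) < c_{h_e}^k(s_{h_e}^e, a_{h_e}^e)$ just before the addition; assign $e$ to this witnessing triplet. Since adding an episode raises the count of each of its $H$ visited triplets by exactly $1$, a given triplet $(h,s,a)$ can serve as witness at most $\lceil c_h^k(s,a) \rceil$ times before its count reaches the target and it becomes ineligible. Summing over triplets gives
\[
 d_k \;\leq\; \sum_{h,s,a} \lceil c_h^k(s,a) \rceil \;\leq\; \sum_{h,s,a} 2^k \;=\; SAH \cdot 2^k,
\]
using that $2^k$ is an integer so $\lceil c_h^k(s,a)\rceil \leq 2^k$.

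I do not expect any serious obstacle: the argument is essentially a double-counting controlled by the total target mass. The only slightly delicate point is the potential non-integrality of $c_h^k(s,a)$, which is cleanly absorbed by the ceiling since $2^k$ is an integer.
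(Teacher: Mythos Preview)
Your proposal is correct and follows essentially the same approach as the paper: split on the test $T_k \lessgtr SAH2^k$, invoke $\cE_{cov}$ for the coverage guarantee on $\widetilde{\cD}_k$, and use a pigeon-hole/witnessing count to bound $d_k$ by $\sum_{h,s,a} c_h^k(s,a) \leq SAH2^k$ via the clipping $c_h^k(s,a)\leq 2^k$. Your handling of possible non-integrality of $c_h^k$ via $\lceil c_h^k(s,a)\rceil \leq 2^k$ is slightly more careful than the paper's own write-up; the one sentence that could be tightened is the justification that \textsc{PruneDataset} must exit via line~8 (the point is that any episode visiting a still-uncovered triplet is necessarily added, so an uncovered triplet at termination would force $n_h(s,a;\cD_k)=n_h(s,a;\widetilde{\cD}_k)\geq c_h^k(s,a)$, a contradiction), but this is a minor phrasing issue rather than a gap.
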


\begin{proof}
Fix $k\geq 1$ such that PRINCIPLE did not stop before phase $k$. By definition of the good event we know that at the end of CovGame, $n_h(s,a;\widetilde{\cD}_k)\geq c^{k}_h(s,a)$ for all $(h,s,a)$. Now we distinguish two cases. \textbf{If $T_k \leq SAH2^k$:} then the result follows immediately since in this case, by design of PRINCIPLE (line 13 in Algorithm \ref{alg:PRINCIPLE}), $\cD_k = \widetilde{\cD}_k$ and $d_k = T_k$. \\
\textbf{If $T_k > SAH2^k$:} the first statement is a direct consequence of the stopping condition of \textsc{PruneDataset} run with parameters $(\widetilde{\cD}_k, c^k)$ (lines 7-8 in Algorithm \ref{alg:prune-episodes}). Now for the second statement, observe that each new episode $e$ added by \textsc{PruneDataset} to $\cD_k$ increments the dataset-count of at least one triplet $(h,s,a)$ that is not yet covered, i.e. $n_h(s,a; \cD_k) < c_h^k(s,a)$. By the pigeon-hole principle it takes at most $\sum_{h,s,a} c_h^k(s,a)$ episodes to ensure that $n_h(s,a,\cD_k) \geq c_h^k(s,a)$ for all $(h,s,a)$. Therefore
\begin{align*}
   d_{k} &\leq \sum_{h,s,a} c_h^k(s,a) \leq SAH 2^k,
\end{align*}
where we used that $c_h^k(s,a) \leq 2^k$ due to the clipping.
\end{proof}

The next lemma shows that the set of active state-action distributions always contains the distributions induced by optimal policies. 
\begin{lemma}\label{lem:PRINCIPLE-optimal-never-elim}
Under the good event, for all optimal policies $\pi^\star \in \Pi^\star$ and all phases $k \geq 0$, we have that 
\begin{align*}
\widehat{p}^{\pi^\star,k} \in \Omega^{k} \quad \textrm{and}\ 
    n_h^{k}(s,a) \geq 2^{k} p^{\pi^\star}_h(s,a)\quad \forall (h,s,a).
\end{align*}
\end{lemma}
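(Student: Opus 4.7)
The proof proceeds by induction on $k$. The base case $k=0$ is immediate: under $\cE_{cov}$ the unit target $c^0\equiv 1$ forces $n_h^0(s,a)\ge 1\ge 2^0 p_h^{\pi^\star}(s,a)$, while $\Omega^0=\Omega(\widehat p^0)$ trivially contains $\widehat p^{\pi^\star,0}$. Assume both conclusions hold at phase $k-1$; the plan is to first propagate the coverage to phase $k$ and then verify the three constraints defining $\Omega^k$ for $\widehat p^{\pi^\star,k}$.

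\emph{Coverage.} The inductive coverage bound summed over triplets gives $\sum_{h,s,a} p_h^{\pi^\star}(s,a)^2/n_h^{k-1}(s,a)\le 2H/2^k$. Applying the second inequality in $\cE_{bpi}$ with indicator rewards $\widetilde r=\ind\{(h,s,a)=(h_0,s_0,a_0)\}$ yields the pointwise bound
\[
|\widehat p_h^{\pi^\star,k-1}(s,a)-p_h^{\pi^\star}(s,a)|\le \sqrt{H\beta^{bpi}(t_{k-1},\delta/2)\,2^{1-k}}.
\]
The inductive membership $\widehat p^{\pi^\star,k-1}\in\Omega^{k-1}$ ensures $\sup_{\widehat\rho\in\Omega^{k-1}}\widehat\rho_h(s,a)\ge \widehat p_h^{\pi^\star,k-1}(s,a)$, so the definition of $c_h^k$, combined with this concentration and the monotonicity $\beta^{bpi}(t_{k-1})\le\beta^{bpi}(t_{k-1}+SAH 2^k)$, gives whether or not the clip at $1$ is active $c_h^k(s,a)\ge 2^k p_h^{\pi^\star}(s,a)$, and moreover (when the clip is inactive)
\[
c_h^k(s,a)\ge 2^k p_h^{\pi^\star}(s,a)+\sqrt{2^{k+1}H\beta^{bpi}(t_{k-1}+SAH 2^k,\delta/2)}.
\]
Lemma~\ref{fact1} then yields $n_h^k(s,a)\ge n_h^{k-1}(s,a)+c_h^k(s,a)\ge 2^k p_h^{\pi^\star}(s,a)$.

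\emph{Membership in $\Omega^k$.} The valid-distribution constraint is immediate. For the return bound, applying the first inequality of $\cE_{bpi}$ to $\widehat p^{\pi^\star,k}$ together with the freshly-established coverage yields $|\widehat V_1^{\pi^\star,k}-V_1^\star|\le \sqrt{H\beta^{bpi}(t_k,\delta/2)\,2^{-k}}$. Symmetrically, any $\widehat\rho=\widehat p^{\pi,k}$ feasible in the sup defining $\underline V_1^k$ satisfies $\sum \widehat p_h^{\pi,k}(s,a)^2/n_h^k(s,a)\le H\,2^{-k}$, so that $\widehat\rho^\top \widehat r^k\le V_1^\pi+\sqrt{H\beta^{bpi}(t_k,\delta/2)\,2^{-k}}\le V_1^\star+\sqrt{H\beta^{bpi}(t_k,\delta/2)\,2^{-k}}$. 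Subtracting the $2\sqrt{H\beta^{bpi}(t_k,\delta/2)\,2^{-k}}$ slack from this sup gives $\underline V_1^k\le V_1^\star-\sqrt{H\beta^{bpi}(t_k,\delta/2)\,2^{-k}}\le \widehat V_1^{\pi^\star,k}$.

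The delicate constraint is $\widehat p_h^{\pi^\star,k}(s,a)/n_h^k(s,a)\le 2^{-k}$. Reapplying the pointwise concentration from the coverage paragraph at phase $k$ yields $\widehat p_h^{\pi^\star,k}(s,a)\le p_h^{\pi^\star}(s,a)+\sqrt{H\beta^{bpi}(t_k,\delta/2)\,2^{-k}}$, so it suffices that $n_h^k(s,a)\ge 2^k p_h^{\pi^\star}(s,a)+\sqrt{2^k H\beta^{bpi}(t_k,\delta/2)}$. This is exactly what the stronger lower bound on $c_h^k$ from the coverage paragraph delivers: the slack $\sqrt{2^{k+1}H\beta^{bpi}(t_{k-1}+SAH 2^k,\delta/2)}$ dominates $\sqrt{2^k H\beta^{bpi}(t_k,\delta/2)}$ because Lemma~\ref{fact1} guarantees $t_k\le t_{k-1}+SAH 2^k$. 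The main obstacle of the proof is precisely this chicken-and-egg calibration: the bonus in the target $c^k$ must be fixed before $t_k$ is known, and matching it to the posterior concentration error of $\widehat p^{\pi^\star,k}$ is what pins down the specific $2\sqrt{H\beta^{bpi}(t_{k-1}+SAH 2^k,\delta/2)\,2^{1-k}}$ form of the bonus and the $SAH 2^k$ cap on the effective phase length enforced by \textsc{PruneDataset}.
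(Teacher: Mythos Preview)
Your proof is correct and follows essentially the same inductive scheme as the paper: establish the base case from the unit target $c^0$, then at the inductive step use $\widehat p^{\pi^\star,k-1}\in\Omega^{k-1}$ to lower bound $c^k_h(s,a)$ via $\widehat p_h^{\pi^\star,k-1}(s,a)$, convert this to a lower bound in terms of $p_h^{\pi^\star}(s,a)$ using the pointwise concentration from $\cE_{bpi}$ with indicator rewards, and finally verify the three constraints of $\Omega^k$ using the freshly obtained coverage and Lemma~\ref{fact1} to control $t_k\le t_{k-1}+SAH2^k$. The only cosmetic difference is that the paper routes the ``delicate constraint'' through the bound $|\widehat p_h^{\pi^\star,k}-\widehat p_h^{\pi^\star,k-1}|\le 2\sqrt{H\beta^{bpi}(t_{k-1}+SAH2^k,\delta/2)2^{1-k}}$ (obtained by triangle inequality via $p_h^{\pi^\star}$), whereas you compare $\widehat p_h^{\pi^\star,k}$ directly to $p_h^{\pi^\star}$ using the new coverage at phase $k$; both yield the same conclusion. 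One tiny omission: when the clip at $1$ is active you only verify $c_h^k\ge 2^k p_h^{\pi^\star}$ but not the delicate constraint---this case is trivial since then $n_h^k(s,a)\ge 2^k\ge 2^k\widehat p_h^{\pi^\star,k}(s,a)$.
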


\begin{proof}
We fix an optimal policy $\pi^\star$ and prove the statement by induction. For $k = 0$, the fact that $\widehat{p}^{\pi^\star,0} \in \Omega^{0}$ is trivial since $\Omega^{0} = \Omega(\widehat{p}^0)$ consists of all possible state-action distributions induced in the MDP whose transition kernel is $\widehat{p}^0$. Furthermore, under the good event we have that, for all $(h,s,a)$, $n_h^{0}(s,a) \geq c^0_h(s,a) = 1 \geq 2^{0} \max\big(p^{\pi^\star}_h(s,a), \widehat{p}^{\pi^\star,0}_h(s,a) \big)$. Now suppose that the property holds for phase $k$. Then we know that for any $(h,s,a)$
\begin{align}\label{ineq:PRINCIPLE-correctness-1}
    \big|\widehat{p}^{\pi^\star, k+1}_h(s,a) - \widehat{p}^{\pi^\star, k}_h(s,a)\big| &\leq \big|\widehat{p}^{\pi^\star, k+1}_h(s,a) -p^{\pi^\star}_h(s,a)\big| + \big|p^{\pi^\star}_h(s,a) - \widehat{p}^{\pi^\star, k}_h(s,a)\big|\nonumber\\
    &\stackrel{(a)}{\leq} \sqrt{\beta^{bpi}(t_{k+1},\delta/2) \sum_{s,a,h}\frac{p^{\pi^\star}_h(s,a)^2}{n_h^{k+1}(s,a)}} + \sqrt{\beta^{bpi}(t_{k},\delta/2) \sum_{s,a,h}\frac{p^{\pi^\star}_h(s,a)^2}{n_h^{k}(s,a)}} \nonumber\\
    &\stackrel{(b)}{\leq} 2\sqrt{\beta^{bpi}(t_{k+1},\delta/2) \sum_{s,a,h}\frac{p^{\pi^\star}_h(s,a)^2}{n_h^{k}(s,a)}} \nonumber\\
    &\stackrel{(c)}{\leq} 2\sqrt{\beta^{bpi}(t_{k+1},\delta/2) H 2^{-k}}\nonumber\\
    &= 2\sqrt{\beta^{bpi}(t_{k}+ d_{k+1},\delta/2) H 2^{-k}}\nonumber\\
    &\stackrel{(d)}{\leq} 2\sqrt{\beta^{bpi}(t_{k}+ SAH2^{k+1},\delta/2) H 2^{-k}},
\end{align}
where (a) uses the event $\cE_{bpi}$ for the reward $\widetilde{r}_\ell(s',a') = \mathds{1}\big((\ell, s',a') = (h,s,a)\big)$, (b) uses the facts that $t\mapsto \beta(t,\delta)$ is non-decreasing and $n_h^{k+1}(s,a) \geq n_h^{k}(s,a)$, (c) uses the induction hypothesis which yields that $n_h^{k}(s,a) \geq 2^{k} p^{\pi^\star}_h(s,a)$ and (d) uses Lemma~\ref{fact1}. Similarly we have that 
\begin{align}\label{ineq:PRINCIPLE-correctness-2}
 \big|p^{\pi^\star}_h(s,a) - \widehat{p}^{\pi^\star, k}_h(s,a)\big| \leq  \sqrt{\beta^{bpi}(t_{k}+ SAH2^{k+1},\delta/2) H 2^{-k}}  
\end{align}
Now thanks to Lemma~\ref{fact1}, we know that for all $(h,s,a), n_h^{k+1}(s,a) - n_h^{k}(s,a) = n_h(s,a,\cD_{k+1}) \geq c^{k+1}_h(s,a)$. Plugging the definition of $c^{k+1}$ (Line 8 of Algorithm \ref{alg:PRINCIPLE}) we get that, 
\begin{align}\label{ineq:PRINCIPLE-correctness-3}
  n_h^{k+1}(s,a) &\geq  2^{k+1} \min\big(\sup_{\widehat{\rho}\in \Omega^{k}} \widehat{\rho}_h(s,a) + 2\sqrt{H\beta^{bpi}(t_{k}+ SAH2^{k+1},\delta/2) 2^{-k}},\ 1\big) \nonumber\\
  &\stackrel{(a)}{\geq} 2^{k+1} \min\big(\widehat{p}^{\pi^\star, k}_h(s,a) + 2\sqrt{H\beta^{bpi}(t_{k}+ SAH2^{k+1},\delta/2) 2^{-k}},\ 1 \big)\nonumber\\
  &\stackrel{(b)}{\geq}  2^{k+1}\max\big(\widehat{p}^{\pi^\star, k+1}_h(s,a),\ p^{\pi^\star}_h(s,a) \big),
\end{align}
where (a) uses that, by the induction hypothesis, $\widehat{p}^{\pi^\star, k} \in \Omega^{k}$ and (b) uses (\ref{ineq:PRINCIPLE-correctness-1}) along with (\ref{ineq:PRINCIPLE-correctness-2}). In particular we have proved that $\max_{h,s,a} \widehat{p}^{\pi^\star, k+1}_h(s,a)/n_h^{k+1}(s,a) \leq 2^{-(k+1)}$. Now it remains to show that $(\widehat{p}^{\pi^\star, k+1})^\top \widehat{r}^{k+1} \geq \underline{V}_1^{k+1}$. Let us consider $\widetilde{\rho}$ achieving the supremum in the definition of $\underline{V}_1^{k+1}$, i.e. 
\begin{align*}
\widetilde{\rho} \in \argmax_{\substack{\widehat{\rho} \in \Omega(\widehat{p}^{k+1}),\\ \max\limits_{h,s,a} \widehat{\rho}_h(s,a)/n_h^{k+1}(s,a) \leq 2^{-(k+1)}}} \widehat{\rho}^\top \widehat{r}^{k+1},
\end{align*}
and let $\widetilde{\pi}$ be a policy corresponding to $\widetilde{\rho}$\footnote{i.e. $\widetilde{\pi}$ is the policy obtained by renormalization of $\widetilde{\rho}$.}. Then we have that 
\begin{align}\label{ineq:PRINCIPLE-correctness-4}
  (\widehat{p}^{\pi^\star, k+1})^\top \widehat{r}^{k+1} &\stackrel{(a)}{\geq} V_1^\star - \sqrt{\beta^{bpi}(t_{k+1},\delta/2) \sum_{s,a,h}\frac{p^{\pi^\star}_h(s,a)^2}{n_h^{k+1}(s,a)}}\nonumber\\
  &\geq V_1^{\widetilde{\pi}} - \sqrt{\beta^{bpi}(t_{k+1},\delta/2) \sum_{s,a,h}\frac{p^{\pi^\star}_h(s,a)^2}{n_h^{k+1}(s,a)}}\nonumber\\
  &\stackrel{(b)}{\geq} \widetilde{\rho}^\top \widehat{r}^{k+1} - \sqrt{\beta^{bpi}(t_{k+1},\delta/2) \sum_{s,a,h}\frac{\widetilde{\rho}_h(s,a)^2}{n_h^{k+1}(s,a)}} - \sqrt{\beta^{bpi}(t_{k+1},\delta/2) \sum_{s,a,h}\frac{p^{\pi^\star}_h(s,a)^2}{n_h^{k+1}(s,a)}}\nonumber\\
  &\stackrel{(c)}{\geq}  \widetilde{\rho}^\top \widehat{r}^{k+1} - 2\sqrt{2^{-(k+1)} H\beta^{bpi}(t_{k+1},\delta/2)} \nonumber\\
  &= \underline{V}_1^{k+1}
\end{align}
where (a) uses the event $\cE_{bpi}$ for policy $\pi^\star$, (b) uses the same event combined with the fact that $\widetilde{\rho} = \widehat{p}^{\widetilde{\pi},k+1}$, and (c) uses (\ref{ineq:PRINCIPLE-correctness-3}) and the fact that by definition of $\widetilde{\rho}, \max\limits_{h,s,a} \widetilde{\rho}_h(s,a)/n_h^{k+1}(s,a) \leq 2^{-(k+1)}$. Now combining (\ref{ineq:PRINCIPLE-correctness-3}) with (\ref{ineq:PRINCIPLE-correctness-4}) gives that $\widehat{p}^{\pi^\star, k+1} \in \Omega^{k+1}$. This finishes the proof.
\end{proof}

\subsubsection{Correctness}
\begin{lemma}
Under the good event, if PRINCIPLE stops then the recommended policy satisfies $V_1^{\widehat{\pi}} \geq V_1^\star - \epsilon$.
\end{lemma}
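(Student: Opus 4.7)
The plan is to sandwich $V_1^{\widehat{\pi}}$ between $V_1^\star$ via two applications of the concentration event $\cE_{bpi}$: one for the optimal policy $\pi^\star$, one for the returned policy $\widehat{\pi}$. The set $\Omega^k$ is designed so that both of these policies enjoy the coverage condition $\max_{h,s,a} \widehat\rho_h(s,a)/n_h^k(s,a) \leq 2^{-k}$, which controls the concentration error uniformly.

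More precisely, let $k$ denote the phase at which PRINCIPLE stops and let $\widehat{\rho}^\star = \widehat{p}^{\widehat{\pi},k}$ be the argmax used to extract $\widehat{\pi}$. First, I would apply Lemma~\ref{lem:PRINCIPLE-optimal-never-elim} to any optimal policy $\pi^\star$, which gives $\widehat{p}^{\pi^\star,k}\in\Omega^k$ and $n_h^k(s,a)\geq 2^k p_h^{\pi^\star}(s,a)$ for all $(h,s,a)$. Combining the latter with $\sum_{h,s,a} p_h^{\pi^\star}(s,a)=H$, I get
\[\sum_{h,s,a}\frac{p_h^{\pi^\star}(s,a)^2}{n_h^k(s,a)} \leq 2^{-k}H.\]
Applying $\cE_{bpi}$ to $\pi^\star$ then yields
\[(\widehat{p}^{\pi^\star,k})^\top \widehat{r}^k \;\geq\; V_1^\star \;-\; \sqrt{2^{-k}H\,\beta^{bpi}(t_k,\delta/2)}.\]
Since $\widehat{p}^{\pi^\star,k}\in\Omega^k$ and $\widehat{\rho}^\star$ is the maximizer over $\Omega^k$ of $\widehat{\rho}^\top\widehat{r}^k$, we conclude $(\widehat{\rho}^\star)^\top\widehat{r}^k \geq (\widehat{p}^{\pi^\star,k})^\top\widehat{r}^k$.

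Second, I would bound the gap between $(\widehat{\rho}^\star)^\top\widehat{r}^k=\widehat{V}_1^{\widehat{\pi},k}$ and the true value $V_1^{\widehat{\pi}}$. Because $\widehat{\rho}^\star\in\Omega^k$, the built-in coverage constraint forces $\widehat{\rho}^\star_h(s,a)/n_h^k(s,a)\leq 2^{-k}$, so
\[\sum_{h,s,a}\frac{\widehat{\rho}^\star_h(s,a)^2}{n_h^k(s,a)} \leq 2^{-k}\sum_{h,s,a}\widehat{\rho}^\star_h(s,a) = 2^{-k}H.\]
Applying $\cE_{bpi}$ to $\widehat{\pi}$ (using the form of the concentration bound that involves $\widehat{p}^{\pi,k}$) gives
\[V_1^{\widehat{\pi}} \;\geq\; (\widehat{\rho}^\star)^\top\widehat{r}^k \;-\; \sqrt{2^{-k}H\,\beta^{bpi}(t_k,\delta/2)}.\]
Chaining the two inequalities,
\[V_1^{\widehat{\pi}} \;\geq\; V_1^\star - 2\sqrt{2^{-k}H\,\beta^{bpi}(t_k,\delta/2)} \;=\; V_1^\star - \sqrt{2^{2-k}H\,\beta^{bpi}(t_k,\delta/2)} \;\geq\; V_1^\star - \epsilon,\]
where the last step is exactly the stopping condition of PRINCIPLE.

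There is no real obstacle here: the hard work has been done in Lemma~\ref{lem:PRINCIPLE-optimal-never-elim}, which guarantees that $\Omega^k$ always retains an optimal state-action distribution with the proper coverage, and in the design of the lower confidence bound $\underline{V}_1^k$ which ensures the coverage constraint is also enforced on the maximizer. The only subtle point worth double-checking is the use of the ``$\widehat{p}$-variant'' of the concentration bound in Theorem~\ref{thm:new-concentration-BPI} for $\widehat{\pi}$, since this is what lets us control the estimation error using the empirical $\widehat{\rho}^\star$ rather than the unknown $p^{\widehat{\pi}}$.
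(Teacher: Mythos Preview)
Your proof is correct and follows essentially the same route as the paper: two applications of $\cE_{bpi}$ (one for $\pi^\star$, one for $\widehat{\pi}$), linked through the maximizer over $\Omega^k$ and the coverage constraint, then the stopping condition. The only cosmetic difference is that for $\pi^\star$ the paper bounds the error via the $\widehat{p}$-variant using $\widehat{p}^{\pi^\star,k}\in\Omega^k$, whereas you use the $p$-variant via $n_h^k(s,a)\geq 2^k p_h^{\pi^\star}(s,a)$ from Lemma~\ref{lem:PRINCIPLE-optimal-never-elim}; both yield the same $\sqrt{2^{-k}H\beta^{bpi}}$ bound.
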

\begin{proof}
Suppose that PRINCIPLE stops at phase $k\geq 1$. Let $\pi^\star$ be any optimal policy and recall the definition $\widehat{\rho}^\star = \argmax_{\widehat{\rho} \in \Omega^{k}} \widehat{\rho}^\top \widehat{r}^{k}$ with ties broken arbitrarily. We have that 
\begin{align*}
V_1^{\widehat{\pi}} &\stackrel{(a)}{\geq}  (\widehat{\rho}^\star)^\top \widehat{r}^{k} -\sqrt{\beta^{bpi}(t_{k},\delta/2) \sum_{s,a,h}\frac{\widehat{\rho}^\star_h(s,a)^2}{n_h^{k}(s,a)}}\nonumber\\
&\stackrel{(b)}{\geq} (\widehat{p}^{\pi^\star, k})^\top \widehat{r}^{k} -\sqrt{\beta^{bpi}(t_{k},\delta/2) \sum_{s,a,h}\frac{\widehat{\rho}^\star_h(s,a)^2}{n_h^{k}(s,a)}}\nonumber\\
&\stackrel{(c)}{\geq} V_1^\star -\sqrt{\beta^{bpi}(t_{k},\delta/2) \sum_{s,a,h}\frac{\widehat{p}^{\pi^\star, k}_h(s,a)^2}{n_h^{k}(s,a)}} - \sqrt{\beta^{bpi}(t_{k},\delta/2) \sum_{s,a,h}\frac{\widehat{\rho}^\star_h(s,a)^2}{n_h^{k}(s,a)}}\nonumber\\
&\stackrel{(d)}{\geq} V_1^\star - 2\sqrt{2^{-k}H\beta^{bpi}(t_{k},\delta/2)} \stackrel{(e)}{\geq} V_1^\star - \epsilon,
\end{align*}
where (a) uses the event $\cE_{bpi}$ for policy $\widehat{\pi}$ and the fact that $\widehat{\rho}^\star = \widehat{p}^{\widehat{\pi}, k}$, (b) uses the definition of $\widehat{\rho}^\star$ and the fact that, by Lemma \ref{lem:PRINCIPLE-optimal-never-elim}, $\widehat{p}^{\pi^\star, k} \in \Omega^k$, (c) uses the event $\cE_{bpi}$ for the policy $\pi^\star$, and (d) uses that for all $\rho \in \Omega^k, \max_{h,s,a} \rho_h(s,a)/n_h^{k}(s,a) \leq 2^{-k}$ and (e) uses the stopping condition of PRINCIPLE (Line 20 of Algorithm \ref{alg:PRINCIPLE}).
\end{proof}

\subsubsection{Upper bound on the number of phases}
\begin{lemma}\label{lem:new-PRINCIPLE-final-phase}
Define the index of the final phase of PRINCIPLE, $\kappa_f := \inf \big\{k\in \bN_{+}: \sqrt{2^{2-k}H\beta^{bpi}(t_{k},\delta/2)} \leq \epsilon \big\}$. Further let $\tau$ denote the number of episodes played by the algorithm. Then under the good event, it holds that $\kappa_f < \infty$ and
\begin{align*}
     2^{\kappa_f} \leq \frac{8H\beta^{bpi}(\tau,\delta/2)}{\epsilon^2}.
\end{align*}
\end{lemma}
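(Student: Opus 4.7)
\bigskip
\noindent\textbf{Proof plan.} The plan is to combine two ingredients: (i) a per-phase length bound under the good event, and (ii) the fact that $\beta^{bpi}(t,\delta)$ grows only logarithmically in $t$. For step (i), Lemma~\ref{fact1} gives that, for every $k\geq 1$ at which PRINCIPLE has not yet stopped, $d_k \leq SAH\, 2^k$. Summing this geometric bound yields
\begin{align*}
t_k \;=\; d_0 + \sum_{j=1}^k d_j \;\leq\; d_0 + SAH\sum_{j=1}^k 2^j \;\leq\; d_0 + 2SAH\cdot 2^k,
\end{align*}
where $d_0$ is finite under $\cE_{cov}$ (bounded by $64\,m_0\,\varphi^\star(\mathds{1})$ plus the second-order term from Corollary~\ref{cor:cover-instance-improved}). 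In particular $t_k = \cO(2^k)$ with an implicit constant independent of $k$.

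\smallskip
\noindent For step (ii), recall $\beta^{bpi}(t,\delta/2) = 16H^2\log(4/\delta) + 96SAH^3\log(1+t)$. Plugging in the bound on $t_k$ from step (i) yields $\beta^{bpi}(t_k,\delta/2) = \cO\!\left(H^2\log(1/\delta) + SAH^3(k + \log(d_0+1))\right)$, which is linear in $k$. Consequently $2^{2-k} H\,\beta^{bpi}(t_k,\delta/2) = \cO(k\cdot 2^{-k}) \to 0$ as $k\to\infty$, so the stopping condition $\sqrt{2^{2-k}H\beta^{bpi}(t_k,\delta/2)}\leq\epsilon$ must be satisfied for some finite $k$. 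This proves $\kappa_f < \infty$.

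\smallskip
\noindent For the quantitative bound, assume first $\kappa_f\geq 2$. By minimality, phase $\kappa_f-1$ does not satisfy the stopping criterion, so
\begin{align*}
\sqrt{2^{3-\kappa_f} H\,\beta^{bpi}(t_{\kappa_f-1},\delta/2)} \;>\; \epsilon.
\end{align*}
Since $t_{\kappa_f-1} \leq \tau$ and $t \mapsto \beta^{bpi}(t,\delta/2)$ is non-decreasing, squaring and rearranging gives
\begin{align*}
2^{\kappa_f} \;<\; \frac{8\,H\,\beta^{bpi}(\tau,\delta/2)}{\epsilon^2}.
\end{align*}
The corner case $\kappa_f = 1$ just gives $2^{\kappa_f} = 2$, which is trivially dominated by the right-hand side as soon as $\epsilon^2 \leq 4H\beta^{bpi}(\tau,\delta/2)$ (otherwise the precision requirement is vacuous). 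The only subtle point in the whole argument is ensuring the per-phase bound $d_k\leq SAH\,2^k$ of Lemma~\ref{fact1} is available for all relevant $k$; this is precisely why \textsc{PruneDataset} is invoked whenever \covalg{} overshoots the $SAH\,2^k$ budget, so the bound holds unconditionally under $\cE_{good}$.
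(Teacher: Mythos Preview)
Your proof is correct and follows essentially the same approach as the paper: bound each $d_k$ by $SAH\,2^k$ via Lemma~\ref{fact1} (and $d_0$ via $\cE_{cov}$) to get $t_k=\cO(2^k)$, deduce $\beta^{bpi}(t_k,\delta/2)=o(2^k)$ so the stopping test eventually triggers, then use that phase $\kappa_f-1$ fails the test together with $t_{\kappa_f-1}\leq\tau$ to obtain the quantitative bound. Your explicit handling of the corner case $\kappa_f=1$ is a slight refinement over the paper's argument, which tacitly skips it.
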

\begin{proof}
To prove that $\kappa_f$ is finite we write
\begin{align}\label{ineq:final-phase-finite-1}
    t_k &= \sum_{j=0}^k d_j\nonumber\\
    &\leq d_0 + SAH \sum_{j=1}^k 2^{j}\nonumber\\
    &\leq \widetilde{\cO}\bigg(\varphi^\star(\mathds{1}) SAH^{2} \big(\log(4/\delta) + S\big) \bigg)+ SAH 2^{k+1},
\end{align}
where we have used the coverage event $\cE_{cov}$ and Lemma~\ref{fact1} to upper bound $d_0$ and $(d_k)_{1\leq j\leq k}$ respectively. This means that $t_k = \cO_{k\to\infty}\big( 2^k \big)$. Now recall that
\begin{align}\label{ineq:final-phase-finite-2}
    \beta^{bpi}(t,\delta) := 16H^2\log(1/\delta)+  96SAH^3 \log(1+t).
\end{align}
Combining (\ref{ineq:final-phase-finite-1}) and (\ref{ineq:final-phase-finite-2}) gives that 
\begin{align*}
   \beta^{bpi}(t_{k},\delta/2) = o_{k\to\infty}\big(2^k\big).
\end{align*}
Therefore $\kappa_f = \inf \big\{k\in \bN_{+}: \sqrt{2^{2-k}H\beta^{bpi}(t_{k},\delta/2)} \leq \epsilon \big\}$ is indeed finite. The proof of the second statement is straightforward by noting that $\kappa_f-1$ does not satisfy the stopping condition (Line 12 in Algorithm \ref{alg:PRINCIPLE}) and using the (crude) upper bound $t_{\kappa_f-1} \leq \tau$.
\end{proof}

\begin{lemma}\textsc{(Upper bound on phases where a suboptimal policy is active)}\label{lem:new-PRINCIPLE-elimination-phase}
Let $\pi$ be any suboptimal policy and $k$ such that PRINCIPLE did not stop at phase $k$ and $\widehat{p}^{\pi,k} \in \Omega^k$. Further let $\tau$ denote the number of episodes played by the algorithm. Then under the good event, we have the inequality
\begin{align*}
     2^{k} \leq \frac{16H\beta^{bpi}(\tau,\delta/2)}{\max(\epsilon, \Delta(\pi))^2},
\end{align*}
where $\Delta(\pi) := V_1^\star(s_1 ; r) - V_1^{\pi}(s_1 ; r)$ denotes the policy gap of $\pi$.
\end{lemma}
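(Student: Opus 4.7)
The plan is to bound $2^{k}$ by two different quantities and take the minimum, yielding the bound with $\max(\varepsilon,\Delta(\pi))^{2}$ in the denominator. The first bound comes directly from the fact that PRINCIPLE did not stop at phase $k$: the stopping condition in line 18 of Algorithm~\ref{alg:PRINCIPLE} is violated, so $\sqrt{2^{2-k}H\beta^{bpi}(t_{k},\delta/2)} > \varepsilon$, which rearranges to $2^{k} < 4H\beta^{bpi}(t_{k},\delta/2)/\varepsilon^{2} \leq 16 H\beta^{bpi}(\tau,\delta/2)/\varepsilon^{2}$ by monotonicity of $\beta^{bpi}$ and $t_k\le\tau$.

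The second bound, which is the substantive part, exploits that $\widehat{p}^{\pi,k}\in\Omega^{k}$. From the definition of $\Omega^{k}$, both constraints hold: $(\widehat{p}^{\pi,k})^\top \widehat{r}^{k} \geq \underline{V}_{1}^{k}$ and $\max_{h,s,a}\widehat{p}^{\pi,k}_h(s,a)/n_h^k(s,a)\leq 2^{-k}$. The latter implies $\sum_{h,s,a}\widehat{p}^{\pi,k}_h(s,a)^{2}/n_{h}^{k}(s,a) \leq 2^{-k}\sum_{h,s,a}\widehat{p}^{\pi,k}_h(s,a) = 2^{-k}H$, so by the second-term version of $\cE_{bpi}$ we get $|\widehat{V}_{1}^{\pi,k}-V_{1}^{\pi}|\leq \sqrt{H\beta^{bpi}(t_{k},\delta/2)2^{-k}}$. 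Fix any optimal policy $\pi^\star$: Lemma~\ref{lem:PRINCIPLE-optimal-never-elim} ensures $\widehat{p}^{\pi^\star,k}\in\Omega^{k}$ and $n_h^k(s,a)\ge 2^k p^{\pi^\star}_h(s,a)$, which by the analogous computation gives $|\widehat{V}_{1}^{\pi^\star,k}-V_{1}^{\pi^\star}|\leq \sqrt{H\beta^{bpi}(t_{k},\delta/2)2^{-k}}$.

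Now I chain the inequalities. Because $\widehat{p}^{\pi^\star,k}$ satisfies the coverage constraint in the definition of $\underline{V}_{1}^{k}$ (shown in the proof of Lemma~\ref{lem:PRINCIPLE-optimal-never-elim} via inequality analogous to \eqref{ineq:PRINCIPLE-correctness-3}), we have $\underline{V}_{1}^{k} \geq \widehat{V}_{1}^{\pi^\star,k} - \sqrt{2^{2-k}H\beta^{bpi}(t_{k},\delta/2)}$. Combining with $\widehat{V}_{1}^{\pi,k}\ge \underline{V}_{1}^{k}$ gives $\widehat{V}_{1}^{\pi,k}\geq \widehat{V}_{1}^{\pi^\star,k} - 2\sqrt{H\beta^{bpi}(t_{k},\delta/2)2^{-k}}$, and adding the two concentration estimates yields
\[
V_{1}^{\pi} \geq V_{1}^{\pi^\star} - 4\sqrt{H\beta^{bpi}(t_{k},\delta/2)2^{-k}}.
\]
Rearranging, $\Delta(\pi)\leq 4\sqrt{H\beta^{bpi}(t_{k},\delta/2)2^{-k}}$, i.e.\ $2^{k}\leq 16H\beta^{bpi}(t_{k},\delta/2)/\Delta(\pi)^{2}\leq 16H\beta^{bpi}(\tau,\delta/2)/\Delta(\pi)^{2}$. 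Combining the two bounds via $\max(\varepsilon,\Delta(\pi))$ yields the claim.

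No significant obstacle is expected; the computation is a straightforward composition of the definition of $\Omega^{k}$ (first constraint: value lower bound; second constraint: coverage), the concentration event $\cE_{bpi}$ applied to both $\pi$ and $\pi^\star$, and Lemma~\ref{lem:PRINCIPLE-optimal-never-elim} to certify that $\pi^\star$'s empirical distribution is a feasible point in the supremum defining $\underline{V}_{1}^{k}$. The only mild care needed is to use the second (empirical) form of the variance bound in $\cE_{bpi}$ for $\pi$ (since only $\widehat{p}^{\pi,k}$ is known to be well-covered, not $p^\pi$) and the first (true) form for $\pi^\star$ (where $p^{\pi^\star}$ is known to be well-covered by Lemma~\ref{lem:PRINCIPLE-optimal-never-elim}).
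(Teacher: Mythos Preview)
Your proof is correct and follows essentially the same approach as the paper. The paper chains the same four ingredients (the concentration event $\cE_{bpi}$ applied to $\pi^\star$ and to $\pi$, feasibility of $\widehat{p}^{\pi^\star,k}$ in the supremum defining $\underline{V}_1^{k}$ via Lemma~\ref{lem:PRINCIPLE-optimal-never-elim}, and the value constraint $(\widehat{p}^{\pi,k})^\top\widehat{r}^{k}\ge\underline{V}_1^{k}$ from $\widehat{p}^{\pi,k}\in\Omega^{k}$) to reach $\Delta(\pi)\le 4\sqrt{2^{-k}H\beta^{bpi}(t_k,\delta/2)}$; for the $\varepsilon$ part it appeals to Lemma~\ref{lem:new-PRINCIPLE-final-phase} and $k\le\kappa_f$, whereas you argue directly from the violated stopping condition at phase $k$, which is the same computation.
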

\begin{proof}
Let $\pi^\star$ be any optimal policy. Then we have
\begin{align*}
 V_1^\star - \sqrt{\beta^{bpi}(t_k,\delta/2) \sum_{s,a,h}\frac{\widehat{p}^{\pi^\star,k}_h(s,a)^2}{n_h^{k}(s,a)}} &\stackrel{(a)}{\leq} (\widehat{p}^{\pi^\star,k})^\top \widehat{r}^{k} \\
 &\stackrel{(b)}{\leq}  \sup_{\substack{\widehat{\rho} \in \Omega(\widehat{p}^{k}),\\ \max\limits_{h,s,a} \widehat{\rho}_h(s,a)/n_h^{k}(s,a) \leq 2^{-k}}} \widehat{\rho}^\top \widehat{r}^{k} \\ 
 &= \underline{V}_1^{\star, k} + \sqrt{2^{2-k}H\beta^{bpi}(t_k,\delta/2)}\\
 &\stackrel{(c)}{\leq} (\widehat{p}^{\pi,k})^\top \widehat{r}^{k} + \sqrt{2^{2-k}H\beta^{bpi}(t_k,\delta/2)}\\
 &\stackrel{(d)}{\leq} V_1^\pi + \sqrt{\beta^{bpi}(t_k,\delta/2) \sum_{s,a,h}\frac{\widehat{p}^{\pi,k}_h(s,a)^2}{n_h^{k}(s,a)}}+ \sqrt{2^{2-k}H\beta^{bpi}(t_k,\delta/2)},
\end{align*}
where (a) uses the event $\cE_{bpi}$ for $\pi^\star$, (b) uses the definition of $\Omega^k$ along with Lemma \ref{lem:PRINCIPLE-optimal-never-elim} which gives that $\widehat{p}^{\pi^\star,k} \in \Omega^k$, (c) uses our assumption that $\widehat{p}^{\pi,k} \in \Omega^k$ and (d) uses the event $\cE_{bpi}$ for policy $\pi$.  Rewriting the inequality above we get that
\begin{align}\label{ineq:elimination-PRINCIPLE}
    \Delta(\pi) &= V_1^{\star} - V_1^\pi \nonumber\\
    &\leq \sqrt{\beta^{bpi}(t_k,\delta/2) \sum_{s,a,h}\frac{\widehat{p}^{\pi^\star,k}_h(s,a)^2}{n_h^{k}(s,a)}} + \sqrt{\beta^{bpi}(t_k,\delta/2) \sum_{s,a,h}\frac{\widehat{p}^{\pi,k}_h(s,a)^2}{n_h^{k}(s,a)}}+ \sqrt{2^{2-k}H\beta^{bpi}(t_k,\delta/2)} \nonumber\\
     &\leq 2\sqrt{2^{-k}H\beta^{bpi}(t_k,\delta/2)} + \sqrt{2^{2-k}H\beta^{bpi}(t_k,\delta/2)} = 4\sqrt{2^{-k}H\beta^{bpi}(t_k,\delta/2)},
\end{align}
where the last inequality uses the fact that $\widehat{p}^{\pi^\star,k} \in \Omega^k$ by Lemma \ref{lem:PRINCIPLE-optimal-never-elim} and that $\widehat{p}^{\pi,k} \in \Omega^k$ by assumption.
Therefore, using a crude bound $t_k \leq \tau$ we get that
\begin{align*}
    2^{k} \leq  \frac{16H\beta^{bpi}(\tau,\delta/2)}{\Delta(\pi)^2}.
\end{align*}
Combining the result above with Lemma \ref{lem:new-PRINCIPLE-final-phase} and the fact that $k \leq \kappa_f$ yields the final result.
\end{proof}

\subsubsection{Upper bound on the phase length }

\begin{lemma}\label{lem:new-PRINCIPLE-phase-length}
Let $T_k$ denote the number of episodes played by PRINCIPLE during phase $k\geq 1$. Then we have
\begin{align*}
T_k &\leq 256H\beta^{bpi}(\tau,\delta/2)k \varphi^\star\bigg(\bigg[\sup_{\pi\in \Pi} \frac{p^\pi_h(s,a)}{\max(\epsilon, \Delta(\pi))^2} \bigg]_{h,s,a} \bigg) \\& \quad + 48k\sqrt{H\beta^{bpi}(t_{k-1}+ SAH2^{k-1},\delta/2) 2^{k}}\varphi^\star(\mathds{1})\\
&\quad + \widetilde{\cO}\bigg(k\varphi^\star(\mathds{1}) SAH^{2} \big(\log(4(k+1)^2/\delta) + S \big) \bigg).
\end{align*}
\end{lemma}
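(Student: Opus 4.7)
The plan is to combine the guarantee of \covalg{} provided by the event $\cE_{cov}$ with a careful upper bound on $\varphi^\star(c^k)$. First, under $\cE_{cov}$, the call to \textsc{CovGame}$(c^k, \delta/4(k+1)^2)$ in phase $k \geq 1$ terminates after at most
\[
T_k \leq 64 m_k \varphi^\star(c^k) + \widetilde{O}\big(m_k \varphi^\star(\mathds{1}) SAH^2 (\log(4(k+1)^2/\delta) + S)\big)
\]
episodes. Because the clipping in the definition of $c^k$ (line~9 of Algorithm~\ref{alg:PRINCIPLE}) forces $c_h^k(s,a) \in [0, 2^k]$, we have $m_k \leq k$, so the whole task reduces to bounding $\varphi^\star(c^k)$. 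The additive $\widetilde{O}$ term then matches the last term of the lemma statement.

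Next, I would split the target $c^k$ into its two additive pieces. By monotonicity of $\varphi^\star$ and the definition of $c^k$,
\[
c_h^k(s,a) \leq 2^k \sup_{\widehat \rho \in \Omega^{k-1}} \widehat \rho_h(s,a) + 2^{k+1}\sqrt{H\beta^{bpi}(t_{k-1}+ SAH2^k,\delta/2)\, 2^{1-k}},
\]
so Lemma~\ref{lem:flow-linear} yields
\[
\varphi^\star(c^k) \leq 2^k \varphi^\star\!\big([\sup_{\widehat \rho \in \Omega^{k-1}} \widehat \rho_h(s,a)]_{h,s,a}\big) + \sqrt{2^{k+3}\,H\beta^{bpi}(t_{k-1}+SAH2^k,\delta/2)}\cdot\varphi^\star(\mathds{1}).
\]
The second term, once multiplied by $64k$, gives the middle term of the statement (up to numerical constants).

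The core of the proof is to control the remaining quantity $\Phi_k := 2^k \varphi^\star([\sup_{\widehat \rho \in \Omega^{k-1}} \widehat \rho_h(s,a)]_{h,s,a})$ by the ``gap-weighted'' flow appearing in the statement. By monotonicity of $\varphi^\star$ it suffices to prove pointwise that
\[
2^k \sup_{\widehat \rho \in \Omega^{k-1}} \widehat \rho_h(s,a) \;\leq\; 4 H \beta^{bpi}(\tau,\delta/2) \cdot \sup_{\pi \in \Pi} \frac{p^\pi_h(s,a)}{\max(\epsilon, \Delta(\pi))^2} + \text{(absorbable)}.
\]
Fix $(h,s,a)$ and let $\widehat\rho^\star \in \Omega^{k-1}$ attain the supremum, with corresponding stochastic policy $\pi^\star$, so that $\widehat\rho^\star = \widehat p^{\pi^\star,k-1}$. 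Applying Lemma~\ref{lem:new-PRINCIPLE-elimination-phase} to $\pi^\star$ at phase $k-1$ gives $2^k \leq 32 H \beta^{bpi}(\tau,\delta/2)/\max(\epsilon,\Delta(\pi^\star))^2$. To replace $\widehat p^{\pi^\star,k-1}_h(s,a)$ by $p^{\pi^\star}_h(s,a)$ I would invoke Theorem~\ref{thm:new-concentration-BPI} (second part) with the indicator reward $\widetilde r = \mathds{1}_{(h,s,a)}$. Combining these two inequalities and taking a supremum over $\pi$ on the right-hand side yields the claim up to an additive error that has the same scaling as the middle term and can thus be absorbed into it.

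The main obstacle is in the last step: the concentration error supplied by Theorem~\ref{thm:new-concentration-BPI} scales as $\sqrt{\beta^p \sum_{h',s',a'}(p^{\pi^\star})^2/n^{k-1}}$, which is small only if $p^{\pi^\star}$ itself enjoys a coverage bound of the form $p^{\pi^\star}_{h'}(s',a') \lesssim n^{k-1}_{h'}(s',a')/2^{k-1}$. The definition of $\Omega^{k-1}$ only gives this for $\widehat p^{\pi^\star,k-1}$, so one needs a short bootstrap: use the coverage constraint on $\widehat\rho^\star$ together with the indicator-reward concentration once more to transfer it from $\widehat p^{\pi^\star,k-1}$ to $p^{\pi^\star}$, yielding an error of order $\sqrt{H\beta^{bpi}\, 2^{-k}}$, which matches (after multiplication by $2^k$ and $64k$) the middle term already present in the lemma.
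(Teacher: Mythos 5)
Your architecture is the same as the paper's: you invoke the good event for \textsc{CovGame} and $m_k\leq k$, split $c^k$ by sublinearity of $\varphi^\star$ (Lemma~\ref{lem:flow-linear}) into the $\sqrt{2^k}\,\varphi^\star(\mathds{1})$ piece and the piece driven by $\Omega^{k-1}$, and then, for the policy $\pi^\star$ attaining $\sup_{\widehat\rho\in\Omega^{k-1}}\widehat\rho_h(s,a)$, combine Lemma~\ref{lem:new-PRINCIPLE-elimination-phase} at phase $k-1$ (giving $2^{k}\leq 32H\beta^{bpi}(\tau,\delta/2)/\max(\epsilon,\Delta(\pi^\star))^2$) with the indicator-reward concentration to trade $\widehat p^{\pi^\star,k-1}_h(s,a)$ for $p^{\pi^\star}_h(s,a)$. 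All of that is exactly what the paper does.

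The one place where your argument does not close as written is the ``bootstrap'' in your last paragraph. You correctly identify that $\Omega^{k-1}$ only constrains $\widehat p^{\pi^\star,k-1}_\ell(s',a')/n^{k-1}_\ell(s',a')$, not $p^{\pi^\star}_\ell(s',a')/n^{k-1}_\ell(s',a')$, but your proposed fix --- ``use the indicator-reward concentration once more to transfer the coverage from $\widehat p^{\pi^\star,k-1}$ to $p^{\pi^\star}$'' --- is circular: that concentration inequality (second statement of Theorem~\ref{thm:new-concentration-BPI}) has $\sum_{\ell,s',a'} p^{\pi^\star}_\ell(s',a')^2/n^{k-1}_\ell(s',a')$ on its right-hand side, which is precisely the quantity you are trying to control. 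The paper sidesteps the issue entirely: in its proof the deviation is bounded by $\sqrt{\beta^{bpi}(t_{k-1},\delta/2)\sum_{\ell,s',a'}\widehat p^{\pi^\star,k-1}_\ell(s',a')^2/n^{k-1}_\ell(s',a')}$, i.e.\ with the \emph{empirical} visitation probabilities in the sum (this form follows from Lemma~\ref{lem:simulation} with the roles of the two MDPs swapped, as in the first part of Theorem~\ref{thm:new-concentration-BPI}), and that sum is immediately at most $H2^{1-k}$ by the constraint defining $\Omega^{k-1}$ --- no transfer to the true $p^{\pi^\star}$ is needed. Your route can be salvaged by a self-bounding argument (set $X:=\max_{\ell,s',a'}p^{\pi^\star}_\ell/n^{k-1}_\ell$, bound $p^{\pi^\star}\leq \widehat p^{\pi^\star,k-1}+\sqrt{\beta^{bpi}HX}$ and solve the resulting quadratic inequality in $X$, using that $n^{k-1}_\ell(s',a')\gtrsim\sqrt{H\beta^{bpi}2^{k}}$ from the definition of $c^{k-1}$), but you would need to spell this out; as stated the step is a genuine gap.
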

\begin{proof}
Define $m_k = \log_2\big(\frac{\max_{s,a,h}c_h^{k}(s,a)}{\min_{s,a,h}c_h^{k}(s,a)\vee1}\big) \vee 1$. Under the good event we have 
\begin{align}\label{ineq:phase-length-1}
    T_k &\leq  64m_k \varphi^\star(c^{k}) + \widetilde{\cO}\bigg(m_k\varphi^\star(\mathds{1}) SAH^{2} \big(\log(4(k+1)^2/\delta) + S \big) \bigg) \nonumber\\
    &{\leq} 64k \varphi^\star(c^{k}) + \widetilde{\cO}\bigg(k\varphi^\star(\mathds{1}) SAH^{2} \big(\log(4(k+1)^2/\delta) + S \big) \bigg),
\end{align}
where the last inequality uses the fact that for all $(h,s,a), c_h^{k}(s,a)\leq 2^k$. Now we simplify the expression of $\varphi^\star(c^{k})$ as follows
\begin{align}\label{ineq:flow-phase-1}
  \varphi^\star(c^{k}) &= \varphi^\star\bigg( \bigg[2^{k}\min\big(\sup_{\widehat{\rho}\in \Omega^{k-1}} \widehat{\rho}_h(s,a) + 2\sqrt{H\beta^{bpi}(t_{k-1}+ SAH2^{k-1},\delta/2) 2^{1-k}},\ 1 \big) \bigg]_{h,s,a} \bigg) \nonumber\\
  &\leq \varphi^\star\bigg( \bigg[\sup_{\substack{\pi\in \Pi^S: \\
  \widehat{p}^{\pi, k-1}\in \Omega^{k-1}} } 2^k \widehat{p}_h^{\pi, k-1}(s,a) + 2\sqrt{H\beta^{bpi}(t_{k-1}+ SAH2^{k-1},\delta/2) 2^{k+1}}\bigg]_{h,s,a} \bigg),
\end{align}
where we have used that $\varphi^\star(c) \leq \varphi^\star(c')$ if $ \forall (h,s,a)\ c_h(s,a) \leq  c_h'(s,a)$. Now fix a policy $\pi$ in the set $\{\pi\in \Pi^S: \widehat{p}^{\pi, k-1}\in \Omega^{k-1}\}$. Using the event $\cE_{bpi}$ for the rewards $\widetilde{r}_\ell(s',a') = \mathds{1}\big((\ell, s',a') = (h,s,a)\big)$ we have that for all $(h,s,a)$
\begin{align*}
    2^k\widehat{p}^{\pi, k-1}_h(s,a)  &\leq 2^k p^{\pi}_h(s,a)+ 2^k\sqrt{\beta^{bpi}(t_{k-1},\delta/2) \sum_{s',a',\ell}\frac{\widehat{p}^{\pi, k-1}_\ell(s',a')^2}{n_\ell^{k-1}(s',a')}} \\
    &\stackrel{(a)}{\leq} 2^k p^{\pi}_h(s,a)+ 2^k\sqrt{\beta^{bpi}(t_{k-1},\delta/2) H 2^{1-k}}\\
    &\leq 2^k p^{\pi}_h(s,a)+ \sqrt{H\beta^{bpi}(t_{k-1}+ SAH2^{k-1},\delta/2) 2^{k+1}}\\
    &\stackrel{(b)}{\leq} \frac{32H\beta^{bpi}(\tau,\delta/2)p^{\pi}_h(s,a)}{\max(\epsilon, \Delta(\pi))^2}+ \sqrt{H\beta^{bpi}(t_{k-1}+SAH2^{k-1},\delta/2) 2^{k+1}},
\end{align*}
where (a) uses that $\max\limits_{s',a',\ell} \frac{\widehat{p}^{\pi, k-1}_\ell(s',a')}{n_\ell^{k-1}(s',a')} \leq 2^{1-k}$ since $\widehat{p}^{\pi, k-1}\in \Omega^{k-1}$ and (b) uses Lemma \ref{lem:new-PRINCIPLE-elimination-phase}. Plugging the inequality above into (\ref{ineq:flow-phase-1}) we get that 
\begin{align}\label{ineq:phase-length-2}
 \varphi^\star(c^{k}) &\leq \varphi^\star\bigg( \bigg[\sup_{\pi\in\Pi^S} \frac{32H\beta^{bpi}(\tau,\delta/2) p^{\pi}_h(s,a)}{\max(\epsilon, \Delta(\pi))^2} + 3\sqrt{H\beta^{bpi}(t_{k-1}+SAH2^{k-1},\delta/2)2^{k+1}}\bigg]_{h,s,a} \bigg)\nonumber\\
 &\leq 32H\beta^{bpi}(\tau,\delta/2) \varphi^\star\bigg( \bigg[\sup_{\pi\in\Pi^S} \frac{p^{\pi}_h(s,a)}{\max(\epsilon, \Delta(\pi))^2}\bigg]_{h,s,a} \bigg)\nonumber \\ & \quad+ 3\sqrt{H\beta^{bpi}(t_{k-1}+SAH2^{k-1},\delta/2)2^{k+1}}\varphi^\star(\mathds{1}),
\end{align}
where we used Lemma \ref{lem:flow-linear} in the last step. Combining (\ref{ineq:phase-length-1}) and (\ref{ineq:phase-length-2}) finishes the proof.
\end{proof}

\subsubsection{Total sample complexity}
\begin{theorem}
With probability at least $1-\delta$, the total sample complexity of PRINCIPLE satisfies
\begin{align*}
\tau &\leq \widetilde{\cO}\bigg( (H^3 \log(1/\delta) + SAH^4) \bigg[\varphi^\star\bigg(\bigg[\sup_{\pi\in \Pi} \frac{p^\pi_h(s,a)}{\max(\epsilon, \Delta(\pi))^2} \bigg]_{h,s,a} \bigg) + \frac{\varphi^\star(\mathds{1})}{\epsilon} +  \varphi^\star(\mathds{1}) \bigg] \bigg),
\end{align*}
where $\widetilde{\cO}$ hides poly-logarithmic factors in $S,A,H, \epsilon, \log(1/\delta)$ and $\varphi^\star(\mathds{1})$ and $\Delta(\pi) := V_1^\star(s_1 ; r) - V_1^{\pi}(s_1 ; r)$ denotes the policy gap of $\pi$.
\end{theorem}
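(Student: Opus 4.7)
The plan is to follow the same template as the PCE sample complexity proof in the preceding subsection, exploiting the three auxiliary lemmas already established: Lemma \ref{lem:new-PRINCIPLE-final-phase} (which bounds the final phase index $\kappa_f$), Lemma \ref{lem:new-PRINCIPLE-elimination-phase} (which shows suboptimal policies are implicitly eliminated after $\widetilde O(H\beta^{bpi}(\tau,\delta/2)/\Delta(\pi)^2)$ rounds of doubling), and Lemma \ref{lem:new-PRINCIPLE-phase-length} (which bounds the per-phase episode count $T_k$). First I would decompose $\tau = d_0 + \sum_{k=1}^{\kappa_f} d_k \leq d_0 + \sum_{k=1}^{\kappa_f} T_k$, where $d_0$ is the burn-in length. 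Under the good event $\cE_{cov}$ applied with $c^0=\mathds{1}$ and $m_0=1$, the burn-in is bounded by $d_0 \leq \widetilde{\cO}(\varphi^\star(\mathds{1})SAH^2(\log(1/\delta)+S))$.

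Next I would substitute the per-phase bound from Lemma \ref{lem:new-PRINCIPLE-phase-length} into the sum. Summing the three terms of the lemma over $k\in\{1,\dots,\kappa_f\}$: the first term contributes a factor $\sum_{k=1}^{\kappa_f} k = \cO(\kappa_f^2)$ times $H\beta^{bpi}(\tau,\delta/2)\varphi^\star([\sup_\pi p^\pi_h(s,a)/\max(\epsilon,\Delta(\pi))^2]_{h,s,a})$; the second contributes $\sum_{k=1}^{\kappa_f} k\sqrt{H\beta^{bpi}(\tau,\delta/2)2^k}\varphi^\star(\mathds{1})$, which is dominated by the $k=\kappa_f$ term (geometric sum) and by Lemma \ref{lem:new-PRINCIPLE-final-phase} satisfies $\sqrt{H\beta^{bpi}(\tau,\delta/2)2^{\kappa_f}} \leq \sqrt{8H^2\beta^{bpi}(\tau,\delta/2)^2/\epsilon^2} = \cO(H\beta^{bpi}(\tau,\delta/2)/\epsilon)$, yielding a term $\widetilde{\cO}(H\beta^{bpi}(\tau,\delta/2)\varphi^\star(\mathds{1})/\epsilon)$; the third contributes $\widetilde{\cO}(\kappa_f^2\varphi^\star(\mathds{1})SAH^2(\log(1/\delta)+S))$.

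Combining these and folding the burn-in into the third term yields a self-referential inequality
\[\tau \leq \widetilde{\cO}\left( H\beta^{bpi}(\tau,\delta/2)\Big[\varphi^\star\big([\sup_\pi p^\pi_h(s,a)/\max(\epsilon,\Delta(\pi))^2]_{h,s,a}\big) + \varphi^\star(\mathds{1})/\epsilon\Big] + \varphi^\star(\mathds{1})SAH^2(\log(1/\delta)+S) \right),\]
where $\kappa_f^2$ and $\log t_k$ get absorbed into $\widetilde{\cO}$ via the logarithmic bound $\kappa_f \leq \log_2(8H\beta^{bpi}(\tau,\delta/2)/\epsilon^2)$ from Lemma \ref{lem:new-PRINCIPLE-final-phase}. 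Substituting $\beta^{bpi}(\tau,\delta/2) = \cO(H^2\log(1/\delta) + SAH^3\log(1+\tau))$ and solving the implicit inequality in $\tau$ (using that the $\log(1+\tau)$ terms can be absorbed, as the dominant terms on the right-hand side grow only polylogarithmically in $\tau$) gives the stated bound.

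The main obstacle is Lemma \ref{lem:new-PRINCIPLE-phase-length}, whose proof is the genuinely novel part of the analysis. The subtlety is converting the target function $c^k_h(s,a) = 2^k\min(\sup_{\widehat\rho\in\Omega^{k-1}}\widehat\rho_h(s,a) + \text{slack},\,1)$, which involves \emph{empirical} state-action distributions over the set $\Omega^{k-1}$ of implicitly-active distributions, into a target expressed in terms of \emph{true} visitation probabilities $p^\pi_h(s,a)$ and true policy gaps $\Delta(\pi)$. This requires: (i) using the concentration event $\cE_{bpi}$ (on the reward $\widetilde r_\ell(s',a') = \mathds{1}\{(\ell,s',a')=(h,s,a)\}$) to pass from $\widehat p^{\pi,k-1}_h(s,a)$ to $p^\pi_h(s,a)$, exploiting that $\max_{\ell,s',a'}\widehat p^{\pi,k-1}_\ell(s',a')/n^{k-1}_\ell(s',a') \leq 2^{1-k}$ for any $\pi$ with $\widehat p^{\pi,k-1}\in\Omega^{k-1}$; (ii) applying Lemma \ref{lem:new-PRINCIPLE-elimination-phase} to any active $\pi$ to conclude that $2^k \leq 16H\beta^{bpi}(\tau,\delta/2)/\max(\epsilon,\Delta(\pi))^2$, thereby producing the policy-gap scaling in the target; and (iii) invoking the subadditivity of $\varphi^\star$ (Lemma \ref{lem:flow-linear}) to separate the dominant ``gap" piece from the slack piece scaling as $\sqrt{H\beta^{bpi}\cdot 2^k}\varphi^\star(\mathds{1})$.
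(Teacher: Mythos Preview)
Your proposal is correct and follows essentially the same approach as the paper's proof: decompose $\tau$ into the burn-in plus per-phase costs, plug in Lemma~\ref{lem:new-PRINCIPLE-phase-length}, sum the three resulting terms over $k\le\kappa_f$, bound $\kappa_f$ and $2^{\kappa_f}$ via Lemma~\ref{lem:new-PRINCIPLE-final-phase}, then solve the self-referential inequality in $\tau$. One minor slip: the total sample complexity is $\tau=\sum_{k=0}^{\kappa_f}T_k$, not $d_0+\sum_{k\ge1}d_k$ (the $d_k$ are the \emph{effective} post-pruning counts used only for statistics, whereas the actual episodes played in phase $k$ is $T_k\ge d_k$); since $d_0=T_0$ your final expression $d_0+\sum_{k\ge1}T_k$ is nonetheless exactly $\tau$, so the argument goes through unchanged.
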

\begin{proof}
We write
\begin{align*}
    \tau &= \sum_{k=0}^{\kappa_f} T_k\\
    &\leq \widetilde{\cO}\bigg(\varphi^\star(\mathds{1})^2 SAH^{2} \big(\log(4/\delta) + S \big) \bigg) + \sum_{k=1}^{\kappa_f} T_k\\
    &\leq  \underbrace{\sum_{k=1}^{\kappa_f} 256H\beta^{bpi}(\tau,\delta/2) k \varphi^\star\bigg(\bigg[\sup_{\pi\in \Pi} \frac{p^\pi_h(s,a)}{\max(\epsilon, \Delta(\pi))^2} \bigg]_{h,s,a} \bigg)}_{:=A} + \underbrace{\sum_{k=1}^{\kappa_f} 48k\sqrt{H\beta^{bpi}(t_{k-1}+ SAH2^{k-1},\delta/2) 2^{k}}\varphi^\star(\mathds{1})}_{:=B}\\
    &+ \underbrace{\widetilde{\cO}\bigg(\sum_{k=1}^{\kappa_f} k\varphi^\star(\mathds{1}) SAH^{2} \big(\log(4(k+1)^2/\delta) + S \big) \bigg)}_{:=C},
\end{align*}
where we have used Lemma \ref{lem:new-PRINCIPLE-phase-length}. Now we bound each term separately. First note that
\begin{align*}
    A &\leq 256H\beta^{bpi}(\tau,\delta/2)  \varphi^\star\bigg(\bigg[\sup_{\pi\in \Pi} \frac{p^\pi_h(s,a)}{\max(\epsilon, \Delta(\pi))^2} \bigg]_{h,s,a} \bigg) \kappa_f^2\\
    &\stackrel{(a)}{\leq} 256H\beta^{bpi}(\tau,\delta/2) \varphi^\star\bigg(\bigg[\sup_{\pi\in \Pi} \frac{p^\pi_h(s,a)}{\max(\epsilon, \Delta(\pi))^2} \bigg]_{h,s,a} \bigg) \log_2^2\big(8H\beta^{bpi}(\tau,\delta/2)/\epsilon^2\big)\\
    &\stackrel{(b)}{\leq}\cO\bigg([H^3\log(1/\delta)+SAH^4 \log(1+\tau)]\varphi^\star\bigg(\bigg[\sup_{\pi\in \Pi} \frac{p^\pi_h(s,a)}{\max(\epsilon, \Delta(\pi))^2} \bigg]_{h,s,a} \bigg)\log_2^2\big(8H\beta^{bpi}(\tau,\delta/2)/\epsilon^2\big)  \bigg),
\end{align*}
where (a) uses Lemma \ref{lem:new-PRINCIPLE-final-phase} and (b) uses the definition of $\beta^{bpi}$. Similarly
\begin{align*}
    B&\leq 48\sqrt{H\beta^{bpi}(\tau+ SAH2^{\kappa_f-1},\delta/2) 2^{\kappa_f}}\varphi^\star(\mathds{1}) \kappa_f^2\\
    &\stackrel{(a)}{\leq} 48\sqrt{\frac{4H^2\beta^{bpi}(\tau+ SAH2^{\kappa_f-1},\delta/2)\beta^{bpi}(\tau,\delta/2)}{\epsilon^2}}\varphi^\star(\mathds{1})\log_2^2\big(8H\beta^{bpi}(\tau,\delta/2)/\epsilon^2\big)\\
    &\leq \frac{48H}{\epsilon}\beta^{bpi}(\tau+ SAH2^{\kappa_f-1},\delta/2)\varphi^\star(\mathds{1})\log_2^2\big(8H\beta^{bpi}(\tau,\delta/2)/\epsilon^2\big)\\
    &\stackrel{(b)}{\leq} \cO\bigg(\frac{\varphi^\star(\mathds{1})}{\epsilon} \bigg[H^3\log(1/\delta)+SAH^4 \log\bigg(1+\tau +\frac{4SAH^2 \beta^{bpi}(\tau,\delta/2)}{\epsilon^2}\bigg)\bigg] \log_2^2\big(8H\beta^{bpi}(\tau,\delta/2)/\epsilon^2\big)  \bigg),
\end{align*}
where (a) and (b) use Lemma \ref{lem:new-PRINCIPLE-final-phase}. Finally
\begin{align*}
    C&\leq \widetilde{\cO}\bigg(\varphi^\star(\mathds{1}) SAH^{2} \big(\log(4(\kappa_f+1)^2/\delta) + S \big)\kappa_f^2 \bigg)\\
    &\leq  \widetilde{\cO}\bigg(\varphi^\star(\mathds{1}) SAH^{2} \big[\log\big(\frac{4\log_2^2\big(8H\beta^{bpi}(\tau,\delta/2)/\epsilon^2\big)}{\delta}\big) + S \big]\log_2^2\big(8H\beta^{bpi}(\tau,\delta/2)/\epsilon^2\big) \bigg),
\end{align*}
where we have used Lemma \ref{lem:new-PRINCIPLE-final-phase} again. Combining the three inequalities with the definition of $\beta^{bpi}$ we get that
\begin{align*}
    \tau &\leq \cO\bigg( (H^3 \log(1/\delta) + SAH^4) \bigg[\varphi^\star\bigg(\bigg[\sup_{\pi\in \Pi} \frac{p^\pi_h(s,a)}{\max(\epsilon, \Delta(\pi))^2} \bigg]_{h,s,a} \bigg) + \frac{\varphi^\star(\mathds{1})}{\epsilon} +  \varphi^\star(\mathds{1})\bigg]\\
    &\qquad \times \textrm{polylog}(\tau,S,A,H,\varphi^\star(\mathds{1}), \epsilon,\log(1/\delta)) \bigg).
\end{align*}
Solving for $\tau$ yields
\begin{align*}
    \tau &\leq \widetilde{\cO}\bigg( (H^3 \log(1/\delta) + SAH^4) \bigg[\varphi^\star\bigg(\bigg[\sup_{\pi\in \Pi} \frac{p^\pi_h(s,a)}{\max(\epsilon, \Delta(\pi))^2} \bigg]_{h,s,a} \bigg) + \frac{\varphi^\star(\mathds{1})}{\epsilon} +  \varphi^\star(\mathds{1}) \bigg] \bigg),
\end{align*}
where $\widetilde{\cO}$ hides poly-logarithmic factors in $S,A,H, \epsilon, \log(1/\delta)$ and $\varphi^\star(\mathds{1})$.
\end{proof}

\revision{
\begin{remark}[Reachability]\label{rm:reachability_bpi}
 While for the PCE algorithm we were able to reduce the sample complexity by ignoring states that are hard to reach (which also allows using PCE when Assumption~\ref{asm:reachability} is violated), we did not manage to propose a similar improvement for PRINCIPLE. This is because in reward-free exploration it is sufficient to guarantee that the \emph{true confidence intervals} that depend on the visitation probabilities \emph{under the true MDP} are small, i.e., $\sqrt{\betarf(t_k,\delta)\sum_{(h,s,a)}\frac{p_h^{\pi}(s,a)^2}{n_h^k(s,a)}} \leq 2^k$. This allows us to filter out all $(h,s,a)$ for which $\sup_{\pi} p_h^\pi(s,a) \leq \cO(\epsilon/SH^2)$, by arguing that their contribution to the true confidence interval is negligible. In contrast, the analysis of PRINCIPLE crucially relies on concentrating the values of policies by minimizing \emph{their empirical confidence intervals}, i.e., $\sqrt{\beta^{bpi}(t_k,\delta)\sum_{(h,s,a)}\frac{\widehat{p}_h^{\pi, k}(s,a)^2}{n_h^k(s,a)}} \leq 2^k$. We do not see a straightforward way to ignore the contribution of hard-to-reach states to these empirical confidence intervals. 
\end{remark}}

\subsection{Comparison with other BPI-algorithms}\label{app:comparison_bpi}

In this section we compare PRINCIPLE with other algorithms for Best-Policy Identification algorithms that enjoy problem-dependent guarantees, namely PEDEL \citep{Wagenmaker22linearMDP} and MOCA \citep{wagenmaker21IDPAC}. Recalling that $\Delta(\pi) = V_1^\star(s_1) - V_1^{\pi}(s_1)$ denotes the policy gap of $\pi$, we first note that by Theorem \ref{thm:PRINCIPLE-complexity}, the leading term in the sample complexity of PRINCIPLE in the small $(\epsilon, \delta)$ regime is $\textrm{PRINCIPLE}(\cM, \epsilon)\log(1/\delta)$ where
\begin{align*}
    \textrm{PRINCIPLE}(\cM, \epsilon) &:= H^3\varphi^\star\left(\left[\sup_{\pi\in \PiS} \frac{p^\pi_h(s,a)}{\max(\epsilon, \Delta(\pi) )^2} \right]_{h,s,a} \right).
\end{align*} 
We will now compare this term with the leading terms in the sample complexities of PEDEL and MOCA respectively, in the same asymptotic regime.
\subsubsection{Comparison with PEDEL}
Define the minimum policy gap 
\begin{align*}
    \Delta_{\min}(\PiD):=
    \begin{cases}
         \min_{\pi\neq \pi^\star} \Delta(\pi),\quad \textrm{if the optimal policy $\pi^\star$ is unique}\\
         0,\quad \textrm{otherwise}.
    \end{cases}
\end{align*}
Then instantiating Theorem 1 from \cite{Wagenmaker22linearMDP} for our setting of tabular MDPs (i.e. with $d = SAH$ and $\Pi = \PiD$), we see that the sample complexity achieved by PEDEL satisfies
\begin{align*}
   &\tau \leq \widetilde{O}\bigg(\textrm{PEDEL}(\cM, \epsilon) (\log(1/\delta) + SH) + \textrm{poly}(SAH, \log(1/\epsilon), \log(1/\delta)) \bigg)\\
&\textrm{where}\quad \textrm{PEDEL}(\cM, \epsilon) := H^4\sum_{h=1}^H \min_{\rho \in \Omega} \max_{\pi\in\PiD} \sum_{s,a} \frac{p_h^\pi(s,a)^2/\rho_h(s,a)}{\max(\epsilon, \Delta(\pi), \Delta_{\min}(\PiD) )^2}.
\end{align*}
Therefore the leading term PEDEL's complexity in the small $(\epsilon, \delta)$ regime is $\textrm{PEDEL}(\cM, \epsilon)\log(1/\delta)$. The next lemma shows that, up to $H$ factors, this rate is always better than the complexity measure achieved by PRINCIPLE. 

\begin{lemma}\label{lem:PRINCIPLEvsPEDEL}
    For any MDP $\cM$, it holds that $\textrm{PEDEL}(\cM, \epsilon) \leq H^2 \textrm{PRINCIPLE}(\cM, \epsilon)$.
\end{lemma}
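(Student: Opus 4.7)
The plan is to exhibit a single candidate distribution $\rho^\star \in \Omega$ that nearly matches the PEDEL expression, namely the minimizer in the definition of the PRINCIPLE complexity. Let
\[
\rho^\star \in \argmin_{\rho \in \Omega} \max_{h,s,a} \sup_{\pi\in\PiS} \frac{p^\pi_h(s,a)/\max(\epsilon, \Delta(\pi))^2}{\rho_h(s,a)},
\]
so that by the definition of $\varphi^\star$, for every $h, s, a$ and every $\pi\in\PiD \subseteq \PiS$,
\[
\frac{p^\pi_h(s,a)}{\rho^\star_h(s,a)\, \max(\epsilon, \Delta(\pi))^2} \;\leq\; \frac{\textrm{PRINCIPLE}(\cM, \epsilon)}{H^3}.
\]

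Next, I would plug $\rho^\star$ into each inner minimum of the PEDEL expression. Fix $h\in[H]$ and $\pi\in\PiD$. Since adding $\Delta_{\min}(\PiD)$ inside $\max$ only increases the denominator, dropping it yields
\[
\sum_{s,a} \frac{p_h^\pi(s,a)^2/\rho^\star_h(s,a)}{\max(\epsilon, \Delta(\pi), \Delta_{\min}(\PiD) )^2}
\;\leq\; \sum_{s,a} p_h^\pi(s,a) \cdot \frac{p_h^\pi(s,a)/\max(\epsilon, \Delta(\pi))^2}{\rho^\star_h(s,a)}.
\]
Applying the pointwise bound from the previous display, the inner factor is at most $\textrm{PRINCIPLE}(\cM, \epsilon)/H^3$, and since $\sum_{s,a} p_h^\pi(s,a) = 1$, the right-hand side is at most $\textrm{PRINCIPLE}(\cM, \epsilon)/H^3$. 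This bound is uniform in $\pi$, so the inner $\max_{\pi \in \PiD}$ is controlled as well.

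Finally, I would sum over $h\in[H]$ and multiply by the leading $H^4$:
\[
\textrm{PEDEL}(\cM, \epsilon) \;\leq\; H^4 \sum_{h=1}^H \frac{\textrm{PRINCIPLE}(\cM, \epsilon)}{H^3} \;=\; H^2\, \textrm{PRINCIPLE}(\cM, \epsilon),
\]
which is the claim. There is no real obstacle here: the key trick is the factorization $p_h^\pi(s,a)^2 = p_h^\pi(s,a) \cdot p_h^\pi(s,a)$, which lets one of the factors be absorbed into the pointwise PRINCIPLE bound while the other sums to one since $p_h^\pi(\cdot,\cdot)$ is a distribution on $\cS\times\cA$ for each $h$. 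The only subtlety is making sure the supremum over $\PiS$ in the PRINCIPLE complexity covers the $\PiD$-indexed maximum appearing in PEDEL, which is immediate from $\PiD \subseteq \PiS$.
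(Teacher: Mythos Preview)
Your proof is correct and follows essentially the same approach as the paper: both arguments factor $p_h^\pi(s,a)^2 = p_h^\pi(s,a)\cdot p_h^\pi(s,a)$, absorb one factor into the pointwise ratio bounded by the PRINCIPLE complexity, and sum the other factor to $1$. The only cosmetic difference is that you fix the minimizer $\rho^\star$ of the PRINCIPLE objective up front and plug it into each inner $\min_{\rho}$ of PEDEL, whereas the paper keeps a generic $\rho$ throughout the chain of inequalities and only identifies the resulting $\min_\rho \max_{h,s,a}$ with $\varphi^\star$ at the end; both routes yield the same $H^2$ factor.
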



\begin{proof}
 Fix any $h\in [H], \rho\in \Omega, \pi\in \PiD$. Then we have  
\[
    \sum_{s,a} \frac{p_h^\pi(s,a)^2}{\rho_h(s,a)} \leq \bigg(\max_{s,a,h} \frac{p_h^\pi(s,a)}{\rho_h(s,a)}\bigg) \sum_{s,a} p_h^\pi(s,a)=  \max_{s,a,h} \frac{p_h^\pi(s,a)}{\rho_h(s,a)}.
\]
 Therefore for all $h\in [H]$, using that $\PiD\subset \PiS$ we have
 \begin{align*}
    \min_{\rho \in \Omega} \max_{\pi\in\PiD} \sum_{s,a} \frac{p_h^\pi(s,a)^2/\rho_h(s,a)}{\max(\epsilon, \Delta(\pi), \Delta_{\min}(\PiD) )^2} &\leq \min_{\rho \in \Omega} \max_{\pi\in\PiD} \max_{s,a,h} \frac{  p_h^\pi(s,a)/\rho_h(s,a) }{\max(\epsilon, \Delta(\pi), \Delta_{\min}(\PiD))^2}\\
    &= \min_{\rho \in \Omega} \max_{s,a,h} \max_{\pi\in\PiD}  \frac{ p_h^\pi(s,a)/\rho_h(s,a) }{\max(\epsilon, \Delta(\pi), \Delta_{\min}(\PiD))^2}\\
    &\leq \min_{\rho \in \Omega} \max_{s,a,h} \sup_{\pi\in\PiS} \frac{p_h^\pi(s,a)}{\rho_h(s,a) \max(\epsilon, \Delta(\pi))^2}\\
    &=  \varphi^\star\bigg( \bigg[\sup_{\pi\in\PiS} \frac{p_h^\pi(s,a)}{\max(\epsilon, \Delta(\pi))^2}\bigg]_{h,s,a} \bigg).
 \end{align*}
 Therefore
 \begin{align*}
    \textrm{PEDEL}(\cM, \epsilon) &:= H^4\sum_{h=1}^H \min_{\rho \in \Omega} \max_{\pi\in\PiD} \sum_{s,a} \frac{p_h^\pi(s,a)^2/\rho_h(s,a)}{\max(\epsilon, \Delta(\pi), \Delta_{\min}(\PiD) )^2}\\
    &\leq  H^5\varphi^\star\bigg( \bigg[\sup_{\pi\in\PiS} \frac{p_h^\pi(s,a)}{\max(\epsilon, \Delta(\pi))^2}\bigg]_{h,s,a} \bigg)\\
    &= H^2  \textrm{PRINCIPLE}(\cM, \epsilon).
 \end{align*}
\end{proof}

\subsubsection{Comparison with MOCA}

Let us define the complexity functional,
\begin{align*}
  \textrm{MOCA}(\cM, \epsilon) &:= H^2\sum_{h=1}^H \min_{\rho\in\Omega} \max_{s,a} \frac{1}{\rho_h(s,a)} \min\big(\frac{1}{\widetilde{\Delta}_h(s,a)^2},\ \frac{W_h(s)^2}{\epsilon^2}\big)\ \\& \quad + \frac{H^4 \big|(h,s,a):\ \widetilde{\Delta}_h(s,a)\leq 3\epsilon/W_h(s) \big|}{\epsilon^2},
\end{align*} 
where $W_h(s) := \sup_{\pi} p_h^\pi(s)$ is the reachability of $(h,s)$ and
\begin{align*}
\widetilde{\Delta}_h(s,a) := 
    \begin{cases}
      \min_{b \neq a} V_h^\star(s) - Q_h^\star(s,b) \quad \textrm{if $a$ is the unique optimal action at $(h,s)$,} \\
      V_h^\star(s) - Q_h^\star(s,a) \quad \textrm{otherwise}
    \end{cases}  
\end{align*}
is the value gap of $(h,s,a)$.
Theorem 1 together with Proposition 2 of \cite{wagenmaker21IDPAC} yield that the stopping time of MOCA satisfies
\begin{align*}
    \tau \leq \widetilde{\cO}\bigg(\textrm{MOCA}(\cM, \epsilon)\log(1/\delta) + \frac{\textrm{poly}\big(SAH, \log(1/\epsilon), \log(1/\delta)\big)}{\epsilon} \bigg).
\end{align*}
Therefore we see that $\textrm{MOCA}(\cM, \epsilon)\log(1/\delta)$ is the dominating term in the sample complexity of MOCA in the regime of small $\epsilon$ and small $\delta$. On the other hand, as stated earlier, the leading term in PRINCIPLE's complexity in that regime is $\textrm{PRINCIPLE}(\cM,\epsilon)\log(1/\delta)$. Therefore we compare $\textrm{MOCA}(\cM, \epsilon)$ with $\textrm{PRINCIPLE}(\cM, \epsilon)$ to assess which algorithm is better in this regime.
\begin{lemma}
Fix any $\Delta \in (0,1]$. There exists an MDP $\cM$ where
\begin{align*}
    \textrm{MOCA}(\cM, \epsilon) = \Omega\bigg(\frac{H^5SA}{\epsilon^2}\bigg)\quad \textrm{while}\  \textrm{PRINCIPLE}(\cM, \epsilon) = \cO\bigg(\frac{H^4SA}{\epsilon \Delta} + \frac{H^4\log(S)\log(A)}{\epsilon^2}\bigg). 
\end{align*}  
\end{lemma}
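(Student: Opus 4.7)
The plan is to exploit the structural asymmetry between the two complexity measures: MOCA's second term counts \emph{every} triplet $(h,s,a)$ for which $\widetilde\Delta_h(s,a) \leq 3\epsilon/W_h(s)$, so it pays a full $SAH$ overhead whenever value gaps are small or reachabilities are low throughout the MDP, whereas PRINCIPLE's $\varphi^\star(\cdot)$ only cares about triplets visited by policies that are not already $\Delta$-suboptimal. I would therefore construct an MDP where nearly all triplets have zero value gap but policy gaps live only at a single layer. Concretely: let $s_1$ transition deterministically (regardless of action) to a single live state $s^{(2)}$ that persists through layers $2, \ldots, H-1$ via identity transitions with zero reward under every action; the remaining $S-2$ states are dead, reached from $s_1$ with vanishing probability (to respect Assumption~\ref{asm:reachability}), absorbing, and reward-free; and at layer $H$ the live state has a unique optimal action $a^*$ with reward $1$ while all other actions yield reward $1-\Delta$. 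For $\Delta$ large enough that this calibration does not fit into $[0,1]$, I would widen the construction to $\lceil 1/\Delta\rceil$ parallel live chains to tune the policy gap to exactly $\Delta$.

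For MOCA I would observe that at every dummy-layer triplet $(h, s^{(h)}, a)$ with $h \in \{2, \ldots, H-1\}$ all actions are value-equivalent so $\widetilde\Delta_h(s^{(h)}, a) = 0$, and at every dead-state triplet the reachability $W_h(s) \to 0$ makes the MOCA threshold vacuous. Thus $\big|\{(h,s,a) : \widetilde\Delta_h(s,a) \leq 3\epsilon/W_h(s)\}\big| = \Omega(SAH)$, and the second term of $\textrm{MOCA}(\cM, \epsilon)$ alone gives $\Omega(H^5 SA/\epsilon^2)$; a routine computation shows that the first term is of order $H^3 A/\epsilon^2 + H^2 A/\Delta^2$, which is lower order. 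For PRINCIPLE I would use the stochastic-minimum-flow reformulation (Appendix~\ref{app:flows}): since $\Delta(\pi) \in \{0, \Delta, 1\}$ for every deterministic $\pi$ (depending on whether it is optimal, deviates only at layer $H$, or deviates earlier and falls into the dead stratum), the target $c_h(s,a) = \sup_\pi p_h^\pi(s,a)/\max(\epsilon, \Delta(\pi))^2$ takes a handful of distinct values, and I would exhibit an explicit exploration distribution $\rho \in \Omega$ that puts most mass along the live path and balances layer-$H$ exploration between $a^*$ and the alternatives in proportion to $\sqrt{c}$.

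The main obstacle will be obtaining the precise $\log(S)\log(A)/\epsilon^2$ dependence in PRINCIPLE's second term rather than the naive $A/\epsilon^2$ that a uniform-over-actions exploration at each dummy layer produces. Because the optimal policies are insensitive to the action choice at dummy states, the effective coverage cost at dummy layers should scale like $\log(A)$ (a best-arm-identification-style amortization across value-equivalent actions) with an extra $\log(S)$ from aggregating over the dead-state stratum, rather than the naive product $SA$; teasing out these logarithmic factors from the minimum-flow formulation is the most delicate step. Once $\varphi^\star(c)$ is bounded, the remaining algebra to verify $H^3 \varphi^\star(c) \leq H^4 SA/(\epsilon\Delta) + H^4\log(S)\log(A)/\epsilon^2$---in particular distinguishing the parameter regime $\Delta \lesssim HS\epsilon$ (where the $SA/(\epsilon\Delta)$ term absorbs everything) from $\Delta \gtrsim HS\epsilon$ (where the logarithmic term is the right budget)---is routine.
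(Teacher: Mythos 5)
There is a genuine gap, and it is exactly at the step you flag as "the most delicate": the $\log(S)\log(A)/\epsilon^2$ term for PRINCIPLE is not obtainable from your construction. In your MDP the dummy layers and the dead states are all reachable by \emph{zero-policy-gap} policies: an optimal policy only needs to play $a^*$ at the live state at layer $H$ and is unconstrained everywhere else. Hence for every triplet $(h,s,a)$ on the live chain the target weight is $\sup_\pi p_h^\pi(s,a)/\max(\epsilon,\Delta(\pi))^2 = 1/\epsilon^2$ for \emph{all} $A$ actions, while any single state-action distribution $\rho\in\Omega$ must give some action mass at most $1/A$ at that state. The min-max form of $\varphi^\star$ then forces $\varphi^\star(c)\geq A/\epsilon^2$, i.e.\ $\mathrm{PRINCIPLE}(\cM,\epsilon)=\Omega(H^3A/\epsilon^2)$, which violates the claimed $\cO\big(H^4SA/(\epsilon\Delta)+H^4\log(S)\log(A)/\epsilon^2\big)$ whenever $\epsilon\ll\Delta/(HS)$ and $A\gg H\log(S)\log(A)$. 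The "BAI-style amortization over value-equivalent actions" you hope for does not exist: $\varphi^\star$ is a pure covering ratio with no elimination structure, so value-equivalent actions each cost a full $1/\epsilon^2$ of coverage.

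The paper's construction avoids this by making the $\log(S)\log(A)$ literal rather than emergent: the zero-policy-gap region is a sub-MDP $\cM_1$ with only $\log(S)$ states and $\log(A)$ actions (reached via action $a_1$, which pays reward $\Delta$), while the full $\Theta(SA)$ bulk $\cM_2$ (zero rewards, deterministic transitions, hence zero value gaps and full reachability --- no vanishing probabilities are needed for the MOCA lower bound) is placed behind action $a_2$ whose immediate regret is $\Delta$. The performance-difference lemma gives $\Delta(\pi)\geq\pi_1(a_2|s_1)\Delta$ while $p_h^\pi(s,a)\leq\pi_1(a_2|s_1)$ inside $\cM_2$, so $\sup_\pi p_h^\pi(s,a)/\max(\epsilon,\Delta(\pi))^2\leq\sup_{x\in[0,1]}4x/(\epsilon+x\Delta)^2=1/(\epsilon\Delta)$, and the crude flow upper bound (term \ding{184} of \eqref{eq:cov-bounds}, using determinism of $\cM_2$) turns the $\Theta(HSA)$ triplets of $\cM_2$ into $H^4SA/(\epsilon\Delta)$ and the $\Theta(H\log(S)\log(A))$ triplets of $\cM_1$ into $H^4\log(S)\log(A)/\epsilon^2$. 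Your MOCA lower-bound reasoning (count triplets with zero value gap) is sound in spirit, but the construction must be reorganized along these lines for the PRINCIPLE side to go through.
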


\begin{proof}
Consider the MDP in figure \ref{fig:PRINCIPLEvsMOCA} which consists of an initial state $s_1$ and two sub-MDPs depending on the action taken at step $h=1$. If the learner takes action $a_1$ it receives a reward $\Delta >0$ and makes a transition to a sub-MDP $\cM_1$ for which $|\cS_1| = \log(S), |\cA_1| = \log(A), H_1 = H-1$ and where the rewards can be anything. On the other hand, if it takes action $a_2$ the learner will receive zero reward and make a transition to a sub-MDP $\cM_2$ for which $|\cS_2| = S-\log(S), |\cA_2| = A, H_2 = H-1$, the rewards are equal to zero everywhere and the transitions are deterministic, i.e. $p(s'|s,a) \in \{0,1\}$ for all $(s,a) \in \cS_2\times\cA_2$.
\begin{figure}[H]\label{fig:PRINCIPLEvsMOCA}
    \includegraphics[width = 0.8\linewidth]{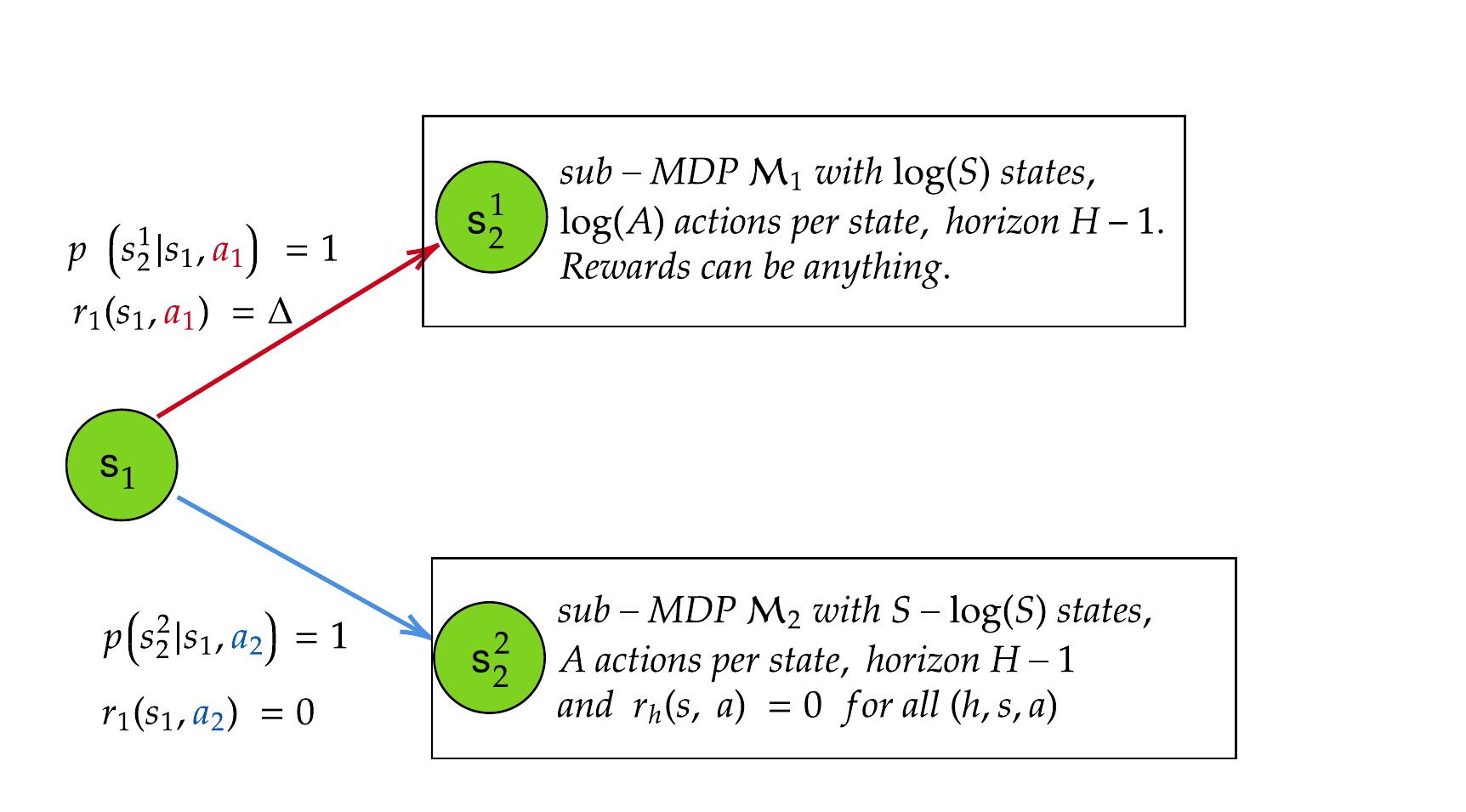}
    \caption{MDP instance with large policy gaps and small value gaps.}
\end{figure}
Note that in this example $\widetilde{\Delta}_h(s,a) = 0$ for all $(h,s,a) \in \cM_2$. Therefore
\begin{align}\label{ineq:MOCA-example}
    \textrm{MOCA}(\cM, \epsilon) &\geq \frac{H^4 \big|(h,s,a):\ \widetilde{\Delta}_h(s,a)\leq 3\epsilon/W_h(s) \big|}{\epsilon^2}\nonumber, \\
    &\geq \frac{H^4 (H-1)(S-\log(S))A}{\epsilon^2}.
\end{align}
On the other hand for all triplets $(h,s,a)$ in the sub-MDP $\cM_2$ we have
\begin{align}\label{ineq:PRINCIPLE-gaps-bound-1}
    \sup_{\pi\in \PiS} \frac{p^\pi_h(s,a)}{\max(\epsilon, \Delta(\pi) )^2} \le \sup_{\pi\in \PiS} \frac{4\pi_1(a_2|s_1)}{(\epsilon + \Delta(\pi) )^2},
\end{align}
where we used that $p^\pi_h(s,a) \le \pi_1(a_2|s_1)$ (since the only path to reach $(h,s,a)$ is by playing action $a_2$ at $s_1$) and that $\max(a,b) \geq (a+b)/2$. Now, by the performance-difference lemma we have
\begin{align*}
    \Delta(\pi) &= \sum_{h,s,a} p_h^{\pi}(s,a)[V_h^\star(s) - Q_h^\star(s,a)] \\
    &\geq p_1^{\pi}(s_1,a_2)[V_1^\star(s_1) - Q_1^\star(s_1,a_2)]  = \pi_1(a_2|s_1)\Delta.
\end{align*}

Plugging this back into (\ref{ineq:PRINCIPLE-gaps-bound-1}), we get
\begin{align*}
    \sup_{\pi\in \PiS} \frac{p^\pi_h(s,a)}{\max(\epsilon, \Delta(\pi) )^2}&\leq \sup_{\pi\in \PiS} \frac{4\pi_1(a_2|s_1)}{(\epsilon + \pi_1(a_2|s_1)\Delta )^2}\\
    &= \sup_{x\in [0,1]} \frac{4x}{(\epsilon + x\Delta )^2} = \frac{1}{\epsilon\Delta} 
\end{align*}
 For triplets $(h,s,a)$ outside of  $\cM_2$ (i.e. either at $s_1$ or in the sub-MDP $\cM_1$ ) we use the crude bound
\begin{align*}
    \sup_{\pi\in \PiS} \frac{p^\pi_h(s,a)}{\max(\epsilon, \Delta(\pi) )^2} \le \frac{ \sup_{\pi\in \PiS} p^\pi_h(s,a)}{\epsilon^2}.
\end{align*}
Therefore
\begin{align}\label{ineq:PRINCIPLE-example}
    &\textrm{PRINCIPLE}(\cM, \epsilon) = H^3\varphi^\star\left(\left[\sup_{\pi\in \PiS} \frac{p^\pi_h(s,a)}{\max(\epsilon, \Delta(\pi) )^2} \right]_{h,s,a} \right)\nonumber\\
    &=  H^3\varphi^\star\left(\left[\sup_{\pi\in \PiS} \frac{p^\pi_h(s,a)}{\max(\epsilon, \Delta(\pi) )^2} (\indi{(h,s,a) \in \cM_2}+ \indi{(h,s,a) \notin \cM_2})\right]_{h,s,a} \right)\nonumber\\
    &\stackrel{(a)}{\leq} H^3\varphi^\star\left(\left[\sup_{\pi\in \PiS} \frac{p^\pi_h(s,a)}{\max(\epsilon, \Delta(\pi) )^2} \indi{(h,s,a) \in \cM_2}\right]_{h,s,a} \right)\nonumber\\
    &\quad + H^3\varphi^\star\left(\left[\sup_{\pi\in \PiS} \frac{p^\pi_h(s,a)}{\max(\epsilon, \Delta(\pi) )^2} \indi{(h,s,a) \notin \cM_2}\right]_{h,s,a} \right)\nonumber\\
    &\leq H^3\varphi^\star\left(\left[\frac{\indi{(h,s,a) \in \cM_2}}{\epsilon \Delta} \right]_{h,s,a} \right) + H^3\varphi^\star\left(\left[\frac{\indi{(h,s,a) \notin \cM_2}\sup_{\pi\in \PiS} p^\pi_h(s,a)}{\epsilon^2} \right]_{h,s,a} \right) \nonumber\\
    &\stackrel{(b)}{\leq} H^3\sum_{(h,s,a) \in \cM_2} \frac{1}{\epsilon \Delta\sup_{\pi\in \PiS} p^\pi_h(s,a) }+ H^3\sum_{(h,s,a) \notin \cM_2} \frac{1}{\epsilon^2}\nonumber \\
    &\stackrel{(c)}{=} \frac{H^3(H-1)(S-\log(S))A}{\epsilon \Delta} + \frac{H^3(H-1)\log(S)\log(A)}{\epsilon^2}
\end{align}
where (a) uses the sub-linearity of the flow from Lemma \ref{lem:flow-linear}, (b) uses the bound on $\phi^\star$ from Lemma \ref{lem:bound-flow-simple} and (c) uses that the sub-MDP $\cM_2$ has deterministic transitions. Combining (\ref{ineq:MOCA-example}) and (\ref{ineq:PRINCIPLE-example}) finishes the proof.
\end{proof}

\section{Estimating State Reachability} \label{app:visitations}

Let $\cA^\Pi$ be a regret minimizer that has a small regret for a (fixed) reward function $r$. If we set this reward function to $r_{h'}^{(h,s)}(s',a') = \ind((s' = s,h'=h))$ for a target pair $(h,s)$ intuitively the regret minimizer will visit as much as possible state $s$ in step $h$ and the total reward collected by the algorithm, $n_h^{t}(s)=\sum_{a \in \cA} n_h^{t}(s,a)$, will be close to $t \times W_h(s)$, where the maximum visitation probability $W_h(s)=\max_{\pi} p_h^{\pi}(s)$ is actually the optimal value function in the MDP with reward function $r^{(h,s)}$. The empirical number of visitations can thus be used to estimate the unknown visitation probability. 

This idea is already at the heart of the initialization phase of the MOCA algorithm, which relies on repeatedly running the Euler algorithm. We propose a slightly simpler version below, that doesn't need any restart and relies on a generic algorithm $\cA^{\Pi}$ satisfying some first-order regret bound scaling with a quantity $\cR_{\delta}^{\Pi}(T)$, as specified in the following theorem. \visitalg($(h,s);\varepsilon_0,\delta)$ outputs a valid confidence interval $[\underline{W}_h(s), \overline{W}_h(s)]$ on the value of $W_h(s)$, which can be further used to eliminate all $(h,s)$ whose maximum visitation probability is (slightly) smaller than a target $\varepsilon_0$.


\begin{algorithm}[!ht]
\caption{\visitalg$((h,s) ; \varepsilon_0,\delta)$}\label{alg:visitations}
\begin{algorithmic}[1]
\STATE \textbf{Input:} Step $h$, state $s$, threshold $\varepsilon_0 > 0$, failure probability $\delta\in(0,1)$, regret minimizer $\cA^{\Pi}$ 
\STATE \textbf{Output:}  An interval $[\underline{W}_h(s), \overline{W}_h(s)]$
\STATE Compute $T = T(\varepsilon_0,\delta) =  \inf \left\{T \in \N : 4\cR^{\Pi}_{\delta/2}(T) + 6\log\left(\frac{4}{\delta}\right) \leq \tfrac{\varepsilon_0}{4} T\right\}$
\STATE Collect $T$ episodes $\{(s_1^{t},a_1^{t},\dots,s_H^{t},a_H^{t})\}_{t \leq T}$ using $\cA^{\Pi}$ with reward $\widetilde{r}_{h'}(s',a') = \ind((s' = s,h'=h))$ and confidence $1-\delta/2$
\STATE Let $n_h^{T}(s) = \sum_{t=1}^{T}\ind(s_h^{t} = s)$ be the number of visits of $(h,s)$ 
\STATE Define $\underline{W}_h(s) = \left(\frac{n_h^{T}(s)}{2T} - \frac{\varepsilon_0}{16}\right) \vee 0$ and $\overline{W}_h(s) =  \left(\frac{2n_h^{T}(s)}{T} + \frac{\epsilon_0}{4}\right) \wedge 1$
\end{algorithmic}
\end{algorithm}


\begin{theorem}\label{thm:estimate-visitations}
Assume that, for all $(h,s)$, when $\cA^{\Pi}$ is run for the reward function $r=r^{(h,s)}$ and confidence $1-\delta$ up to some horizon $T\in \N$, with probability larger than $1-\delta$, 
\begin{align}\label{eq:regret-not-anytime}
      \ \sum_{t=1}^{T} V_1^{\star}\left(s_1 ; r\right) - \sum_{t=1}^{T}V_1^{\pi^{t}}\left(s_1 ; r\right) \leq \sqrt{\cR^{\Pi}_\delta(T) TV^\star(s_1;r)} + \cR^{\Pi}_\delta(T).
\end{align}
For all $(h,s)$, let $[\underline{W}_h(s),\overline{W}_h(s)]$ be the output of \visitalg$(h,s;\varepsilon_0,\delta/(SH))$ and define \[\widehat{\cX} = \left\{(h,s) : \underline{W}_h(s) \geq \frac{\varepsilon_0}{8} \right\}.\] 
With probability $1-\delta$, the following holds: 
\begin{itemize}
 \item For all $(h,s)$, $W_h(s) \in \left[\underline{W}_h(s),\overline{W}_h(s)\right]$
 \item $\left\{(h,s) : W_h(s) \geq \epsilon_0\right\} \subseteq \widehat{\cX} \subseteq \left\{(h,s) : W_h(s) \geq \frac{\varepsilon_0}{8}\right\}$
 \item For all $(h,s) \in \widehat{\cX}$, \ $\overline{W}_h(s) \leq 36 W_h(s)$. 
\end{itemize}
Moreover, the (deterministic) sample complexity necessary to construct $\widehat{\cX}$ is
\[T_{\varepsilon_0}(\delta) := SH \times \inf\left\{T \in \N^\star : T \in \N : 4\cR^{\Pi}_{\delta/(2SH)}(T) + 6\log\left(\frac{4}{\delta}\right) \leq \frac{\varepsilon_0}{4} T\right\}.\]
In particular, using UCBVI as the regret minimizer, we have $T_{\varepsilon_0}(\delta) = \widetilde{\cO}\left(\frac{S^2AH^2\left(\log\left(\frac{SAH}{\delta}\right) + S\right)}{\varepsilon_0}\right)$.
\end{theorem}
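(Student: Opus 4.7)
My plan is to reduce the three stated properties to a two-sided multiplicative concentration of $n_h^T(s)/T$ around $W_h(s)$, derived under a good event that combines the first-order regret hypothesis on $\cA^\Pi$ and a Freedman-type concentration for the visit count. The key observation is that for the indicator reward $\widetilde r^{(h,s)}_{h'}(s',a')=\ind(h'=h,\,s'=s)$ we have $V_1^{\pi}(s_1;\widetilde r^{(h,s)})=p_h^{\pi}(s)$ and $V_1^{\star}(s_1;\widetilde r^{(h,s)})=W_h(s)$, and $n_h^T(s)=\sum_{t=1}^T \sum_{h'}\widetilde r^{(h,s)}_{h'}(s_{h'}^t,a_{h'}^t)$ is exactly the total reward collected by $\cA^\Pi$, which is $[0,1]$-valued per episode.

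Fix $(h,s)$ and write $N_T := \sum_{t=1}^T p_h^{\pi^t}(s)$. At confidence $1-\delta/(2SH)$, hypothesis~\eqref{eq:regret-not-anytime} gives $TW_h(s)-N_T\leq \sqrt{\cR^{\Pi}_{\delta/(2SH)}(T)\,TW_h(s)}+\cR^{\Pi}_{\delta/(2SH)}(T)$, while a fixed-horizon Freedman inequality at confidence $1-\delta/(2SH)$ controls the martingale $n_h^T(s)-N_T$ by $\sqrt{2N_T\log(4SH/\delta)}+\log(4SH/\delta)$ (its conditional variance at round $t$ is bounded by $p_h^{\pi^t}(s)$, summing to $N_T$). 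A union bound across both events and across the $SH$ calls of \visitalg{} then yields a good event of probability at least $1-\delta$. On this event, AM--GM of the form $\sqrt{xy}\leq\gamma x+y/(4\gamma)$ with $\gamma=1/4$ applied to both square-root terms, combined with the trivial $N_T\leq TW_h(s)$, will give
\begin{equation*}
\tfrac{9}{16}\,TW_h(s)-c_1\;\leq\; n_h^T(s)\;\leq\; \tfrac{3}{2}\,TW_h(s)+c_2,
\end{equation*}
with $c_1,c_2=O\bigl(\cR^{\Pi}_{\delta/(2SH)}(T)+\log(SH/\delta)\bigr)$. By the defining property of $T$, namely $4\cR^{\Pi}_{\delta/(2SH)}(T)+6\log(4SH/\delta)\leq (\varepsilon_0/4)T$, both $c_1/T$ and $c_2/T$ will be bounded, up to a numerical constant, by $\varepsilon_0$.

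The three claims will then follow by arithmetic. Dividing the bounds above by $T$ and $2T$ respectively yields $2n_h^T(s)/T+\varepsilon_0/4\geq W_h(s)$ and $n_h^T(s)/(2T)-\varepsilon_0/16\leq W_h(s)$, which is exactly $W_h(s)\in[\underline W_h(s),\overline W_h(s)]$. The inclusion $\widehat\cX\subseteq\{(h,s):W_h(s)\geq\varepsilon_0/8\}$ is immediate from $\underline W_h(s)\leq W_h(s)$; the reverse inclusion follows by plugging $W_h(s)\geq\varepsilon_0$ into the lower bound, which gives $n_h^T(s)/(2T)\geq \tfrac{9}{32}\varepsilon_0-\varepsilon_0/16$ with enough slack so that $\underline W_h(s)\geq \varepsilon_0/8$. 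For the multiplicative bound, $(h,s)\in\widehat\cX$ forces $\varepsilon_0\leq 8W_h(s)$, which inserted into the upper bound gives $\overline W_h(s)\leq 3W_h(s)+O(\varepsilon_0)\leq 7W_h(s)\leq 36 W_h(s)$, with large numerical slack left on purpose.

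For the sample complexity, the budget is the deterministic $SH\cdot T(\varepsilon_0,\delta/(SH))$, and substituting the UCBVI first-order regret bound from Appendix~\ref{app:ucbvi} into the inequality defining $T$ and solving yields the stated $\widetilde{\cO}$ expression. The main technical point of the argument is calibrating the AM--GM constants so that the regret and martingale remainders are absorbed cleanly into the $\varepsilon_0/4$ and $\varepsilon_0/16$ paddings of the confidence interval; once this calibration is in place, the rest of the argument is bookkeeping, and no new probabilistic tool beyond the first-order regret bound and Freedman's inequality is needed.
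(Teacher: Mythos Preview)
Your proposal is correct and follows essentially the same route as the paper: both combine the first-order regret bound for the indicator reward $r^{(h,s)}$ with a Bernstein/Freedman control of the martingale $n_h^T(s)-\sum_{t\leq T}p_h^{\pi^t}(s)$, apply AM--GM to absorb the square-root terms into a multiplicative loss on $TW_h(s)$, and then use the defining inequality on $T$ to convert the additive remainders into the $\varepsilon_0/4$ and $\varepsilon_0/16$ paddings of the algorithm's interval. The only cosmetic slip is that with the single choice $\gamma=1/4$ applied to both square roots you actually get $\tfrac{1}{2}TW_h(s)-c_1\leq n_h^T(s)\leq \tfrac{5}{4}TW_h(s)+c_2$ rather than the $\tfrac{9}{16}$ and $\tfrac{3}{2}$ you wrote; these corrected constants still give all three conclusions with room to spare, so the argument is unaffected.
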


\begin{proof} Let $T=T(\varepsilon_0,\delta)$ be the (deterministic) number of episodes of \visitalg($(h,s);\varepsilon_0,\delta$), which satisfies 
\begin{equation}4\cR^{\Pi}_{\delta/2}(T) + 6\log\left(\frac{4}{\delta}\right) \leq \alpha\varepsilon_0 T \ \ \ \text{ for } \ \ \ \alpha := \frac{1}{4}\label{eq:cdt-T}.\end{equation}
The analysis relies on the first-order bound on the regret of $\cA^{\Pi}$ assumed in \eqref{eq:regret-not-anytime} and on a tight control of the martingale 
\[M_T = \sum_{t=1}^{T} \left[\ind(s_h^{t} = s) - p_h^{\pi_{t}}(s)\right]\]
where $p_h^{\pi}(s) = p_h^{\pi}(s,\pi(s))$ is the probability to reach $s$ under policy $\pi$. Observing that the increment of this martingale is bounded in $[-1,1]$ and that its variance is upper bounded by $W_h(s)$, we can use Bernstein's inequality to get that 
\[\bP\left(|M_T| \leq \sqrt{2TW_h(s)\log\left(\frac{4}{\delta}\right)} + \frac{2}{3}\log\left(\frac{4}{\delta}\right)\right) \geq 1 - \frac{\delta}{2}.\]
Remarking that the regret of $\cA^{\Pi}$ for the reward function $r=r^{(h,s)}$ can be written 
\[ \sum_{t=1}^{T} V_1^{\star}\left(s_1 ; r\right) - \sum_{t=1}^{T}V_1^{\pi^{t}}\left(s_1 ; r\right) = TW_h(s) - \sum_{t=1}^{T}p_h^{\pi^{t}}(s) = TW_h(s) - n_h^{T}(s) + M_T\]
and that $n_h^{T}(s) \leq TW_h(s) + M_T$, we obtain that with probability larger than $1-\delta$, the following two inequalities hold: {\small
\begin{eqnarray*}
 n_h^{T}(s) &\geq& T W_h(s) - \left[\sqrt{\cR_{\delta/2}(T) T W_h(s)} + \cR_{\delta/2}(T) + \sqrt{2\log\left(\frac{4}{\delta}\right) T W_h(s)} + \frac{2}{3}\log\left(\frac{4}{\delta}\right)\right] \\
TW_h(s) & \geq & n_h^{T}(s) - \left[\sqrt{2\log\left(\frac{4}{\delta}\right) T W_h(s)} + \frac{2}{3}\log\left(\frac{4}{\delta}\right)\right]
 \end{eqnarray*}
}
Using the AM-GM inequality above, this first yields
\begin{align*}
    n_h^T(s)/2 - g(\delta) \leq T W_h(s) \leq 2 n_h^T(s) + f(T,\delta),
\end{align*}
where $f(T,\delta) := 4\cR_{\delta/2}(T) + \frac{16}{3}\log\left(\frac{4}{\delta}\right)$ and $g(\delta) := \frac{7}{6}\log\left(\frac{4}{\delta}\right)$. Observing that $g(\delta) \leq \tfrac{1}{4}f(T,\delta)$ and $f(T,\delta) \leq \alpha\varepsilon_0T$ by inequality \eqref{eq:cdt-T}, we get 
\[ \frac{n_h^T(s)}{2T} - \frac{\alpha\varepsilon_0}{4} \leq W_h(s) \leq \frac{2n_h^T(s)}{T} + \alpha\epsilon_0,
\]
which also implies
\begin{align*}
    \frac{W_h(s)}{2} - \frac{\alpha\epsilon_0}{2} \leq \frac{n_h^T(s)}{T} \leq 2W_h(s) + \frac{\alpha\epsilon_0}{2}.
\end{align*}
As the output of \visitalg($(h,s);\varepsilon_0,\delta$) can be written \[\left[\underline{W}_h(s)=\left(\frac{n_h^T(s)}{2T} - \frac{\alpha\varepsilon_0}{4}\right) \vee 0,\overline{W}_h(s) =  \left(\frac{2n_h^T(s)}{T} + \alpha\epsilon_0\right) \wedge 1\right]\] and we get that with probability larger than $1-\delta$:
\begin{enumerate}
    \item For any value of $W_h(s)$,
    \[ \frac{W_h(s)}{4} - \frac{\alpha\epsilon_0}{2} \leq \underline{W}_h(s) \leq W_h(s) \leq \overline{W}_h(s) \leq 4W_h(s) + 2\alpha\epsilon_0.\]
    \item If $W_h(s) \geq \varepsilon_0$, then $W_h(s) \in [\underline{W}_h(s),\overline{W}_h(s)] \in [\frac{1-2\alpha}{4}{W}_h(s), \left(4+2\alpha\right) W_h(s)]$.
    \item If $W_h(s) < \varepsilon_0$, then $W_h(s) \in [\underline{W}_h(s),\overline{W}_h(s)] \in \left[0 , \left(4+2\alpha\right) \epsilon_0\right]$.
\end{enumerate} 
Now if $[\underline{W}_h(s),\overline{W}_h(s)]$ is the output of \visitalg($(h,s);\varepsilon,\delta/SH)$ and \[\widehat{\cX} = \left\{ (h,s) : \underline{W}_h(s) \geq \frac{1-2\alpha}{4}\epsilon_0\right\}\] we deduce that, with probability $1-\delta$:
\begin{itemize}
    \item $(h,s)$ with $W_h(s) \geq \varepsilon_0$ are all in $\widehat{\cX}$.
    \item Since $\underline{W}_h(s) \leq W_h(s)$, any $(h,s)$ with $W_h(s) < \frac{1-2\alpha}{4}\epsilon_0$ does not belong to $\widehat{\cX}$.
\end{itemize}
This proves that $\{ (h,s) : W_h(s) \geq \varepsilon_0 \} \subseteq \hat{\cX} \subseteq \{ (h,s) : W_h(s) \geq \frac{1-2\alpha}{4}\epsilon_0 \}$.
To prove the last statement we remark that for $(h,s) \in \widehat{\cX}$, if $W_h(s)\geq \varepsilon_0$, we have by 2. that 
$\overline{W}_h(s) \leq \left(4 + 2\alpha\right)W_h(s)$ while if $W_h(s) \in \left[\frac{1-2\alpha}{4}\epsilon_0, \varepsilon_0\right)$ we have by 3. that 
\[\overline{W}_h(s) \leq \left(4 + 2\alpha\right)\varepsilon_0 \leq 4\frac{4 + 2\alpha}{1-2\alpha}W_h(s)\]
Plugging the value $\alpha=1/4$ yields $\overline{W}_h(s) \leq 36W_h(s)$ in both cases. 

To get an upper bound on the number of episodes used by an instance of \visitalg, we need to find a  $T$ that satisfies 
\begin{equation}T-1 \leq \frac{16}{\varepsilon_0}\cR^{\Pi}_{\delta/(2SH)}(T) + \frac{24}{\varepsilon_0}\log\left(\frac{SAH}{\delta}\right).\label{eq:final_crude_bound}\end{equation}
For UCBVI, Theorem \ref{th:ucbvi-cr} yields $\cR_{\delta}(T) = 256^2 SAH\left(\log\left(\frac{2SAH}{\delta}\right)+6S\right)\log^2(T+1)$. Using the bound $\log^{2}(x) \leq 4 \sqrt{x}$ we get a first crude upper bound on $T$ by solving a quadratic equation which gives the final scaling by plugging back this crude bound in \eqref{eq:final_crude_bound}. 
\end{proof}

\end{document}